\newcommand\labelAndRemember[2]
\gdef\csname labeled:#1\endcsname{#2}%
\newcommand\recallLabel[1]
\endcsname\tag{\ref{#1}}}
\newcommand\labelr[2]
\gdef\csname labeled:#1\endcsname{#2}%
\newcommand\recall[1]
\newcolumntype{H}{>{\setbox0=\hbox\bgroup}c<{\egroup}@{}}
\newcolumntype{Z}{>{\setbox0=\hbox\bgroup}c<{\egroup}@{\hspace*{-\tabcolsep}}}
\def\SL@eqntext#1{\rlap{\fbox{\vbox{{\showlabelsetlabel{\SL@prlabelname{#1}}}}}}}
\def\thm@space@setup{\thm@preskip=3pt
\thm@postskip=0pt}
\newtheorem{theorem}{Theorem}[section]
\newtheorem{lemma}[theorem]{Lemma}
\newtheorem{definition}[theorem]{Definition}
\newtheorem{corollary}[theorem]{Corollary}
\newtheorem{proposition}[theorem]{Proposition}
\newtheorem{remark}[theorem]{Remark}
\newtheorem*{problem}{Problem}
\newtheorem{assumption}[theorem]{Assumption}
\newcommand{\id}{I}
\newcommand{\nS}{|\mathcal{S}|}
\newcommand{\nA}{|\mathcal{A}|}
\newcommand{\RR}{\mathbb{R}}
\newcommand{\Eps}{\mathcal{E}}
\newcommand{\II}{\mathbb{I}}
\newcommand{\EE}{\mathbb{E}}
\newcommand{\PP}{\mathbb{P}}
\newcommand{\VV}{\mathbb{V}}
\newcounter{cnt}
\xdef \csname c\Alph{cnt}\endcsname {\noexpand\mathcal{\Alph{cnt}}}%
\xdef \csname b\Alph{cnt}\endcsname {\noexpand\mathbb{\Alph{cnt}}}%
\DeclareMathOperator*{\argmin}{arg\,min}
\DeclareMathOperator*{\argmax}{arg\,max}
\newcommand{\Proj}{\operatorname{Proj}}
\newcommand{\diag}{\operatorname{diag}}
\newcommand{\TV}{\operatorname{TV}}
\newcommand{\violation}{\mathrm{violation}}
\newcommand{\mix}{\mathrm{mix}}
\newcommand{\verify}{\text{V\small{ERIFY}}}
\newcommand{\true}{\text{T\small{RUE}}}
\newcommand{\false}{\text{F\small{ALSE}}}
\newcommand{\leqsim}{\lesssim}
\newcommand{\geqsim}{\gtrsim}
\newcommand{\st}{\mathop{\textrm{s.t.}\ }}  
\newcommand{\T}{\mathrm{T}}  
\newcommand{\iprod}[2]{\left\langle #1, #2 \right\rangle}
\newcommand{\nrm}[1]{\left\|#1\right\|}
\newcommand{\minop}[1]{\min\left\{#1\right\}}
\newcommand{\sqr}[1]{\left\|#1\right\|^2}
\newcommand{\prob}[1]{\mathbb{P}\left(#1\right)}
\newcommand{\epct}[1]{\mathbb{E}\left[#1\right]}
\newcommand{\cond}[2]{\mathbb{E}\left[\left.#1\right|#2\right]}
\newcommand{\condP}[2]{\mathbb{P}\left(\left.#1\right|#2\right)}
\newcommand{\bigO}[1]{\mathcal{O}\left(#1\right)}
\newcommand{\tbO}[1]{\tilde{\mathcal{O}}\left(#1\right)}
\newcommand{\tbbO}[1]{\tilde{\mathcal{O}}\big(#1\big)}
\newcommand{\tbOm}[1]{\tilde{\Omega}\left(#1\right)}
\newcommand{\Om}[1]{\Omega\left(#1\right)}
\newcommand{\ThO}[1]{\Theta\left(#1\right)}
\newcommand{\ceil}[1]{\left\lceil #1\right\rceil}
\newcommand{\floor}[1]{\left\lfloor #1\right\rfloor}
\DeclarePairedDelimiterX{\ddiv}[2]{(}{)}{%
  #1\;\delimsize\|\;#2%
}
\newcommand{\KL}{\operatorname{KL}\ddiv}
\newcommand{\pos}[1]{\left[#1\right]_{+}}
\renewcommand{\neg}[1]{\left[#1\right]_{-}}
\newcommand{\hmu}{\hat{\mu}}
\newcommand{\mub}{\mu_{\pi_b}}
\newcommand{\pib}{\pi_b}
\newcommand{\abs}[1]{\left|#1\right|}
\newcommand{\onu}{\overline{\nu}}
\newcommand{\highprobs}[1]{with probability at least $1-\nicefrac{\delta}{#1}$}
\newcommand{\highprobdemo}{with probability at least $1-\delta$}
\newcommand{\opi}{\overline{\pi}}
\newcommand{\hpi}{\hat{\pi}}
\newcommand{\tnu}{\tilde{\nu}}
\newcommand{\hnu}{\hat{\nu}}
\newcommand{\ox}{\overline{x}}
\newcommand{\tpi}{\tilde{\pi}}
\newcommand{\hdelta}{\widehat{\Delta}}
\newcommand{\oV}{\overline{V}}
\newcommand{\ol}{\overline{\lambda}}
\newcommand{\odelta}{\overline{\Delta}}
\newcommand{\pik}{\pi^{(K)}}
\newcommand{\piks}{\pi^{(K-1)}}
\newcommand{\xk}{x^{(K)}}
\newcommand{\xks}{x^{(K-1)}}
\newcommand{\tu}{\tilde{u}}
\newcommand{\ttheta}{\tilde{\theta}}
\newcommand{\tJ}{\tilde{J}}
\newcommand{\hJ}{\widehat{J}}
\newcommand{\hy}{\hat{y}}
\newcommand{\hg}{\widehat{g}}
\newcommand{\oy}{\overline{y}}
\newcommand{\Gr}{\mathcal{G}}
\newcommand{\gLv}{\nabla_V\cL_w}
\newcommand{\gLl}{\nabla_\lambda \cL_w}
\newcommand{\gLx}{\nabla_x\cL_w}
\newcommand{\hcL}{\widehat{\mathcal{L}}}
\newcommand{\clog}{\iota}
\newcommand{\dlog}{\log(1/\delta)}
\newcommand{\htmx}{\hat{t}_{\mix}}
\newcommand{\bu}{\mathbf{u}}
\newcommand{\hpia}{\hat{\pi}_a}
\newcommand{\nval}{\min\left\{|\mathcal{S}||\mathcal{A}|,|\mathcal{S}|+I\right\}}
\newcommand{\algB}{DPDL}
\newcommand{\algBname}{\underline{D}eviation-controlled \underline{P}rimal-\underline{D}ual \underline{L}earning algorithm}
\newcommand{\adaptiveAlg}{Adaptive-\algB}
\newcommand{\Gap}{\operatorname{Gap}}
\newcommand{\epsapp}{\epsilon_{\operatorname{approx}}}
\newcommand{\epsver}{\epsilon_{\operatorname{ver}}}
\newcommand{\hJuk}{\widehat{J}^{u^{\kappa}}}
\newcommand{\dgap}{\Delta}
\newcommand{\hphi}{\psi}
\newcommand{\outx}{\overline{x}}
\newcommand{\vms}{\mathfrak{V}} 
\newcommand{\safe}{\mathfrak{S}}
\newcommand{\dev}{D}
\newcommand{\devset}{D}
\renewcommand{\sp}{s_{\oplus }}
\newcommand{\sm}{s_{\ominus }}
\newcommand{\condPt}[2]{\mathbb{P}_{\theta}\left(\left.#1\right|#2\right)}
\newcommand{\condPb}[2]{\mathbb{P}_{\pi_b}\left(\left.#1\right|#2\right)}
\newcommand{\RV}{R_{\mathcal{V}}}
\newcommand{\RLam}{R_{\Lambda}}
\renewcommand{\paragraph}[1]{\textbf{#1}\,\,}
\title{A Near-Optimal Primal-Dual Method for Off-Policy Learning in CMDP}
\author{%
  Fan Chen\\
  School of Mathematics\\
  Peking University\\
  \texttt{chern@pku.edu.cn}
  \And
  Junyu Zhang\\
  Department of Industrial Systems Engineering and Management\\
  National University of Singapore\\
  \texttt{junyuz@nus.edu.sg}
  \And
  Zaiwen Wen\\
  Beijing International Center for Mathematical Research\\
  Peking University\\
  \texttt{wenzw@pku.edu.cn}
}
\begin{document}

\maketitle

\begin{abstract}
  As an important framework for safe Reinforcement Learning, the Constrained Markov Decision Process (CMDP) has been extensively studied in the recent literature. However, despite the rich results under various on-policy learning settings, there still lacks some essential understanding of the offline CMDP problems, in terms of both the algorithm design and the information theoretic sample complexity lower bound. In this paper, we focus on solving the CMDP problems where only offline data are available. By adopting the concept of the single-policy concentrability coefficient $C^*$, we establish an $\Omega\left(\frac{\min\left\{|\mathcal{S}||\mathcal{A}|,|\mathcal{S}|+I\right\} C^*}{(1-\gamma)^3\epsilon^2}\right)$ sample complexity lower bound for the offline CMDP problem, where $I$ stands for the number of constraints. By introducing a simple but novel deviation control mechanism, we propose a near-optimal primal-dual learning algorithm called DPDL. This algorithm provably guarantees zero constraint violation and its sample complexity matches the above lower bound except for an $\tilde{\mathcal{O}}((1-\gamma)^{-1})$ factor. Comprehensive discussion on how to deal with the unknown constant $C^*$ and the potential asynchronous structure on the offline dataset are also included. 
\end{abstract}

\section{Introduction}\label{sect:intro}
Reinforcement Learning (RL) is an important tool for modeling the real world tasks that involve sequential decision making. Such RL problems are often mathematically described as a Markov Decision Process (MDP) that maximizes a cumulative sum of rewards. The safe reinforcement learning, on the other hand, not only cares the reward maximization, but also attempts to ensure a reasonable system performance with respect to certain safety constraints. Such safety constrained RL problems are often formulated as  the Constrained Markov Decision Process (CMDP) $\cM=(\cS, \cA, \PP, r, u, \gamma, \rho_0)$, where $\cS$ is a finite state space, $\cA$ is a finite action space, $\gamma \in(0,1)$ is the discount factor, $\PP\left(s^{\prime} \mid s, a\right)$ stands for the transition probability from $s$ to $s^{\prime}$ under the action $a$ for $\forall(s,a,s^{\prime})\in\cS\times\cA\times\cS$, and $r:\cS \times \cA \rightarrow[-1,1]$ is the reward function, $(u_i: \cS \times \cA \rightarrow[-1,1])_{i\in[I]}$ is a set of $I$ utility functions,  $\rho_0$ is the initial state distribution over $\cS$. The goal of CMDP is to find an optimal policy $\pi$ to maximize the cumulative reward while satisfying a group of constraints:
\begin{eqnarray}
	\label{eqn:srl-obj-demo}
	 &\max_{\pi}& 
	  J(\pi)\, :=\mathbb{E}\bigg[\sum_{t=0}^{+\infty} \gamma^{t}\cdot r\left(s_{t}, a_{t}\right)\,\Big|\,s_0\sim\rho_0,\pi\bigg]\\
	 &\st & \!\!J^u_i(\pi)
	 :=\mathbb{E} \bigg[\sum_{t=0}^{+\infty} \gamma^{t}\cdot u_i\left(s_{t}, a_{t}\right)\bigg]\geq 0, \mbox{ for } i \in [I]=\left\{ 1,2,...,I\right\}.\nonumber
\end{eqnarray} 

For the CMDP problem, there has been plenty of on-policy algorithms, see \cite[etc.]{OPDOP, EECRL, UCBPD}. However, in real world applications such as training physical robots, where safety is an important measure of performance, the real time on-policy interaction with the environment may suffer from the potential damages to the robots. 
Besides, in many non-simulating environments, the on-policy data collection may also be time-consuming. Therefore, it is crucial to design an off-policy algorithm to solve the CDMP problems, where plenty of historical data are already accumulated while real time interactions are limited. To our best knowledge, offline CMDP algorithms are rare \cite{le2019batch, xu2021constraints, OPDVI},  and the sample complexity guarantees are limited. 
In particular, a strong uniform concentrability assumption is required in \cite{le2019batch}, and the model-based method \cite{OPDVI} mainly considers the case an empirical model is known. 
Thus it is still not clear how to efficiently solve offline CMDPs with model-free approaches, and there lacks essential understanding of the information theoretic lower bound on the sample complexity of the offline CMDP. 

In this paper, we propose a \underline{D}eviation-controlled \underline{P}rimal-\underline{D}ual \underline{L}earning (DPDL) method to solve problem \eqref{eqn:srl-obj-demo}. We adopt the primal-dual strategy developed in  \cite[etc.]{RandomizedLP2017,CautiousDualRL2021,bai2021achieving,SCAL} as the main algorithmic framework while several non-trivial contributions have been made beyond the existing results. Unlike the aforementioned literatures that exclusively rely on the accessibility of a generative model, \algB\ utilizes the offline data, where the distribution shift difficulties of the offline data is tackled by a novel and effective adaptive deviation control mechanism. If the considered CMDP instance has a finite (but potentially unknown) \textit{concentrability coefficient}, \algB\ provably finds a policy with $\mathcal{O}(\epsilon)$-optimal reward and zero constraint violation. An information theoretical lower bound on the sample complexity of offline CMDP is also derived in this paper, which indicates 
that our deviation control mechanism achieves a minimax optimal complexity dependence on $I,\nS,\nA,C^*$. 

\paragraph{Main Contribution.} We summarize the contributions in details as follows. 
\begin{itemize}[noitemsep,topsep=0pt,parsep=0pt,partopsep=0pt,leftmargin=*]
  \item  We propose the \algB\ algorithm to solve the CMDP problem \eqref{eqn:srl-obj-demo}. Suppose the CMDP instance satisfies the Slater's condition and certain prior knowledge on the concentrability coefficient $C^*$ is given, \algB\ provably finds an $\epsilon$-optimal policy with zero constraint violation using $\tbO{\frac{\nval C^*}{(1-\gamma)^4\epsilon^2}}$ offline samples.
  
  \item We establish an information theoretic sample complexity lower bound of $\Om{\frac{\nval C^*}{(1-\gamma)^3\epsilon^2}}$ for the offline CMDPs, indicating that DPDL is near optimal up to an $\tilde{\mathcal{O}}((1-\gamma)^{-1})$ factor. The necessity of the Slater's condition for achieving zero constraint violation is also established.
  
  \item In order to handle the practical situation where $C^*$ is unknown, an adaptive version of DPDL is designed with the same sample complexity as DPDL.   
  
  \item Our analysis of \algB\ also extends to the asynchronous case, where the offline dataset consists of a sample trajectory generated by certain behavior policy. In this situation, the sample complexity of DPDL is shown to be $\tbO{\frac{t_{\mix}^2\nval C^*}{(1-\gamma)^4\epsilon^2}}$.
\end{itemize} 

\paragraph{Related Work.}  
Recently, considerable efforts have been devoted to the online learning of CMDP.  Under the episodic and tabular setting, several works \cite{OPDOP, EECRL, UCBPD} have achieved the $\tbbO{\sqrt{\nS^2\nA T}}$ regret and cumulative constraint violation, with different dependence on the episode length $H$ omitted. Under proper assumptions, zero or bounded cumulative constraint violation can be achieved \cite{concave-util-zero, OptPess-PD}. In terms of the number of constraints $I$, MOMA proposed in \cite{MOMA} achieves an $\tbbO{\!\sqrt{\min\{\nS,\!I\}I\nS\nA/T}}$ convergence on both average reward gap and constraint violation. Nevertheless, all the above results adopt the model-based approaches. Except for \cite{MOMA}, they either consider the cases where $I\!=\!1$ or completely ignore the influence of $I$ in the sample complexity. Therefore, both deriving an efficient model-free method and obtaining the optimal dependence on $I$ remain open.

Another approach closely related to our paper is the primal-dual method in RL, see  \cite[etc.]{RandomizedLP2017,PiLearning2017,PD-LfD,CautiousDualRL2021,bai2021achieving}. Given the access to a generative model, the model-free primal-dual method developed in \cite{bai2021achieving} achieves an $\tilde{\mathcal{O}}\big(\frac{I|\cS||\cA|}{(1-\gamma)^4\epsilon^2}\big)$ sample complexity to find an $\epsilon$-optimal safe policy.
The deviation control mechanism we develop enables the primal-dual approach to extend beyond the generative model.

Finally, we mention a few related works in the offline RL and safe RL. Previous offline RL algorithms with sample efficiency guarantees typically assume the \textit{uniform concentrability} \cite[etc.]{munos2008finite, le2019batch} or lower bounded \textit{minimum visitation $\mu_{\min}$} \cite[etc.]{yin2020near, yin2021near}. Recently, under the less restrictive assumption of the \emph{single-policy concentrability coefficient} $C^*$, a minimax optimal sample complexity lower bound of $\Omega\big(\frac{\nS C^*}{(1-\gamma)^3\epsilon^2}\big)$ for discounted offline MDPs is derived in \cite{OfflineRL_Lower_Bound}. A similar $\Omega\big(\frac{H^3\nS C^*}{\epsilon^2}\big)$ lower bound is also derived for the episodic setting in \cite{policy-finetuning}.
Under both settings, offline algorithms with $\tilde{\mathcal{O}}(\nS C^*\epsilon^{-2})$ sample complexity (with  different $(1-\gamma)^{-1}$ or $H$ factors omitted) have been discovered with either model-based \cite{OfflineRL_Lower_Bound,policy-finetuning,yin2021towards,li2022settling} or model-free approaches \cite{shi2022pessimistic, yan2022efficacy}.  In terms of the offline CMDP problem, the only existing results are \cite{le2019batch, xu2021constraints, OPDVI}, where \cite{xu2021constraints} only provides asymptotic convergence, \cite{le2019batch} relies on a much stronger uniform concentrability assumption, and \cite{OPDVI} is a model based method that potentially suffers an $\mathcal{O}((C^*)^2)$ dependence. 
Compared to these works, our method is model-free and has an optimal $\mathcal{O}(C^*)$ dependence on the concentrability coefficient.

\section{Problem setup}\label{sect:preliminary}
\subsection{LP formulation of CMDP problem}
For any policy $\pi$, the (unnormalized) state-action occupancy measure is defined as 
\begin{equation}
	\label{eqn:pi2nu}
	\nu^{\pi}(s, a) :=\sum_{t=0}^{+\infty} \gamma^{t}\cdot \PP\left(s_{t}=s, a_{t}=a \mid s_0\sim\rho_0,\pi\right), \,\,\mbox{ for }\,\,\forall (s,a)\in\cS\times\cA.
\end{equation} 
Given any occupancy measure $\nu^{\pi}$, the policy $\pi$ that generates $\nu^\pi$ can be recovered as
\begin{eqnarray}
	\label{eqn:nu2pi}
	\pi(a|s) = \frac{\nu^\pi(s,a)}{\sum_{a'}\nu^\pi(s,a')}, \quad\forall(s,a)\in\cS\times\cA.
\end{eqnarray}  
According to \cite{altman1995CMDP}, it is well known that the set of all  state-action occupancy measures form a polyhedron $\big\{\nu\!\in\!\mathbb{R}^{\nS\times\nA}_{\geq0}\!:\! \sum_{a\in\cA} (\id-\gamma \bP_a)\nu_a\!=\!\rho_0\big\}$, where  $\nu_{a}\!:=\!\left(\nu(s,a)\right)_{s\in\cS}$ is an $\nS$-dimensional column vector, and $\bP_a\!:=\!\left(\PP(s'|s,a)\right)_{s',s}$ is an $\nS\!\times\!\nS$ transition matrix, see also \cite{RandomizedLP2017}. Therefore, combined with the fact that 
$J(\pi) = \langle\nu^\pi,r\rangle$, and $J^u_i(\pi) = \langle\nu^{\pi},u_i\rangle$, the  CMDP problem \eqref{eqn:srl-obj-demo} can be reformulated as an LP problem with $\nS\!+\!I$ constraints:
\begin{eqnarray}
	\label{prob:CMDP-LP}
	\max_{\nu\in\RR_{\geq0}^{\nS\times\nA}}
	\quad \langle\nu,r\rangle\quad \st \quad \sum_{a\in\cA} (\id-\gamma \bP_a)\nu_a=\rho_0,\quad \langle\nu,u_i\rangle\geq0,\, \forall i\in[I].
\end{eqnarray}
Due to the fundamental theorem of LP, see e.g. \cite{bertsimas1997introduction}, problem \eqref{prob:CMDP-LP} has an optimal basic feasible solution with at most $\nS+I$ positive entries, which indicates the following proposition. 
\begin{proposition}
	\label{proposition:sparsity}
	For the CMDP problem \eqref{eqn:srl-obj-demo} with $I$ constraints, there is an optimal policy $\pi^*$ such that $|\mathrm{supp}(\nu^{\pi^*})| \leq \cN\!:=\!\min\{\nS\!+\!I,\nS\nA\}$, where $\mathrm{supp}(\cdot)$ denotes the support of a vector. 
\end{proposition}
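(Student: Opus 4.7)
The plan is to invoke the fundamental theorem of linear programming on the LP reformulation \eqref{prob:CMDP-LP} given right before the statement. Since every optimal policy $\pi$ corresponds to an occupancy measure $\nu^\pi$ that is feasible for \eqref{prob:CMDP-LP} with the same objective value $\langle\nu^\pi,r\rangle=J(\pi)$, and conversely every feasible $\nu$ of \eqref{prob:CMDP-LP} is realized by the policy recovered via \eqref{eqn:nu2pi}, it suffices to produce an optimal $\nu^*$ of \eqref{prob:CMDP-LP} with $|\mathrm{supp}(\nu^*)|\leq \min\{\nS+I,\nS\nA\}$ and then set $\pi^*$ to be the policy induced by $\nu^*$.

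To apply the fundamental theorem of LP, I would first rewrite \eqref{prob:CMDP-LP} in standard form by introducing nonnegative slack variables $z_i\geq 0$ for the utility constraints, giving
\begin{equation*}
    \max_{\nu\in\RR_{\geq 0}^{\nS\times\nA},\,z\in\RR_{\geq 0}^{I}} \langle\nu,r\rangle \quad \st\quad \sum_{a\in\cA}(\id-\gamma \bP_a)\nu_a=\rho_0,\quad \langle\nu,u_i\rangle - z_i = 0,\ \forall i\in[I].
\end{equation*}
This is a standard form LP in $\nS\nA+I$ nonnegative variables subject to $\nS+I$ equality constraints. Since the feasible set of the original CMDP is nonempty (e.g.\ any safe policy yields a feasible $\nu$) and $\langle\nu,r\rangle\leq \frac{1}{1-\gamma}$ is bounded on the feasible set (because $\|\nu\|_1=\frac{1}{1-\gamma}$), the LP attains an optimum. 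The fundamental theorem of LP then guarantees the existence of an optimal basic feasible solution $(\nu^*,z^*)$, and any such basic feasible solution has at most $\nS+I$ positive coordinates among the combined variables $(\nu^*,z^*)$. In particular, $|\mathrm{supp}(\nu^*)|\leq \nS+I$. Combined with the trivial bound $|\mathrm{supp}(\nu^*)|\leq \nS\nA$, we obtain $|\mathrm{supp}(\nu^*)|\leq \cN=\min\{\nS+I,\nS\nA\}$.

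Finally, I would set $\pi^*$ to be the policy induced by $\nu^*$ via \eqref{eqn:nu2pi} (with $\pi^*(\cdot|s)$ defined arbitrarily on states with $\sum_{a'}\nu^*(s,a')=0$). Standard results on CMDPs \cite{altman1995CMDP} ensure that $\nu^{\pi^*}=\nu^*$, which means $|\mathrm{supp}(\nu^{\pi^*})|=|\mathrm{supp}(\nu^*)|\leq\cN$ and that $\pi^*$ is optimal for \eqref{eqn:srl-obj-demo}.

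The only mildly delicate point is the standard-form reduction: one must take care that introducing the $I$ slack variables does not accidentally enlarge the support of $\nu^*$ itself, but this is automatic because the basic feasible solution bound counts \emph{all} positive coordinates (both $\nu^*$ and $z^*$), so the restriction to $\nu^*$ can only have fewer positive entries. The rest of the argument is bookkeeping about existence of an optimum and the correspondence between policies and occupancy measures, so I do not anticipate a substantive obstacle.
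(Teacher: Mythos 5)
Your proposal is correct and follows essentially the same route as the paper, which likewise derives the proposition by invoking the fundamental theorem of LP on the reformulation \eqref{prob:CMDP-LP}; you merely make explicit the standard-form reduction with slack variables and the policy-recovery step that the paper leaves implicit. No gap.
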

This result captures the potential sparse structure of the optimal policy when $I$ is not as large as $\nS\nA$, and is the key to deriving a tight complexity dependence on the number of constraints $I$.

\subsection{Off-policy learning from demonstration}
In this work, we consider the offline CMDP problems where the agent cannot interact with the environment. Instead, the optimization is conducted using a fixed offline dataset. To standardize the discussion, we make the following assumption on the offline dataset, see e.g. \cite{OfflineRL_Lower_Bound}.
\begin{assumption}[Independent batch dataset] 
\label{assumption:SyncData}
  The batch dataset $\cD$ consists of independent tuples $(s,a,s',r,\bu)$, such that $(s,a)\sim\mu$, $\cond{r}{s,a}=r(s,a), \cond{\bu_i}{s,a}=u_i(s,a)$, and $s'\sim\PP(\cdot|s,a)$, where $\mu$ is called the reference distribution. 
\end{assumption}

To characterize the distribution shift of an arbitrary occupancy measure $\nu^\pi$ from the reference distribution $\mu$, we introduce the following notion of the deviation:
$\dev^{\pi}:=\max_{s,a}\frac{(1-\gamma)\nu^\pi(s,a)}{\mu(s,a)}$, where the $(1\!-\!\gamma)$-factor normalizes $\nu^\pi$ to be a distribution. In offline RL, it is natural to assume that the deviation $\dev^{\pi^*}$ of the optimal policy is finite. That is, the reference distribution $\mu$ fully covers $\mathrm{supp}(\pi^*)$. Otherwise, no optimality can be guaranteed. Combining the sparse nature of the optimal solution of \eqref{eqn:srl-obj-demo}, we introduce the following finite concentrability assumption for our problem. 

\begin{assumption}
	\label{assumption:concentrability}
	For $\forall\hphi\geq 1$, denote the \textit{$\hphi$-deviated policy class} as $\Pi(\hphi)\!:=\!\left\{\pi\!:\!\nu^{\pi}\!\in\! D(\hphi)\right\}$ where   
	\begin{equation}
		\label{defn:Dphi-set}
		D(\hphi)\!:=\!\bigg\{\nu\in\RR_{\geq 0}^{\nS\nA}\!: \max_{s,a}\frac{(1-\gamma)\nu(s,a)}{\mu(s,a)}\leq \hphi,\, \sum_{s,a}\frac{(1-\gamma)\nu(s,a)}{\mu(s,a)}\leq \cN\hphi\bigg\}.
	\end{equation} 
	We assume there exists a finite $\hphi$ such that some optimal policy $\pi^*$ is contained in $\Pi(\hphi)$. Let $C^*$ be the minimum of such $\hphi$. We call this constant $C^*$ the (single-policy) concentrability coefficient.
\end{assumption} 
The above assumption includes a sparsity induced constraint as a result of Proposition \ref{proposition:sparsity}, its counterpart in the definition of single-policy concentrability of offline MDP \cite{OfflineRL_Lower_Bound} is the deterministic optimal policy. The explicit dependence on $\cN$ in $D(\hphi)$ facilitates the derivation of the information theoretic lower bound as well as a near-optimal algorithm.

A second remark is that if we know any upper bound $\hphi$ of the coefficient $C^*$, then it will be sufficient to only consider the policies in $\Pi(\hphi)$. When $C^*$ is unknown, $\hphi$ control the risk of distribution shift.  Consequently, in this paper, we propose to solve the LP formulation \eqref{prob:CMDP-LP} with a tighter feasible region introduced by $D(\hphi)$. This will allow us to properly control the variance of the off-policy sampling when some of $\mu(s,a)$ is extremely small or even zero. We call this strategy \emph{deviation control}.

\subsection{Conservatism toward constraints} \label{sec:conservative}
We say policy $\pi$ is safe if it satisfies all constraints in \eqref{eqn:srl-obj-demo}, and we say $\pi$ is $\epsilon$-safe if $J^u_i(\pi)\geq -\epsilon$, for $\forall i\in[I]$. Most of the existing online CMDP algorithms guarantee $\mathcal{O}\big(1/\sqrt{T}\big)$ average safeness. To ensure the true safeness (zero constraint violation) in this work, we assume the Slater's condition to hold throughout this paper. In fact, in \cref{sec:LowerBound}, we will show that the Slater's condition is the necessary condition for any offline CMDP algorithm to obtain zero constraint violation.
\begin{assumption}\label{assumption:slater}
	There exists $\varphi>0$ and a policy $\pi$ such that $J^u_i(\pi)\geq \frac{\varphi}{1-\gamma}, \,\forall i\in[I]$. 
\end{assumption}
A prior knowledge of such a constant $\varphi$ is assumed throughout our discussion, and we also assume the Slater's condition holds for $\Pi':=\Pi(C^*)$. Given Assumption \ref{assumption:slater}, we leverage the idea of conservative constraints proposed in \cite{bai2021achieving}. Namely, instead of $J^u_i(\pi)\geq 0$, we consider the conservative constraints $J^u_i(\pi)\geq \kappa$ when solving the CMDP problem, where $\kappa>0$ is a properly chosen parameter that controls the level of conservatism in the constraints.
In order to keep the form of the constraints in problem \eqref{eqn:srl-obj-demo}, we adopt a shifted utility function $u_i^{\kappa}$ defined by $u_i^{\kappa}(s,a):=u^i(s,a)-(1-\gamma)\kappa$ for $\forall (s,a)\in\cS\times\cA$, $\forall i\in[I]$.
Therefore,  $J^u_i(\pi)\geq \kappa$ is then equivalent to $J^{u^\kappa}_i(\pi)\geq 0$. It can be shown that a properly selected $\kappa$ will facilitate a high probability of preserving zero constraint violation, while only introducing  an extra $\mathcal{O}\big(\frac{\kappa}{\varphi}\big)$ sub-optimality gap in the reward. 
  

\section{The Deviation-controlled Primal Dual Learning (DPDL) algorithm}\label{sect:algorithm}


To solve CMDP with offline samples, we  transform its  LP formulation \eqref{prob:CMDP-LP} to a saddle point form
\begin{equation}\label{prob:minimax}
    \max_{\nu\in \devset(\hphi)} \min_{\lambda\geq 0, V} \cL(V,\lambda,\nu)
    := \iprod{r}{\nu}+\bigg\langle V, \rho_0-\sum_{a} (\id-\gamma \bP_a)\nu_a\bigg\rangle+\sum_{i} \iprod{\lambda}{U_{\kappa}\nu},
\end{equation}
where $D(\hphi)$ is defined by \eqref{defn:Dphi-set}, $V\!\in\!\RR^{\nS},\lambda\!\in\!\RR^{I}$ are  Lagrangian multipliers, and the matrix $U_{\kappa}$ is defined as $U_{\kappa}:=\left[u_1^{\kappa},\cdots,u_I^{\kappa}\right]^\top\in\RR^{I\times\nS\nA}$ with $u^\kappa_i$ being the shifted utility defined in \cref{sec:conservative}. 
Given the reference distribution $\mu$, the objective function can be rewritten as an expectation:
\begin{equation*}\label{eqn:lagrangian-epct}
  \cL(V,\lambda,\nu)=
  \mathop{\mathbb{E}}_{s_0\sim\rho_0}\left[V(s_0)\right] \,+\!
  \mathop{\mathbb{E}}_{
    \substack{
      (s,a)\sim\mu \\
      \substack{s'\sim\mathbb{P}(\cdot|s,a)}
    }
  }\!\left[\frac{\nu(s,a)}{\mu(s,a)}\left(
    r(s,a) -\left(V(s)-\gamma V(s')\right)
    \!+\!\sum_i \lambda_iu^{\kappa}_i(s,a)
  \right)\!\right]\!.
\end{equation*}
If the reference distribution $\mu$ is known, we can directly sample a stochastic gradient of $\cL$. However, when the reference distribution $\mu$ is unknown in practice, then the importance sampling weight $\frac{\nu(s,a)}{\mu(s,a)}$ is also unknown. To tackle this issue, let $\hmu$ be a proper estimation of the reference distribution $\mu$, we introduce the weights $w(s,a)\!=\!\frac{\mu(s,a)}{\hmu(s,a)}$, and the diagonal matrix $W=\diag\left(w(s,a)\right)$. Then we apply a change of variables $x \!=\! W^{-1}\nu$, in other words, we set $\frac{x(s,a)}{\hmu(s,a)} \!=\!\frac{\nu(s,a)}{\mu(s,a)}$ for $\forall s,a$ to enable sampling. From now on, we will focus on the following reweighted problem 
\begin{equation}\label{prob:weighted-minimax}
	\begin{aligned}
		\min_{\lambda\in\Lambda, V\in\cV}\,\max_{x\in \cX} \,\,\cL_{w}(V,\lambda,x):=\cL(V,\lambda,Wx),
	\end{aligned}
\end{equation}
where the feasible regions are defined as 
\begin{equation}
\begin{split}
  \labelr{eqn:def-feasible}{
    \cX :=\bigg\{x\in\RR_{\geq 0}^{\nS\nA}: \max_{s,a}\frac{x(s,a)}{\hmu(s,a)}\leq \frac{\hphi}{1-\gamma}, \sum_{s,a}\frac{x(s,a)}{\hmu(s,a)}\leq \frac{\cN\hphi}{1-\gamma}, \sum_{s,a} x(s,a)\leq \frac{4}{1-\gamma}\bigg\},\\
    \cV:=\left\{V\!\in\!\RR^{\nS}: \nrm{V}_{\infty}\leq \frac{8}{1-\gamma}(1+\frac{2}{\varphi})\right\}\qquad\mbox{and}\qquad\Lambda =:\left\{\lambda\in\RR^I_{\geq0}: \nrm{\lambda}_1\leq \frac{8}{\varphi}\right\}.
  }
\end{split}
\end{equation}
The sets $\cX$, $\cV$ and $\Lambda$ are chosen to be large enough so that they contain the optimal solution of the problem \eqref{prob:minimax}, see detailed discussion in  \cref{subsect:duality-to-regret}.  Given a  sample $\zeta=(s_0,s,a,s',r,\bu)\sim\rho_0\times\cD$, and a point $Z:=(V,\lambda,x)$, we construct the unbiased gradient estimators for $\cL_w(\cdot)$ as
\begin{equation}
	\begin{aligned}
		\labelr{eqn:def-gradient}{
			\hg_V(Z;\zeta)&:=\II_{s_0}+\frac{x(s,a)}{\hmu(s,a)}\left(\gamma\II_{s'}-\II_{s}\right),\\
			\hg_\lambda(Z;\zeta)&:=\frac{x(s,a)}{\hmu(s,a)}\bu^{\kappa},\\
			\hg_x(Z;\zeta)&:=\frac{r+\gamma V(s)-V(s')+\iprod{\bu^{\kappa}}{\lambda}}{\hat\mu(s,a)}\II_{s,a}},
	\end{aligned}
\end{equation}
where $\II_{s}$ is the $\nS$-dimensional unit vector with the $s$-th element being one, $\II_{s,a}$ is the $\nS\nA$-dimensional unit vector with the $(s,a)$-th element being one, and $\bu^{\kappa}=\bu-\kappa(1-\gamma)\mathbf{1}\in\RR^I$ is the shifted utility vector.  Based on these estimators, we propose a stochastic mirror descent ascent approach to solve problem \eqref{prob:weighted-minimax}, as stated in \cref{algo:algB}. 
\begin{algorithm}
	\caption{\algBname\ (\algB)}\label{algo:algB}
	\Input{
		Tolerance $\epsilon>0$, confidential level $\delta>0$, conservatism level $\kappa>0$, stepsize $\eta_t>0$, constants $\alpha_V,\alpha_\lambda,\alpha_x,N_e,\varsigma>0$, and initial feasible solution $Z^1=[V^1;\lambda^1;x^1]$.
	} 
    Obtain $N_e$ samples from $\cD$, let $N(s,a)$ be the times that the pair $(s,a)$ appears. Compute
    \begin{equation}\label{eqn:def-empirical-dis}
    	\hmu(s,a)=\max\bigg(\frac{N(s,a)}{N_e},\varsigma\bigg),\quad \forall (s,a)\in\cS\times\cA.
    \end{equation}
	\For{$t=1,\cdots,T-1$}{
		Sample $\zeta_t=(s^0_t,s_t,a_t,s'_{t},r_t,\bu_t)$ from $\rho_0\times\cD$\;
		Compute stochastic gradients $g_V^t:=\hg_V(Z^t;\zeta^t), g_\lambda^t:=\hg_\lambda(Z^t;\zeta^t)$, and $g_x^t:=\hg_x(Z^t;\zeta^t)$\;
		Compute the stochastic mirror descent ascent update \begin{equation}\label{eqn:def-update}
			\begin{aligned}
				V^{t+1}&=\Proj_\cV\left(V^{t}-\eta_t\alpha^{-1}_Vg_V^{t}\right),\\
				\lambda^{t+1}&=\argmin_{\lambda\in\Lambda}\left(\iprod{g_\lambda^t}{\lambda-\lambda^t}+\frac{\alpha_\lambda}{\eta_t}\KL{\lambda}{\lambda^t}\right),\\
				x^{t+1}&=\argmin_{x\in\cX}\left(-\iprod{g_x^t}{x-x^t}+\frac{\alpha_x}{\eta_t}\KL{x}{x^t}\right),\\
			\end{aligned}
		\end{equation}
	}
	Compute the average iterate $\overline{x}=\frac1T\sum_{t=1}^{T} x^t, \overline{V}=\frac1T\sum_{t=1}^{T} V^t, \overline{\lambda}=\frac1T\sum_{t=1}^{T} \lambda^t$\;
	Compute $\overline\pi(a|s)=\frac{\overline{x}(s,a)}{\sum_{a'} \overline{x}(s,a')}$, for all $(s,a)$\;
	\Output{
		Policy $\overline\pi$ and the approximate solution $\overline{x}$.
	}
\end{algorithm}

The algorithm starts from a feasible solution $Z^1$, which, for example, can be easily chosen as $V^1 \!=\! \mathbf{0}$, $\lambda^1\!=\!\frac{\mathbf{1}}{\varphi I}$, $x^{1}\!=\!\frac{\cN}{\nS\nA}\frac{\hat{\mu}}{1-\gamma}$. In each iteration, an offline sample $\zeta^t$ is used to construct the unbiased gradient estimators $g^t_V, g^t_\lambda$ and $g^t_x$. A stochastic mirror descent ascent step \eqref{eqn:def-update} is then used to update the solution $Z^t$, where $\Proj_\cV(\cdot)$ denotes the Euclidean projection to the set $\cV$, and $\mathrm{KL}(Y\|Y'):= \sum_i Y_i\log\frac{Y_i}{Y_i'} -\sum_i Y_i + \sum_i Y_i'$ denotes the generalized KL divergence. Simple closed form solutions are available to the $V^{t+1}$ and $\lambda^{t+1}$ updates. By taking the advantage of the special structure of $g_x^t$ and the fact that $x^t\in\cX$ is feasible, the $x^{t+1}$ subproblem can be reduced to the root finding of a 1-dimensional monotone function, which can be solved efficiently, see details in Appendix \ref{subsect:compute}. 

Finally, it is worth noting that $\ox$ is the approximate optimal solution to the reweighted problem. And $W\ox$ will be the approximate solution to the original problem \eqref{prob:minimax} before the change of variable.  Therefore, ideally, we should have output the policy $\overline{\pi}_w(a|s) = \frac{w(s,a)\bar x(s,a)}{\sum_{a'}w(s,a')\bar x(s,a')}$, which is inaccessible in practice without knowing the reference distribution $\mu$. In order to overcome such dilemma, we show that by properly constructing the estimated distribution $\hmu$, the $\opi$ output by Algorithm \ref{algo:algB} will be close enough to the ideal output $\overline{\pi}_w$.




\section{The sample complexity of \algB}
\subsection{Main results of \algB}
For the \algB\ algorithm, the convergence and performance guarantee of the output policy $\bar{\pi}$ are summarized as the following theorem.
\begin{theorem}\label{thm:algB-thm-demo}
	Suppose that Algorithm \ref{algo:algB} runs with $\eta_t\equiv\frac{1}{\sqrt{T}}$,  $\kappa = 5\varphi\epsilon$, $\alpha_\lambda=\frac{1}{1-\gamma}\sqrt{\frac{\hphi}{\log I}}$, $\alpha_V=\varphi\sqrt{\frac{\hphi}{\nS}}$, $\alpha_x=\frac{1}{\varphi(1-\gamma)}\sqrt{\frac{\cN\hphi}{\log\hphi}}$, and $\hphi\geq C^*$. Then for any fixed $\epsilon\in\big(0,\frac{1}{10(1-\gamma)}\big]$, and $T\geq c_o\frac{\cN\hphi\clog}{\varphi^2(1-\gamma)^4\epsilon^2}$, where $\clog=\log\left(\frac{\hphi\nS\nA I}{\delta}\right)$ and $c_o$ is a universal constant, the output policy $\overline\pi$ of \algB\ satisfies the following \highprobdemo
  \begin{equation*}
  	 J(\pi^*)-J(\overline\pi)\leq \bigO{\epsilon},\quad\mbox{and}\quad
  	 J^u_i(\overline\pi)\geq 0,\forall i\in[I].
  \end{equation*} 
  When $\hphi=\cO(C^*)$, \algB\ needs at most $\tbO{\frac{\cN C^*}{\varphi^2(1-\gamma)^4\epsilon^2}}$ samples to find a safe $\bigO{\epsilon}$-optimal policy.
\end{theorem}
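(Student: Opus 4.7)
The plan is to analyze \cref{algo:algB} as a stochastic mirror descent-ascent (SMDA) method applied to the saddle point problem \eqref{prob:weighted-minimax}, and then translate the resulting duality gap bound on the averaged iterate $\oZ=(\oV,\ol,\ox)$ into a reward sub-optimality bound and a strict feasibility guarantee. First, I would verify that the choice of $\cX,\cV,\Lambda$ in \eqref{eqn:def-feasible} contains a saddle point of $\cL_w$: existence and boundedness of the dual variables follow from Slater's condition with modulus $\varphi$, which controls $\|\lambda^*\|_1 = O(1/\varphi)$ and $\|V^*\|_\infty = O((1+1/\varphi)/(1-\gamma))$, while the primal feasibility $W^{-1}\nu^{\pi^*}\in\cX$ is guaranteed by $\hphi\ge C^*$ together with Assumption~\ref{assumption:concentrability} (up to the gap between $\hmu$ and $\mu$, handled separately below).

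Next, I would invoke a standard SMDA regret analysis. With Euclidean mirror map on $V$ and KL mirror maps on $\lambda,x$, and step size $\eta_t\equiv 1/\sqrt{T}$, the averaged-iterate duality gap obeys
\begin{equation*}
\max_{x\in\cX}\cL_w(\oV,\ol,x)-\min_{V\in\cV,\lambda\in\Lambda}\cL_w(V,\lambda,\ox)
\leq \frac{c\,\clog}{\sqrt{T}}\Big(\alpha_V R_V^2+\alpha_\lambda R_\Lambda^2+\alpha_x R_\cX^2+\alpha_V^{-1}\sigma_V^2+\alpha_\lambda^{-1}\sigma_\lambda^2+\alpha_x^{-1}\sigma_x^2\Big),
\end{equation*}
where $R_V^2,R_\Lambda^2,R_\cX^2$ are the (Bregman) diameters of the respective sets and $\sigma_V^2,\sigma_\lambda^2,\sigma_x^2$ are bounds on the second moments of $\hg_V,\hg_\lambda,\hg_x$. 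The crucial point is that the feasible set $\cX$ enforces $x(s,a)/\hmu(s,a)\le \hphi/(1-\gamma)$ and $\sum_{s,a} x(s,a)/\hmu(s,a)\le \cN\hphi/(1-\gamma)$, which directly bounds the importance weight factor in \eqref{eqn:def-gradient}; this deviation control is what keeps $\sigma_V^2,\sigma_\lambda^2,\sigma_x^2$ polynomial in $\hphi,\cN,1/(1-\gamma),1/\varphi$ rather than in $1/\min_{s,a}\mu(s,a)$. Plugging in the chosen $\alpha_V,\alpha_\lambda,\alpha_x$ balances the two sides and yields a gap of order $\tbO{\sqrt{\cN\hphi/(\varphi^2(1-\gamma)^4 T)}}$ with high probability via a standard martingale concentration.

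Given an $O(\epsilon)$ duality gap, the conservative shift $u_i\mapsto u_i^\kappa$ with $\kappa=5\varphi\epsilon$ is what converts an average-case constraint bound into exact feasibility. Specifically, as discussed in \cref{sec:conservative}, a $\kappa$-conservative $\epsilon$-optimal saddle point of \eqref{prob:weighted-minimax} satisfies $J^{u^\kappa}_i(\outx)\ge -O(\epsilon)$, hence $J^u_i(\outx)\ge \kappa(1-\gamma)^{-1}-O(\epsilon)\ge 0$ for an appropriate constant on $\kappa$; Slater's condition with margin $\varphi$ absorbs the $\kappa$ into an $O(\kappa/\varphi)=O(\epsilon)$ reward loss through the standard Lagrangian sensitivity argument. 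Solving $\tbO{\sqrt{\cN\hphi/(\varphi^2(1-\gamma)^4 T)}}\le \epsilon$ gives the sample complexity $T=\tbO{\cN\hphi/(\varphi^2(1-\gamma)^4\epsilon^2)}$, which specializes to the advertised bound when $\hphi=O(C^*)$.

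The main obstacle will be closing the gap between the algorithm's reweighted problem (using the empirical $\hmu$) and the original problem (which uses the unknown $\mu$), since the change of variables $x=W^{-1}\nu$ and the output policy $\opi$ are defined via $\hmu$ rather than $\mu$. This requires showing that the $N_e$-sample estimator in \eqref{eqn:def-empirical-dis}, combined with the truncation floor $\varsigma$, makes $w(s,a)=\mu(s,a)/\hmu(s,a)$ close to $1$ on the effective support (those $(s,a)$ with $\mu(s,a)$ non-negligible), while the floor $\varsigma$ prevents pathological blow-ups elsewhere; the support restriction enforced by $\cX$ (which pays only $\hphi$ per coordinate and $\cN\hphi$ in total) is again essential for controlling the resulting policy mismatch between $\opi$ and $\opi_w$. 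Threading this approximation carefully through both the sub-optimality and constraint-violation bounds, while keeping the zero-violation conclusion intact, is the delicate part and likely absorbs the hidden logarithmic factors in $\clog$.
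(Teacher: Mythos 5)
Your proposal is correct and follows essentially the same route as the paper: estimate $\hmu$ and control $w=\mu/\hmu$ (Proposition~\ref{prop:empirical-dis}), bound the gradient moments via the deviation-controlled set $\cX$ (Proposition~\ref{prop:grad-var}), run the SMDA regret/martingale analysis to get $\Gap(\ox)=\tbO{\sqrt{\cN\hphi/(\varphi^2(1-\gamma)^4T)}}$ (Theorem~\ref{thm:algB-gap}), and convert the gap to reward sub-optimality plus zero violation via the conservative shift $\kappa=5\varphi\epsilon$ and Slater's condition. The only steps left implicit in your sketch are made explicit in the paper's Appendix~\ref{subsect:duality-to-regret}: the oversized dual radii $\RV,\RLam$ turn the duality gap into bounds on $\nrm{A^\top\onu-\rho_0}_1$ and $\nrm{[U_\kappa\onu]_-}_\infty$ (with $\RLam=8/\varphi$ supplying the crucial factor of $\varphi$ that makes $\kappa=\Theta(\varphi\epsilon)$ suffice), and Lemma~\ref{lemma:LP-to-policy} transfers these residuals from the approximate occupancy measure $\ox$ to the true occupancy measure of the output policy $\opi$.
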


\begin{remark}
	\label{remark:unknownC}
  When the prior knowledge of $C^*$ is not available, and the selected parameter $\hphi<C^*$ but the Slater's condition for $\Pi(\hphi)$ still holds, the output policy $\overline\pi$ of \algB\ satisfies that
    $$J(\overline\pi)\geq \max_{\pi\in\Pi(\hphi)\cap\mathfrak{S}} J(\pi)-\bigO{\epsilon}\qquad\mbox{and}\qquad 
    J^u_i(\overline\pi)\geq -\epsapp,\forall i\in[I],$$
  where $\mathfrak{S}$ denotes the set of safe policies, and $\epsapp(\hphi):=J(\pi^*)-\max_{\pi\in\Pi(\hphi)\cap\mathfrak{S}} J(\pi)$ in some sense measures the ``sub-optimality'' of the policy class $\Pi(\hphi)$. In case a fixed sub-optimality gap $\epsilon$ is given, such difficulty of unknown $C^*$ also appears in the guarantees provided in previous works \cite{OfflineRL_Lower_Bound, policy-finetuning, yin2021towards, li2022settling, shi2022pessimistic, yan2022efficacy}.
\end{remark} 
A simple approach to resolve the difficulty of an unknown $C^*$ is discussed later in \cref{sect:adaptive-algB}.

\subsection{The analysis of \algB}
\label{sec:algB-analysis}
We break down the analysis of \cref{thm:algB-thm-demo} into the following steps.  First of all, we provide a proper choice of $N_e$ and $\varsigma$ so that $\hmu$ is close enough to $\mu$. See proof in Appendix \ref{subsect:empirical-dis}.
\begin{proposition}\label{prop:empirical-dis}
	Denote $\epsilon_e = \frac{\epsilon}{100}$, and let $\varsigma=\frac{\varphi(1-\gamma)^2\epsilon_e}{2\cN\hphi}$, and $N_e\geq \frac{512\cN\hphi}{\varphi^2(1-\gamma)^4\epsilon_e^2}\cdot\log\left(\frac{6\nS\nA}{\delta}\right)$. Then  \highprobs{3}, the estimated reference distribution $\hmu$ defined by \eqref{eqn:def-empirical-dis}  satisfies the following properties simultaneously: 
	\textbf{(1).} $\frac{\mu(s,a)}{\hmu(s,a)}\leq 2$, and $\hmu(s,a)\geq \varsigma$, for all $s,a$;
	\textbf{(2).} For any $\pi\in\Pi(\hphi)$, $W^{-1}\nu^{\pi}\in\cX$;
	\textbf{(3).} For any $x\in\cX$, $\nrm{Wx-x}_1\leq \varphi(1-\gamma)\epsilon_e$.
\end{proposition}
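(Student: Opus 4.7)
The plan is to condition on a single high-probability event $\mathcal{E}$ that captures Bernstein concentration of the empirical counts $N(s,a)/N_e$ around $\mu(s,a)$ uniformly over $(s,a)$, which holds with probability at least $1-\delta/3$ via a union bound over the $\nS\nA$ pairs. Part (1) then follows by a two-case split: if $\mu(s,a)\geq\varsigma$, the multiplicative Chernoff consequence of $\mathcal{E}$ combined with the stated $N_e$ yields $N(s,a)/N_e\geq\mu(s,a)/2$, so $\hmu(s,a)\geq\mu(s,a)/2$; if $\mu(s,a)<\varsigma$, the floor gives $\hmu(s,a)\geq\varsigma>\mu(s,a)$ directly. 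The second half of part (1) is automatic from the definition of $\hmu$.

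For part (2), fix any $\pi\in\Pi(\hphi)$ and set $x=W^{-1}\nu^{\pi}$, so that $x(s,a)/\hmu(s,a)=\nu^{\pi}(s,a)/\mu(s,a)$. The pointwise bound $x/\hmu\leq\hphi/(1-\gamma)$ and the aggregated bound $\sum_{s,a} x/\hmu\leq\cN\hphi/(1-\gamma)$ are then direct translations of the definition of $\Pi(\hphi)$. The only nontrivial check is $\sum_{s,a} x(s,a)\leq 4/(1-\gamma)$: I write $\sum_{s,a} x(s,a)=\sum_{s,a}(\hmu/\mu)\nu^{\pi}(s,a)$, upper-bound $\hmu/\mu\leq 2$ on the regime $\mu(s,a)\geq\varsigma$ using the upper tail of $\mathcal{E}$, and control the thin regime $\mu(s,a)<\varsigma$ via $\nu^{\pi}/\mu\leq\hphi/(1-\gamma)$ together with the calibrated value of $\varsigma$.

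For part (3), I start from the identity
\[
  \|Wx-x\|_1=\sum_{s,a}|\mu(s,a)-\hmu(s,a)|\cdot\frac{x(s,a)}{\hmu(s,a)},
\]
apply the Bernstein bound $|\mu(s,a)-\hmu(s,a)|\leq\sqrt{2\mu(s,a)\log(6\nS\nA/\delta)/N_e}+\mathcal{O}(\varsigma)$ valid on $\mathcal{E}$ (where $\mathcal{O}(\varsigma)$ absorbs both the floor correction and the Bernstein additive tail), and treat the two pieces separately. The floor piece contributes at most $\mathcal{O}(\varsigma)\cdot\sum_{s,a} x/\hmu\leq\mathcal{O}\bigl(\varsigma\cdot\cN\hphi/(1-\gamma)\bigr)=\mathcal{O}(\varphi(1-\gamma)\epsilon_e)$, which motivates the stated choice of $\varsigma$. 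For the Bernstein piece, Cauchy--Schwarz applied to $\sum_{s,a}\sqrt{\mu(s,a)}\cdot x(s,a)/\hmu(s,a)$ produces two factors controlled by $\sum_{s,a}\mu\cdot x/\hmu\leq 2\sum_{s,a} x\leq 8/(1-\gamma)$ (invoking part (1)) and $\sum_{s,a} x/\hmu\leq\cN\hphi/(1-\gamma)$, yielding a total of order $\sqrt{\cN\hphi\log(6\nS\nA/\delta)/(N_e(1-\gamma)^2)}\leq\mathcal{O}(\varphi(1-\gamma)\epsilon_e)$ under the stated lower bound on $N_e$.

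The main obstacle is the careful coordination between the Bernstein tail, the floor $\varsigma$, and the two size constraints defining $\cX$. The floor is indispensable because $\mu(s,a)$ may vanish on many pairs, yet it introduces systematic distortion $w(s,a)=\mu/\hmu\neq 1$ precisely on thin pairs that must be charged into the $\ell^1$ error. The choice $\varsigma=\Theta\bigl(\varphi(1-\gamma)^2\epsilon_e/(\cN\hphi)\bigr)$ is tuned precisely so that $\varsigma\cdot\cN\hphi/(1-\gamma)$ fits within the target error budget, and the lower bound on $N_e$ is tuned so that the Bernstein contribution fits within the same budget; beyond this balancing act, the remaining steps are routine manipulations of the inequalities defining $\cX$ together with the concentration event $\mathcal{E}$.
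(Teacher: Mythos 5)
Your overall route coincides with the paper's proof in Appendix B: a single Bernstein concentration event over all $(s,a)$ pairs (the complement of the paper's $\Omega$), part (1) from that event plus the floor $\varsigma$, part (2) by splitting $\sum_{s,a}x(s,a)$ into a well-covered regime where $\hmu\lesssim \mu$ and a thin regime charged to $\varsigma$, and part (3) by decomposing $\nrm{Wx-x}_1$ into a floor piece of size $\varsigma\cdot\cN\hphi/(1-\gamma)$ and a Bernstein piece handled by Cauchy--Schwarz against the two budgets $\sum_{s,a}\frac{\mu}{\hmu}x\leq 2\sum_{s,a}x\leq 8/(1-\gamma)$ and $\sum_{s,a}x/\hmu\leq\cN\hphi/(1-\gamma)$. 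Parts (1) and (3) are sound as written and match the paper's argument; the use of the set $\{\mu<\varsigma\}$ in place of the paper's $\mathfrak{S}=\{(s,a):\hmu(s,a)=\varsigma\}$ is cosmetic.

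The one step I would not accept as written is the thin regime in part (2). You propose to control $\sum_{\mu(s,a)<\varsigma}\frac{\hmu(s,a)}{\mu(s,a)}\nu^{\pi}(s,a)$ using the \emph{pointwise} bound $\nu^{\pi}(s,a)/\mu(s,a)\leq\hphi/(1-\gamma)$. On the thin set you only have the pointwise bound $\hmu(s,a)\leq 2\varsigma$, so this route yields a bound proportional to the cardinality of the thin set, up to $\nS\nA\cdot\varsigma\cdot\hphi/(1-\gamma)=\Theta\bigl(\frac{\nS\nA}{\cN}\varphi(1-\gamma)\epsilon_e\bigr)$. When $\cN=\nS+I\ll\nS\nA$ and $\gamma$ is bounded away from $1$, this can exceed the budget $4/(1-\gamma)$ (e.g.\ $\gamma=\tfrac12$, $I=1$, $\nA$ huge), so the step fails as stated. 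What closes the argument is the \emph{aggregated} constraint in the definition \eqref{defn:Dphi-set}, namely $\sum_{s,a}\nu^{\pi}(s,a)/\mu(s,a)\leq\cN\hphi/(1-\gamma)$: multiplying it by the uniform bound $\hmu\leq 2\varsigma$ on the thin set gives a contribution of at most $2\varsigma\cdot\cN\hphi/(1-\gamma)=\varphi(1-\gamma)\epsilon_e$ independently of how many pairs are thin. This is precisely why $D(\hphi)$ carries the second ($\ell^1$-type) constraint and why $\varsigma$ is scaled by $1/(\cN\hphi)$ rather than $1/(\nS\nA\hphi)$; you already invoke the aggregated constraint correctly in part (3), so the fix is to do the same here.
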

All the rest of our analyses are all conditioning on the success of \cref{prop:empirical-dis}.
It is worth noting that in \cref{prop:empirical-dis}, (3) clarifies the validity of constructing the output policy $\opi$ with $\ox$ instead of $W\ox$; (2) explains why the feasible region $\cX$ is defined as \eqref{eqn:def-feasible}; and (1), combined with the carefully specified feasible domains, provides the proper upper bounds on the magnitude and variance of the unbiased gradient estimators in \eqref{eqn:def-gradient}. A very detailed discussion is provided in Appendix \ref{appdx:grad-var}. In particular, for the $\hg_x(\cdot)$ estimator, an explicit $\mathcal{O}(\cN)$ dependence has been established for both the magnitude and variance, which plays a crucial role in deriving the optimal $\mathcal{O}(\min\{\nS\nA,\nS+I\})$ dependence on $\nS$, $\nA$ and $I$. 
Let us define the following gap to measure the performance of the output $\bar x$ w.r.t. problem \eqref{prob:weighted-minimax}:
\begin{equation}\label{eqn:def-duality-gap}
	\begin{aligned}
		\Gap(\ox):=\max_{x\in\cX}\min_{V\in\cV,\lambda\in\Lambda}\cL_w(V,\lambda,x)-\min_{V\in\cV,\lambda\in\Lambda}\cL_w(V,\lambda,\ox).
	\end{aligned}
\end{equation}
Based on the properly bounded gradient estimators, a high probability bound for $\Gap(\ox)$ is established in the following theorem. Its proof is detailed in \cref{appdx:ThmGap}.
\begin{theorem}\label{thm:algB-gap}
	Suppose the constants $\eta_t$, $\alpha_V$, $\alpha_\lambda$, $\alpha_x$ and $\kappa$ are chosen the same as Theorem \ref{thm:algB-thm-demo}. Then there is a universal constant $c_o$ such that, as long as $T\geq c_o\frac{\cN\hphi\clog}{\varphi^2(1-\gamma)^4\epsilon^2}$, the output $\ox$ satisfies $\Gap(\ox)\leq\frac{\epsilon}{2}$ \highprobs{3}.
\end{theorem}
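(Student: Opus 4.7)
The plan is to apply the stochastic mirror descent/ascent framework to the convex-concave saddle point problem \eqref{prob:weighted-minimax}, exploiting the multi-linearity of $\cL_w$ in each of $V$, $\lambda$, $x$, together with the boundedness properties of the gradient estimators guaranteed by \cref{prop:empirical-dis}.

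First, I would reduce the duality gap to three regret terms. Since $\cL_w$ is affine in each argument separately, Jensen's inequality applied to the averages is an equality. Hence, for any maximizer $x^{\star}$ of $\cL_w(\bar V,\bar\lambda,\cdot)$ on $\cX$ and any minimizer $(V^{\star},\lambda^{\star})$ of $\cL_w(\cdot,\cdot,\bar x)$ on $\cV\times\Lambda$,
\begin{equation*}
\Gap(\bar x) \;=\; \frac{1}{T}\sum_{t=1}^{T}\bigl[\cL_w(V^t,\lambda^t,x^{\star}) - \cL_w(V^{\star},\lambda^{\star},x^t)\bigr],
\end{equation*}
and bilinearity further expands the right-hand side into three inner-product sums of the true gradients of $\cL_w$ against $(V^t-V^{\star})$, $(\lambda^t-\lambda^{\star})$, and $(x^{\star}-x^t)$, plus pure bilinear cross terms that cancel.

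Second, I would apply the standard three-point prox-inequality to each update in \eqref{eqn:def-update}: squared-Euclidean regret for $V$, and generalized-KL regret for $\lambda$ and $x$. This produces, for each variable $u\in\{V,\lambda,x\}$,
\begin{equation*}
\sum_{t=1}^{T}\langle g_u^t,\,u^t-u^{\star}\rangle \;\le\; \frac{\alpha_u}{\eta}\,D_{u}(u^{\star},u^{1}) + \frac{\eta}{2\alpha_u}\sum_{t=1}^{T}\|g_u^t\|_{u,*}^{2},
\end{equation*}
in the appropriate (possibly local) dual norm. The Bregman divergences from $u^{1}$ to $u^{\star}$ are bounded by the diameters of $\cV$, $\Lambda$, $\cX$, which come out to $\mathcal{O}(\nS/((1-\gamma)^{2}\varphi^{2}))$, $\mathcal{O}(\clog/\varphi)$, and $\mathcal{O}(\clog/(1-\gamma))$ respectively by a direct calculation from the explicit $Z^{1}$. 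Using \cref{prop:empirical-dis}(1), the second-moment sums of the stochastic gradients are controlled via the magnitude and variance bounds worked out in Appendix \ref{appdx:grad-var}. Substituting the stated values of $\alpha_V$, $\alpha_\lambda$, $\alpha_x$ balances the two regret terms per variable, after which the overall scaling is dominated by the $x$-block.

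Third, I would upgrade the resulting in-expectation bound to a high-probability bound. The difference between the true-gradient and stochastic-gradient regret is a martingale difference sum whose comparator $u^{\star}$ is deterministic, so I would bound it with a Freedman-type inequality using the almost-sure magnitude and conditional variance estimates; a Bernstein-type bound likewise handles the deviation of $\sum_t\|g_u^t\|_{u,*}^{2}$ from its expectation. A union bound over the three variables contributes only a $\clog=\log(\hphi\nS\nA I/\delta)$ factor. Setting $\eta=1/\sqrt{T}$ and $T\ge c_o\,\cN\hphi\clog/(\varphi^{2}(1-\gamma)^{4}\epsilon^{2})$ then enforces $\Gap(\bar x)\le\epsilon/2$ with probability at least $1-\delta/3$.

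The principal obstacle is producing the sharp $\cN$ factor (rather than $\nS\nA$) in the $x$-block. The key step is to pass, via the local-norm mirror-descent analysis for negative entropy, to the expected quadratic $\EE_\zeta[x^t(s,a)(\hg_x^t(s,a))^{2}]$, which evaluates to a sum of the form $\sum_{s,a}(x^t(s,a)/\hmu(s,a))\cdot(\cdot)^{2}$ once the reference distribution is integrated out. The deviation-control constraint $\sum_{s,a}x(s,a)/\hmu(s,a)\le\cN\hphi/(1-\gamma)$ that is hard-coded into $\cX$ then collapses this into an $\mathcal{O}(\cN\hphi/(1-\gamma))\cdot\|\cdot\|_\infty^{2}$ bound. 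Coupled with \cref{proposition:sparsity}, this is precisely the place where the deviation-control design extracts the near-optimal $\cN\hphi$ dependence rather than the naive $\nS\nA\hphi$, while keeping the Bregman-divergence diameter only logarithmic.
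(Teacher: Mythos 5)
Your overall architecture — the bilinear reduction of $\Gap(\ox)$ to three regret sums, the three-point prox inequality with local norms for the KL blocks, the Bernstein control of $\sum_t\|g_u^t\|_{u,*}^2$, and the identification of the constraint $\sum_{s,a}x(s,a)/\hmu(s,a)\leq \cN\hphi/(1-\gamma)$ as the source of the $\cN$ factor — matches the paper's proof in Appendix \ref{appdx:ThmGap}. But there is a genuine gap in your third step: you assert that the comparators are deterministic, and they are not. The minimizer $(V^\star,\lambda^\star)$ of $\cL_w(\cdot,\cdot,\ox)$ depends on $\ox$, i.e.\ on the entire sample trajectory, and your $x^\star=\argmax_x\cL_w(\oV,\ol,x)$ depends on $(\oV,\ol)$. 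Consequently the sums $\sum_t\langle \hg_u(Z^t;\zeta_t)-\nabla_u\cL_w(Z^t),\,u^t-u^\star\rangle$ are \emph{not} martingale difference sums, and Freedman's inequality cannot be applied to them as written. The paper isolates exactly this issue: it takes $x'=\argmax_{x\in\cX}\min_{V,\lambda}\cL_w(V,\lambda,x)$, which is deterministic, so that only the $(V,\lambda)$ comparators are random; it then splits $S_2$ into a zero-mean part (with comparators $V^1,\lambda^1,x'$, all deterministic or $\cF_t$-measurable) and a correlated part $S_{2,c}=\langle\frac1T\sum_t\Delta_V^t,V'-V^1\rangle+\langle\frac1T\sum_t\Delta_\lambda^t,\lambda'-\lambda^1\rangle$, which is handled by bounding $\|\frac1T\sum_t\Delta_V^t\|$ and $\|\frac1T\sum_t\Delta_\lambda^t\|_\infty$ via the vector Bernstein/Freedman inequality (Lemma \ref{lemma:concen-vec}) \emph{uniformly over the comparator}, then pairing with the deterministic diameters $D_V, D_{\lambda,1}$.

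This decoupling is not merely cosmetic, and your choice of $x^\star$ cannot be rescued the same way. A uniform-over-$\cX$ treatment of the $x$-block would require controlling $\|\sum_t\Delta_x^t\|_\infty$, and the magnitude and variance of $\hg_x$ in the $\ell_\infty$ norm scale like $1/\varsigma=\Theta(\cN\hphi/(\varphi(1-\gamma)^2\epsilon))$ (see Remark \ref{rmk:large-var}); plugging this into the vector Bernstein bound yields a term of order $\sqrt{\epsilon}$ rather than $\epsilon$ at the stated sample size, destroying the rate. The whole point of the local-norm $\|\cdot\|_{x^t}$ analysis is to avoid ever invoking $\ell_\infty$ control of $\hg_x$, and that is only possible if the $x$-comparator is deterministic so the corresponding error term is a genuine scalar martingale. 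To repair your argument: replace $x^\star$ by the deterministic max--min point $x'$ (the first term of the gap is $\min_{V,\lambda}\cL_w(V,\lambda,x')\leq\cL_w(\oV,\ol,x')$, so the decomposition still upper-bounds $\Gap(\ox)$), and add the correlated-part argument for $(V^\star,\lambda^\star)$.
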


Given Theorem \ref{thm:algB-gap}, we finalize the proof of Theorem \ref{thm:algB-thm-demo} by properly transforming the bound on $\Gap(\ox)$ to the expected reward gap and the constraint violation on the original CMDP problem \eqref{eqn:srl-obj-demo}, which is discussed in details in Appendix \ref{subsect:duality-to-regret}.

\subsection{Extension to asynchronous setting}\label{subsect:async}
In some situations, an independent dataset that satisfies Assumption \ref{assumption:SyncData} may not be available. Instead, the dataset may have the following asynchronous structure.
\begin{assumption}
\label{assumption:AsyncData}
  The \textit{asynchronous} dataset $\cD_{async}$ is a single sample trajectory generated by some behavior policy $\pi_b$. Namely, what we observe is a sequence 
  $\{s_t,a_t,r_t,\bu_t\}_{t\geq1}$
  generated under $\pib$. We assume the Markov Chain $\{(s_t,a_t)\}_{t\geq1}$ is  irreducible, aperiodic and uniformly ergodic, with the stationary distribution $\mu$ and the mixing time $t_\mix<+\infty$.
\end{assumption}
The asynchronous data structure introduced here is frequently considered in RL, for example, the asynchronous Q-learning \cite{Async-Q}. However, to our best knowledge, this type of offline data has yet been considered under the assumption of a finite single-policy concentrability. 
In this situation, we set $\zeta_t=(s_t^0,s_t,a_t,s_{t+1},r_t,\bu_t)$ in the \algB\ method (Algorithm \ref{algo:algB}), where $s_t^0\sim\rho_0$ and $(s_t,a_t,s_{t+1},r_t,\bu_t)$ is the tuple in the $t$-th time step of the asynchronous dataset. The sample complexity of the \algB\ Algorithm under \cref{assumption:AsyncData} is established as follows.


\begin{theorem}\label{thm:algB-thm-markov-demo}
Under \cref{assumption:AsyncData}, we follow the choice of constants in \cref{thm:algB-thm-demo}. Then given any fixed $\epsilon\in\left(0, \frac{1}{10(1-\gamma)}\right]$, $\hphi\geq C^*$, and $T\geq c'_o\frac{t_{\mix}^2\cN\hphi\clog^3}{\varphi^2(1-\gamma)^4\epsilon^2}$, the output policy $\overline\pi$ of \algB\ satisfies the following \highprobdemo
  \begin{equation*}
  	 J(\pi^*)-J(\overline\pi)\leq \epsilon\qquad\mbox{and}\qquad
  	 J^u_i(\overline\pi)\geq 0,\forall i\in[I].
  \end{equation*} 
  Here $\clog=\log\left(T\nS\nA I/\delta\right)$ and $c_o'$ is a universal constant.
  Therefore, when $\hphi=\cO(C^*)$, \algB\ needs at most $\tbO{\frac{t_{\mix}^2\cN C^*}{\varphi^2(1-\gamma)^4\epsilon^2}}$ samples to find a safe $\epsilon$-optimal policy.
\end{theorem}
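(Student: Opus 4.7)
The plan is to reduce the asynchronous setting to the i.i.d.\ analysis of \cref{thm:algB-gap} via a standard ``buffered block'' argument built on the uniform ergodicity. The overall structure of \algB\ is unchanged, so I would keep the feasible regions, stepsizes, and conservatism level $\kappa$ exactly as in \cref{thm:algB-thm-demo}, and reuse Proposition \ref{prop:empirical-dis} for the estimated reference distribution $\hat\mu$ (the only change being that the $N_e$ samples used to build $\hat\mu$ are now drawn by skipping every $\Theta(t_\mix\clog)$ steps of the trajectory, which still gives near-$\mu$ empirical frequencies by a Bernstein-type inequality for Markov chains). All subsequent pointwise magnitude/variance bounds on $\hg_V,\hg_\lambda,\hg_x$ carry over because they are deterministic consequences of $Z^t\in\cV\times\Lambda\times\cX$ and $\hat\mu(s,a)\geq\varsigma$.

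Next I would set a mixing radius $\tau=\Theta(t_{\mix}\clog)$ such that for any bounded $\cF_{t-\tau}$-measurable function $f$, $|\cond{f(\zeta_t)}{\cF_{t-\tau}}-\EE_{\mu}[f]|\leq \nrm{f}_\infty/T^2$; this follows directly from \cref{assumption:AsyncData}. Using this, for the standard mirror-descent regret decomposition on each coordinate of $Z=(V,\lambda,x)$, I would split the ``stochastic error'' $\iprod{g^t-\nabla\cL_w(Z^t)}{Z^t-Z^\star}$ into (i) a mixing-bias term $\iprod{\cond{g^t}{\cF_{t-\tau}}-\nabla\cL_w(Z^{t-\tau})}{Z^{t-\tau}-Z^\star}$ bounded by $\mathcal{O}(1/T)$ by the choice of $\tau$; (ii) a drift term $\iprod{g^t-\cond{g^t}{\cF_{t-\tau}}}{Z^t-Z^{t-\tau}}$ which uses $\nrm{Z^t-Z^{t-\tau}}\leq \tau\eta_t\cdot B$ with $B$ the gradient magnitude; and (iii) a residual martingale-difference-like term $\iprod{g^t-\cond{g^t}{\cF_{t-\tau}}}{Z^{t-\tau}-Z^\star}$ which, being $\cF_{t-\tau}$-adapted, can be controlled with a Freedman/Azuma bound for $\tau$-shifted sums (or equivalently a Paulin Bernstein inequality) at the cost of an extra $\sqrt{\tau\clog}$ factor in the deviation.

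Putting (i)--(iii) together yields a replacement of \cref{thm:algB-gap} of the form $\Gap(\bar x)\leq \mathcal{O}(\frac{\tau B}{\sqrt{T}})+\mathcal{O}(\sqrt{\tau}\cdot\text{(i.i.d.\ gap)})$. The dominant scale comes from the $\hg_x$ coordinate, whose $\ell_\infty$-magnitude bound $B_x=\mathcal{O}(\cN\hphi/[\varphi(1-\gamma)^2])$ (already derived in Appendix \ref{appdx:grad-var}) multiplies $\tau=\tilde\Theta(t_\mix)$. Matching this to $\epsilon$ and plugging in $\eta_t=1/\sqrt{T}$, $\alpha_x,\alpha_\lambda,\alpha_V$ as in \cref{thm:algB-thm-demo}, gives the stated $T\gtrsim t_\mix^2\cN\hphi\clog^3/[\varphi^2(1-\gamma)^4\epsilon^2]$. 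The translation from $\Gap(\bar x)\leq\epsilon/2$ to the reward gap and zero constraint violation then proceeds verbatim as in \cref{subsect:duality-to-regret}.

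The main obstacle is step (ii), the drift term: the $\hg_x$ estimator carries the $1/\hat\mu(s,a)$ weight, so its pointwise magnitude scales with $\cN\hphi/[\varphi(1-\gamma)^2]$ rather than with $1/(1-\gamma)$, and this $\cN$ is precisely what is being fought to keep near-optimal. The drift contribution integrates $\tau\eta_t B_x^2$ over $T$ steps, so a cavalier bound produces an unwanted $\cN^2$ inflation. The resolution is to keep one of the two $B_x$ factors as a \emph{second-moment} bound against $\mu$ (not a sup bound), using exactly the variance estimates from Appendix \ref{appdx:grad-var}; this restores an $\mathcal{O}(\cN)$ dependence in $T$ and is the source of the $t_\mix^2$ (rather than $t_\mix$) factor in the final complexity.
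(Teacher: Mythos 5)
Your overall architecture coincides with the paper's: the same $\tau$-shift decomposition of the stochastic error into a mixing-bias term, a drift term, and a $\tau$-shifted martingale term (your (i)--(iii) correspond to the paper's $\Gamma_2$, $\Gamma_1+\Gamma_4$, and $\Gamma_3$ in \cref{appdx:markov-s2}), the same choice $\tau=\Theta(t_\mix\clog)$, the same Bernstein inequality for $\tau$-interleaved martingale sums, and the same final passage from $\Gap(\ox)\leq\epsilon/2$ to the reward gap and zero violation via \cref{subsect:duality-to-regret}. Building $\hmu$ by subsampling the trajectory is a harmless variant of \cref{prop:empirical-dis-markov}.

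The gap is in your final paragraph. You propose to tame the drift term by replacing one sup-norm factor of $\hg_x$ with ``exactly the variance estimates from Appendix \ref{appdx:grad-var},'' but those estimates are conditional second moments $\EE\big[\|\hg_x(Z;\zeta)\|_{x'}^2\big]$ computed for $\zeta\sim\mu$ with $x'$ \emph{independent} of $\zeta$. In the asynchronous run, the sums that actually arise --- $\sum_t\|\hg_x(Z^t;\zeta_t)\|_{x^t}^2$ in the mirror-descent regret and $\sum_t p(|x^t-x^{t-\tau}|;\zeta_t)$ in the drift --- pair the sample $\zeta_t$ with an iterate $x^t$ that is correlated with it; the conditional law of $(s_t,a_t)$ given $\cF_{t-1}$ need not resemble $\mu$, and could in principle concentrate where $x^t/\hmu$ is largest, so the per-step second moment is only bounded by the $1/\varsigma$-scale sup bound, which reintroduces the $\cN^2$-type inflation you are trying to avoid. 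The paper closes this with a two-stage bootstrap (\cref{prop:markov-bound-p-q-tau-demo}, proved in \cref{appdx:markov-bound-p-q-tau}): first the \emph{shifted} sums $\sum_t q(x^t;\zeta_{t+\tau})$ are bounded by Bernstein, since there $x^t$ is $\cF_t$-measurable and $\zeta_{t+\tau}$ is nearly stationary given $\cF_t$; then the unshifted sums are related to the shifted ones through the iterate increments, which are themselves controlled by the very quantity being bounded. This produces a self-bounding inequality $Q_2\leq Q_1+c\tau\sqrt{Q_2Q_3}$ in \eqref{eqn:markov-cycling} that only resolves when $T\geqsim \tau_0^2\cN\hphi\clog/(\varphi^2(1-\gamma)^4\epsilon^2)$ --- which is an independent source of the $t_\mix^2$ threshold, on top of the $\tau\sqrt{1/T}$ scaling you identify. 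Without some version of this recursion, your claimed second-moment control of the drift term is unjustified and the argument does not close.
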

The main framework for proving Theorem \ref{thm:algB-thm-markov-demo} is similar to that in Section \ref{sec:algB-analysis}, thus we present the proof in the Appendix \ref{appdx:asyn}. However, compared to the synchronous setting, a key difficulty here is that the gradient estimators  $\hg_V(Z^t;\zeta_t)$, $\hg_\lambda(Z^t;\zeta_t)$, and $\hg_x(Z^t;\zeta_t)$ are no longer unbiased, because the samples $\{\zeta_{t}\}_{t=1}^T$ are obtained from a sample path. This brings further difficulties in the analysis because the variance of the estimators can be amplified by the correlation between samples.

The basic idea to deal with this difficulty is to leverage the mixing property of the uniformly ergodic Markov chain. Take the $\hg_x(\cdot)$ estimator for example,  the bias can be well controlled as long as  $T$ is selected larger than the mixing time $t_{\mix}$ of the sample path, which can be illustrated by the following decomposition
\begin{equation}\label{eqn:markov-decomp-demo}
\begin{aligned}
	\!\!\!\hg_x(Z^t;\zeta_t)&\!-\!\nabla_x \cL_w(Z^t)
	\,\,=\,\,\,\underbrace{
		\hg_x(Z^t;\zeta_t)\!-\!\hg_x(Z^{t-\tau};\zeta_t)\!+\!\nabla_x \cL_w(Z^{t-\tau})-\nabla_x \cL_w(Z^t)
	}_{\text{order }\bigO{\tau\eta}}\\
	&+\underbrace{
		\hg_x(Z^{t-\tau}\!;\zeta_t)\!-\!\cond{\hg_x(Z^{t-\tau}\!;\zeta_t)}{Z^{t-\tau}}
	}_{\text{zero mean}}\!+ \underbrace{
		\cond{\hg_x(Z^{t-\tau}\!;\zeta_t)}{Z^{t-\tau}}\!-\!\nabla_x \cL_w(Z^{t-\tau})
	}_{\text{order }\bigO{\exp(-\tau/t_{\mix})}}\!.
\end{aligned}\!\!
\end{equation}
When $t=\tbOm{t_{\mix}}$, one can bound the bias of $\hg_x(Z^t;\zeta_t)$ by $\tbO{t_{\mix}\eta}$ with suitably chosen $\tau$. 

\section{Lower Bound of Sample Complexity for Learning CMDP}
\label{sec:LowerBound}
In this section we will discuss whether the \algB\ Algorithm is the near-optimal  and whether  the Slater's condition (Assumption \ref{assumption:slater}) is necessary in achieving zero constraint violation. We answer these questions affirmatively by establishing the following theorems.

\begin{theorem}\label{thm:lower-bound-demo}
  Suppose $S\geq 4$, $A\geq 3$, $I\geq 8$, $C\geq 2$, $\gamma\in[\frac12,1)$, $N\geq 1$. For any learning algorithm $\mathfrak{A}$, there exists a CMDP $\cM=(\cS, \cA, \PP, r, (u_i)_{i\in[I]}, \gamma, \rho_0)$ and a reference distribution $\mu$, such that the following hold true.

  (1) $\nS\leq 4S+1$, $\nA\leq A$, and the concentrability coefficient $C^*$ for $\cM$ and $\mu$ satisfies $C^*\leq C$.

  (2) Let $\hpi$ be the policy output by $\mathfrak{A}$ given $N$ offline samples from $\mu$, and let $\pi^*$ be the optimal policy, then at least one of the following two inequalities hold true:
  $$
  \EE_{\cM,\mathfrak{A}}\!\left[J(\pi^*)\!-\!J(\hpi)\right]\!\geqsim\! \min\left\{\!\frac{1}{1-\gamma},\sqrt{\frac{\min\left\{SA,S\!+\!I\right\}C}{(1-\gamma)^3N}}\right\},
  \quad\mbox{and}\quad
  \EE_{\cM,\mathfrak{A}}\!\big[\!\mathrm{violation}(\hat{\pi})\!\big]\geqsim 1,
  $$
  where $\mathrm{violation}(\hat{\pi}):= \sum_{i=1}^I\neg{J^u_i(\hpi)}$, and $J^u_i$ is the utility w.r.t. the  constraints $J^u_i\geq 0, \forall i\in[I]$.
\end{theorem}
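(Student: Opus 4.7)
The plan is to prove the lower bound by constructing a family of hard CMDP instances and applying a Le Cam / Assouad style two-point argument that simultaneously controls the sub-optimality and the constraint violation. Since the complexity scales with $\min\{SA,S+I\}$, two separate families will be constructed---one designed to yield the $SA$ dependence (used when $SA\leq S+I$) and one the $S+I$ dependence (used otherwise)---and whichever is worse for the given algorithm $\mathfrak{A}$ is reported. The dichotomy ``either sub-optimality is large, or violation is large'' will emerge from a construction with \emph{tight} constraints, which is what ties the result to the necessity of Slater's condition.

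The underlying gadget is a set of independent ``decision blocks''. Each block has an entry state, reached from $\rho_0$, at which the agent picks one of $A$ actions; every post-decision state self-loops with probability $\gamma$, so that the effective horizon is $\Theta((1-\gamma)^{-1})$ and the value range scales accordingly (ultimately producing the $(1-\gamma)^{-3}$ factor from the usual variance $\times$ horizon $\times$ range accounting). A hidden parameter $\theta$ (a binary string indexed over blocks) specifies which action is the ``good'' one in each block: the good action earns reward $(1+\Delta)/(1-\gamma)$ and utility exactly $0$, while every other action earns reward $1/(1-\gamma)$ and utility $-\Theta(1)$. The constraint $J^u_i(\pi)\ge 0$ is therefore tight at $\pi^{*}$, which is the point at which the absence of a Slater gap bites. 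To inject the concentrability $C$, $\mu$ is chosen to place mass $\Theta(1/(SAC))$ on each decision pair and dump the residual mass on a cheap dummy state, yielding $C^{*}=\Theta(C)$ for $\pi^{*}$ while keeping the state count within $4S+1$.

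For the $SA$ variant, $S$ blocks with $A$ actions each are used, so $\theta\in\{0,1\}^{S\log_2 A}$; distinguishing two instances differing in a single block requires $\Omega(C/\Delta^{2})$ samples actually landing in that block, and $\mu$ places only an $\Omega(1/S)$ fraction of samples there. For the $S+I$ variant, the LP sparsity guaranteed by Proposition \ref{proposition:sparsity} is exploited by keeping only $\Theta(S+I)$ ``active'' decision pairs total, coupled through the $I$ utility constraints so that the hidden object is $O(S+I)$-bit rather than $O(SA)$-bit. Combining the two families and rebalancing gives the desired $\Omega(\nval\cdot C/((1-\gamma)^3\epsilon^{2}))$ scaling.

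A standard Assouad reduction via Pinsker's inequality then shows that, for $N$ below the stated threshold, any algorithm must err on a constant fraction of the blocks. Every erroneous block contributes reward loss $\Theta(\Delta/(1-\gamma))$ and, because bad actions carry utility $-\Theta(1)$ at a tight constraint, utility loss $\Theta(1)$; summing converts one of $\EE[J(\pi^{*})-J(\hat\pi)]$ and $\EE[\mathrm{violation}(\hat\pi)]$ into the claimed bound. The main technical obstacle is verifying that randomization cannot escape this dichotomy: mixing between good and bad actions trades sub-optimality for violation only \emph{linearly}, so $\hat\pi$ cannot simultaneously drive both below the stated thresholds. Checking this linear trade-off carefully---for each constraint index $i$ separately, then summing over $i\in[I]$---is what yields the ``at least one of the two inequalities'' conclusion and, as a byproduct, establishes the necessity of the Slater's condition claimed in the theorem.
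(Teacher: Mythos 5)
Your high-level template (a packing set of instances indexed by a hidden $\theta$, an information-theoretic indistinguishability bound, and a combined reward-plus-violation loss to force the ``at least one of the two'' dichotomy) matches the paper's, but the concrete construction you propose breaks the argument at the indistinguishability step. You make the \emph{utility function itself} depend on $\theta$ with a constant-order gap (good action has utility $0$, bad actions $-\Theta(1)$), and the reward gap is likewise attached deterministically to the identity of the good action. Since the offline tuples contain reward and utility observations, a learner can read off $\theta$ directly from $\tilde{\mathcal{O}}(SAC)$ samples with no dependence on $\epsilon$ or $(1-\gamma)$: the KL divergence between neighboring instances is $\Theta(1)$ per visit to the distinguishing pair, not $\Theta(\varpi^2)$. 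The paper avoids exactly this leak by making $r$ and all $u_i$ identical across instances and hiding $\theta$ only in the transition probabilities to the absorbing reward states $\sp,\sm$ (a perturbation of size $\varpi\simeq\sqrt{SKC/((1-\gamma)N)}$), and by making the constraints \emph{structural} --- they encode $\pi(a_i|s_1^j)\leq\pi(b_i|s_1^j)\leq\frac{1}{4K}$ and carry no information about $\theta$. Relatedly, because in your gadget a wrong action loses reward \emph{and} violates the constraint simultaneously, there is no genuine trade-off for your ``linear mixing'' analysis to control; the paper instead bounds $\pos{J^*-J(\hpi)}+\frac{\gamma\varpi}{1-\gamma}\violation(\hpi)$ from below by $\|\hat\theta(\hpi)-\theta\|_1$ (Lemma F.1) and applies generalized Fano, and the small weight $\frac{\gamma\varpi}{1-\gamma}$ on the violation is precisely what makes ``violation $\gtrsim 1$'' the correct alternative branch.

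A second gap is the $\min\{SA,S+I\}$ factor. Citing the LP sparsity of Proposition 2.1 and ``keeping $\Theta(S+I)$ active pairs'' does not explain why the learner must estimate all of them: if the optimal policy were deterministic per state, the hard instance would only encode $\mathcal{O}(S\log A)$ bits. The paper's mechanism is that the cap $\pi(a_i|s_1^j)\leq\frac{1}{4K}$ forces the optimal occupancy measure to spread over $\Theta(SK)\simeq\Theta(\min\{SA,S+I\})$ state--action pairs, each receiving only a $\Theta(1/(SKC))$ fraction of $\mu$, which is where the extra factor of $I$ enters the sample complexity; this is done in a single unified family (with $K,S_c,S_u$ chosen from $S,A,I$) rather than your two separate families. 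Finally, a smaller point: your constraints are tight at $\pi^*$, so your instances violate Slater's condition, whereas the paper notes its family can be modified to satisfy Slater with $\varphi=\Theta(1)$, which is needed for the lower bound to apply to the regime in which \algB\ is analyzed.
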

For \algB, the constraint violation is guaranteed to be zero with high probability, then only the first inequality is valid for our method, which indicates an $\Omega\big(\frac{\cN C^*}{(1-\gamma)^3\epsilon^2}\big)$  sample complexity lower bound. Therefore, the complexity of \algB\ is nearly optimal up to an $\tilde{\mathcal{O}}\big(\frac{1}{1-\gamma}\big)$ factor. 
Besides the lower bound, we also establish the necessity of the Slater's condition in ensuring zero violation. 

\begin{theorem}\label{theorem:slater}
	Let $S,A,C$, $\gamma$ be the same as Theorem \ref{thm:lower-bound-demo}. For any  algorithm $\mathfrak{A}$, there exists a CMDP $\cM=(\cS, \cA, \PP, r, (u_i)_{i\in[I]}, \gamma, \rho_0)$  with $I=1$, $\nS\leq S$, $\nA\leq A$ and a reference distribution $\mu$ with $C^*\leq C$, such that  $\EE_{\cM,\mathfrak{A}}[\mathrm{violation}(\hpi)]\geqsim \min\left\{\!\frac{1}{1-\gamma},\sqrt{\frac{SC}{(1-\gamma)^3N}}\right\},$  
	where $\hpi$ is the output policy  of $\mathfrak{A}$ given $N$ samples from $\mu$.
\end{theorem}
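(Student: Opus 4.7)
The plan is to prove this lower bound via a two-hypothesis reduction in the spirit of Le~Cam, adapting the hard instance that underlies Theorem~\ref{thm:lower-bound-demo} so that the failure mode manifests as constraint violation rather than reward suboptimality. The key conceptual point is that without Slater's condition ($\varphi=0$), the feasible set in policy space degenerates to the boundary of $\{J^u\geq 0\}$, so the unique optimal policy $\pi^*$ satisfies $J^u(\pi^*)=0$ exactly. Any $\Theta(1/\sqrt{N})$ sampling noise in the estimated utility must therefore translate directly into a $\Theta(1/\sqrt{N})$ constraint violation, because the algorithm cannot ``pull back'' into the interior without sacrificing the optimum at reward.

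First I would construct a base instance with $I=1$ constraint on a small number of states and two actions $a_+,a_-$, where $a_-$ has strictly smaller reward but ``safe'' utility, and $a_+$ has larger reward with utility sitting exactly at the boundary. I would then design a paired instance in which the utility of $a_+$ is perturbed by $-\delta$, so that the only feasible policy is to play $a_-$ almost surely, while the optimal policy of the original instance still deterministically chooses $a_+$. Choosing $\delta$ comparable to the sampling fluctuations makes the two instances statistically indistinguishable under the reference distribution $\mu$, so Le~Cam's two-point inequality forces any algorithm $\mathfrak{A}$ to output, on at least one of the two instances with constant probability, a policy that plays $a_+$ often enough to incur violation of order $\delta$.

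Next I would amplify the construction along the horizon and spatial dimensions exactly as in the proof sketch for Theorem~\ref{thm:lower-bound-demo}: embed the binary decision into a chain structure where the decision state is reached only with probability $\Theta(1-\gamma)$ per step, so (i) the decision state carries occupancy $\Theta(1/(1-\gamma))$ and any error at that state is amplified by $1/(1-\gamma)$ in $J^u$, and (ii) the $\mu$-mass at each decision state is $\Theta(1/(S C))$, limiting the effective per-copy sample budget to $N/(SC)$. Parallelising across $S$ independent copies and applying a Fano-type argument shows that any algorithm must err on a constant fraction of copies; summing the per-copy violations $\Omega(\delta)$ with the correct horizon scaling, and choosing $\delta \asymp \sqrt{SC/((1-\gamma)^3 N)}$ so that TV between the sample distributions is bounded below $1$, yields the target rate $\sqrt{SC/((1-\gamma)^3 N)}$. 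The $\min\{1/(1-\gamma),\cdot\}$ cap is trivial since $|J^u|\leq 1/(1-\gamma)$ deterministically.

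The principal obstacle is re-engineering the base instance to sit \emph{on} the feasibility boundary without leaving any ``hedging'' loophole. In the Slater setting, a mixed policy can shade away from the binding constraint at $O(\varphi)$ cost; here one must preclude that by ensuring every strictly feasible policy has reward bounded away from $J(\pi^*)$ by an $\Omega(1)$ margin, so that the algorithm cannot trade a known reward loss for guaranteed zero violation. I expect this is achieved by making $a_-$ produce a $\Theta(1/(1-\gamma))$ reward deficit relative to $a_+$, making any randomization between $a_+$ and $a_-$ either (a) match the utility of $\pi^*$ only in expectation, which still leaves $\Omega(\delta)$ violation in the worse instance by linearity, or (b) collapse to playing $a_-$ exclusively, which makes the reward gap enormous and thus conflicts with the requirement that the output be derived from an algorithm that ever ``tries'' to attain $J(\pi^*)$. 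Making this dichotomy precise, and chaining it with the concentrability-balanced Fano argument used for Theorem~\ref{thm:lower-bound-demo}, is the delicate step.
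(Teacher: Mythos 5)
There is a genuine gap in your construction, and it is exactly at the point you flag as ``the delicate step.'' Your base instance designates $a_-$ as an action with \emph{known-safe} utility and puts only $a_+$ on the feasibility boundary. But \cref{theorem:slater} lower-bounds $\EE[\violation(\hpi)]$ \emph{alone}: the algorithm is not required to achieve any particular reward, so the trivial algorithm that ignores the data and always outputs $\pi(a_-\,|\,\cdot)=1$ attains zero violation on every instance in your family, and the claimed bound fails for it. Your proposed escape --- arguing that playing $a_-$ ``conflicts with the requirement that the output be derived from an algorithm that ever tries to attain $J(\pi^*)$'' --- appeals to a requirement that is simply not present in the theorem statement; the dichotomy (a)/(b) therefore does not close the loophole. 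No amount of reward engineering can fix this, because reward never enters the conclusion.

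The paper's construction removes the safe harbor rather than penalizing it: it assigns \emph{no reward at all} and makes the two actions at each decision state $s^j$ symmetric, with transitions parametrized by a hidden bit $\theta_j$ so that action $a$ is exactly safe when $\theta_j=0$ and action $b$ is exactly safe when $\theta_j=1$ (utilities $u(\sp)=+1$, $u(\sm)=-1$, and $J^u(\pi)=-\frac{v\varpi}{S(1-\gamma)}\nrm{\pi_b-\theta}_1$). The unique safe policy sits on the boundary $J^u=0$ and \emph{which} action is safe must be learned from data; any policy, including any constant fallback, incurs violation proportional to its $\ell_1$ error in estimating $\theta$. The rest is a Varshamov--Gilbert packing of $\{0,1\}^S$ plus the generalized Fano inequality with $\varpi\asymp\sqrt{SC/((1-\gamma)N)}$, matching your amplification step. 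Your occupancy/concentrability scaling and the $\min\{1/(1-\gamma),\cdot\}$ cap are fine; the missing idea is making the identity of the safe action itself the unknown parameter, so that zero violation is information-theoretically unattainable rather than merely costly in reward.
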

\cref{theorem:slater} is obtained by utilizing the same idea as Theorem \ref{thm:lower-bound-demo}. Thus we only discuss the derivation of Theorem \ref{thm:lower-bound-demo}, while moving all the details to Appendix \ref{appdx:LowerBound}.  

For offline CMDPs, the fixed  data distribution $\mu$ fully dominates the frequency of exploring the state-action pairs.  Therefore, intuitively, the hard CMDP instances will be the ones with a large support $\mathrm{supp}(\nu^{\pi^*})$ that widely spreads across the less frequently visited station-action pairs of $\mu$. 
Based on this intuition, we design a basic block of CMDP presented in \cref{fig:lower-bound}, which is essentially a constrained bandit with $2K+1$ arms. The  instance $\cM$ will be $S$ replicas of the basic blocks, plus an extra ``null'' state $s_{-1}$ to control $C^*$. In this discussion, we only consider the case where $I\simeq KS$, the more general construction that cover full range of $I$ is presented in the appendix.  \vspace{-0.2cm}\\
\begin{figure}[h]
	\centering
	\vspace{-0.5cm}
	\subfloat[$(s_1^j,a_i)$]{\includegraphics[width=0.3\textwidth]{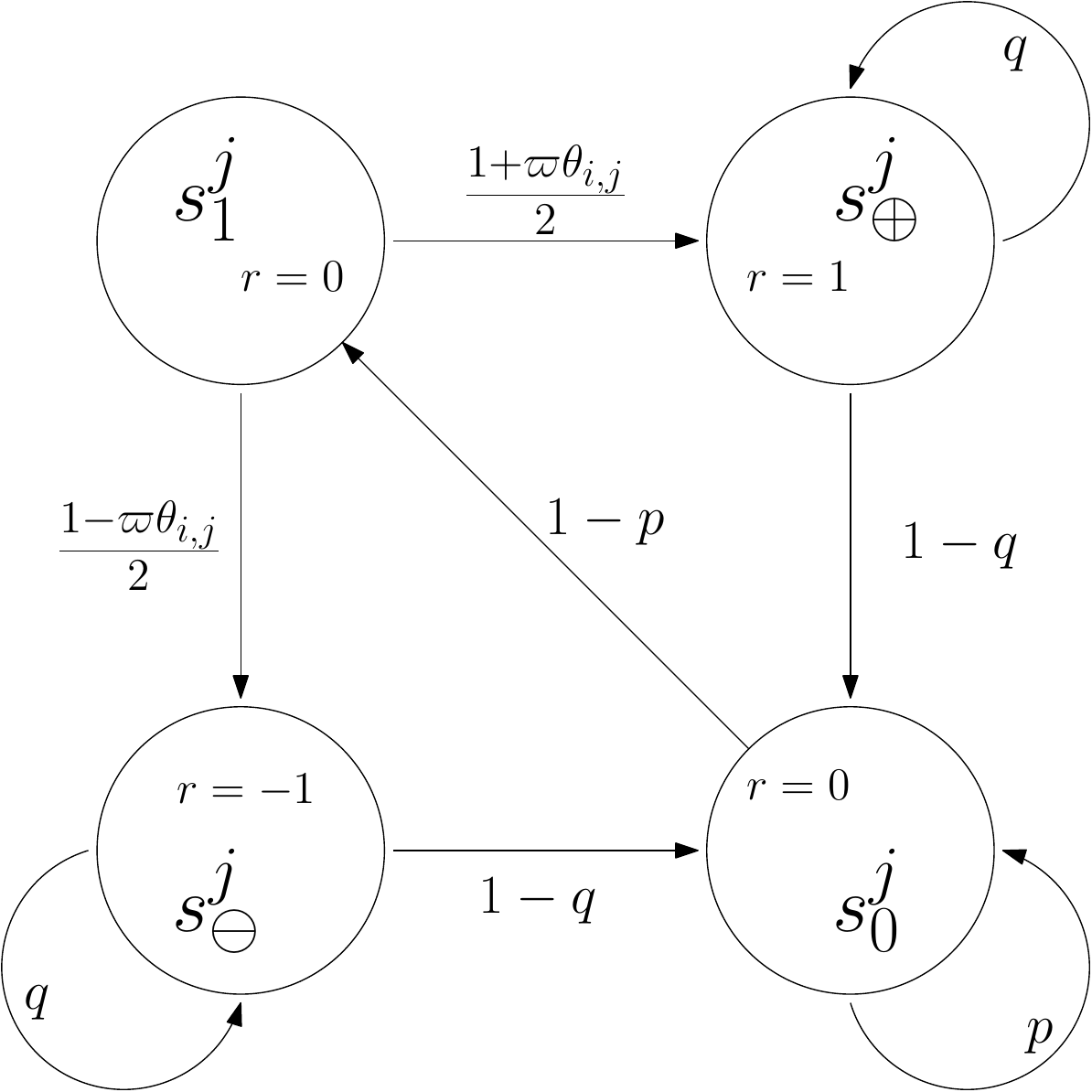}}\quad
	\subfloat[$(s_1^j,b_i)$]{\includegraphics[width=0.3\textwidth]{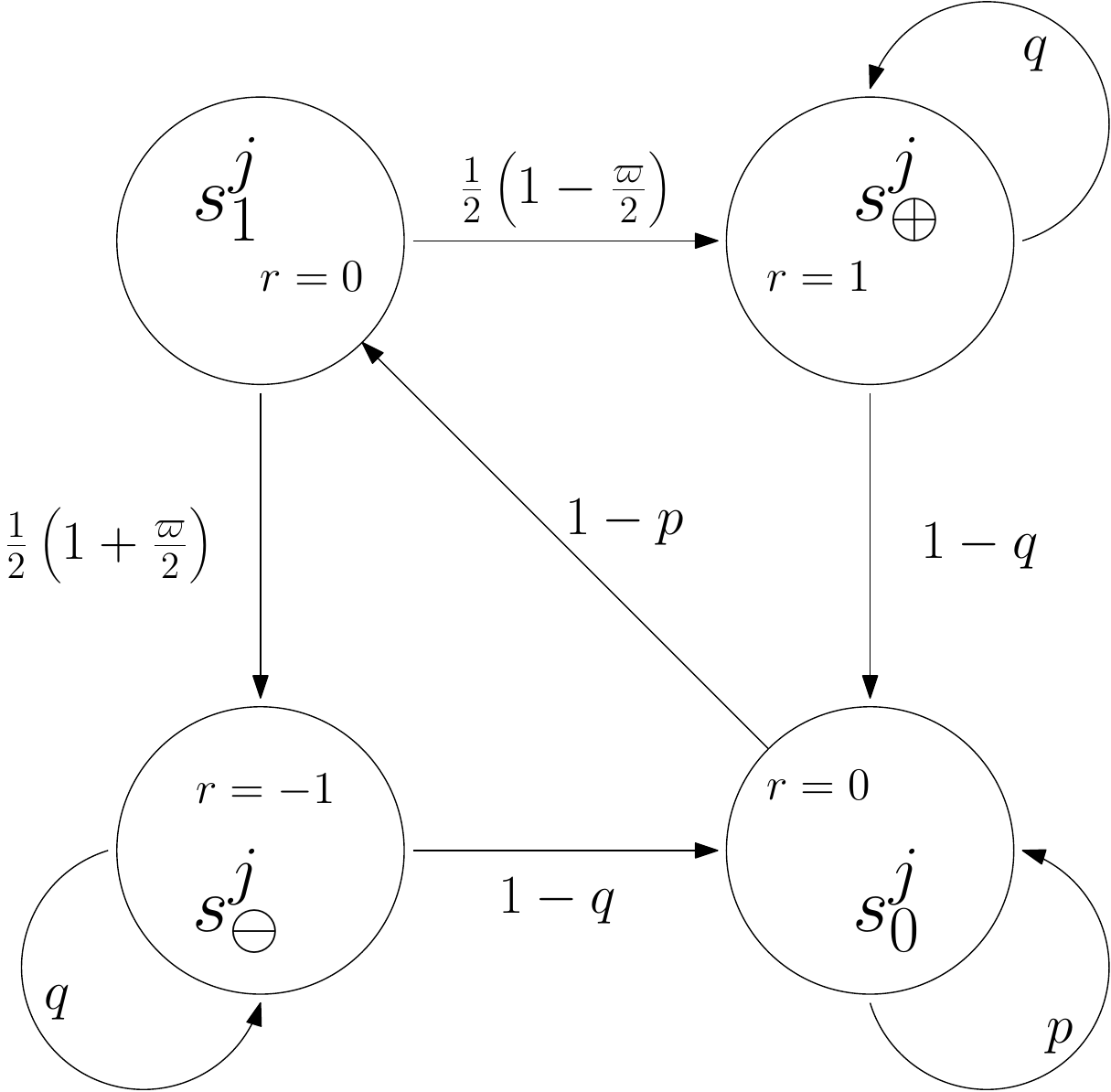}}\quad
	\subfloat[$(s_1^j,e)$]{\includegraphics[width=0.3\textwidth]{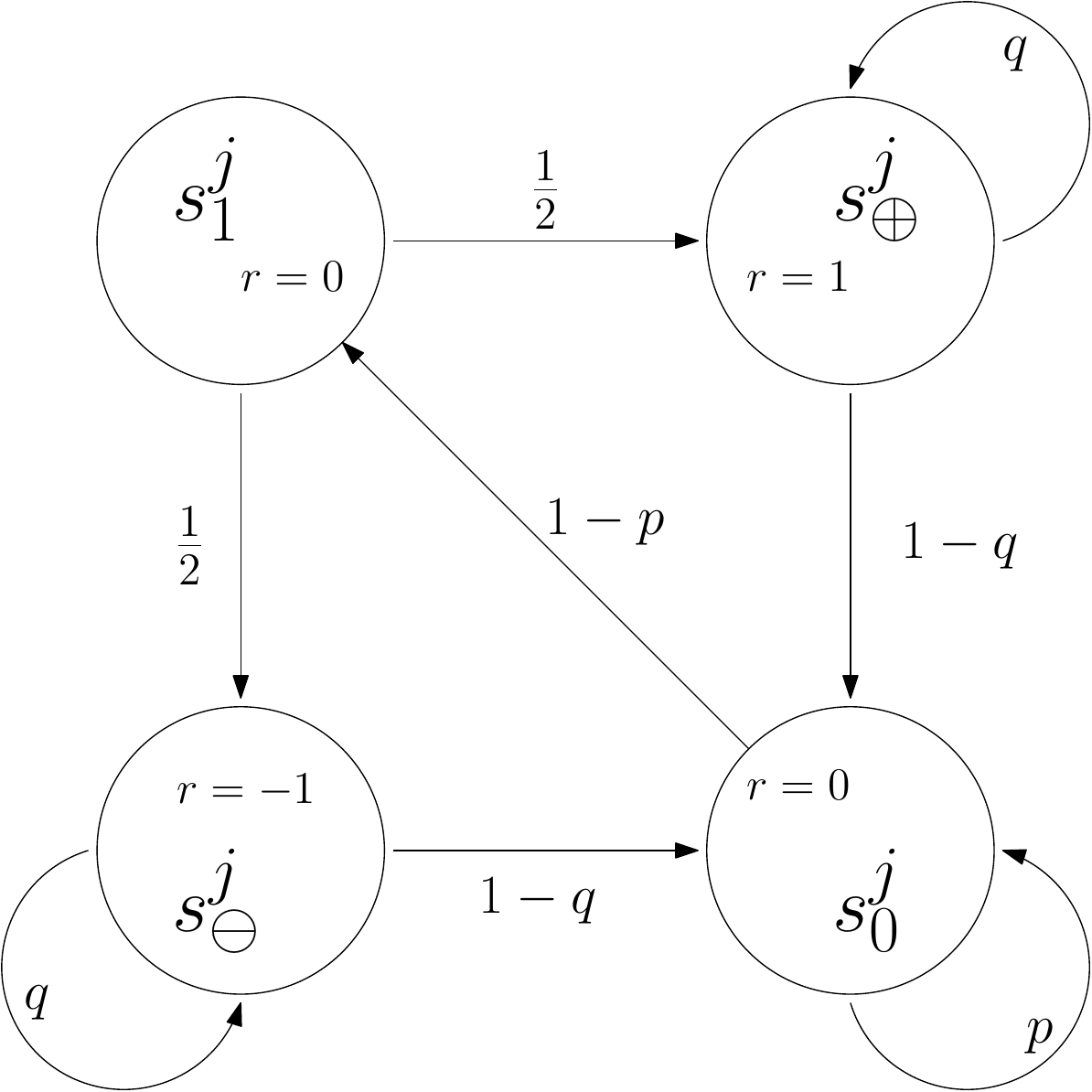}}
	\caption{Transition dynamics of the $j$th replica under different actions, $i\in[K]$.}
	\label{fig:lower-bound}
\end{figure}\vspace{-0.2cm}\\
\textbf{State, action and transition}. At the states $\sp^j,\sm^j,s_0^j$,  there is no action to be taken. At each state $s_1^j$, there are $2K+1$ actions $a_1,b_1,\cdots,a_K,b_K, e$. The transition dynamics of the $j$th replica under different actions are illustrated in \cref{fig:lower-bound} where the directed arcs  and the numbers associated with them are the transitions and the corresponding probabilities, where $p = \frac{1}{2-\gamma}$ and  $q=2-\frac{1}{\gamma}$ are some constants, while $\varpi$ and $\theta_{i,j}\in\{-1,1\}$, $\forall i,j$ are parameters to be designed.  \vspace{-0.1cm}

\textbf{Constraints and Reward}. By carefully selecting the  $u_i$'s, one can construct a set of $I\!=\!2SK$ constraints that indicate $\pi(a_i|s_1^j)\!\leq\!\pi(b_i|s_1^j)\!\leq\!\frac{1}{4K}$, $\forall i,j$.  For the reward, we set $r(s_1^j)=r(s_0^j)=0$, $r(\sp^j)=1$, and $r(\sm^j)=-1$,  regardless of the actions. At any replica $j$, we can view $a_i$, $b_i$, and $e$ as bandit arms with (cumulative) reward $c\varpi\theta_{i,j}$, $-\frac{c\varpi}{2}$, and $0$ respectively, for some $c>0$. When $\theta_{i,j}\!=\!-\!1$, one would rather pick $e$ . But when $\theta_{i,j}=1$, due to the constraint $\pi(a_i|s_1^j)\!\leq\!\pi(b_i|s_1^j)\!\leq\!\frac{1}{4K}$, picking $a_i$ and $b_i$ with equal probability $\frac{1}{4K}$ will be optimal. In fact, this $\frac{1}{4K}$ upper bound forces the support of the optimal policy to widely spread  across the $(i,j)$'s where $\theta_{i,j}=1$, and the task of learning is essentially determining whether $\theta_{i,j}=1$ for each $(i,j)$. \vspace{-0.1cm}


\textbf{Optimal policy}. Based on the above discussion, it is not hard to see that the unique optimal policy is  $\pi^{*,\theta}(a_i|s_1^j)=\pi^{*,\theta}(b_i|s_1^j)\!=\!\frac{\II\{\theta_{i,j}=1\}}{4K}$ and $\pi^{*,\theta}(e|s_1^j) \!=\! 1\!-\!\frac{1}{2K}\!\sum_{i=1}^K\!\II\{\theta_{i,j}\!=\!1\}$. 

Finally, with the above $\pi^{*,\theta}$ and a proper  initial distribution $\rho_0$, the occupancy measure can be explicitly computed and a reference distribution $\mu$ with concentrability coefficient $C^*\leq C$ can be designed. \vspace{-0.1cm}
Moreover, for any policy $\hat{\pi}$, we consider $\hat{\theta}_{i,j}(\hpi)\!:=\!8K\hpi(a_i|s_1^j)-1$, then\vspace{-0.1cm}
	\begin{equation*}
		\begin{aligned}
			\cL(\hpi;\theta):=\pos{J(\pi^{*,\theta};\theta)-J(\hpi;\theta)}+\frac{\gamma\varpi}{1-\gamma}\violation(\hpi;\theta)\geq \frac{\gamma^2\varpi\|\hat{\theta}(\hpi)-\theta\|_1}{64KS(1-\gamma)}.
		\end{aligned}\vspace{-0.05cm}
	\end{equation*}
	Namely, if $\hat{\theta}(\hpi)$ is not close enough to the underlying parameter $\theta$, the policy $\hpi$ will incur a considerable reward gap or constraint violation. 
By setting $\varpi\!=\!\min\left\{\!\sqrt{\frac{(SK-3)C}{16(1-\gamma)N}},\frac12\right\}$ to be a small enough number, any two CMDP instances with different $\theta$ parameters will be non-distinguishable, given $N$ samples from $\mu$.
According to \cite{gilbert1952comparison} and \cite{varshamov1957estimate}, there exists a subset $\varTheta\!\subseteq\!\{-1,1\}^{\!SK}$ such that $|\varTheta| \geq \exp (SK / 8)$, and $\left\|\theta-\theta'\right\|_{1} \!\geq\! \frac{SK}{2}$ for any pair of different $\theta, \theta'\!\in\! \varTheta$. In other words, there will be at least $\exp (SK / 8)$  CMDP instances with different enough $\theta$ parameters while being non-distinguishable under $N$ samples. Then the rest of the arguments will follow by applying the generalized Fano's inequality \cite{assouad1996fano}. A detailed proof is provided in Appendix \ref{appdx:LowerBound}.


\section{Adaptive deviation-control framework of \algB}\label{sect:adaptive-algB}

We should notice that in both Theorems \ref{thm:algB-thm-demo} and \ref{thm:algB-thm-markov-demo}, it has been explicitly emphasized that a prior belief $\hphi\geq C^*$ is required. 
Otherwise, both the reward and the constraints will suffer an extra loss of $\epsapp(\hphi)$. 
In this section, we propose an adaptive deviation-control framework (\cref{algo:adaptive-algB-demo}) to handle the practical situation where no such prior knowledge is available. \vspace{-0.1cm}

\begin{algorithm}
	\caption{The \adaptiveAlg\ framework } \label{algo:adaptive-algB-demo}
	\Input{Sub-optimality $\epsilon$, confidence level $\delta$.} 
	Initialize $\hphi_1$, default $J^K\equiv-\infty,$ for $K=0,1,2,...$\;
	\For{$K=1,2,\cdots$}{
		Call \algB\ with $\hphi=\hphi_K$, obtain an approximate solution $x^{(K)}$ and the policy $\pi^{(K)}$\;
		\If{$\verify\left(\xk;\epsilon,\delta\right)== \true$}{
			Compute $\hJ(\pi^{(K)})$ as an $\mathcal{O}(\epsilon)$-accurate estimator of $J(\pi^{(K)})$, set $J^K=\hJ(\pi^{(K)})$\;
		}
	   \lIf{$-\infty <J^K\leq J^{K-1} +\mathcal{O}(\epsilon)$}{\textbf{Terminate}}
       Set $\hphi_{K+1} = 2\hphi_K$\;
	}
	\Output{
		Policy $\pik$.\vspace{-0.1cm}
	}
\end{algorithm}

At a high-level, \cref{algo:adaptive-algB-demo} consists of the following steps.

\paragraph{Verification} For the output $\ox$ of the \algB, we develop a verification method $\verify(\ox;\epsilon,\delta)$ that, \highprobdemo, returns \true\ only when following two statements hold: (1). The vector $\overline{\nu}:=W\ox$ satisfies $\nrm{\sum_{a} (\id-\gamma \bP_a)\overline{\nu}_a-\rho_0}_1=\mathcal{O}(\epsilon)$, which essentially checks whether $\overline{\nu}$ is approximately a valid occupancy measure; (2). The policy $\opi$ induced by $\ox$ is safe. At step $K$, if any one of the two statements does not hold,  we immediately know $\hphi_K<C^*$ due to the analysis of Theorem \ref{thm:algB-thm-demo}. Consequently, we to double the coefficient $\hphi_{K+1}\leftarrow 2\hphi_K$ in the next iteration. 

\paragraph{Certifying performance improvement} When $\verify(\ox;\epsilon,\delta)$ returns \true, then it holds that
$j_0(\hphi)=J(\pi^{(K)})+\mathcal{O}(\epsilon)$, where $j_0(\hphi)$ denotes the optimal value of problem \eqref{prob:weighted-minimax} with $\kappa=0$. That is,  one can estimate $j_0(\hphi)$ with $\hJ(\pi^{(K)})$ if $\verify(\ox;\epsilon,\delta)=\true$. As long as \verify\ returns \true\ for two consecutive runs, and the performance improvement is small, i.e., $j_0(\hphi_{K})-j_0(\hphi_{K-1}) = \mathcal{O}(\epsilon)$, then Lemma \ref{lemma:adaptive-j0-error-bound-demo} guarantees that the safe policy $\pi^{(K)}$ is $\mathcal{O}(\frac{C^*}{\hphi_K}\epsilon)$-optimal.

\begin{lemma}\label{lemma:adaptive-j0-error-bound-demo}
	The function $j_0(\cdot)$ is strictly increasing in the range $\hphi\in[1,C^*]$, and $j_0(\hphi)=J(\pi^*)$ for $\hphi\geq C^*$. 	For any $\hphi<\hphi'\leq\ C^*$, it holds that\vspace{-0.05cm}
	$$J(\pi^*)-j_0(\hphi')\leq \frac{C^*-\hphi}{\hphi'-\hphi}\left(j_0(\hphi')-j_0(\hphi)\right).$$
\end{lemma}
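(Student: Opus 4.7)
}
The plan is to exploit the fact that $j_0(\hphi)$ is the optimal value of a linear program whose only $\hphi$-dependent piece is the convex feasible set $D(\hphi)$ defined by \eqref{defn:Dphi-set}, and that $D(\cdot)$ scales linearly in $\hphi$. The core idea is a single convex-combination argument: given an optimal occupancy measure $\nu(\hphi)$ attaining $j_0(\hphi)$ and an optimal occupancy measure $\nu^{*}$ of $\pi^{*}$ (which lies in $D(C^{*})$ by the definition of $C^{*}$ in Assumption \ref{assumption:concentrability}), every convex combination $\nu_{t}:=t\nu^{*}+(1-t)\nu(\hphi)$ inherits the linear occupancy equation $\sum_{a}(\id-\gamma\bP_{a})\nu_{t,a}=\rho_{0}$ and the constraints $\langle \nu_{t},u_{i}\rangle\geq 0$, while the deviation bounds in $D(\cdot)$ give $\nu_{t}\in D\bigl(tC^{*}+(1-t)\hphi\bigr)$.

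Before turning to the inequality, I will dispose of the two auxiliary claims. For $\hphi\geq C^{*}$, $\nu^{*}\in D(\hphi)$ is feasible for the problem defining $j_{0}(\hphi)$, so $j_{0}(\hphi)\geq J(\pi^{*})$; the reverse inequality is immediate since the problem is a restriction of \eqref{prob:CMDP-LP}. Monotonicity follows because $\hphi\mapsto D(\hphi)$ is inclusion-monotone, and strict monotonicity on $[1,C^{*}]$ will be a byproduct of the quantitative bound below, since $J(\pi^{*})-j_{0}(\hphi)>0$ whenever $\hphi<C^{*}$ (no optimal policy lies in $\Pi(\hphi)$ by the minimality of $C^{*}$).

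For the main inequality, I will choose $t=\tfrac{\hphi'-\hphi}{C^{*}-\hphi}\in(0,1]$, so that $tC^{*}+(1-t)\hphi=\hphi'$ and therefore $\nu_{t}$ is feasible for $j_{0}(\hphi')$. Linearity of $\langle\cdot,r\rangle$ then gives
\begin{equation*}
j_{0}(\hphi')\;\geq\;\langle \nu_{t},r\rangle\;=\;tJ(\pi^{*})+(1-t)j_{0}(\hphi),
\end{equation*}
which, after subtracting $j_{0}(\hphi)$ and dividing by $t$, yields $J(\pi^{*})-j_{0}(\hphi)\leq \tfrac{C^{*}-\hphi}{\hphi'-\hphi}\bigl(j_{0}(\hphi')-j_{0}(\hphi)\bigr)$. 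Combining this with $J(\pi^{*})-j_{0}(\hphi')\leq J(\pi^{*})-j_{0}(\hphi)$ (which is exactly the monotonicity of $j_{0}$) delivers the stated bound.

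The only non-routine step is verifying that the convex combination genuinely stays in $D(\hphi')$ rather than merely satisfying the pointwise deviation bound. This works because both defining inequalities of $D(\cdot)$ in \eqref{defn:Dphi-set} are linear in $\nu/\mu$ with right-hand sides linear in $\hphi$, so the convex combination of two feasible measures with parameters $C^{*}$ and $\hphi$ lands in $D$ at parameter $tC^{*}+(1-t)\hphi$; I will spell this out explicitly for both the $\max$ and the $\sum$ constraint. Everything else is mechanical LP/convexity manipulation.
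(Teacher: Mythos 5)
Your convex-combination skeleton is exactly the one the paper uses, and your algebra at the end is correct, but you have applied the argument to the wrong optimization problem, and this leaves a genuine gap. In the paper, $j_0(\hphi)$ is the optimal value of the reweighted saddle-point problem \eqref{prob:weighted-minimax} with $\kappa=0$, i.e.\ $j_0(\hphi)=\max_{x\in\cX(\hphi)}\cJ_0(x)$ with $\cJ_0(x)=r^\top Wx-\RV\nrm{A^\top Wx-\rho_0}_1-\RLam\nrm{[U_0Wx]_-}_\infty$ and $\cX(\hphi)$ as in \eqref{eqn:def-feasible}: the occupancy equation and the safety constraints enter as \emph{penalties}, not as hard constraints. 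You instead treat $j_0(\hphi)$ as the value of the LP \eqref{prob:CMDP-LP} restricted to $D(\hphi)$. These two values need not coincide. The maximizer of $\cJ_0$ over $\cX(\hphi)$ is in general not an exact occupancy measure when $\hphi<C^*$, and the constrained LP over $D(\hphi)$ can even be infeasible for small $\hphi$ (for $\hphi=1$ the pointwise bound $\nu\leq\mu/(1-\gamma)$ together with $\sum_{s,a}\nu(s,a)=1/(1-\gamma)$ forces $\nu=\mu/(1-\gamma)$, which is typically not an occupancy measure), so ``an optimal occupancy measure attaining $j_0(\hphi)$'' may simply not exist. Relatedly, your claim that $j_0(\hphi)\leq J(\pi^*)$ is ``immediate since the problem is a restriction of \eqref{prob:CMDP-LP}'' is false for the penalized objective; establishing $j_0(\hphi)\leq J(\pi^*)$, with strict inequality when $\hphi<C^*$, is precisely the nontrivial first half of the paper's proof, and it requires the exact-penalty bound of \cref{lemma:duality-gap-decomp} (i.e.\ that $\RV$ and $\RLam$ dominate the optimal dual multipliers) to force the penalty residuals of a hypothetical maximizer with value at least $J(\pi^*)$ to vanish.

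The repair is short and lands you on the paper's proof. Run the same interpolation in $x$-space: both $\hphi$-dependent constraints of $\cX(\hphi)$ are linear in $x$ with right-hand sides proportional to $\hphi$ (and the third constraint is $\hphi$-free), so $t\,W^{-1}\nu^{\pi^*}+(1-t)x_{\hphi}\in\cX(tC^*+(1-t)\hphi)$, where $x_\hphi$ is the (not necessarily feasible) maximizer of $\cJ_0$ over $\cX(\hphi)$. Then replace linearity of $\iprod{\cdot}{r}$ by concavity of $\cJ_0$, and use that $\cJ_0(W^{-1}\nu^{\pi^*})=J(\pi^*)$ because the penalty terms vanish at the exactly feasible optimal occupancy measure (which lies in $\cX(C^*)$ by \cref{prop:empirical-dis}). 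With those substitutions, and with the first paragraph's inequality $j_0(\hphi)<J(\pi^*)$ for $\hphi<C^*$ proved via \cref{lemma:duality-gap-decomp} rather than asserted, your choice of $t$ and the final rearrangement go through unchanged.
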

 \vspace{-0.05cm}
Detailed descriptions of $\verify$ and \adaptiveAlg\ are presented in \cref{appdx:adaptive}, and so does the proof of the following theorem.
\begin{theorem}\label{thm:adaptive-algB-demo}
  Fixed $\epsilon\in\big(0,\frac{1}{10(1-\gamma)}\big], \delta\in(0,1)$. Then \highprobdemo, \adaptiveAlg\ stops at step $K$ such that $\psi_K\leq 4C^*$ and outputs the safe policy $\pi^{(K)}$ with sub-optimality gap $J(\pi^*)-J(\pi^{(K)})\leq\bigO{\frac{C^*}{\hphi_{K}}\epsilon}$. Moreover, there exists a (problem dependent) constant $\epsilon_0(\cM)$ such that, if $\epsilon\leq\epsilon_0(\cM)$, then it must hold that $\hphi_{K}\in [C^*,2C^*)$ and $\pi^{(K)}$ is $\epsilon$-optimal.
\end{theorem}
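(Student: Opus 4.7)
The plan is to condition on a high-probability event under which every invocation of \algB\ satisfies Theorem~\ref{thm:algB-thm-demo}, every \verify\ call satisfies its stated correctness guarantee, and every estimator $\hJ(\pi^{(K)})$ is $\mathcal{O}(\epsilon)$-accurate. Because $\hphi_K$ doubles per iteration, the outer loop runs at most $K_{\max}=\mathcal{O}(\log C^*)$ times, so a union bound over these $K_{\max}$ random events (with each per-call failure probability set to $\delta/K_{\max}$ and absorbed into the logarithmic factors of the guarantees) controls the overall failure probability by $\delta$. Introduce $K^{*}:=\min\{K:\hphi_K\geq C^{*}\}$; the doubling rule gives $\hphi_{K^{*}-1}<C^{*}\leq \hphi_{K^{*}}=2\hphi_{K^{*}-1}$, so $\hphi_{K^{*}}\in[C^{*},2C^{*})$.

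\textbf{Finite termination and safety.} For every $K\geq K^{*}$, Theorem~\ref{thm:algB-thm-demo} guarantees that $\pi^{(K)}$ is exactly safe and $\epsilon$-optimal, so \verify\ returns \true\ and $J^K$ becomes a finite $\mathcal{O}(\epsilon)$-close estimate of $J(\pi^*)$. In particular, at step $K^{*}+1$ both $J^{K^{*}}$ and $J^{K^{*}+1}$ are finite and within $\mathcal{O}(\epsilon)$ of $J(\pi^*)$, so the trigger $-\infty<J^{K^{*}+1}\leq J^{K^{*}}+\mathcal{O}(\epsilon)$ fires and the algorithm must have terminated no later than step $K^{*}+1$. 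This yields $\hphi_K\leq \hphi_{K^{*}+1}<4C^{*}$. Safety of the output $\pi^{(K)}$ is immediate because \verify\ returning \true\ at the terminal step $K$ certifies $\pi^{(K)}\in\mathfrak{S}$.

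\textbf{Sub-optimality bound $\mathcal{O}(\tfrac{C^{*}}{\hphi_K}\epsilon)$.} If $\hphi_K\geq C^{*}$, Theorem~\ref{thm:algB-thm-demo} directly yields $J(\pi^*)-J(\pi^{(K)})\leq\mathcal{O}(\epsilon)$, and the claim follows from $C^{*}/\hphi_K\leq 1$. If $\hphi_K<C^{*}$, the termination trigger, together with the $\mathcal{O}(\epsilon)$-accuracy of $\hJ$ and the primal convergence of \algB\ (Theorem~\ref{thm:algB-gap} combined with Remark~\ref{remark:unknownC}), translates to $j_0(\hphi_K)-j_0(\hphi_{K-1})\leq\mathcal{O}(\epsilon)$. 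Applying Lemma~\ref{lemma:adaptive-j0-error-bound-demo} with $\hphi=\hphi_{K-1}$ and $\hphi'=\hphi_K=2\hphi_{K-1}$ (so $\hphi'-\hphi=\hphi_K/2$ and $C^{*}-\hphi\leq C^{*}$) gives
\[
J(\pi^*)-j_0(\hphi_K)\leq \frac{C^{*}-\hphi_{K-1}}{\hphi_{K-1}}\bigl(j_0(\hphi_K)-j_0(\hphi_{K-1})\bigr)\leq \frac{2C^{*}}{\hphi_K}\cdot\mathcal{O}(\epsilon),
\]
and combining with $J(\pi^{(K)})\geq j_0(\hphi_K)-\mathcal{O}(\epsilon)$ yields $J(\pi^*)-J(\pi^{(K)})\leq\mathcal{O}(\tfrac{C^{*}}{\hphi_K}\epsilon)$.

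\textbf{Small $\epsilon$ regime and main obstacle.} By Lemma~\ref{lemma:adaptive-j0-error-bound-demo}, $j_0(\cdot)$ is strictly increasing on $[1,C^{*}]$, so the finite collection of doubling gaps $\{j_0(2\hphi_{K-1})-j_0(\hphi_{K-1}):1\leq K,\ \hphi_K\leq C^{*}\}$ admits a strictly positive minimum $\Delta_0(\cM)>0$. Choose $\epsilon_0(\cM)$ small enough that every $\mathcal{O}(\epsilon)$-slack in the termination test (contributions from $\hJ$, from \algB's sub-optimality, and from \verify's safety tolerance) stays strictly below $\Delta_0(\cM)$. Then for $\epsilon\leq\epsilon_0(\cM)$, no step with $\hphi_K<C^{*}$ can satisfy the termination trigger, so termination must occur at the first step where $\hphi_K\geq C^{*}$, giving $\hphi_K\in[C^{*},2C^{*})$; Theorem~\ref{thm:algB-thm-demo} then delivers $\epsilon$-optimality. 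The main obstacle is disentangling how \verify\ and the sentinel convention $J^{K-1}=-\infty$ interact near the boundary $\hphi_{K-1}\approx C^{*}$, and verifying that $\epsilon_0(\cM)$ can be chosen simultaneously compatibly with (i) the strict-monotonicity gap $\Delta_0(\cM)$ of $j_0$, (ii) the $\hJ$-accuracy, and (iii) the approximate-safety tolerance used in \verify; the careful accounting of these interacting slacks is deferred to \cref{appdx:adaptive}.
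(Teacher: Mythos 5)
Your proposal follows essentially the same route as the paper's proof (\cref{appdx:adaptive}, in particular \cref{prop:adp}): a union bound over the $\mathcal{O}(\log C^*)$ outer iterations (the paper uses $\delta_K\propto 1/K^2$ so that no a priori knowledge of the number of iterations is needed), the observation that once two consecutive deviation levels exceed $C^*$ both \verify\ calls succeed and the two $\hJ$ estimates agree to $\mathcal{O}(\epsilon)$ so the loop must stop (hence $\hphi_K\leq 4C^*$), \cref{lemma:adaptive-j0-error-bound-demo} applied with $\hphi=\hphi_{K-1}$ and $\hphi'=\hphi_K=2\hphi_{K-1}$ to convert the small certified improvement into the $\mathcal{O}\big(\frac{C^*}{\hphi_K}\epsilon\big)$ gap, and a problem-dependent minimum doubling gap $\epsilon_0(\cM)>0$ of $j_0$ to handle the small-$\epsilon$ regime. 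All of these match the paper's argument.

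One step is loose. In the small-$\epsilon$ regime you conclude that ``termination must occur at the first step where $\hphi_K\geq C^*$,'' hence $\hphi_K\in[C^*,2C^*)$. Ruling out termination at every step with $\hphi_K<C^*$ does not force termination at $K^*:=\min\{K:\hphi_K\geq C^*\}$: the exit condition \eqref{eqn:def-exit-cond} also requires $\verify\big(x^{(K^*-1)}\big)=\true$ together with $\hJ(\pi^{(K^*)})-\hJ(\pi^{(K^*-1)})\leq 500\epsilon$, and since $\hphi_{K^*-1}<C^*$ neither is guaranteed---\cref{prop:verify} controls \verify\ only in one direction below $C^*$, and $j_0(\hphi_{K^*-1})$ can sit a full $\epsilon_0(\cM)$ below $J(\pi^*)$ (e.g.\ when $\hphi_{K^*-1}$ is just above $C^*/2$), in which case the improvement test fails at $K^*$ and the algorithm exits at $K^*+1$ with $\hphi_{K^*+1}\in[2C^*,4C^*)$. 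This is exactly what the paper's own \cref{prop:adp} establishes ($C^*\leq\hphi_K\leq 4C^*$), so the $[C^*,2C^*)$ range is an imprecision already present in the theorem statement rather than one you introduced; but your stated justification for it does not go through. The remaining conclusions---safety, the $\mathcal{O}\big(\frac{C^*}{\hphi_K}\epsilon\big)$ gap (note that for $\hphi_K\geq C^*$ one needs $C^*/\hphi_K\geq 1/4$, i.e.\ bounded \emph{below}, to absorb the plain $\mathcal{O}(\epsilon)$ bound into $\mathcal{O}(\frac{C^*}{\hphi_K}\epsilon)$), and $\epsilon$-optimality for $\epsilon\leq\epsilon_0(\cM)$, which only needs $\hphi_K\geq C^*$---are fine.
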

Intuitively, the \adaptiveAlg\ will quickly terminate within $\mathcal{O}(\log_2C^*)$ calls of \algB, resulting in a total samples complexity of $\tbO{\frac{\cN C^*}{(1-\gamma)^4\epsilon^2}}$. 


\bibliographystyle{plainnat}
\bibliography{cmdp}

\newpage

\appendix

\section{Efficiently solving the subproblems of \algB}\label{subsect:compute}

In this section, we describe how to efficiently solve the subproblems \eqref{eqn:def-update} in the \algB\ Algorithm. In the following discussion, at most $\tilde{\mathcal{O}}(|\cS||\cA|+I)$ flops are needed to compute the update.

\subsection{Closed form solution for the $V$-update} 
The  dual variable $V$ is updated by the formula
$V^{t+1} =\Proj_\cV\left(V^{t}-\eta_t\alpha^{-1}_Vg_V^{t}\right),$ 
where $\cV$ is an $\ell_\infty$ normal ball defined as $\cV:=\big\{V\!\in\!\RR^{\nS}: \nrm{V}_{\infty}\leq R_\cV\big\}$, $R_\cV=\frac{8}{1-\gamma}(1+\frac{2}{\varphi})$. For  any vector $V\in\RR^{\nS}$, the Euclidean projection $V_+=\Proj_\cV(V)$ can be written as a simple  truncation
$$V_+(s) = \begin{cases}
	-R_\cV,& \mbox{if } V(s)<-R_\cV,\\
	V(s),& \mbox{if } -R_\cV\leq V(s)\leq +R_\cV,\\
	+R_\cV, & \mbox{if } V(s)>+R_\cV,
\end{cases}\qquad\mbox{for}\qquad \forall s\in\cS.$$
This update will need $\mathcal{O}(1)$ flops due to the special structure of $g_V^{t}$. 

\subsection{Closed form solution for the $\lambda$-update} 
The dual variable $\lambda$ is updated by the formula $\lambda^{t+1}\!=\!\argmin_{\lambda\in\Lambda}\big(\!\langle g_\lambda^t,\lambda\!-\!\lambda^t\rangle\!+\!\frac{\alpha_\lambda}{\eta_t} \mathrm{KL}(\lambda||\lambda^t)\big)$, where $\Lambda$ is the nonnegative part of an $\ell_1$ norm ball $\Lambda =\big\{\lambda\in\RR^I_{\geq0}: \nrm{\lambda}_1\leq \RLam\big\}$, $\RLam=\frac{8}{\varphi}$. The solution to this subproblem has the following closed form formula
\[\begin{aligned} 
\lambda^{t+1} = \lambda^{t+\frac{1}{2}}\min\left\{\frac{\RLam}{\nrm{\lambda^{t+\frac{1}{2}}}_1}, 1\right\},
\end{aligned}\]
where $\lambda^{t+\frac{1}{2}} =\lambda^t\exp(-\frac{\eta_t}{\alpha_\lambda}g_{\lambda}^t)$ is an intermediate point.
This update will need $\mathcal{O}(I)$ flops. 
\subsection{Efficient implementation of the $x$-update} 
Compared to the previous two updates, the subproblem for $x$-update does not have a closed form solution. By carefully discussing the KKT condition of the problem and utilizing the special structure of $\cX$ and $g_x^t$, we reduce the problem to finding the root of a monotonically decreasing 1-dimensional function. If the bisection method is applied to find the root, then in total $\tilde{\mathcal{O}}(\nS\nA)$ flops are needed. We present the details as follows. For notational simplicity, we rewrite the subproblem as follows.
\begin{problem}
  Given a set $\cY$ defined by the linear constraints
  \[\begin{aligned}
  \cY:=\Big\{y\in \RR^n: 0\leq y_i\leq a_i, \sum_{i=1}^n y_i\leq B_1, \sum_{i=1}^n c_iy_i\leq B_2\Big\},
  \end{aligned}\]
  where $B_1,B_2>0$, and $c_i>0$ are some constants. Let $y^0\in\cY$,  $y^0>0$, and let $g\in\RR^n$ be a vector that has at most 1 non-zero entry. Then the goal is to solve
  \begin{equation}\label{eqn:compute-proj}
  \begin{aligned}
    y^*=\arg\min_{y\in\cY} \left(\iprod{y}{g}+\KL{y}{y^0}\right).
  \end{aligned}
  \end{equation}
\end{problem}

Without loss of generality, we assume $g_2=\cdots=g_n=0$.
For problem \eqref{eqn:compute-proj}, we introduce two Lagrangian multipliers to the coupling constraints $\sum_{i=1}^n y_i\leq B_1, \sum_{i=1}^n c_iy_i\leq B_2$, while remaining the coordinately separable constraints $0\leq y_i\leq a_i$ in the problem. Thus we get the following Lagrangian function: 
\begin{equation} 
L(y,\alpha,\beta):=y_1g_1+\mathrm{KL}(y||y^0)+\alpha\big(\sum_i y_i-B_1\big)+\beta\big(\sum_i c_iy_i-B_2\big). 
\end{equation}
By the strong convexity of KL divergence, there is a unique KKT point $(y^*, \alpha^*, \beta^*)$ of problem \eqref{eqn:compute-proj}. Note that  $y^* = \argmin_{y_i\in[0,a_i],\forall i} L(y,\alpha^*,\beta^*)$. Because $y^i_0>0$, we know 
$$\lim_{y_i\to0+} \nabla_{y_i} L(y, \alpha^*, \beta^*) = \lim_{y_i\to0+} g_1\cdot\II\{i=1\} + \alpha^*+c_i\beta^*+\log y_i-\log y_i^{0} = -\infty,$$
we know $y_i^*$ will not be 0. Thus we can write the KKT condition for problem \eqref{eqn:compute-proj} as
\begin{equation}
	\label{eqn:compute-KKT}
	\begin{cases}
		\nabla_{y_i} L(y^*, \alpha^*, \beta^*) \leq 0,  \quad \mbox{if } y_i^*=a_i,\quad  \forall i\in[n],\\
		\nabla_{y_i} L(y^*, \alpha^*, \beta^*) = 0, \quad \mbox{if } y_i^*\in(0,a_i),\quad  \forall i\in[n],\\
		\alpha^*\big(\sum_i y_i^*-B_1\big)=0,\quad
		\beta^*\big(\sum_i c_iy_i^*-B_2\big)=0,\\
		y^*\in\cY, \alpha^*\geq 0, \beta^*\geq 0.
	\end{cases}
\end{equation}

For $i=2,...,n$, the condition
$\nabla_{y_i} L(y^*, \alpha^*, \beta^*)\leq 0 $ implies that $y_i^*\leq y_i^{0}\exp(-\alpha^*-c_i\beta^*)$. Note that $\alpha^*,\beta^*\geq0,c_i>0$, $y_i^0\leq a_i$. If $y_i^*<a_i$, then $\nabla_{y_i} L(y^*, \alpha^*, \beta^*)=0$ indicates that $y_i^*=y_i^{0}\exp(-\alpha^*-c_i\beta^*)$. If $y_i^*=a_i$, then the only possibility is $y_i^0=a_i$ happen to hold and $\alpha^*=\beta^*=0$, in this case, we still have $y_i^*=y_i^{0}\exp(-\alpha^*-c_i\beta^*)$. A similar formula can also be derived for $y_1^*$. Therefore, utilizing the feasibility of the point $y^0$, we solve the first two rows of the KKT condition and get   
\begin{equation}
	\label{eqn:y}
	\begin{cases}
		y_1^*(\alpha^*,\beta^*)=\min\left\{y_1^0\exp(-g_1-\alpha^*-c_1\beta^*),a_1\right\},\\
		y_i^*(\alpha^*,\beta^*)=y_i^{0}\cdot\exp\{-\alpha^*-c_i\beta^*\}, \quad\mbox{for }i=2,...,n.		
	\end{cases}
\end{equation}
Here, we write $y_i^*$ as functions of $\alpha^*,\beta^*$ for the ease of later discussion. Next, we solve the third row of the KKT condition \eqref{eqn:compute-KKT} by considering the following cases. 

\paragraph{Case 1: $\beta^*=0,\alpha^*=0$.} In this case, if $y^*(0,0),\alpha^*=0,\beta^*=0$ satisfies \eqref{eqn:compute-KKT}, then   $y^*(0,0)$ is the solution to \eqref{eqn:compute-proj}. Otherwise we conclude that $\alpha^*=\beta^*=0$ is not true. 

\paragraph{Case 2: $\beta^*=0,\alpha^*>0$.} In this case, the KKT condition tells us that $\sum_i y_i^*=B_1$. Together with \eqref{eqn:y}, we have the following two possible solutions to $\alpha^*$
$$
\begin{cases}
	\alpha_1 = \ln\left(\frac{y_2^0+\cdots+y_n^0}{B_1-a_1}\right), & \mbox{corresponds to } y_1^*=a_1,\\
	\alpha_2 = \ln\left(\frac{e^{-g_1}\cdot y_1^0+y_2^0+\cdots+y_n^0}{B_1}\right), & \mbox{corresponds to } y_1^*=y_1^0\exp(-g_1-\alpha^*).
\end{cases}
$$
Then if $y^*(\alpha_1,0),\alpha^*=\alpha_1,\beta^*=0$ satisfies \eqref{eqn:compute-KKT}, we conclude that $y^*(\alpha_1,0)$ is the solution to \eqref{eqn:compute-proj}. If $y^*(\alpha_2,0)\in\cY, \alpha^*=\alpha_2,\beta^*=0$ satisfies \eqref{eqn:compute-KKT}, we conclude that $y^*(\alpha_2,0)$ is the solution to \eqref{eqn:compute-proj}. Otherwise, we know $\alpha^*>0, \beta^*=0$ is not possible. 

\paragraph{Case 3: $\beta^*>0,\alpha^*=0$.} In this case, the KKT condition tells us that $\sum_i c_iy_i^*=B_2$. Denote $\hat y_1^0 = y_1^0\exp(-g_1)$, $\hat y^0_i=y^0_i, i=2,...,n$. In this case, depending on the value of $y_1^*$ we set
$$
\begin{cases}
	\beta_1 = \mathrm{Root}_{\beta>0} \big\{\sum_{i=2}^n c_i\hat y_i^0\exp(-c_i\beta)=B_2-c_1a_1\big\},\\
	\beta_2 = \mathrm{Root}_{\beta>0} \big\{ \sum_{i=1}^n  c_i \hat y_i^0\exp(-c_i\beta)=B_2\big\} .
\end{cases}
$$
Note that in both cases, the problem is finding the positive root of a 1-dimensional monotonically decreasing function, which can be solved efficiently. These equations should either have one unique positive solution or no positive solution at all. If there is no positive root, then $\mathrm{Root}_{\beta>0}$ will return \false. One can easily determine whether there is a positive solution. For example, due to the monotonicity, the first equation will have a positive solution if and only if $\sum_{i=2}^n\hat c_iy_i^0>B_2-c_1a_1$. 

Similar to case 2, we check the feasibility of  $\{y^*(0,\beta_1),\alpha^*=0,\beta^*=\beta_1\}$ and  $\{y^*(0,\beta_2),\alpha^*=0,\beta^*=\beta_2\}$  w.r.t.  \eqref{eqn:compute-KKT}. If any one of them is feasible to the KKT condition, then it will be the solution to \eqref{eqn:compute-proj}. Otherwise, we know $\alpha^*=0, \beta^*>0$ is not possible. 

\paragraph{Case 4: $\beta^*>0,\alpha^*>0$.} In this case, the KKT condition implies that $\sum_i\! c_iy_i^*\!=\!B_1,$ $\sum_i\! c_iy_i^*\!=\!B_2$. Let us inherit the $\hat y$ notation from Case 3. Then we need to solve the following group of equations
\begin{equation*}
	\begin{cases}
		\sum_{i=2}^n \hy_i^{0}\exp(-\alpha_3-c_i\beta_3)=B_1-a_1,\\
		\sum_{i=2}^n c_i\hy_i^{0}\exp(-\alpha_3-c_i\beta_3)=B_2-c_1a_1
	\end{cases} \mbox{or}\qquad\begin{cases}
	\sum_{i=1}^n \hy_i^{0}\exp(-\alpha_4-c_i\beta_4)=B_1,\\
	\sum_{i=1}^n c_i\hy_i^{0}\exp(-\alpha_4-c_i\beta_4)=B_2
\end{cases} 
\end{equation*}
We should notice that in both cases, as soon as we determine the value of $\beta$, then $\alpha$ will have a closed form formula given $\beta$. To demonstrate how to determine $\beta$, let us take the second group of equations for example. Taking the quotient between the two equations cancels $\alpha_4$, we get the following equation of $\beta_4$    
\begin{equation}
\begin{aligned}
  f(\beta_4):=\frac{\sum_{i=1}^n c_i\hy_i^{0}\exp(-c_i\beta_4)}{\sum_{i=1}^n \hy_i^{0}\exp(-c_i\beta_4)}=\frac{B_2}{B_1}.
\end{aligned}
\end{equation}
By Cauchy's inequality, we know $f'(\beta)<0$ holds for $\forall\beta\in\RR$ if $c_i\neq c_j$ for some $i,j$. In details
\[\begin{aligned}
  f'(\beta)=\frac{\left(\sum_{i=1}^n c_i\hy_i^{0}\exp(-c_i\beta)\right)^2-\left(\sum_{i=1}^n \hy_i^{0}\exp(-c_i\beta)\right)\left(\sum_{i=1}^n c_i^2\hy_i^{0}\exp(-c_i\beta)\right)}{\left(\sum_{i=1}^n \hy_i^{0}\exp(-c_i\beta)\right)^2}<0.
\end{aligned}\]
Hence, $f$ is again a monotonically decreasing function, and finding its positive root can be implemented efficiently. After finding  $\beta_4$, one immediately know $\alpha_4 = \ln\Big(\frac{\sum_{i=1}^n\hat y_i^0\exp(-c_i\beta_4)}{B_1}\Big)$.

Finally, we need to check the feasibility of $\{y^*(\alpha_3,\beta_3),\alpha^*=\alpha_3,\beta^*=\beta_3\}$ and $\{y^*(\alpha_4,\beta_4),\alpha^*=\alpha_4,\beta^*=\beta_4\}$ w.r.t. \eqref{eqn:compute-KKT}. If any one of them is feasible to the KKT condition, then it will be the solution to \eqref{eqn:compute-proj}. Otherwise, we know $\alpha^*>0, \beta^*>0$ is not possible. Due to the existence of a KKT pair, at least one of the 4 cases will return us a solution. 
\section{Proof of Proposition \ref{prop:empirical-dis}}\label{subsect:empirical-dis}
For the analysis of Proposition \ref{prop:empirical-dis} and later results, let us first introduce a vector version of the Bernstein's inequality, which is a direct specification of the Freedman's inequality of matrix martingale \cite{freedman}. To prove  the current proposition, we only need the scalar case of the following lemma. 
\begin{lemma}[Vector Bernstein Inequality]
	\label{lemma:concen-vec}
	Assume that $\{x_i\}_{i=1}^n$ is a sequence of random vectors in $\RR^{d}$, and it forms a martingale difference sequence with respect to $(\cF_t)$ (i.e. $\cond{x_t}{\cF_{t-1}}=0$ and $x_t$ is $\cF_{t}$-measurable). If $\cond{\|x_t\|^2}{\cF_{t-1}}\leq \sigma^2$ and $\|x_t\|\leq M$ a.s., then with probability at least $1-\delta$,
	\[
	\nrm{\sum_{i=1}^n x^i}\leq
	2\sigma\sqrt{n\log\left(\frac{d+1}{\delta}\right)} + 2M\log\left(\frac{d+1}{\delta}\right).
	\]
	When the $\ell_2$ norm is replaced by the $\ell_\infty$ norm, i.e., $\{x_i\}_{i=1}^n$ satisfies $\cond{\|x_t\|_{\infty}^2}{\cF_{t-1}}\leq \sigma^2$ and $\|x_t\|_{\infty}\leq M$, 
	\[
	\nrm{\sum_{i=1}^n x^i}_{\infty}\leq
	2\sigma\sqrt{n\log\left(\frac{2d}{\delta}\right)} + 2M\log\left(\frac{2d}{\delta}\right)
	\]
	holds with probability at least $1-\delta$.
\end{lemma}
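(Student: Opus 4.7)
The plan is to reduce both parts of the lemma to the standard matrix/scalar Freedman inequality for martingale differences. Recall that for a Hermitian $k\!\times\!k$ martingale difference sequence $(X_t)$ with $\nrm{X_t}_{\mathrm{op}}\!\leq\!R$ and $\big\|\sum_t\cond{X_t^2}{\cF_{t-1}}\big\|_{\mathrm{op}}\!\leq\!V$ almost surely, Freedman's inequality yields
\[
\mathbb{P}\!\left(\lambda_{\max}\!\left(\textstyle\sum_t X_t\right)\geq u\right)\leq k\exp\!\left(-\tfrac{u^2/2}{V+Ru/3}\right).
\]
Inverting this tail (setting the right-hand side equal to $\delta$ and solving the resulting quadratic in $u$) gives that with probability at least $1-\delta$,
\[
\lambda_{\max}\!\left(\textstyle\sum_t X_t\right)\leq \tfrac{2R}{3}\log(k/\delta)+\sqrt{2V\log(k/\delta)},
\]
which is loose enough, since $\sqrt{2}<2$ and $2/3<2$, to recover the stated ``$2\sigma\sqrt{\cdot}+2M\log(\cdot)$'' form.

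For the $\ell_2$ bound I would apply the standard Hermitian dilation trick: associate to each $x_t\in\RR^d$ the $(d+1)\!\times\!(d+1)$ real symmetric matrix
\[
X_t:=\begin{pmatrix}0 & x_t^\top\\ x_t & 0\end{pmatrix}.
\]
Three facts then need to be verified. First, $(X_t)$ inherits the martingale difference property from $(x_t)$ and $\nrm{X_t}_{\mathrm{op}}=\nrm{x_t}\leq M$. Second, a direct block calculation shows $X_t^2=\diag(\nrm{x_t}^2,\,x_tx_t^\top)$, so $\big\|\cond{X_t^2}{\cF_{t-1}}\big\|_{\mathrm{op}}\leq\cond{\nrm{x_t}^2}{\cF_{t-1}}\leq\sigma^2$, and hence $\big\|\sum_t\cond{X_t^2}{\cF_{t-1}}\big\|_{\mathrm{op}}\leq n\sigma^2$. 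Third, the sum $\sum_tX_t$ has eigenvalues $\pm\nrm{\sum_tx_t}$ together with $d{-}1$ zeros, so $\lambda_{\max}(\sum_tX_t)=\nrm{\sum_tx_t}$. Applying the matrix Freedman bound with $k=d+1$, $R=M$, $V=n\sigma^2$ and inverting produces the first inequality of the lemma.

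For the $\ell_\infty$ bound I would instead apply the scalar two-sided Freedman inequality coordinatewise. For each $j\in[d]$, $((x_t)_j)_t$ is a scalar martingale difference with $|(x_t)_j|\leq\nrm{x_t}_{\infty}\leq M$ and $\cond{(x_t)_j^2}{\cF_{t-1}}\leq\cond{\nrm{x_t}_{\infty}^2}{\cF_{t-1}}\leq\sigma^2$. The two-sided scalar Freedman bound combined with the same quadratic inversion, applied at the per-coordinate threshold $\delta/d$, yields $\big|\sum_t(x_t)_j\big|\leq 2\sigma\sqrt{n\log(2d/\delta)}+2M\log(2d/\delta)$ with probability at least $1-\delta/d$; the factor $2$ inside $\log(2d/\delta)$ is the price of the two-sided tail. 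A final union bound over $j\in[d]$ delivers the $\ell_\infty$ claim with overall probability at least $1-\delta$.

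The argument is essentially bookkeeping on top of the cited Freedman inequalities: the substantive checks are the dilation identities in the $\ell_2$ case and the tracking of constants through the tail inversion. The main ``obstacle'' is really just ensuring that the quadratic inversion is performed with enough slack that both the $\sigma\sqrt{n\log(\cdot)}$ and $M\log(\cdot)$ coefficients are $\leq 2$, which follows automatically from the elementary bound $\sqrt{a+b}\leq\sqrt{a}+\sqrt{b}$ applied to the quadratic formula.
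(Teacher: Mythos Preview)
Your proposal is correct and is precisely the route the paper has in mind: the paper states the lemma without a detailed proof, introducing it only as ``a direct specification of the Freedman's inequality of matrix martingale,'' and your Hermitian dilation for the $\ell_2$ case together with the coordinatewise scalar Freedman plus union bound for the $\ell_\infty$ case is exactly that specification. The constant tracking through the quadratic inversion is also fine.
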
 

To prove Proposition \ref{prop:empirical-dis}, 
we consider $\hmu_0(s,a)=\frac{N(s,a)}{N_e}$, then it is clear that $\hmu(s,a) = \max(\hmu_0(s,a),\varsigma)$. Now, according to the Bernstein's inequality, we construct the ``failure event'' 
\[
\Omega:=\bigcup_{s,a}\left\{\left|\mu(s,a)-\hmu_0(s,a)\right|> \sqrt{\mu(s,a)\frac{\ell}{N_e}}+\frac{\ell}{N_e}\right\},
\]
where $\ell\geq 4\log\left(\frac{6\nS\nA}{\delta}\right)$ is a mild logarithmic term. We next prove the three properties listed in Proposition \ref{prop:empirical-dis} one by one. 

\paragraph{Proof of \cref{prop:empirical-dis} (1).} In fact, we only need to show that $\PP(\Omega)\leq \frac{\delta}{3}$, and the event $\Omega^c$ implies that $\mu(s,a)\leq 2\hmu(s,a), \forall s,a$, as long as our choice of batch size satisfies
$N_e \geq \frac{128\cN\hphi\ell}{\varphi^2(1-\gamma)^4\epsilon_e^2}\geq \frac{32\ell\cN\hphi}{\varphi(1-\gamma)^2\epsilon_e}=\frac{32\ell}{\varsigma}$.

By Bernstein's inequality, it holds that
  \[\begin{aligned}
    \prob{\left|\mu(s,a)-\hmu_0(s,a)\right|> \sqrt{\mu(s,a)\frac{\ell}{N_e}}+\frac{\ell}{N_e}}\leq \frac{\delta}{3\nS\nA}, \quad \forall (s,a)\in\cS\times\cA.
  \end{aligned}\]
  Then $\PP(\Omega)\leq \frac{\delta}{3}$ follows directly from the union bound. Conditioning on $\Omega^c$, we have
  \begin{equation}\label{eqn:emp-dis-basic}
  \begin{aligned}
	&\abs{\mu(s,a)-\hmu_0(s,a)}\leq \sqrt{\mu(s,a)\frac{\ell}{N_e}}+\frac{\ell}{N_e}
	\leq \sqrt{\mu(s,a)\frac{\varsigma}{32}}+\frac{\varsigma}{32}
	\leq \frac{\mu(s,a)}{4}+\frac{\varsigma}{16}.
  \end{aligned}
  \end{equation}
  Hence, it holds that
  \[\begin{aligned}
  \mu(s,a)\leq \frac{4}{3}\hmu_0(s,a)+\frac{\varsigma}{12} \leq \frac{3}{2}\max(\hmu_0(s,a),\varsigma)\leq 2\hmu(s,a).
  \end{aligned}\]
From now on, the argument is all conditioning on $\Omega^c$. 

\paragraph{Proof of Proposition \ref{prop:empirical-dis} (2).} Given a $\pi\in\Pi(\hphi)$, we have to prove that $W^{-1}\nu^{\pi}\in\cX$.

Let $\nu=\nu^{\pi}$, $x=W^{-1}\nu$. Then due to $\pi\in\Pi(\hphi)$, we have
  \[\begin{aligned}
    &\max_{s,a} \frac{x(s,a)}{\hmu(s,a)}=\max_{s,a}\frac{\nu(s,a)}{\mu(s,a)}\leq \frac{\hphi}{1-\gamma},\\
    &\sum_{s,a} \frac{x(s,a)}{\hmu(s,a)}=\sum_{s,a}\frac{\nu(s,a)}{\mu(s,a)}\leq \frac{\cN\hphi}{1-\gamma}.
  \end{aligned}\]
Now it remains to show $\sum_{s,a} x(s,a)\leq \frac{4}{1-\gamma}$. Note that \eqref{eqn:emp-dis-basic} also implies
\[\begin{aligned}
\hmu_0(s,a)\leq \frac{5}{4}\mu(s,a)+\frac{\varsigma}{16}.
\end{aligned}\]
Hence if $\mu(s,a)\leq \frac{1}{2}\hmu(s,a)$, then it must hold that $\hmu_0(s,a)<\hmu(s,a)\Rightarrow \hmu_0(s,a)<\varsigma, \hmu(s,a)=\varsigma$. We define $\mathfrak{S} :=\{(s,a)\in\cS\times\cA: \hmu(s,a)=\varsigma\}$, then for $(s,a)\not\in\mathfrak{S} $, it holds that $\mu(s,a)\geq \frac{1}{2}\hmu(s,a)$. Thus, we have
	\[\begin{aligned}
		\sum_{s,a} x(s,a)
		& = \sum_{(s,a)\in\mathfrak{S} } \hmu(s,a)\frac{\nu(s,a)}{\mu(s,a)}+\sum_{(s,a)\not\in\mathfrak{S} } \frac{\hmu(s,a)}{\mu(s,a)}\nu(s,a) \\
		& \leq \varsigma\frac{\cN\hphi}{1-\gamma} +\sum_{(s,a)\not\in\mathfrak{S} }2\nu(s,a)\\
		& \leq \frac{3}{1-\gamma}.
	\end{aligned}\]
The last inequality holds as long as $\varsigma\leq \frac{1}{\cN\hphi}$.

\paragraph{Proof of Proposition \ref{prop:empirical-dis} (3).} We decompose the quantity $\nrm{Wx-x}_1$ as
\begin{equation*}
	\begin{aligned}
		\nrm{Wx-x}_1
		&=\sum_{s,a}\left|\mu(s,a)-\hmu(s,a)\right|\frac{x(s,a)}{\hmu(s,a)}\\
		&=\sum_{(s,a)\in\mathfrak{S} } \left|\mu(s,a)-\hmu(s,a)\right|\frac{x(s,a)}{\hmu(s,a)}
		+\sum_{(s,a)\not\in\mathfrak{S} } \left|\mu(s,a)-\hmu(s,a)\right|\frac{x(s,a)}{\hmu(s,a)}.
	\end{aligned}
\end{equation*}
From our definition of $\mathfrak{S} $, we see if $(s,a)\in\mathfrak{S} $, then $\hmu(s,a)=\varsigma\geq \hmu_0(s,a)$, and from \eqref{eqn:emp-dis-basic} we have $\mu(s,a)\leq 2\varsigma \Rightarrow \abs{\mu(s,a)-\hmu(s,a)}\leq \varsigma$. Thus, the first part can be bounded as
\[\begin{aligned}
  \sum_{(s,a)\in\mathfrak{S} } \left|\mu(s,a)-\hmu(s,a)\right|\frac{x(s,a)}{\hmu(s,a)}
  \leq  \sum_{s,a} \varsigma\frac{x(s,a)}{\hmu(s,a)}
  \leq \varsigma \frac{\cN\hphi}{1-\gamma}.
\end{aligned}\]
As for the second part, we have
\begin{align*}
    &\sum_{(s,a)\not\in\mathfrak{S} } \left|\mu(s,a)-\hmu(s,a)\right|\frac{x(s,a)}{\hmu(s,a)}\\
    &=\sum_{(s,a)\not\in\mathfrak{S} } \left|\mu(s,a)-\hmu_0(s,a)\right|\frac{x(s,a)}{\hmu(s,a)}\\
		&\leq \sum_{(s,a)\not\in\mathfrak{S} } \left(\sqrt{\mu(s,a)\frac{\ell}{N_e}}+\frac{\ell}{N_e}\right)\frac{x(s,a)}{\hmu(s,a)}\\
    &= \sqrt{\frac{\ell}{N_e}} \sum_{(s,a)\not\in\mathfrak{S} } \sqrt{\frac{\mu(s,a)}{\hmu(s,a)}}\sqrt{x(s,a)\cdot\frac{x(s,a)}{\hmu(s,a)}}
		+\frac{\ell}{N_e}\sum_{(s,a)\not\in\mathfrak{S} } \frac{x(s,a)}{\hmu(s,a)}\\
	&\stackrel{(a)}{\leq}
		\sqrt{\frac{2\ell}{N_e}}\sum_{s,a}\sqrt{x(s,a)\cdot\frac{x(s,a)}{\hmu(s,a)}}
		+\frac{\ell}{N_e}\sum_{s,a} \frac{x(s,a)}{\hmu(s,a)}\\
    &\stackrel{(b)}{\leq} 
		\sqrt{\frac{2\ell}{N_e}}\sqrt{\sum_{s,a}x(s,a)\sum_{s,a}\frac{x(s,a)}{\hmu(s,a)}}
		+\frac{\ell}{N_e}\sum_{s,a} \frac{x(s,a)}{\hmu(s,a)}\\
		&\stackrel{(c)}{\leq} 
		\frac{2}{1-\gamma}\sqrt{\frac{2\cN\hphi\ell}{N_e}}
		+\frac{\cN\hphi\ell}{(1-\gamma)N_e},
\end{align*}
where the inequality (a) comes from the fact $\frac{\mu(s,a)}{\hmu(s,a)}\leq 2$; (b) is due to Cauchy's inequality, and (c) is due to $\sum_{s,a}\frac{x(s,a)}{\hmu(s,a)}\leq \frac{\cN\hphi}{1-\gamma}$ and $\sum_{s,a}x(s,a)\leq \frac{4}{1-\gamma}$. Therefore, because we set $\varsigma=\frac{\varphi(1-\gamma)^2\epsilon_e}{2\cN\hphi}$, and $N_e\geq\frac{128\cN\hphi\ell}{\varphi^2(1-\gamma)^4\epsilon_e^2}$, we have $\nrm{Wx-x}_1\leq \varphi(1-\gamma)\epsilon_e,\forall x\in\cX.$ 


\section{The magnitude and variance of the gradient estimators}
\label{appdx:grad-var}
\begin{proposition}\label{prop:grad-var}
	For any sample $\zeta\sim\rho_0\times\cD$, and any feasible solution $Z=[V;\lambda;x]$, the stochastic gradient estimators constructed in \eqref{eqn:def-gradient} are unbiased, and they satisfy the following bounds:\footnote{
		For vectors $u, v\in\RR^n$, we write $\|u\|_v^2:=\sum_{i=1}^n v_i u_i^2$ for simplicity.
	}
	$$
	\!\begin{cases}
		\!\EE\left[\hg_V(Z;\zeta)\right] = \nabla_V\cL_w(Z)\\
		\!\left\|\hg_V(Z;\zeta)\right\|\leq \mathcal{O}\big(\frac{\hphi}{1-\gamma}\big)\\
		\!\EE\Big[\!\!\left\|\hg_V\!(Z;\zeta)\right\|^{\!2}\!\Big]\!\!\leq\! \mathcal{O}\Big(\!\frac{\hphi}{(1-\gamma)^2}\!\Big)
	\end{cases}
	\!\!\!\!\!\begin{cases}
		\!\EE\left[\hg_\lambda(Z;\zeta)\right]= \nabla_\lambda\cL_w(Z)\\
		\!\left\|\hg_\lambda(Z;\zeta)\right\|_{\infty}\leq\mathcal{O}\big(\frac{\hphi}{1-\gamma}\big)\\ 
		\!\EE\Big[\!\!\left\|\hg_\lambda(Z;\zeta)\right\|_{\infty}^2\!\Big]\!\!\leq\!\mathcal{O}\Big(\!\frac{\hphi}{(1-\gamma)^2}\!\Big)
	\end{cases}
	\!\!\!\!\!\begin{cases}
		\!\EE\left[\hg_x(Z;\zeta)\right]= \nabla_{x}\cL_w(Z)\\
		\!\left\|\hg_x(Z;\zeta)\right\|_{x'}^2\!\!\leq\!\mathcal{O}\Big(\!\frac{\hphi^2\cN}{\varphi^{3}(1-\gamma)^{5}\epsilon_e}\!\Big)\\ 
		\!\EE\Big[\!\!\left\|\hg_x(Z;\zeta)\right\|^2_{x'}\!\Big]\!\!\leq\!\mathcal{O}\Big(\!\frac{\cN \hphi}{\varphi^2(1-\gamma)^3}\!\Big)
	\end{cases}
	$$
	where $x'\!\in\!\cX$ is an arbitrary vector.  
\end{proposition}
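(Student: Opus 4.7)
The plan is to verify unbiasedness by direct computation from the expectation form of $\mathcal{L}_w$, and then to bound magnitudes and variances by exploiting three ingredients in combination: (i) the boundedness baked into the feasible sets $\mathcal{X}, \mathcal{V}, \Lambda$, (ii) the reference-distribution sandwich $\mu(s,a)\leq 2\hat\mu(s,a)$ together with $\hat\mu(s,a)\geq \varsigma$ from \cref{prop:empirical-dis}(1), and (iii) the three simultaneous constraints defining $\mathcal{X}$, namely $\max_{s,a}\tfrac{x(s,a)}{\hat\mu(s,a)}\leq \tfrac{\hat\phi}{1-\gamma}$, $\sum_{s,a}\tfrac{x(s,a)}{\hat\mu(s,a)}\leq \tfrac{\mathcal{N}\hat\phi}{1-\gamma}$, and $\sum_{s,a} x(s,a)\leq \tfrac{4}{1-\gamma}$.

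\paragraph{Unbiasedness.}
Taking $\nabla_V$, $\nabla_\lambda$, and $\nabla_x$ of the expectation form of $\mathcal{L}_w(V,\lambda,x)=\mathcal{L}(V,\lambda,Wx)$ termwise, I would read off the three population gradients. A one-line check against the definitions in \eqref{eqn:def-gradient} then gives $\mathbb{E}[\hat g_V]=\nabla_V\mathcal{L}_w$, $\mathbb{E}[\hat g_\lambda]=\nabla_\lambda\mathcal{L}_w$, and $\mathbb{E}[\hat g_x]=\nabla_x\mathcal{L}_w$, using $\mathbb{E}[r\mid s,a]=r(s,a)$, $\mathbb{E}[\mathbf{u}\mid s,a]=(u_i(s,a))_i$, and $\mathbb{E}[V(s')\mid s,a]=(\mathbb{P}V)(s,a)$.

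\paragraph{Magnitude bounds for $\hat g_V$ and $\hat g_\lambda$.}
For $\hat g_V$, the triangle inequality plus $\tfrac{x(s,a)}{\hat\mu(s,a)}\leq \tfrac{\hat\phi}{1-\gamma}$ immediately yields $\|\hat g_V\|\leq 1+(1+\gamma)\tfrac{\hat\phi}{1-\gamma}=\mathcal{O}(\tfrac{\hat\phi}{1-\gamma})$. For $\hat g_\lambda$, the entries of $\mathbf{u}^\kappa$ are bounded by a universal constant (since $\kappa=\Theta(\varphi\epsilon)$ is small and $|u_i|\leq 1$), so $\|\hat g_\lambda\|_\infty\leq \tfrac{x(s,a)}{\hat\mu(s,a)}\|\mathbf{u}^\kappa\|_\infty=\mathcal{O}(\tfrac{\hat\phi}{1-\gamma})$.

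\paragraph{Second-moment bounds for $\hat g_V$ and $\hat g_\lambda$.}
This is where ingredient (ii) first enters. For $\hat g_V$, expanding the square and taking expectation,
\begin{equation*}
\mathbb{E}\|\hat g_V\|^2 \leq 2 + 2(1+\gamma)^2\sum_{s,a}\mu(s,a)\frac{x(s,a)^2}{\hat\mu(s,a)^2}\leq 2 + 4(1+\gamma)^2\sum_{s,a}\frac{x(s,a)^2}{\hat\mu(s,a)},
\end{equation*}
where I applied $\mu\leq 2\hat\mu$. Bounding one copy of $\tfrac{x}{\hat\mu}$ by $\tfrac{\hat\phi}{1-\gamma}$ and using $\sum_{s,a}x(s,a)\leq \tfrac{4}{1-\gamma}$ gives the claimed $\mathcal{O}(\tfrac{\hat\phi}{(1-\gamma)^2})$. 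An identical argument with $\|\mathbf{u}^\kappa\|_\infty=\mathcal{O}(1)$ handles $\hat g_\lambda$.

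\paragraph{Magnitude and variance of $\hat g_x$.}
This is the crucial and most delicate part, since the optimal $\mathcal{N}$ dependence comes out here. First I would bound the numerator scalar $\Delta(\zeta):=r+\gamma V(s)-V(s')+\langle\mathbf{u}^\kappa,\lambda\rangle$; using $\|V\|_\infty\leq \tfrac{8}{1-\gamma}(1+\tfrac{2}{\varphi})$ and $\|\lambda\|_1\leq \tfrac{8}{\varphi}$ plus $|r|,|\mathbf{u}^\kappa|_\infty\leq \mathcal{O}(1)$, one gets $|\Delta(\zeta)|=\mathcal{O}(\tfrac{1}{\varphi(1-\gamma)})$. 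Because $\hat g_x$ has exactly one nonzero coordinate,
\begin{equation*}
\|\hat g_x\|_{x'}^2 = x'(s,a)\,\frac{\Delta(\zeta)^2}{\hat\mu(s,a)^2}\leq \mathcal{O}\!\left(\tfrac{1}{\varphi^2(1-\gamma)^2}\right)\cdot \frac{x'(s,a)}{\hat\mu(s,a)}\cdot \frac{1}{\hat\mu(s,a)}.
\end{equation*}
The pointwise bound then follows from $\tfrac{x'}{\hat\mu}\leq \tfrac{\hat\phi}{1-\gamma}$ and $\hat\mu\geq \varsigma=\tfrac{\varphi(1-\gamma)^2\epsilon_e}{2\mathcal{N}\hat\phi}$, giving $\mathcal{O}\bigl(\tfrac{\hat\phi^2\mathcal{N}}{\varphi^3(1-\gamma)^5\epsilon_e}\bigr)$ as required. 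For the expected square, taking expectation in $(s,a,s')$ and again using $\mu(s,a)\leq 2\hat\mu(s,a)$ cancels one $\hat\mu$ in the denominator:
\begin{equation*}
\mathbb{E}\|\hat g_x\|_{x'}^2 \leq \mathcal{O}\!\left(\tfrac{1}{\varphi^2(1-\gamma)^2}\right)\sum_{s,a}\mu(s,a)\frac{x'(s,a)}{\hat\mu(s,a)^2}\leq \mathcal{O}\!\left(\tfrac{1}{\varphi^2(1-\gamma)^2}\right)\cdot 2\sum_{s,a}\frac{x'(s,a)}{\hat\mu(s,a)}.
\end{equation*}
Applying the second defining constraint of $\mathcal{X}$, $\sum_{s,a}\tfrac{x'(s,a)}{\hat\mu(s,a)}\leq \tfrac{\mathcal{N}\hat\phi}{1-\gamma}$, delivers the advertised $\mathcal{O}\bigl(\tfrac{\mathcal{N}\hat\phi}{\varphi^2(1-\gamma)^3}\bigr)$ bound.

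The main obstacle is tracking the correct $\mathcal{N}$ scaling in the variance of $\hat g_x$: one must resist bounding $\tfrac{x'}{\hat\mu}$ by its pointwise maximum (which would lose a factor of $\mathcal{N}$) and instead keep the sum intact so that the polyhedral constraint $\sum\tfrac{x'}{\hat\mu}\leq \tfrac{\mathcal{N}\hat\phi}{1-\gamma}$---the very ingredient that encodes the sparsity of the optimal occupancy measure (Proposition \ref{proposition:sparsity})---is invoked exactly once. Everything else is bookkeeping with the three feasibility constraints and the sandwich inequality from \cref{prop:empirical-dis}.
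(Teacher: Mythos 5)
Your proposal is correct and follows essentially the same route as the paper's proof: the same feasibility constraints of $\cX$, $\cV$, $\Lambda$, the sandwich $\mu(s,a)\leq 2\hmu(s,a)$ with $\hmu(s,a)\geq\varsigma$, and the key step of keeping $\sum_{s,a}\frac{x'(s,a)}{\hmu(s,a)}\leq\frac{\cN\hphi}{1-\gamma}$ intact (rather than bounding each ratio pointwise) to obtain the $\mathcal{O}(\cN)$ variance of $\hg_x$. The only detail worth adding is the paper's caveat that the second-moment bound for $\hg_x$ in the $\|\cdot\|_{x'}$ norm requires $\zeta$ to be independent of $x'$, which matters when the bound is later applied along the algorithm's trajectory.
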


For any sample $\zeta=(s_0,s,a,s',r,\bu)\sim\rho_0\times\cD$, it is not hard to see that the estimators constructed in \eqref{eqn:def-gradient} are unbiased. Next, we provide the bound on the norm and variance of these estimators. 

For the estimator  $\widehat{g}_V(Z;\zeta):=\II_{s_0}+\frac{x(s,a)}{\hmu(s,a)}\left(\gamma\II_{s'}-\II_{s}\right)$, we have 
$$\left\|\hg_V(Z;\zeta)\right\|\leq1+\frac{x(s,a)}{\hmu(s,a)}(1+\gamma)\overset{(a)}{\leq} 1+\frac{2\hphi}{1-\gamma},$$
\begin{eqnarray*}
	\EE\left[\|g_V(Z;\zeta)\|^2\right]
	&\leq& \sum_{s,a} \mu(s,a)\cdot 2\left(1+4\cdot\frac{x(s,a)^2}{\hmu(s,a)^2}\right),\\
	&\leq& 2+8\cdot\sum_{s,a} \frac{\mu(s,a)}{\hmu(s,a)}\frac{x(s,a)}{\hmu(s,a)} x(s,a)
	\overset{(b)}{\leq} 2+\frac{64\hphi}{(1-\gamma)^2}.
\end{eqnarray*}
Here (a) is due to $x\in\cX$, which indicates that  $\frac{x(s,a)}{\hmu(s,a)}\leq\frac{\hphi}{1-\gamma}$ for all $(s,a)$. The inequality (b) is due to $\frac{\mu(s,a)}{\hmu(s,a)}\leq2$ established in Proposition \ref{prop:empirical-dis}, and $\sum_{s,a}x(s,a)\leq\frac{4}{1-\gamma}$.

Similarly, for the estimator $\widehat{g}_\lambda(Z;\zeta):=\frac{x(s,a)}{\hmu(s,a)}\bu^{\kappa}$, we have 
$$\|\widehat{g}_\lambda(Z;\zeta)\|_\infty\leq \Big\|\frac{x(s,a)}{\hmu(s,a)}\bu^{\kappa}\Big\|_\infty \overset{(a)}{\leq} \frac{x(s,a)}{\hmu(s,a)}(1+(1-\gamma)\kappa)\overset{(b)}{\leq}  \frac{2\hphi}{1-\gamma},$$
\begin{eqnarray*}
	\EE\left[\|g_\lambda(Z;\zeta)\|_{\infty}^2\right]
	&\leq& \sum_{s,a} \mu(s,a)\cdot 4 \frac{x(s,a)^2}{\hmu(s,a)^2} ,\\
	&=&4 \sum_{s,a}\frac{\mu(s,a)}{\hmu(s,a)}\frac{x(s,a)}{\hmu(s,a)}x(s,a)
	\overset{(c)}{\leq} \frac{32\hphi}{(1-\gamma)^2}.
\end{eqnarray*}
Here (a) follows from $\|\bu^\kappa\|_\infty\leq\|\bu\|_\infty+(1-\gamma)\kappa$, and (b) is due to $(1-\gamma)\kappa=5\varphi \epsilon(1-\gamma)<1$, and (c) is similar to the argument of the bound on $\EE\left[\|g_V(Z;\zeta)\|^2\right]$.

Finally, for the estimator $\widehat{g}_x(Z;\zeta) :=\frac{r+\gamma V(s)-V(s')+\iprod{\bu^{\kappa}}{\lambda}}{\hat\mu(s,a)}\II_{s,a}$, we have 
\begin{eqnarray*}
	\nrm{\widehat{g}_x(Z;\zeta)}_{x'}^2 
	&=& \frac{x'(s,a)}{\hmu(s,a)^2}\cdot\abs{r+\gamma V(s)-V(s')+\iprod{\bu^{\kappa}}{\lambda}}^2\\
	& \leq & \frac{x'(s,a)}{\hmu(s,a)^2}\left(1+\frac{16}{1-\gamma}(1+\frac{2}{\varphi})+\frac{8(1+\kappa)}{\varphi}\right)^2\\
	& \leq & \frac{\hphi}{(1-\gamma)\varsigma}\cdot \frac{64^2}{\varphi^2(1-\gamma)^2}\\
	& = &\bigO{
		\frac{\hphi^2\cN}{\varphi^3(1-\gamma)^{5}\epsilon_e}
	},
\end{eqnarray*}

and as long as $\zeta$ is independent of $x'\in\cX$,
\begin{eqnarray*}
	\EE\left[\|\hg_x(Z;\zeta)\|_{x'}^2\right]
	&\leq& \sum_{s,a}  \frac{\mu(s,a)x'(s,a)}{\hmu(s,a)^2} \cdot\left(1+\frac{16}{1-\gamma}(1+\frac{2}{\varphi})+\frac{8(1+\kappa)}{\varphi}\right)^2\\
	&\leq& \sum_{s,a} \frac{\mu(s,a)}{\hmu(s,a)}\frac{x'(s,a)}{\hmu(s,a)}\cdot\frac{64^2}{\varphi^2(1-\gamma)^2}\\
	&\leq& \mathcal{O}\left(\frac{\cN \hphi}{\varphi^2(1-\gamma)^3}\right).
\end{eqnarray*}
This completes the proof of Proposition \ref{prop:grad-var}.

\paragraph{A few notational definitions.} We should notice that the above bounds on the gradient estimators are notationally very complicated.  Therefore, Let us conveniently write the above bounds as 
$$\begin{cases}  
	\nrm{g_V(Z^t;\zeta_t)}\leq M_V,\\
	\nrm{g_\lambda(Z^t;\zeta_t)}_{\infty}\leq M_\lambda,\\
	\nrm{g_x(Z^t;\zeta_t)}_{x'}\leq M_{x} \sqrt{D_{x,1}}, 
\end{cases}\quad\mbox{and}\qquad\,\,
\begin{cases} 
	\EE\left[\|g_V(Z;\zeta)\|^2\right] \leq \sigma_V^2,\\
	\EE\left[\|g_\lambda(Z;\zeta)\|^2_{\infty}\right] \leq \sigma_\lambda^2,\\
	\EE\left[\|g_x(Z^t;\zeta_t)\|_{x'}^2\right] \leq \sigma_x^2D_{x,1}, 
\end{cases}$$
where the constants  $\sigma_{V},\sigma_\lambda,\sigma_x$ and $M_{V},M_\lambda,M_x$ are 
\begin{equation}\label{eqn:def-algB-variance} 
		\sigma_V^2 =\ThO{\frac{\hphi}{(1-\gamma)^2}},
		\qquad \sigma_{\lambda}^2=\ThO{\frac{\hphi}{(1-\gamma)^2}}, \qquad
		\sigma_x^2 =\ThO{\frac{\cN \hphi}{\varphi^2(1-\gamma)^2}}, 
\end{equation}
\begin{equation}\label{eqn:def-algB-norm} 
		M_V =\ThO{\frac{\hphi}{1-\gamma}},
		\qquad M_\lambda=\ThO{\frac{\hphi}{1-\gamma}}, \qquad
		M_x = \ThO{\frac{\hphi}{\varphi(1-\gamma)^2}\sqrt{\frac{\cN}{\varphi\epsilon_e}}}, 
\end{equation} 
and $D_{x,1}$ is a suitable upper bound on the diameter of $\cX$, namely we choose $D_{x,1}=\ThO{\frac{1}{1-\gamma}}$ such that $D_{x,1}\geq \sup_{x,x'\in\cX}\nrm{x'-x}_1$.
Similarly, we define $D_{\lambda,1} := \sup_{\lambda,\lambda'\in\Lambda}\nrm{\lambda'-\lambda}_1=\ThO{\frac{1}{\varphi}}$. 

Furthermore, we also introduce the diameters of the feasible domains w.r.t. the initial solution $V^1, \lambda^1, x^1$. Recall that the initial point of \cref{algo:algB} is chosen as
\[\begin{aligned}
	V^1=\mathbf{0}\in\cV,\qquad
	\lambda^1=\frac{\mathbf{1}}{\varphi I}\in\Lambda,\qquad
	x^{1}=\frac{c_x\hat{\mu}}{1-\gamma}\in\cX,
\end{aligned}\]
where $c_x=\frac{\cN}{\nS\nA}$ ensures that $x^1\in\cX$. Then, we can take $D_V, D_\lambda, D_x$ as
\begin{eqnarray*}
	&&D^2_V:=\sup_{V'\in\cV}\sqr{V'-V^1}=\ThO{\frac{\nS}{\varphi^2(1-\gamma)^2}},\\
	&&D_{\lambda}:=\sup_{\lambda'\in\Lambda}\KL{\lambda'}{\lambda^1}=\ThO{\frac{\log I}{\varphi}},\\
	&&D_{x}\geq \sup_{x'\in\cX}\KL{x'}{x^1}, \qquad D_{x}=\ThO{\frac{\log\hphi}{1-\gamma}}.
\end{eqnarray*}
\begin{remark}
	It is worth noting that, \cref{prop:grad-var} directly implies $\cond{\sqr{\hg_V(Z^t;\zeta_t)}}{Z^t}\leq \sigma_V^2$, $\cond{\sqr{\hg_\lambda(Z^t;\zeta_t)}_{\infty}}{Z^t}\leq \sigma_{\lambda}^2$ and $\cond{\sqr{\hg_x(Z^t;\zeta_t)}_{x^t}}{Z^t}\leq D_{x,1}\sigma_{x}^2$ for each step $t$.
\end{remark}

\begin{remark}\label{rmk:large-var}
	The reason why we bound the term $\nrm{g_x(Z^t;\zeta_t)}_{x^{t}}$ instead of $\nrm{g_x(Z^t;\zeta_t)}_{\infty}$ is that,
	\begin{equation*}
	\begin{aligned}
	  \nrm{g_x(Z^t;\zeta_t)}_{\infty}\leqsim \frac{1}{\varphi(1-\gamma)}\frac{1}{\hmu(s_t,a_t)}\leq \frac{1}{\varphi(1-\gamma)\varsigma}.
	\end{aligned}
	\end{equation*}
	Thus, we have to take $M_{x,\infty}=\ThO{\frac{1}{\varphi(1-\gamma)\varsigma}}$ to ensure a uniformly bound as
	\begin{equation}
	\begin{aligned}
	  \nrm{g_x(Z^t;\zeta_t)}_{\infty}\leq M_{x,\infty}.
	\end{aligned}
	\end{equation}
\end{remark}

\section{Proof of Theorem \ref{thm:algB-gap}}\label{appdx:ThmGap}


To bound $\Gap(\ox)$, let us denote 
\begin{equation}\label{eqn:def-aux-var}
(V',\lambda')=\argmin_{V\in\cV, \lambda\in\Lambda}\cL_{w}(V,\lambda,\ox),\qquad
x'=\argmax_{x\in\cX} \min_{V\in\cV,\lambda\in\Lambda} \cL_{w}(V,\lambda,x),
\end{equation}
and denote $Z'=[V';\lambda';x']$. It is worth mentioning that $V',\lambda'$ are random variables that depend on $\ox$ while $x'$ is deterministic. For the ease of notation, we define
$$\mathcal{G}(Z) := \begin{bmatrix}+\nabla_V\cL_w(Z)\\ + \nabla_\lambda\cL_w(Z)\\-\nabla_{x}\cL_w(Z)\end{bmatrix}\qquad\mbox{and}\qquad\widehat{g}(Z;\zeta):=\begin{bmatrix}+\hg_V(Z;\zeta)\\+\hg_\lambda(Z;\zeta)\\-\hg_x(Z;\zeta)\end{bmatrix}.$$
Then, by the definition of $V',\lambda',x'$ and the bi-linearity of $\cL_w(\cdot)$, we have
\begin{eqnarray}\label{eqn:proof-decomp}
	\Gap(\ox) & = & \max_{x\in\cX}\min_{V\in\cV,\lambda\in\Lambda}\cL_{w}(V,\lambda,x)-\min_{V\in\cV,\lambda\in\Lambda}\cL_{w}(V,\lambda,\ox)\nonumber\\
	& = & \cL_{w}(\oV,\ol,x')-\cL_{w}(V',\lambda',\ox)\nonumber\\
	& = & \frac1T\sum_{t=1}^{T}\Big( \cL_{w}(V^t,\lambda^t,x')-\cL_{w}(V',\lambda',x^t)\Big) \\
	&=& \frac1T\sum_{t=1}^{T} \iprod{\mathcal{G}(Z^t)}{Z^t-Z'}\nonumber\\
	&=&
	\underbrace{
		\frac1T\sum_{t=1}^{T} \iprod{\hg(Z^t;\zeta_t)}{Z^t-Z'}
	}_{S_1}
	+\underbrace{
		\frac1T\sum_{t=1}^{T} \iprod{\mathcal{G}(Z^t)-\hg(Z^t;\zeta_t)}{Z^t-Z'}
	}_{S_2}.\nonumber
\end{eqnarray}
Then with the estimations in \cref{appdx:grad-var}, the $S_1$ and $S_2$ terms can be bounded by 
\begin{eqnarray}
	\label{eqn:S1}
	S_1
	&\leqsim &
	\frac{\alpha_V D^2_V+\alpha_\lambda D_{\lambda}+\alpha_x D_x}{\eta T}
	+\eta\left(\frac{\sigma_V^2}{\alpha_V}+\frac{\sigma_\lambda^2 D_{\lambda,1}}{\alpha_\lambda}+\frac{\sigma_x^2 D_{x,1}}{\alpha_x}\right)\\
	&& +\frac{\eta\clog}{T}\left(\frac{M^2_V}{\alpha_V}+\frac{M^2_\lambda D_{\lambda,1}}{\alpha_\lambda}+\frac{M_x^2 D_{x,1}}{\alpha_x}\right) \nonumber
\end{eqnarray}
and 
\begin{equation} \label{eqn:S2}
	S_2
	\leqsim \left(D_V\sigma_V+D_{\lambda,1}\sigma_\lambda+D_{x,1}\sigma_x\right)\sqrt{\frac{\clog}{T}}
	+\left(D_VM_V+D_{\lambda,1}M_\lambda+D_{x,1}M_x\right)\frac{\clog}{T} 
\end{equation}
\highprobs{10} respectively, as long as the stepsize satisfies
\begin{equation}\label{eqn:def-algB-eta-upper}
	\begin{aligned}
		\eta\leq \frac{1}{2}\min\left(\frac{\alpha_\lambda}{M_\lambda}, \frac{\alpha_x}{M_{x,\infty}}\right).
	\end{aligned}
\end{equation} 
Due to the sophistication of the proof, we move the analysis of \eqref{eqn:S1} and \eqref{eqn:S2} to Appendix \ref{appdx:S1} and \ref{appdx:S2} respectively. 

Finally, combining the inequalities \eqref{eqn:proof-decomp}, \eqref{eqn:S1} and \eqref{eqn:S2}, and requiring that \eqref{eqn:def-algB-eta-upper} holds true for
$\eta = 1/\sqrt{T}$, we have \highprobs{3}
	\begin{align*}
		\Gap(\ox)
		\leqsim& 
		\frac{\alpha_V D^2_V+\alpha_\lambda D_{\lambda}+\alpha_x D_x}{\eta T}
		+\eta\left(\frac{\sigma_V^2}{\alpha_V}+\frac{\sigma_\lambda^2 D_{\lambda,1}}{\alpha_\lambda}+\frac{\sigma_x^2 D_{x,1}}{\alpha_x}\right)\\
		& +\sqrt{\frac{\clog}{T}}\left(D_V\sigma_V+D_{\lambda,1}\sigma_\lambda+D_{x,1}\sigma_x\right)\\
		& +\frac{\clog}{T}\left(D_VM_V+D_{\lambda,1}M_\lambda+D_{x,1}M_x\right)\\
		& +\frac{\eta\clog}{T}\left(\frac{M^2_V}{\alpha_V}+\frac{M^2_\lambda D_{\lambda,1}}{\alpha_\lambda}+\frac{M_x^2 D_{x,1}}{\alpha_x}\right).
	\end{align*}
Note that the normalizing constants are chosen as $\alpha_V=\varphi\sqrt{\frac{\hphi}{\nS}}=\ThO{\frac{\sigma_V}{D_V}}$, $\alpha_\lambda=\frac{1}{1-\gamma}\sqrt{\frac{\hphi}{\log I}}=\ThO{\sigma_\lambda\sqrt{\frac{D_{\lambda,1}}{D_{\lambda}}}}$, $\alpha_x=\frac{1}{\varphi(1-\gamma)}\sqrt{\frac{\cN\hphi}{\log\hphi}}=\ThO{\sigma_x\sqrt{\frac{D_{x,1}}{D_{x}}}}$. Then \eqref{eqn:def-algB-eta-upper} holds true for the stepsize $\eta=\frac{1}{\sqrt{T}}$ with $T\geqsim \frac{\cN\hphi\clog}{\varphi^2(1-\gamma)^4\epsilon_e^2}$, and we can plug in the values of the constants $\alpha,D,M$, then \highprobs{3}\ it holds that 
\[\begin{aligned}
	\Gap(\ox)
	&\leqsim
	\sqrt{\frac{\cN\hphi\clog}{\varphi^2(1-\gamma)^4T}}\left(1+\frac{\clog}{T}\cdot\frac{\hphi}{\varphi(1-\gamma)^2\epsilon_e}\right)\leqsim
	\sqrt{\frac{\cN\hphi\clog}{\varphi^2(1-\gamma)^4T}}.
\end{aligned}\]
Choosing $c_o$ to ensure $\Gap(\ox)\leq\frac{\epsilon}{2}$ completes the proof of \cref{thm:algB-gap}.

\subsection{A few supporting lemmas}
For the proof in the following parts of Appendix \ref{appdx:ThmGap}, we introduce a few supporting lemmas.
\begin{lemma}\label{prop:mirror-seq}
	Let $\{Y^k\}_{k=1}^T$ be generated by $Y^{k+1} \!=\! \mathop{\mathrm{argmin}}_{Y\in\cY}\left(\eta\iprod{Y-Y^k}{g^k}+\KL{Y}{Y^k}\right)$, where $\eta\leq \frac{1}{2\max_{k}\nrm{g_k}_{\infty}}$ and $\cY$ is some convex set. Then for all $Y'\in\cY$, it holds that
	\[\begin{aligned}
		\frac{1}{T}\sum_{t=1}^T\iprod{Y^{t}-Y'}{g^t}
		&\leq \frac{\KL{Y'}{Y^1}}{\eta T}+\frac{4\eta}{T}\sum_{t=1}^{T}\nrm{g^k}^2_{Y^k}\\
		&\leq \frac{\KL{Y'}{Y^1}}{\eta T}+\frac{4\eta D_{Y,1}}{T}\sum_{t=1}^{T}\nrm{g^k}^2_{\infty}.
	\end{aligned}\]
	where $D_{Y,1}$ can be any upper bound of $\max_{Y\in\cY}\nrm{Y}_1$.
\end{lemma}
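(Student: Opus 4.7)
The plan is to carry out the standard proximal mirror descent analysis with the (generalized) KL regularizer $\phi(y)=\sum_i(y_i\log y_i - y_i)$, exploiting the first-order optimality condition of each update together with a local Fenchel-duality estimate that is valid under the stepsize restriction $\eta\leq 1/(2\max_k\nrm{g^k}_\infty)$.

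First I would record the optimality condition of the proximal subproblem, namely that $\iprod{Y-Y^{t+1}}{\eta g^t+\log(Y^{t+1}/Y^t)}\geq 0$ for every $Y\in\cY$ (the logarithms being coordinatewise, since $\nabla\phi(y)=\log y$). Substituting $Y=Y'$ and applying the three-point identity for the Bregman divergence generated by $\phi$, which for generalized KL reads $\KL{a}{c}=\KL{a}{b}+\KL{b}{c}+\iprod{\nabla\phi(b)-\nabla\phi(c)}{a-b}$, yields the per-step ``prox-lemma''
\begin{equation*}
\eta\iprod{g^t}{Y^{t+1}-Y'}\leq \KL{Y'}{Y^t}-\KL{Y'}{Y^{t+1}}-\KL{Y^{t+1}}{Y^t}.
\end{equation*}
Splitting $\iprod{Y^t-Y'}{g^t}=\iprod{Y^{t+1}-Y'}{g^t}+\iprod{Y^t-Y^{t+1}}{g^t}$ then leaves only the local residual $R_t:=\eta\iprod{g^t}{Y^t-Y^{t+1}}-\KL{Y^{t+1}}{Y^t}$ to be controlled.

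The key observation is that $R_t$ is dominated by the unconstrained Fenchel supremum $\sup_{Y>0}\bigl(\eta\iprod{g^t}{Y^t-Y}-\KL{Y}{Y^t}\bigr)$, whose maximizer is the closed-form multiplicative update $Y_i=Y^t_i e^{-\eta g^t_i}$. Direct substitution collapses the supremum to $\sum_i Y^t_i\bigl(e^{-\eta g^t_i}-1+\eta g^t_i\bigr)$, and the stepsize restriction forces $|\eta g^t_i|\leq 1/2$, so the elementary Taylor bound $e^{-x}-1+x\leq 2x^2$ (valid on $|x|\leq 1/2$) controls $R_t$ by $4\eta^2\sum_i Y^t_i(g^t_i)^2=4\eta^2\nrm{g^t}^2_{Y^t}$. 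Telescoping the resulting inequality over $t=1,\ldots,T$, dividing by $\eta T$, and discarding the nonnegative $\KL{Y'}{Y^{T+1}}/(\eta T)$ term delivers the first claimed bound.

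The second inequality follows by a one-line H\"older estimate, $\nrm{g^t}^2_{Y^t}=\sum_i Y^t_i(g^t_i)^2\leq\nrm{Y^t}_1\nrm{g^t}^2_\infty\leq D_{Y,1}\nrm{g^t}^2_\infty$, using the definition of $D_{Y,1}$. The only step I would carry out with any real care is the local Taylor bound pinning down the constant $4$; the rest is mechanical convex-analysis bookkeeping.
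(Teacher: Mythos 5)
Your proof is correct, and while it shares the standard mirror-descent skeleton with the paper's argument (first-order optimality plus the Bregman three-point identity to get the prox-lemma, then telescoping), the crux --- controlling the residual $R_t=\eta\iprod{g^t}{Y^t-Y^{t+1}}-\KL{Y^{t+1}}{Y^t}$ --- is handled by a genuinely different device. The paper lower-bounds the symmetrized divergence $\KL{Y^k}{Y^{k+1}}+\KL{Y^{k+1}}{Y^k}$ via the elementary inequality $(x-y)\log\frac{x}{y}\geq\frac{(x-y)^2}{\max(x,y)}$, obtaining a local-norm bound $\iprod{Y^k-Y^{k+1}}{g^k}\leq\eta\nrm{g^k}^2_{Y^k+Y^{k+1}}$ by Cauchy--Schwarz, and then uses a self-bounding step (where the condition $\eta\nrm{g^k}_\infty\leq\frac12$ enters) to replace $\nrm{g^k}_{Y^k+Y^{k+1}}$ by $2\nrm{g^k}_{Y^k}$. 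You instead dominate $R_t$ by the unconstrained Fenchel supremum, whose maximizer $Y_i=Y^t_ie^{-\eta g^t_i}$ collapses it to $\sum_i Y^t_i(e^{-\eta g^t_i}-1+\eta g^t_i)$, and the same stepsize condition enters through the Taylor bound $e^{-x}-1+x\leq 2x^2$ on $|x|\leq\frac12$ (which in fact yields the sharper constant $2\eta^2\nrm{g^t}^2_{Y^t}$, comfortably within the claimed $4$). Your route is arguably cleaner for the lemma as stated; the paper's route has the side benefit of producing the iterate-movement bound $\nrm{\frac{Y^k-Y^{k+1}}{\sqrt{Y^k+Y^{k+1}}}}\leq 2\eta\nrm{g^k}_{Y^k}$ recorded in Corollary~\ref{cor:mirror-seq-diff}, which the paper reuses in the asynchronous analysis and which your Fenchel computation does not directly deliver.
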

The proof of Lemma \ref{prop:mirror-seq} is presented in Appendix \ref{appdx:mirror-seq}. 
\begin{lemma}\label{prop:l2-seq}
	Let $\{Y^k\}_{k=1}^T$ be generated by $Y^{k+1}\!=\!\Proj_{\cY}\left(Y^k-\eta g^k\right)$, where $\cY$ is some convex set.
	Then for all $Y'\in\cY$, it holds that
	\[
	\frac{1}{T}\sum_{t=1}^T\iprod{Y^{t}-Y'}{g^t}\leq \frac{\sqr{Y'-Y^1}}{2\eta T}+\frac{\eta}{T}\sum_{t=1}^{T}\nrm{g^k}^2.
	\]
\end{lemma}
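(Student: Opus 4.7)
The plan is to recognize this as the classical regret bound for projected (sub)gradient descent in Euclidean geometry, and to prove it via the standard three-point argument using the mirror map $\tfrac12\nrm{\cdot}^2$. First, I would invoke the non-expansiveness of the Euclidean projection onto the convex set $\cY$: since $Y'\in\cY$ equals its own projection, we have $\sqr{Y^{k+1}-Y'}\leq \sqr{Y^k-\eta g^k-Y'}$.

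Next, I would expand the squared norm on the right using the identity $\sqr{a-b}=\sqr{a}-2\iprod{a}{b}+\sqr{b}$, which produces
\[
\sqr{Y^{k+1}-Y'}\leq \sqr{Y^k-Y'}-2\eta\iprod{g^k}{Y^k-Y'}+\eta^2\sqr{g^k}.
\]
Rearranging yields the per-step bound
\[
\iprod{g^k}{Y^k-Y'}\leq \frac{1}{2\eta}\left(\sqr{Y^k-Y'}-\sqr{Y^{k+1}-Y'}\right)+\frac{\eta}{2}\sqr{g^k}.
\]

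Finally, I would sum this inequality over $k=1,\ldots,T$: the first term on the right telescopes to $\frac{1}{2\eta}\left(\sqr{Y^1-Y'}-\sqr{Y^{T+1}-Y'}\right)\leq \frac{\sqr{Y^1-Y'}}{2\eta}$ after dropping the non-negative final term, and dividing by $T$ gives the stated bound (in fact with a slightly sharper $\frac{\eta}{2T}$ coefficient on the gradient norm sum in place of $\frac{\eta}{T}$, which the statement evidently absorbs into the constant).

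There is no genuine obstacle here; this is textbook material and no additional technology beyond non-expansiveness and a quadratic expansion is needed. The only mild subtlety worth flagging is that the argument leans on the Euclidean projection being non-expansive, a property with no clean Bregman analogue\textemdash{}this is precisely why the KL-mirror descent counterpart in \cref{prop:mirror-seq} requires a different argument based on the three-point identity for the generalized KL divergence, together with the stepsize restriction $\eta\leq \frac{1}{2\max_k\nrm{g^k}_\infty}$ to control the second-order term.
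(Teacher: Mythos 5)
Your proof is correct, and it is exactly the standard argument the paper has in mind: the paper omits this proof entirely, remarking only that it is "similar but a lot simpler" than that of \cref{prop:mirror-seq}, and your route via non-expansiveness of the Euclidean projection plus the quadratic expansion and telescoping is the canonical way to fill it in (indeed recovering the slightly sharper $\frac{\eta}{2T}$ constant, which the stated bound trivially absorbs). Your closing remark correctly identifies why the KL case in \cref{prop:mirror-seq} cannot reuse this argument and instead needs the three-point inequality together with the stepsize restriction.
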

The proof of Lemma \ref{prop:l2-seq} is similar but a lot simpler than that of Lemma \ref{prop:mirror-seq},  and is hence omitted. 
\begin{proposition}[Corollary of Bernstein's inequality]\label{prop:Bernstein-var}
	For a sequence of random variables $X_1,\cdots,X_N$ adapted to $(\cF_n)$, and $\cond{|X_i|}{\cF_{i-1}}\leq c$, $|X_i|\leq M$, we have \highprobdemo,
	\[\begin{aligned}
		\left|\frac{1}{N}\sum_{i=1}^{N} X_i\right|\leq 2c+3M\frac{\log(2/\delta)}{N}.
	\end{aligned}\]
\end{proposition}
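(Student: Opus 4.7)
The plan is to split the partial sum into its ``mean'' part and its martingale-difference part, then invoke a standard Bernstein-type inequality on the latter. Concretely, I would write
\[
\sum_{i=1}^{N} X_i \;=\; \sum_{i=1}^{N}\bigl(X_i - \EE[X_i \mid \cF_{i-1}]\bigr) \;+\; \sum_{i=1}^{N}\EE[X_i \mid \cF_{i-1}],
\]
so the proof reduces to controlling each piece.

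For the second piece, using $|\EE[X_i \mid \cF_{i-1}]| \leq \EE[|X_i| \mid \cF_{i-1}] \leq c$ (Jensen/monotonicity of conditional expectation), we get the deterministic bound $\bigl|\sum_{i=1}^{N}\EE[X_i\mid\cF_{i-1}]\bigr|\leq Nc$. For the first piece, set $Y_i := X_i - \EE[X_i\mid\cF_{i-1}]$, which is a martingale difference sequence with $|Y_i|\leq 2M$ almost surely, and whose conditional variance is at most
\[
\EE[Y_i^2 \mid \cF_{i-1}]\;\leq\;\EE[X_i^2\mid \cF_{i-1}]\;\leq\; M\cdot\EE[|X_i|\mid \cF_{i-1}]\;\leq\; Mc,
\]
so that the total predictable quadratic variation is bounded by $NMc$. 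Then the Freedman/Bernstein inequality for martingales yields, with probability at least $1-\delta$,
\[
\Bigl|\sum_{i=1}^{N}Y_i\Bigr|\;\leq\; \sqrt{2NMc\,\log(2/\delta)}\;+\;\tfrac{4M}{3}\log(2/\delta).
\]

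To finish, I would apply AM--GM to the square root, writing $\sqrt{2NMc\log(2/\delta)} = \sqrt{(Nc)\cdot 2M\log(2/\delta)}\leq \tfrac{Nc}{2}+M\log(2/\delta)$. Combining with the deterministic bound on the conditional expectations gives $|\sum_i X_i|\leq \tfrac{3Nc}{2}+\tfrac{7M}{3}\log(2/\delta)$, and dividing by $N$ yields a bound strictly tighter than the claimed $2c+3M\log(2/\delta)/N$. There is no real obstacle here; the only small care needed is keeping the conditional--variance estimate $\EE[X_i^2\mid\cF_{i-1}]\leq Mc$ correct (exploiting the boundedness $|X_i|\leq M$ together with the assumed conditional first moment bound), and then choosing numerical constants loose enough to absorb the AM--GM step into the stated form.
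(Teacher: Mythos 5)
Your proposal is correct and follows essentially the same route as the paper: center the $X_i$ into a martingale difference sequence, bound the conditional second moment by $cM$, apply martingale Bernstein/Freedman, and absorb the square-root term via AM--GM into $\tfrac{Nc}{2}+M\log(2/\delta)$. The only differences are in the exact numerical constants carried through the Bernstein step, which are immaterial since the stated bound is loose enough to absorb either version.
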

\begin{proof}
	Notice that $\cond{X_i^2}{\cF_{i-1}}\leq cM$, and by Bernstein's inequality
	\begin{align*}
		&\left|\frac{1}{N}\sum_{i=1}^{N} \left(X_i-\cond{X_i}{\cF_{i-1}}\right)\right|\leq \sqrt{\frac{2cM\log(2/\delta)}{N}}+2M\frac{\log(2/\delta)}{N},\\
		\Rightarrow& \left|\sum_{i=1}^{N} X_i\right|\leq cN+\sqrt{2cMN\log(2/\delta)}+2M\log(2/\delta),
	\end{align*}
	holds \highprobdemo. By the AM-GM inequality, $\sqrt{2cMN\log(2/\delta)}\leq \frac{1}{2}cN+M\log(2/\delta)$, which completes the proof.
\end{proof}

\subsection{Bounding the term $S_1$}
\label{appdx:S1}
First, by definition of $\hg(\cdot)$, we have
\[\begin{aligned}
	S_{1}= 
	\underbrace{
		\frac1T\sum_{t=1}^{T} \iprod{\hg_V(Z^t;\zeta_t)}{V^t-V'}
	}_{S_{1,V}}
	+\underbrace{
		\frac1T\sum_{t=1}^{T} \iprod{\hg_\lambda(Z^t;\zeta_t)}{\lambda^t-\lambda'}
	}_{S_{1,\lambda}} +\underbrace{
		\frac1T\sum_{t=1}^{T} \iprod{-\hg_x(Z^t;\zeta_t)}{x^t-x'}
	}_{S_{1,x}}.
\end{aligned}\]

Applying Lemma \ref{prop:l2-seq} with $Y^t=V^t$, $g^t = \hg_V(Z^t;\zeta_t)$ yields
\[
S_{1,V}\leq \frac{\alpha_V\sqr{V'-V^{1}}}{2\eta T}+\frac{\eta}{\alpha_V T}\sum_{t=1}^T \sqr{\hg_V(Z^t;\zeta_t)}.
\]
Applying Lemma \ref{prop:mirror-seq} with $Y^t=\lambda^t$, $g^t = \hg_\lambda(Z^t;\zeta_t)$, we have
\[\begin{aligned}
  S_{1,\lambda}
  &\leq \frac{\alpha_\lambda \KL{\lambda'}{\lambda^1}}{\eta T}+\frac{4\eta D_{\lambda,1}}{\alpha_\lambda T}\sum_{t=1}^T \sqr{\hg_\lambda(Z^t;\zeta_t)}_{\infty},
\end{aligned}\]
as long as $\nrm{\hg_\lambda(Z^t;\zeta_t)}_{\infty}\leq \frac{\alpha_\lambda}{2\eta}$ holds for all $t$, and $\frac{1}{\eta}\geq \frac{2M_{\lambda}}{\alpha_\lambda}$ suffices. 

Finally, applying Lemma \ref{prop:mirror-seq} with $Y^t=x^t$, $g^t = -\hg_x(Z^t;\zeta_t)$, we obtain
\[
S_{1,x}\leq \frac{\alpha_x \KL{x'}{x^1}}{\eta T}+\frac{4\eta}{\alpha_x T}\sum_{t=1}^T \sqr{\hg_x(Z^t;\zeta_t)}_{x^t},
\]
as long as $\nrm{\hg_x(Z^t;\zeta_t)}_{\infty}\leq \frac{\alpha_x}{2\eta}$ holds for all $t$, and $\frac{1}{\eta}\geq \frac{2M_{x,\infty}}{\alpha_x}$ suffices.

Combining all the estimations above, as long as the stepsize $\eta$ satisfies \eqref{eqn:def-algB-eta-upper}, we have
\begin{equation}\label{eqn:duality-proof-1}
\begin{aligned}
  S_1
  \leq &
  \frac{\alpha_V\sqr{V'-V^1}+\alpha_\lambda \KL{\lambda'}{\lambda^1}+\alpha_x \KL{x'}{x^1}}{\eta T}\\
  & +\frac{4\eta}{T}\sum_{t=1}^{T}\left(\frac{\sqr{\hg_V(Z^t;\zeta_t)}}{\alpha_V}+\frac{D_{\lambda,1}\sqr{\hg_\lambda(Z^t;\zeta_t)}_{\infty}}{\alpha_\lambda}+\frac{\sqr{\hg_x(Z^t;\zeta_t)}_{x^t}}{\alpha_x}\right).
\end{aligned}
\end{equation}

For the second term of $S_1$ in \eqref{eqn:duality-proof-1}, with the variance and magnitude bounds provided in Proposition \ref{prop:grad-var}, applying \cref{prop:Bernstein-var} to the sequences $\{\sqr{\hg_V(Z^t;\zeta_t)}\}_{t=1}^T$, $\{\sqr{\hg_\lambda(Z^t;\zeta_t)}_{\infty}\}_{t=1}^T$ and $\{\sqr{\hg_x(Z^t;\zeta_t)}_{x^t}\}_{t=1}^T$ proves the inequality \eqref{eqn:S1} \highprobs{10}.

\subsection{Bounding the term $S_2$} 
\label{appdx:S2}
For the term $S_2$, we introduce the martingale difference sequences
\[\begin{aligned}
  \Delta^t_V&:=\hg_V(Z^t;\zeta_t)-\nabla_V \cL_{w}(V^t,\lambda^t,x^t),\\
  \Delta^t_\lambda&:=\hg_\lambda(Z^t;\zeta_t)-\nabla_\lambda \cL_{w}(V^t,\lambda^t,x^t),\\
  \Delta^t_x&:=\hg_x(Z^t;\zeta_t)-\nabla_x \cL_{w}(V^t,\lambda^t,x^t),
\end{aligned}\]
Then $S_2$ can be decomposed as
\[\begin{aligned}
  S_2=&\underbrace{
  \frac1T\sum_{t=1}^{T} \left(\iprod{\Delta_V^t}{V'-V^1}+\iprod{\Delta_\lambda^t}{\lambda'-\lambda^1}\right)
}_{S_{2,c}}\\
&+\underbrace{
  \frac1T\sum_{t=1}^{T} \left(\iprod{\Delta_V^t}{V^1-V^t}+\iprod{\Delta_\lambda^t}{\lambda^1-\lambda^t}+\iprod{-\Delta_x^t}{x'-x^t}\right)
}_{S_{2,m}}.
\end{aligned}\]
Note that the martingale part $S_{2,m}$ has expectation zero. However, for the first part, $V'$ and $\lambda'$ are random variables depending on $\bar x$. Thus the correlated part $S_{2,c}$ may not have zero mean.

\paragraph{Bounding the term $S_{2,c}$} For the correlated part $S_{2,c}$, the sequence $\Delta^t_V$ and $\Delta^t_\lambda$ are (vector-valued) martingale difference sequences, and hence
\[\begin{aligned}
  S_{2,c}
  &= \iprod{\frac1T\sum_{t=1}^{T}\Delta_V^t}{V'-V^1}+\iprod{\frac1T\sum_{t=1}^{T}\Delta_\lambda^t}{\lambda'-\lambda^1}\\
  &\leq \nrm{V'-V^1}\cdot\frac{1}{T}\nrm{\sum_{t=1}^T \Delta^t_V}
  +\nrm{\lambda'-\lambda^1}_{1}\cdot\frac{1}{T}\nrm{\sum_{t=1}^T \Delta^t_\lambda}_{\infty}.
\end{aligned}\]
The quantity $\nrm{\sum_{t=1}^T \Delta^t_V}$ and $\nrm{\sum_{t=1}^T \Delta^t_\lambda}_{\infty}$ both can be bounded by applying \cref{lemma:concen-vec}. More specifically, \highprobs{20}, it holds that
\[\begin{aligned}
  &\nrm{\frac{1}{T}\sum_{t=1}^T \Delta^t_V}\leqsim \sigma_V\sqrt{\frac{\log(\nS/\delta)}{T}}+M_V\frac{\log(\nS/\delta)}{T},\\
  &\nrm{\frac{1}{T}\sum_{t=1}^T \Delta^t_\lambda}_{\infty}\leqsim \sigma_\lambda \sqrt{\frac{\log(I/\delta)}{T}}+M_\lambda\frac{\log(I/\delta)}{T}.
\end{aligned}\]
Therefore, we have
\[\begin{aligned}
  S_{2,c}
  \leqsim 
    \left(D_V\sigma_V+D_{\lambda,1}\sigma_{\lambda}\right)\sqrt{\frac{\clog}{T}}
    +\left(D_VM_V+D_{\lambda,1}M_{\lambda}\right)\frac{\clog}{T}.
\end{aligned}\]

\paragraph{Bounding the term $S_{2,m}$} In order to bound the martingale part $S_{2,m}$, we have to consider martingales difference sequences\footnote{
	They are martingale difference sequences w.r.t. the filtration $(\cF_t)$ defined by $\cF_t=\sigma(\zeta_1,\cdots,\zeta_{t-1})$.
} $\odelta_V^t:=\iprod{\Delta_V^t}{V^1-V^t},$
$\odelta^t_\lambda:=\iprod{\Delta_\lambda^t}{\lambda^1-\lambda^t}$, 
$\odelta^t_x:=\iprod{\Delta_x^t}{x^t-x'}.$ 
We estimate the variance and magnitude as
\[\begin{aligned}
  & \abs{\odelta_V^t}\leq 2D_V M_V,
  &&\cond{\left(\odelta_V^t\right)^2}{\cF_t}\leq \cond{\sqr{V^1-V'}\sqr{\Delta_V^t}}{\cF_t}\leq D_V^2\sigma_V^2, \\
  & \abs{\odelta_\lambda^t}\leq 2D_{\lambda,1}M_\lambda,
  &&\cond{\left(\odelta_\lambda^t\right)^2}{\cF_t}\leq \cond{\sqr{\lambda^1-\lambda^t}_{1}\sqr{\Delta_\lambda^t}_{\infty}}{\cF_t}\leq D_{\lambda,1}^2\sigma_\lambda^2, \\
  & \abs{\odelta_x^t}\leq 2D_{x,1}M_x,
  &&\cond{\left(\odelta_x^t\right)^2}{\cF_t}\leq \cond{\sqr{\frac{x'-x^t}{\sqrt{x'+x^t}}}\sqr{\Delta_x^t}_{x'+x^t}}{\cF_t}\leq 2D_{x,1}^2\sigma_x^2.
\end{aligned}\]
Thus, by the Bernstein's Inequality, the following holds \highprobs{20}:
\[\begin{aligned}
  \frac1T\sum_{t=1}^{T} \odelta^t_V
  &\leqsim D_V\sigma_V\sqrt{\frac{\dlog}{T}}+\frac{D_VM_V\dlog}{T},\\
  \frac1T\sum_{t=1}^{T} \odelta^t_\lambda
  &\leqsim D_{\lambda,1}\sigma_\lambda\sqrt{\frac{\dlog}{T}}+\frac{D_{\lambda,1}M_\lambda\dlog}{T},\\
  \frac1T\sum_{t=1}^{T} \odelta^t_x
  &\leqsim D_{x,1}\sigma_x\sqrt{\frac{\dlog}{T}}+\frac{D_{x,1}M_x\dlog}{T}.
\end{aligned}\]
Therefore, \highprobs{20},
\[\begin{aligned}
    S_{2,m}& \leqsim
    \left(D_V\sigma_V+D_{\lambda,1}\sigma_\lambda+D_{x,1}\sigma_x\right)\sqrt{\frac{\clog}{T}}
    +\left(D_VM_V+D_{\lambda,1}M_\lambda+D_{x,1}M_x\right)\frac{\clog}{T}.
\end{aligned}\]

\paragraph{Bounding the term $S_2$} Finally, combining the bounds on $S_{2,m}$ and $S_{2,c}$ proves the inequality \eqref{eqn:S2}.

\subsection{Basics of mirror descent}
\label{appdx:mirror-seq}



Before we provide the proof of \cref{prop:mirror-seq}, we state a basic property of the mirror descent (see e.g. \cite{chen1993convergence}).
\begin{lemma}\label{lemma:mirror-basic}
	Under the same assumption in \cref{prop:mirror-seq}, it holds that for any $Y'\in\cY$, 
	\[
	\eta\iprod{Y^{k+1}-Y'}{g^k}\leq \KL{Y'}{Y^k}-\KL{Y'}{Y^{k+1}}-\KL{Y^{k+1}}{Y^{k}}.
	\]
	In particular,
	\[
	\begin{aligned}
		\KL{Y^k}{Y^{k+1}}+\KL{Y^{k+1}}{Y^{k}}\leq \eta\iprod{Y^{k}-Y^{k+1}}{g^k}.
	\end{aligned}
	\]
\end{lemma}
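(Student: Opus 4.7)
The plan is to derive the first inequality from the first-order optimality condition of the mirror-descent subproblem, and then obtain the "in particular" statement by substituting $Y' = Y^k$. The step-size condition $\eta \le 1/(2\max_k \|g^k\|_\infty)$ from \cref{prop:mirror-seq} is not needed for this three-point inequality per se; it is used elsewhere in that proposition for controlling the regret term.

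First, I would interpret $\KL{\cdot}{\cdot}$ as the Bregman divergence generated by the generalized negative entropy $\phi(Y) = \sum_i (Y_i \log Y_i - Y_i)$, whose gradient is $\nabla \phi(Y) = \log Y$ componentwise. The update
$$Y^{k+1} = \argmin_{Y \in \cY} \Bigl(\eta \iprod{Y - Y^k}{g^k} + \KL{Y}{Y^k}\Bigr)$$
minimizes a strictly convex differentiable function over the convex set $\cY$, so the standard first-order optimality condition reads, for every $Y' \in \cY$,
$$\iprod{\eta g^k + \log Y^{k+1} - \log Y^k}{Y' - Y^{k+1}} \geq 0,$$
which I would rearrange into $\eta \iprod{g^k}{Y^{k+1} - Y'} \leq \iprod{\log Y^{k+1} - \log Y^k}{Y' - Y^{k+1}}$.

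Next, I would establish the three-point identity
$$\iprod{\log Y^{k+1} - \log Y^k}{Y' - Y^{k+1}} \;=\; \KL{Y'}{Y^k} - \KL{Y'}{Y^{k+1}} - \KL{Y^{k+1}}{Y^k}$$
by direct expansion of the right-hand side using the definition $\KL{U}{V} = \sum_i U_i \log(U_i/V_i) - \sum_i U_i + \sum_i V_i$: the cross logarithmic terms combine into $\iprod{Y' - Y^{k+1}}{\log Y^{k+1} - \log Y^k}$, while the affine pieces $\sum_i(Y^k_i - Y^{k+1}_i)$ and $\sum_i(Y^{k+1}_i - Y^k_i)$ cancel. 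Combining this identity with the optimality inequality above immediately yields the first claim of the lemma.

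Finally, for the "in particular" part I would substitute $Y' = Y^k$, which lies in $\cY$ by induction on the iterates. Since $\KL{Y^k}{Y^k} = 0$, the right-hand side of the first inequality collapses to $-\KL{Y^k}{Y^{k+1}} - \KL{Y^{k+1}}{Y^k}$, and rearrangement gives $\KL{Y^k}{Y^{k+1}} + \KL{Y^{k+1}}{Y^k} \leq \eta \iprod{Y^k - Y^{k+1}}{g^k}$. I do not anticipate a serious obstacle; the only mild technical point is ensuring $Y^{k+1}$ has strictly positive entries so that $\log Y^{k+1}$ is well-defined in the optimality condition, but this is automatic for KL-based mirror descent starting from a positive iterate, since the unconstrained closed-form update is of exponential type and the constraint sets $\Lambda, \cX$ used by \algB{} preserve positivity.
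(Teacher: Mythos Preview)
Your proof is correct and follows the standard derivation of the three-point inequality for mirror descent. Note that the paper does not actually prove this lemma: it states it as a known property of mirror descent with a citation to \cite{chen1993convergence}, and then uses it in the proof of \cref{prop:mirror-seq}. Your argument (first-order optimality condition plus the Bregman three-point identity, then substitution $Y'=Y^k$) is exactly the textbook proof that the cited reference contains, so there is nothing to compare.
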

\begin{proof}[Proof of \cref{prop:mirror-seq}]
	By the fact that $(x-y)\log\frac{x}{y}\geq \frac{(x-y)^2}{\max(x,y)}$, we have
	\[\begin{aligned}
		\KL{Y^k}{Y^{k+1}}+\KL{Y^{k+1}}{Y^{k}}=
		\iprod{Y^{k}-Y^{k+1}}{\log Y^{k}-\log Y^{k+1}}
		\geq \sum_i \frac{(Y^k_i-Y_i^{k+1})^2}{\max(Y_i^k,Y_i^{k+1})}.
	\end{aligned}\]
	Together with \cref{lemma:mirror-basic}, the estimation above yields
	\[
	\begin{aligned}
		\nrm{\frac{Y^k-Y^{k+1}}{\sqrt{Y^{k}+Y^{k+1}}}}^2
		\leq \KL{Y^k}{Y^{k+1}}+\KL{Y^{k+1}}{Y^k}
		\leq \eta\iprod{Y^{k}-Y^{k+1}}{g^k}.
	\end{aligned}
	\]
	By Cauchy inequality, $\iprod{Y^{k}-Y^{k+1}}{g^k} \leq \nrm{\frac{Y^k-Y^{k+1}}{\sqrt{Y^{k}+Y^{k+1}}}}\nrm{g^k\sqrt{Y^{k}+Y^{k+1}}}$, and hence 
	\[\begin{aligned}
		&\nrm{\frac{Y^k-Y^{k+1}}{\sqrt{Y^{k}+Y^{k+1}}}}\leq \eta\nrm{g^k\sqrt{Y^{k}+Y^{k+1}}}=\eta\nrm{g^k}_{Y^k+Y^{k+1}},\\
		&\iprod{Y^{k}-Y^{k+1}}{g^k} \leq \nrm{\frac{Y^k-Y^{k+1}}{\sqrt{Y^{k}+Y^{k+1}}}}\nrm{g^k\sqrt{Y^{k}+Y^{k+1}}}\leq \eta \nrm{g^k}_{Y^k+Y^{k+1}}^2.
	\end{aligned}\]
	To further bound $\nrm{g^k}_{Y^k+Y^{k+1}}$ in terms of $\nrm{g^k}_{Y^k}$, we estimate it as
	\[
	\begin{aligned}
		\nrm{g^k}^2_{Y^k+Y^{k+1}}
		&=\sum_{i} (Y_i^k+Y_i^{k+1})(g^k_i)^2\\
		&\leq 2\nrm{g^k}_{Y^k}^2+\sum_i \abs{Y_i^{k+1}-Y_i^k}(g^k_i)^2\\
		&\leq 2\nrm{g^k}_{Y^k}^2+\max_{i}\abs{g^k_i}\nrm{\frac{Y^k-Y^{k+1}}{\sqrt{Y^{k}+Y^{k+1}}}}\nrm{g^k}_{Y^k+Y^{k+1}}\\
		&\leq 2\nrm{g^k}_{Y^k}^2+\eta\nrm{g^k}_{\infty}\nrm{g^k}^2_{Y^k+Y^{k+1}}.
	\end{aligned}
	\]
	Thus, as long as $\eta\leq \frac{1}{2\nrm{g^k}_{\infty}}$, it holds that $\nrm{g^k}_{Y^k+Y^{k+1}}\leq 2\nrm{g^k}_{Y^k}$. Therefore, for all $Y'\in\cY$, 
	\[\begin{aligned}
		\iprod{Y^{k}-Y'}{g^k}
		&\leq \frac{1}{\eta}\left[\KL{Y'}{Y^k}-\KL{Y'}{Y^{k+1}}\right]
		+ \iprod{Y^{k}-Y^{k+1}}{g^k}\\
		&\leq \frac{1}{\eta}\left[\KL{Y'}{Y^k}-\KL{Y'}{Y^{k+1}}\right]
		+ 4\eta \nrm{g^k}^2_{Y^k}.
	\end{aligned}\]
	Summing over $k=1,\cdots,T$ completes the proof.
\end{proof}

\begin{corollary}\label{cor:mirror-seq-diff}
	Under the same assumption in \cref{prop:mirror-seq}, it holds that for each $k$,
	\[\begin{aligned}
		&\nrm{\frac{Y^k-Y^{k+1}}{\sqrt{Y^{k}+Y^{k+1}}}}\leq 2\eta\nrm{g^k}_{Y^k},\\
		&\nrm{Y^{k+1}-Y^{k}}_1\leq 4\eta \sqrt{D_{Y,1}} \nrm{g^k}_{Y^k}\leq 4\eta D_{Y,1} \nrm{g^k}_{\infty}.
	\end{aligned}\]
\end{corollary}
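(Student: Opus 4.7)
The plan is to recycle intermediate estimates that already appear in the proof of Proposition \ref{prop:mirror-seq}, together with one application of Cauchy--Schwarz and the trivial bound $\nrm{g^k}_{Y^k}^2\leq \nrm{Y^k}_1\nrm{g^k}_\infty^2$. No new concentration or KKT-style argument is required.

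For the first inequality, recall that the proof of Proposition \ref{prop:mirror-seq} establishes two facts. From Lemma \ref{lemma:mirror-basic} combined with the pointwise bound $(x-y)\log\tfrac{x}{y}\ge \tfrac{(x-y)^2}{\max(x,y)}$, one has
\[
\nrm{\tfrac{Y^k-Y^{k+1}}{\sqrt{Y^k+Y^{k+1}}}}\le \eta\nrm{g^k}_{Y^k+Y^{k+1}},
\]
and under the stepsize condition $\eta\le \tfrac{1}{2\nrm{g^k}_\infty}$, the self-bounding estimate $\nrm{g^k}_{Y^k+Y^{k+1}}\le 2\nrm{g^k}_{Y^k}$ was derived there as well. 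Composing these two bounds yields the first claim.

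For the second inequality, I would write
\[
\abs{Y^k_i-Y^{k+1}_i}=\tfrac{\abs{Y^k_i-Y^{k+1}_i}}{\sqrt{Y^k_i+Y^{k+1}_i}}\cdot\sqrt{Y^k_i+Y^{k+1}_i}
\]
and apply Cauchy--Schwarz when summing over the coordinates $i$, giving
\[
\nrm{Y^{k+1}-Y^k}_1\le \nrm{\tfrac{Y^k-Y^{k+1}}{\sqrt{Y^k+Y^{k+1}}}}\cdot\sqrt{\nrm{Y^k}_1+\nrm{Y^{k+1}}_1}\le 2\eta\nrm{g^k}_{Y^k}\cdot\sqrt{2D_{Y,1}},
\]
where the final inequality uses the first part of the corollary together with $\nrm{Y^k}_1,\nrm{Y^{k+1}}_1\le D_{Y,1}$. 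Absorbing $2\sqrt{2}$ into $4$ gives the middle expression. The last inequality then follows from the elementary bound $\nrm{g^k}_{Y^k}=\sqrt{\sum_i Y^k_i(g^k_i)^2}\le \sqrt{\nrm{Y^k}_1}\,\nrm{g^k}_\infty\le \sqrt{D_{Y,1}}\,\nrm{g^k}_\infty$.

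There is no genuine obstacle here; the only care needed is tracking constants so that the declared factors $2$ and $4$ are actually valid, which they are after the $\sqrt{2}$ absorption above. The whole corollary is essentially a packaging of by-products of the proof of Proposition \ref{prop:mirror-seq} in a form convenient for later use (e.g. when controlling consecutive iterate differences in the asynchronous analysis sketched around equation \eqref{eqn:markov-decomp-demo}).
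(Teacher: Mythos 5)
Your proof is correct, and for the second inequality it takes a mildly but genuinely different route from the paper. The first inequality is handled identically: both you and the paper simply recycle the two estimates $\nrm{\frac{Y^k-Y^{k+1}}{\sqrt{Y^k+Y^{k+1}}}}\leq\eta\nrm{g^k}_{Y^k+Y^{k+1}}$ and $\nrm{g^k}_{Y^k+Y^{k+1}}\leq 2\nrm{g^k}_{Y^k}$ already derived inside the proof of \cref{prop:mirror-seq}. For the $\ell_1$ bound, however, the paper passes through the generalized Jeffery divergence: it notes $J(Y^k,Y^{k+1})\leq\eta\iprod{Y^k-Y^{k+1}}{g^k}\leq 4\eta^2\nrm{g^k}_{Y^k}^2$ and then invokes the generalized Pinsker inequality (\cref{lemma:pinsker}) to convert this into $\nrm{Y^{k+1}-Y^k}_1\leq(\sqrt{\nrm{Y^k}_1}+\sqrt{\nrm{Y^{k+1}}_1})\sqrt{J}\leq 4\eta\sqrt{D_{Y,1}}\nrm{g^k}_{Y^k}$. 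You instead apply Cauchy--Schwarz directly to the already-controlled weighted distance, writing $\nrm{Y^{k+1}-Y^k}_1\leq\nrm{\frac{Y^k-Y^{k+1}}{\sqrt{Y^k+Y^{k+1}}}}\sqrt{\nrm{Y^k}_1+\nrm{Y^{k+1}}_1}$, which bypasses \cref{lemma:pinsker} entirely and even yields the marginally sharper constant $2\sqrt{2}$ in place of $4$; your final step $\nrm{g^k}_{Y^k}\leq\sqrt{D_{Y,1}}\nrm{g^k}_\infty$ is also correct. What the paper's route buys is reuse of a lemma it needs anyway elsewhere; what your route buys is a shorter, more elementary derivation that does not require the normalization argument underlying the generalized Pinsker inequality. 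The only implicit hypothesis in both arguments is positivity of the iterates $Y^k$, which holds for the KL mirror steps in question.
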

\begin{proof}
	From the proof of \cref{prop:mirror-seq} above, we see
	\[\begin{aligned}
		J(Y^{k},Y^{k+1})=\KL{Y^k}{Y^{k+1}}+\KL{Y^{k+1}}{Y^{k}}\leq \eta\iprod{Y^{k}-Y^{k+1}}{g^k}\leq 4 \eta^2 \nrm{g^k}_{Y^k}^2.
	  \end{aligned}\]
	Then by \cref{lemma:pinsker} we have
	  \begin{align*}
	  \nrm{Y^{k+1}-Y^{k}}_1\leq \left(\sqrt{\nrm{Y^k}_1}+\sqrt{\nrm{Y^{k+1}}_1}\right)\sqrt{J(Y^{k},Y^{k+1})}\leq 4\eta \sqrt{D_{Y,1}} \nrm{g^k}_{Y^k}. &\qedhere
	  \end{align*}
\end{proof}

\begin{lemma}[Generalized Pinsker's Inequality]\label{lemma:pinsker}
	For $y,y'\in\RR_{>0}^n$, we consider the generalized Jeffery divergence between them:
	\[\begin{aligned}
	J(y,y'):=\KL{y}{y'}+\KL{y'}{y}=\sum_{i} (y_i-y_i')\log\frac{y_i}{y_i'}.
	\end{aligned}\]
	Then it holds that
	\[\begin{aligned}
	\nrm{y-y'}_1\leq \left(\sqrt{\nrm{y}_1}+\sqrt{\nrm{y'}_1}\right)\sqrt{J(y,y')}.
	\end{aligned}\]
\end{lemma}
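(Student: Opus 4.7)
The plan is to reduce the $\ell_1$ bound to the Hellinger-type quantity $\sum_i(\sqrt{y_i}-\sqrt{y_i'})^2$ via a Cauchy--Schwarz factorization, then control this Hellinger sum by the Jeffrey divergence through a one-dimensional pointwise inequality. Explicitly, I will write
\[
\|y-y'\|_1 = \sum_i |\sqrt{y_i}-\sqrt{y_i'}|\cdot(\sqrt{y_i}+\sqrt{y_i'}),
\]
and apply Cauchy--Schwarz to split this sum into $\bigl(\sum_i(\sqrt{y_i}-\sqrt{y_i'})^2\bigr)^{1/2}\bigl(\sum_i(\sqrt{y_i}+\sqrt{y_i'})^2\bigr)^{1/2}$. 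The second factor is easy: expanding and applying Cauchy--Schwarz once more gives $\sum_i \sqrt{y_i y_i'}\leq\sqrt{\|y\|_1\|y'\|_1}$, so that $\sum_i(\sqrt{y_i}+\sqrt{y_i'})^2\leq(\sqrt{\|y\|_1}+\sqrt{\|y'\|_1})^2$.

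The crux of the argument is the pointwise inequality
\[
(\sqrt{a}-\sqrt{b})^2\;\leq\;(a-b)\log(a/b),\qquad\forall\,a,b>0,
\]
which, once established, immediately gives $\sum_i(\sqrt{y_i}-\sqrt{y_i'})^2\leq J(y,y')$ by summation. To prove it, I would normalize by $b$ and set $t=a/b$, reducing to showing $f(t):=(t-1)\log t-(\sqrt{t}-1)^2\geq 0$ for all $t>0$. Checking $f(1)=f'(1)=0$ and computing
\[
f''(t)=\frac{1}{t}+\frac{1}{t^2}-\frac{1}{2t^{3/2}},
\]
one verifies $2t^2 f''(t)=2t+2-\sqrt{t}\geq 4\sqrt{t}-\sqrt{t}=3\sqrt{t}\geq 0$ by AM--GM, so $f$ is convex with minimum zero at $t=1$, yielding $f\geq 0$.

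Combining the three ingredients in order (the pointwise bound, summation, and the double Cauchy--Schwarz) yields
\[
\|y-y'\|_1\;\leq\;\sqrt{J(y,y')}\cdot\bigl(\sqrt{\|y\|_1}+\sqrt{\|y'\|_1}\bigr),
\]
as desired. The only nontrivial step is the pointwise inequality; the rest is essentially a repackaging of the standard Pinsker-to-Hellinger reduction adapted to unnormalized nonnegative vectors, where the factor $\sqrt{\|y\|_1}+\sqrt{\|y'\|_1}$ replaces the usual constant $\sqrt{2}$ one gets for probability measures.
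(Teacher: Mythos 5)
Your proof is correct, and it takes a genuinely different route from the paper's. The paper normalizes $y$ and $y'$ to probability vectors $\oy=y/\nrm{y}_1$, $\oy'=y'/\nrm{y'}_1$, decomposes the Jeffrey divergence as $\nrm{y}_1\KL{\oy}{\oy'}+\nrm{y'}_1\KL{\oy'}{\oy}+(\nrm{y}_1-\nrm{y'}_1)\log\frac{\nrm{y}_1}{\nrm{y'}_1}$, invokes the standard Pinsker inequality on the normalized pair together with the bound $(x-y)\log\frac{x}{y}\geq\frac{(x-y)^2}{\max(x,y)}$ for the mass term, and then recovers $\nrm{y-y'}_1$ through a slightly delicate triangle-inequality manipulation of $\nrm{\oy-\oy'}_1$. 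You instead stay entirely in the unnormalized setting: factor $|y_i-y_i'|=|\sqrt{y_i}-\sqrt{y_i'}|(\sqrt{y_i}+\sqrt{y_i'})$, apply Cauchy--Schwarz twice, and control the resulting Hellinger sum by $J(y,y')$ via the pointwise inequality $(\sqrt{a}-\sqrt{b})^2\leq(a-b)\log(a/b)$, which your convexity computation ($f(1)=f'(1)=0$ and $2t^2f''(t)=2t+2-\sqrt{t}>0$) correctly establishes. Your argument is more self-contained (no appeal to classical Pinsker as a black box) and avoids the normalization bookkeeping, at the cost of one calculus verification; the paper's version makes the relationship to the standard probability-measure statement more transparent. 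Either proof suffices for the way the lemma is used in \cref{cor:mirror-seq-diff}.
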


\begin{proof}
	Denote $J=J(y,y')$, $Y=\nrm{y}_1$, $Y'=\nrm{y'}_1$. We consider two (normalized) distributions $\oy=\frac{y}{Y}$ and $\oy'=\frac{y'}{Y'}$, then
	\begin{align*}
	  J(y,y')
	  &=\sum_{i} (y_i-y_i')\log\frac{y_i}{y_i'}\\
	  &=\sum_{i} \left(Y\oy_i-Y'\oy_i'\right)\left(\log\frac{\oy_i}{\oy_i'}+\log\frac{Y}{Y'}\right)\\
	  &=Y\KL{\oy}{\oy'}+Y'\KL{\oy'}{\oy}+(Y-Y')\log\frac{Y}{Y'}\\
	  &\geq (Y+Y')\cdot\frac12\nrm{\oy-\oy'}_1^2+\frac{\abs{Y-Y'}^2}{\max(Y,Y')},
	\end{align*}
	where the last inequality is due to Pinsker's inequality and the fact $(x-y)\log\frac{x}{y}\geq \frac{(x-y)^2}{\max(x,y)}$. Therefore, w.l.o.g. $Y<Y'$, then $\abs{Y-Y'}\leq \sqrt{Y'J}$, and
	\[\begin{aligned}
	  &\sqrt{\frac{2J}{Y+Y'}}\geq \nrm{\oy-\oy'}_1= \nrm{\frac{y}{Y}-\frac{y'}{Y'}}_1
	  =\nrm{\frac{y-y'}{Y}+\frac{y'}{Y'}\left(\frac{Y'}{Y}-1\right)}_1.
	\end{aligned}\]
	Hence, we have
	\begin{align*}
		\nrm{y-y'}_1
		&\leq \nrm{\frac{y'}{Y'}\left(Y'-Y\right)}_1+Y\sqrt{\frac{J}{Y+Y'}}\\
		&=\abs{Y'-Y}+Y\sqrt{\frac{J}{Y+Y'}}\\
		&\leq \sqrt{Y'J}+\sqrt{YJ}.\qedhere
	\end{align*}
\end{proof}

\section{Proof of Theorem \ref{thm:algB-thm-demo}}\label{subsect:duality-to-regret}

In this section, we provide the proof of \cref{thm:algB-thm-demo} and \cref{remark:unknownC}. We should notice that if $\hphi\geq C^*$, then $\epsapp(\hphi)$ reduces to 0, and the result in \cref{remark:unknownC} actually agrees with \cref{thm:algB-thm-demo}. Thus we handle them simultaneously. 
The key to the analysis is controlling the reward sub-optimality gap and the constraint violation in terms of the duality gap $\Gap(\ox)$ that is bounded in \cref{thm:algB-gap}. Before presenting the proof, let us introduce a few notations and lemmas. 

\subsection{Notations and supporting lemmas}\label{appdx:duality-notation}
In this proof, we will view $\nu$ as vectors in $\RR^{\nS\nA}$, and we define a matrix $A$ as
\begin{equation}
	\label{eqn:A}
	A:=\left[\mathbbm{1}_{\left\{s^{\prime}=s\right\}}-\gamma\PP\left(s'|s, a\right)\right]_{(s, a),s^{\prime}} \in \mathbb{R}^{|\mathcal{S}||\mathcal{A}|\times |\mathcal{S}|}.
\end{equation}  
Given the matrix $A$, we  conveniently write $\sum_{a} (\id\!-\!\gamma \bP_a)\nu_a$ as $A^\top \nu$. For the reweighted saddle point problem \eqref{prob:weighted-minimax}, one can easily partially minimize over $V$ and $\lambda$ since their domains are simple normal balls. Therefore, we define 
\begin{equation} 
		\cJ_{\kappa}(x) :=\min_{V\in\cV,\lambda\in\Lambda}\cL_{w}(V,\lambda,x)
		=r^TWx-\RV \nrm{A^\top  Wx-\rho_0}_1-\RLam \nrm{[U_{\kappa}Wx]_-}_\infty, 
\end{equation}
where we denote $\RV =\frac{8}{1-\gamma}\left(1+\frac{2}{\varphi}\right), \RLam =\frac{8}{\varphi}$. We also define 
\begin{equation}
	\begin{aligned}
		j(\hphi)&:=\min_{V\in\cV,\lambda\in\Lambda}\cL_{w}(V,\lambda,x) = \max_{x\in\cX}\cJ_{\kappa}(x)
	\end{aligned}
\end{equation}
as the optimal value of problem \eqref{prob:weighted-minimax}. Then $j(\hphi)$ has an implicit dependence on $\kappa$ due to the term $\nrm{[U_{\kappa}Wx]_-}_\infty$. In particular, we will write $j_0(\hphi)$ for the case where $\kappa=0$. Finally, we define $\pi^*_\kappa$ as the optimal policy with $\kappa$ conservative constraints. That is, 
$$\pi^*_\kappa = \argmax_{\pi} J(\pi)\,\,\st\,\, J_i^{u}(\pi)\geq\kappa,\,\, \forall i\in[I].$$ 
Then the following lemmas hold true.  
\begin{lemma}
	\label{lemma:J-0-kappa}
	Let $\pi^*$ be the optimal policy, and let $\pi^*_\kappa$ be defined above, then it holds that
	$$J(\pi^*)\geq J(\pi^*_\kappa)\geq J(\pi^*) - \frac{2\kappa}{\varphi}.$$
\end{lemma}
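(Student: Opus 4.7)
\textbf{Proof proposal for Lemma \ref{lemma:J-0-kappa}.} The first inequality $J(\pi^*)\geq J(\pi^*_{\kappa})$ is immediate: any policy feasible for the $\kappa$-conservative problem (i.e., satisfying $J^u_i(\pi)\geq \kappa>0$ for every $i$) is automatically feasible for the original CMDP \eqref{eqn:srl-obj-demo}, so $\pi^*_{\kappa}$ lies in the feasible set over which $\pi^*$ is the maximizer.

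For the second inequality, the plan is to produce an explicit feasible mixture policy whose reward loss versus $\pi^*$ we can control. By Assumption~\ref{assumption:slater} there exists a Slater policy $\tilde\pi$ with $J^u_i(\tilde\pi)\geq \varphi/(1-\gamma)$ for every $i\in[I]$. I would pass to the LP formulation \eqref{prob:CMDP-LP}: the set of occupancy measures is convex, so for any $\alpha\in[0,1]$ the vector $\nu^{\alpha}:=(1-\alpha)\nu^{\pi^*}+\alpha\nu^{\tilde\pi}$ is itself a valid occupancy measure corresponding to some policy $\pi^{\alpha}$ (recoverable via \eqref{eqn:nu2pi}). By linearity of $\langle\nu,r\rangle$ and $\langle\nu,u_i\rangle$ in $\nu$,
\begin{align*}
J(\pi^{\alpha}) &= (1-\alpha)J(\pi^*)+\alpha J(\tilde\pi),\\
J^u_i(\pi^{\alpha}) &= (1-\alpha)J^u_i(\pi^*)+\alpha J^u_i(\tilde\pi)\,\geq\, \alpha\cdot\frac{\varphi}{1-\gamma},\quad\forall i\in[I],
\end{align*}
where the inequality uses $J^u_i(\pi^*)\geq 0$ and the Slater bound on $\tilde\pi$.

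Choosing $\alpha:=(1-\gamma)\kappa/\varphi$ (which is in $[0,1]$ as long as $\kappa\leq\varphi/(1-\gamma)$, a mild condition on the conservatism parameter that I would check/state) makes $\pi^{\alpha}$ feasible for the $\kappa$-conservative CMDP, so $J(\pi^*_{\kappa})\geq J(\pi^{\alpha})$. Finally, since rewards are in $[-1,1]$ we have $|J(\pi)|\leq 1/(1-\gamma)$ for every policy, hence $J(\tilde\pi)-J(\pi^*)\geq -2/(1-\gamma)$, and therefore
\[
J(\pi^*_{\kappa})\geq J(\pi^{\alpha}) = J(\pi^*)+\alpha\bigl(J(\tilde\pi)-J(\pi^*)\bigr)\geq J(\pi^*)-\alpha\cdot\frac{2}{1-\gamma} = J(\pi^*)-\frac{2\kappa}{\varphi},
\]
which is the desired bound. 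There is no real obstacle here: the only subtlety is making sure $\alpha\leq 1$, which is why the lemma implicitly needs $\kappa$ bounded by the Slater slack $\varphi/(1-\gamma)$; this is consistent with the parameter choice $\kappa=5\varphi\epsilon$ and $\epsilon\leq 1/(10(1-\gamma))$ used in Theorem~\ref{thm:algB-thm-demo}.
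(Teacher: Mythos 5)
Your proof is correct and follows essentially the same route as the paper's: interpolate the occupancy measures of $\pi^*$ and a Slater policy with weight $\alpha=(1-\gamma)\kappa/\varphi$ on the Slater policy, then use linearity of $\langle\nu,r\rangle$ and $\langle\nu,u_i\rangle$ together with $|J(\pi)|\leq 1/(1-\gamma)$. Your explicit check that $\alpha\leq 1$ (guaranteed by $\kappa=5\varphi\epsilon$ and $\epsilon\leq\frac{1}{10(1-\gamma)}$) is a welcome detail that the paper leaves implicit.
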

\begin{proof}
	The inequality $J(\pi^*)\geq J(\pi^*_\kappa)$ follows from definition. For the other inequality, we fix a ``baseline'' policy $\tpi$ satisfying the Slater's condition, namely $J^u_i(\tpi)\geq \frac{\varphi}{1-\gamma}$. Let $s=\frac{(1-\gamma)\kappa}{\varphi}$, we interpolate $\nu_{s}:=s\nu^{\pi^*} +(1-s)\nu^{\tpi}$. $\nu_{s}$ is still an occupancy measure such that $\iprod{u_i}{\nu_s}\geq s\iprod{u_i}{\nu^{\tpi}}\geq \kappa$ for $\forall i\in[I]$, and
	\[
	\iprod{r}{\nu_s}=\iprod{r}{\nu^{\pi^*}}-s(\iprod{r}{\nu^{\pi^*}}-\iprod{r}{\nu^{\tpi}})
	\geq \iprod{r}{\nu^{\pi^*}}- \frac{2s}{1-\gamma}=J(\pi^*)-\frac{2\kappa}{\varphi}.
	\]
	We complete the proof by noticing $J(\pi^*_\kappa)\geq \iprod{r}{\nu_s}$.
\end{proof}

The next lemma discusses the property of $j(\cdot)$. 
\begin{lemma}\label{lemma:eps-approx}
	Suppose the policy class $\Pi(\hphi)$ satisfies Slater's condition, then it holds that
	$$
	j(\hphi)\geq  \max_{\pi\in\Pi(\hphi)\cap\mathfrak{S}} J(\pi)-\frac{2\kappa}{\varphi}=J(\pi^*)-\epsapp(\hphi)-\frac{2\kappa}{\varphi}.
	$$
\end{lemma}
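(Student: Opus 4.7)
The plan is to mimic the interpolation argument of Lemma \ref{lemma:J-0-kappa}, but now executed inside the reweighted saddle-point problem so that the bound falls out of an explicit construction of a feasible $x\in\cX$. Fix any safe policy $\pi\in\Pi(\hphi)\cap\mathfrak{S}$ and let $\tpi\in\Pi(\hphi)$ denote the Slater policy guaranteed by the hypothesis, so $J_i^u(\tpi)\geq\frac{\varphi}{1-\gamma}$ for all $i$. Set $s=1-\frac{(1-\gamma)\kappa}{\varphi}\in(0,1)$ (the range check uses $(1-\gamma)\kappa=5\varphi(1-\gamma)\epsilon<\varphi$, which is guaranteed by the standing range of $\epsilon$), and define $\nu_s:=s\nu^{\pi}+(1-s)\nu^{\tpi}$. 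Since $D(\hphi)$ is defined by linear inequalities and is therefore convex, $\nu_s\in D(\hphi)$, so the policy $\pi_s$ induced by $\nu_s$ lies in $\Pi(\hphi)$.

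Next I would promote $\nu_s$ to an admissible primal point via the change of variables: let $x_s:=W^{-1}\nu_s$. By part (2) of \cref{prop:empirical-dis} (applied to $\pi_s\in\Pi(\hphi)$), $x_s\in\cX$, and $Wx_s=\nu_s$. I now plug $x_s$ into $\cJ_\kappa$ and verify that the two penalty terms vanish. The flow-constraint penalty is zero because $\nu_s$ is a genuine occupancy measure, hence $A^\top\nu_s=\rho_0$. For the constraint-penalty term, I compute
\[
\langle u_i^\kappa, \nu_s\rangle = J_i^u(\pi_s)-\kappa = sJ_i^u(\pi)+(1-s)J_i^u(\tpi)-\kappa \geq 0+(1-s)\tfrac{\varphi}{1-\gamma}-\kappa = 0,
\]
using $\langle \mathbf{1},\nu_s\rangle=\tfrac{1}{1-\gamma}$ together with $J_i^u(\pi)\geq 0$ (safety) and the Slater bound on $\tpi$. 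Hence $[U_\kappa Wx_s]_-=0$ and $\cJ_\kappa(x_s)=\langle r,\nu_s\rangle=J(\pi_s)$.

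Finally, $J(\pi_s)=sJ(\pi)+(1-s)J(\tpi)\geq J(\pi)-(1-s)\cdot\frac{2}{1-\gamma}=J(\pi)-\frac{2\kappa}{\varphi}$, since $|J(\pi)|,|J(\tpi)|\leq\frac{1}{1-\gamma}$. Combining gives $j(\hphi)\geq\cJ_\kappa(x_s)\geq J(\pi)-\frac{2\kappa}{\varphi}$, and taking the supremum over $\pi\in\Pi(\hphi)\cap\mathfrak{S}$ yields the claimed inequality; the equality is just the definition $\epsapp(\hphi)=J(\pi^*)-\max_{\pi\in\Pi(\hphi)\cap\mathfrak{S}}J(\pi)$.

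The only subtle point — and the step I would double-check carefully — is the convexity/feasibility chain: I rely on $D(\hphi)$ being convex in the vector sense so that $\nu_s\in D(\hphi)$, and then on \cref{prop:empirical-dis}(2) to transport this feasibility across the change of variables into $\cX$. Everything else is essentially the same bookkeeping as in \cref{lemma:J-0-kappa}, lifted into the Lagrangian penalty formulation, with the explicit constants $\RV$ and $\RLam$ playing no role beyond ensuring that the two $\nrm{\cdot}$ penalty terms are zero on the chosen feasible point.
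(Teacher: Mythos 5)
Your proof is correct and follows essentially the same route as the paper's: interpolate the occupancy measures of a safe policy in $\Pi(\hphi)$ and the Slater policy with weight $\frac{(1-\gamma)\kappa}{\varphi}$ on the latter, use convexity of $D(\hphi)$ and \cref{prop:empirical-dis}(2) to place $W^{-1}\nu_s$ in $\cX$, check that both penalty terms in $\cJ_\kappa$ vanish, and bound the reward loss by $\frac{2\kappa}{\varphi}$. The only cosmetic difference is your swapped parameterization of the mixing weight; the argument and constants match the paper's.
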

\begin{proof}
	Similar to the proof of \cref{lemma:J-0-kappa}, we fix a $\hpi=\argmax_{\pi\in\Pi(\hphi)\cap\mathfrak{S}} J(\pi)$ and a ``baseline'' policy $\tpi\in \Pi(\hphi)$ satisfying the Slater's condition. Let $\hnu:=\nu^{\hpi}$ and $\tilde{\nu}:=\nu^{\tpi}$ be the corresponding occupancy measures. Let $s=\frac{(1-\gamma)\kappa}{\varphi}$, then $\nu_{s}:=s\tnu +(1-s)\hnu$ is still an occupancy measure for which the corresponding policy belongs to $\Pi(\hphi)$. For $i\in[I]$, $\iprod{u_i}{\nu_s}\geq s\iprod{u_i}{\tnu}\geq \kappa$, and
	\[
	\iprod{r}{\nu_s}=\iprod{r}{\hnu}-s(\iprod{r}{\hnu}-\iprod{r}{\tnu})
	\geq \iprod{r}{\hnu}- \frac{2s}{1-\gamma}=\iprod{r}{\tnu}-\frac{2\kappa}{\varphi}.
	\]
	Now $W^{-1}\nu_s\in\cX$ by \cref{prop:empirical-dis}, and
	\[
	j(\hphi)\geq \!\cJ_\kappa(W^{-1}\nu_{s})\!=\!\iprod{r}{\nu_{s}}\!\geq\! \iprod{r}{\hnu}-\frac{2\kappa}{\varphi}\!=\!\!\max_{\pi\in\Pi(\hphi)\cap\mathfrak{S}} J(\pi)-\frac{2\kappa}{\varphi}\!=\!J(\pi^*)-\epsapp(\hphi)-\frac{2\kappa}{\varphi}.\qedhere\]
\end{proof}

The following result is obtained from \cite[Lemma 3]{bai2021achieving}, by replacing $\lambda$ and $(v,u)$ in \cite[Lemma 3]{bai2021achieving} with our notation $(1-\gamma)\nu$ and $(V,\lambda)$, respectively.
\begin{lemma}\label{lemma:dual-feasible}\label{lemma:duality-gap-decomp}
	For any dual optimal solution $(V_{\kappa}^*,\lambda_{\kappa}^*)$ of the problem \eqref{prob:CMDP-LP}, where the constraint utilities $u_i$ is replaced with the shifted utilities $u_i^\kappa$, we have
	\begin{equation*}
		\begin{aligned}
			\nrm{\lambda_{\kappa}^*}_1\leq\frac{2}{\varphi}\qquad\mbox{and}\qquad \nrm{V_{\kappa}^*}\leq \frac{1}{1-\gamma}\left(1+\frac{2}{\varphi}\right).
		\end{aligned}
	\end{equation*}
	For any $\nu\in\RR_{\geq0}^{\nS\nA}$ and any $\Delta>0$, the inequality
	$J(\pi^*_{\kappa})-\cJ(W^{-1}\nu)\leq \dgap$
	immediately implies that  
	\begin{equation*}
    	J(\pi^*_{\kappa})-\iprod{r}{\nu}\leq \dgap,\quad
		\nrm{A^\top  \nu-\rho_0}_1\leq \frac{2\dgap}{\RV }, \quad\mbox{and}\quad
		\nrm{[U_{\kappa}\nu]_-}_\infty \leq \frac{2\dgap}{\RLam }
	\end{equation*}
    as long as $\RV \geq 2\nrm{V^*_\kappa}_\infty, \RLam \geq 2\nrm{\lambda^*_\kappa}_1$. 
\end{lemma}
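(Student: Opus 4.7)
The plan is to first establish the two dual-variable bounds by combining strong LP duality with Slater's condition, then to derive the three consequence inequalities by rearranging the definition of $\cJ_{\kappa}$ and sandwiching it against a one-sided weak-duality inequality evaluated at the arbitrary point $\nu$.

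For part 1, the starting point is that the saddle value of the Lagrangian $L(\nu,V,\lambda)=\langle r,\nu\rangle+\langle V,\rho_0-A^\top\nu\rangle+\langle\lambda,U_\kappa\nu\rangle$ equals $J(\pi^*_\kappa)$ by strong LP duality. Plugging in the occupancy measure $\tilde\nu$ of the Slater policy from Assumption \ref{assumption:slater} into $L(\cdot,V^*_\kappa,\lambda^*_\kappa)$ uses $A^\top\tilde\nu=\rho_0$ to kill the $V^*_\kappa$ term, leaving $\langle r,\tilde\nu\rangle+\langle\lambda^*_\kappa,U_\kappa\tilde\nu\rangle\le J(\pi^*_\kappa)$. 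Since componentwise $U_\kappa\tilde\nu\ge(\tfrac{\varphi}{1-\gamma}-\kappa)\mathbf{1}$, which is $\Theta(\varphi/(1-\gamma))$ for the $\kappa=5\varphi\epsilon$ used by \algB, and $|\langle r,\tilde\nu\rangle|,J(\pi^*_\kappa)\le \tfrac{1}{1-\gamma}$, non-negativity of $\lambda^*_\kappa$ immediately yields $\|\lambda^*_\kappa\|_1\leqsim 1/\varphi$. The bound on $V^*_\kappa$ then follows from the dual-LP constraint $AV^*_\kappa\ge r+U_\kappa^\top\lambda^*_\kappa$, which identifies $V^*_\kappa$ as the value function of an auxiliary MDP whose effective reward lies in $[-(1+\|\lambda^*_\kappa\|_1),1+\|\lambda^*_\kappa\|_1]$; hence $\|V^*_\kappa\|_\infty\le (1+\|\lambda^*_\kappa\|_1)/(1-\gamma)$.

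For part 2, unfolding the definition of $\cJ_\kappa$ rewrites the hypothesis $J(\pi^*_\kappa)-\cJ_\kappa(W^{-1}\nu)\le\Delta$ as
$$J(\pi^*_\kappa)-\langle r,\nu\rangle+\RV\|A^\top\nu-\rho_0\|_1+\RLam\|[U_\kappa\nu]_-\|_\infty\le\Delta.$$
The key observation is a one-sided bound obtained by evaluating $L(\nu,V^*_\kappa,\lambda^*_\kappa)\le J(\pi^*_\kappa)$ at the arbitrary $\nu\ge 0$ and applying Hölder with the part-1 bounds: this gives $\langle r,\nu\rangle-J(\pi^*_\kappa)\le \|V^*_\kappa\|_\infty\|A^\top\nu-\rho_0\|_1+\|\lambda^*_\kappa\|_1\|[U_\kappa\nu]_-\|_\infty\le \tfrac{\RV}{2}\|A^\top\nu-\rho_0\|_1+\tfrac{\RLam}{2}\|[U_\kappa\nu]_-\|_\infty$, using the standing assumption $\RV\ge 2\|V^*_\kappa\|_\infty$, $\RLam\ge 2\|\lambda^*_\kappa\|_1$. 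Substituting this lower bound on $J(\pi^*_\kappa)-\langle r,\nu\rangle$ into the rewritten hypothesis absorbs half of each penalty term, yielding $\tfrac{\RV}{2}\|A^\top\nu-\rho_0\|_1+\tfrac{\RLam}{2}\|[U_\kappa\nu]_-\|_\infty\le\Delta$; splitting this sum gives both norm bounds since each summand is non-negative. Finally, plugging these norm bounds back into the rewritten hypothesis yields $J(\pi^*_\kappa)-\langle r,\nu\rangle\le\Delta$.

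The main obstacle is essentially bookkeeping: one must check that the Slater gap survives the shift to $u^\kappa_i$, i.e.\ that $\tfrac{\varphi}{1-\gamma}-\kappa$ is still $\Omega(\varphi/(1-\gamma))$ for the algorithmic choice of $\kappa$, and that strong duality is applicable so that $V^*_\kappa,\lambda^*_\kappa$ exist and attain the primal value. Both are standard since the CMDP LP is finite-dimensional with non-empty strictly-feasible region under Assumption \ref{assumption:slater}; the remainder of the argument is exactly the one in \cite[Lemma 3]{bai2021achieving} after the variable renaming $\lambda\leftrightarrow(1-\gamma)\nu$ and $(v,u)\leftrightarrow(V,\lambda)$ indicated in the statement.
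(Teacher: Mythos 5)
Your argument is correct and is essentially the proof the paper outsources to \cite[Lemma 3]{bai2021achieving}: weak duality evaluated at the Slater point kills the $V$-term and bounds $\|\lambda^*_\kappa\|_1$, the dual feasibility constraint $AV^*_\kappa\geq r+U_\kappa^\top\lambda^*_\kappa$ identifies $V^*_\kappa$ with a value function and bounds its sup-norm, and the one-sided inequality $L(\nu,V^*_\kappa,\lambda^*_\kappa)\leq J(\pi^*_\kappa)$ combined with H\"older absorbs half of each penalty term in the hypothesis, after which splitting the nonnegative sum gives all three conclusions. The only caveat is the one you already flag: the $\kappa$-shift reduces the Slater margin, so your derivation yields $\|\lambda^*_\kappa\|_1\leq 2/(\varphi-(1-\gamma)\kappa)\leq 4/\varphi$ rather than the stated $2/\varphi$, which is harmless downstream since the lemma is only ever invoked with $\RLam=\frac{8}{\varphi}$ and $\RV=\frac{8}{1-\gamma}\left(1+\frac{2}{\varphi}\right)$, leaving ample slack for the requirements $\RLam\geq 2\|\lambda^*_\kappa\|_1$ and $\RV\geq 2\|V^*_\kappa\|_\infty$.
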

Finally, we introduce the last lemma that is needed in this proof. 
\begin{lemma}\label{lemma:LP-to-policy}
	For any vector $\tnu\in\RR^{\nS\nA}_{\geq 0}$ that is an \textit{approximate} visitation measure, consider its associate policy $\tpi$ defined by $\tpi(a|s)=\frac{\tnu(s,a)}{\sum_{a'}\tnu(s,a')}$. Let $\nu^{\tpi}$ be the \textit{true} visitation measure of $\tpi$, then
	\[
	\nrm{\tnu-\nu^{\tpi}}_1\leq \frac{1}{1-\gamma}\nrm{A^\top \tnu-\rho_0}_1.
	\] 
\end{lemma}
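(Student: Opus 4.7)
\textbf{Proof proposal for Lemma \ref{lemma:LP-to-policy}.} The plan is to reduce the inequality to a bound on the state marginals of $\tnu$ and $\nu^{\tpi}$, and then exploit the fact that both marginals satisfy the \emph{same} linear Bellman-flow equation driven by the induced state-transition operator of $\tpi$.

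First, I would define the state marginals $\tilde d(s):=\sum_a \tnu(s,a)$ and $d^{\tpi}(s):=\sum_a \nu^{\tpi}(s,a)$. By the very definition of $\tpi$, we have the factorizations $\tnu(s,a)=\tilde d(s)\,\tpi(a|s)$ and $\nu^{\tpi}(s,a)=d^{\tpi}(s)\,\tpi(a|s)$. Consequently, since $\sum_a \tpi(a|s)=1$, the $\ell_1$ distance collapses to the state marginal level:
\[
\nrm{\tnu-\nu^{\tpi}}_1 \;=\; \sum_{s,a} |\tilde d(s)-d^{\tpi}(s)|\,\tpi(a|s) \;=\; \nrm{\tilde d - d^{\tpi}}_1.
\]

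Next, I would introduce the state-to-state transition operator $T^{\tpi}$ induced by $\tpi$, acting on distributions via $(T^{\tpi} d)(s):=\sum_{s'}d(s')\sum_{a'}\tpi(a'|s')\PP(s\mid s',a')$. Summing the $(s,a)$-th coordinate of $A^\top\tnu$ over $a$ and using $\tnu(s,a)=\tilde d(s)\tpi(a|s)$ yields $A^\top \tnu = (I-\gamma T^{\tpi})\,\tilde d$; similarly, since $\nu^{\tpi}$ is a genuine occupancy measure, it satisfies $A^\top \nu^{\tpi}=\rho_0$, which in marginal form reads $(I-\gamma T^{\tpi})\,d^{\tpi}=\rho_0$. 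Subtracting these two identities gives
\[
(I-\gamma T^{\tpi})(\tilde d - d^{\tpi}) \;=\; A^\top\tnu-\rho_0.
\]

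Finally, I would invert $(I-\gamma T^{\tpi})$ by the Neumann series $\sum_{k\geq 0}\gamma^k (T^{\tpi})^k$. Because $T^{\tpi}$ is the adjoint of a row-stochastic matrix, it is $\ell_1$-nonexpansive (indeed $\|T^{\tpi} d\|_1\leq \|d\|_1$ for any signed vector $d$, by the triangle inequality and the fact that the columns of the underlying stochastic matrix sum to one), so $\|(I-\gamma T^{\tpi})^{-1}\|_{1\to 1}\leq \sum_{k\geq 0}\gamma^k = (1-\gamma)^{-1}$. Applying this bound yields
\[
\nrm{\tilde d - d^{\tpi}}_1 \;\leq\; \frac{1}{1-\gamma}\nrm{A^\top \tnu-\rho_0}_1,
\]
and combining with the first display completes the proof. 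There is no serious obstacle here; the only step that requires mild care is verifying that $T^{\tpi}$ is $\ell_1$-nonexpansive on signed vectors (not just on distributions), which follows directly from the stochasticity of $\sum_{s}\sum_{a'}\tpi(a'|s')\PP(s\mid s',a')=1$ for each $s'$.
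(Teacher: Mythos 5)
Your proposal is correct and follows essentially the same route as the paper's proof: both reduce the $\ell_1$ distance to the state marginals via the factorization $\tnu(s,a)=\tilde d(s)\tpi(a|s)$, observe that both marginals satisfy a Bellman-flow equation with the same induced operator $(I-\gamma \PP_{\tpi})$, and bound the inverse's $\ell_1\to\ell_1$ norm by $(1-\gamma)^{-1}$. The only cosmetic difference is that you perform the marginal reduction first and justify the operator-norm bound explicitly via the Neumann series, whereas the paper asserts it directly.
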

\begin{proof}
	For policy $\pi$, we consider its state visitation measure $\nu_\pi$ defined by $\nu_{\pi}(s)=\sum_a \nu^{\pi}(s,a)$.
	Then $\nu^{\pi}(s,a)=\pi(a|s)\nu_{\pi}(s)$. With the transition matrix  $\PP_{\pi}(s'|s)=\sum_{a}\pi(a|s)\PP(s'|s,a)$, then the constraint $A^\top\nu^{\pi}=\rho_0$ is equivalent to $(I-\gamma\PP_{\pi})\nu_{\pi}=\rho_0.$
	
	Let $\tpi$ induced by $\tnu$, then $\nu_{\tpi}$ satisfies $(I-\gamma\PP_{\tpi})\nu_{\tpi}=\rho_0$. Let $\tnu'$ be defined by $\tnu'(s)=\sum_a \tnu(s,a)$, then $\tnu(s,a)=\tpi(a|s)\tnu'(s)$, and hence $(I-\gamma\PP_{\tpi})\tnu'=A\tnu$. Therefore,
	\[\begin{aligned}
		\nrm{\nu_{\tpi}-\tnu'}_1=\nrm{(I-\gamma\PP_{\tpi})^{-1}(\rho_0-A^\top  \tnu)}_1
		\leq \nrm{(I-\gamma\PP_{\tpi})^{-1}}_1\nrm{A^\top \tnu-\rho_0}_1 \leq \frac{1}{1-\gamma} \nrm{A^\top  \tnu-\rho_0}_1.
	\end{aligned}\]
We finalize the proof by the following equality
\begin{align*}
	\nrm{\nu^{\tpi}\!-\!\tnu}_1 \!=\! \sum_{s,a}\Big|\tpi(a|s)(\nu_{\tpi}(s)\!-\!\tnu'(s))\Big| \!=\! \sum_{s}\Big(\sum_{a}\tpi(a|s)\Big)|\nu_{\tpi}(s)-\tnu'(s)| \!=\! \nrm{\nu_{\tpi}-\tnu'}_1. &\qedhere
\end{align*}
\end{proof}

\subsection{Analysis}

Now we are ready to present the proof of \cref{remark:unknownC} and \cref{thm:algB-thm-demo}.

\begin{proof}
By definition of $\Gap(\ox)$, we have
\begin{equation}
	\label{eqn:subopt-1}
	\begin{aligned}
		\Gap(\ox)=\max_{x\in\cX}\min_{V\in\cV, \lambda\in\Lambda}\cL_{w}(V,\lambda,x)-\min_{V\in\cV, \lambda\in\Lambda}\cL_{w}(V,\lambda,\overline{x})=
		j(\hphi)-\cJ_{\kappa}(\ox).
	\end{aligned}
\end{equation} 
Define $\onu=W\overline x$, and define $\dgap:=J(\pi^*_{\kappa})-\cJ_{\kappa}(W^{-1}\onu)$, then we have
\begin{equation}\label{eqn:duality-gap-bound}
	\dgap	=\Gap(\ox)+J(\pi^*_{\kappa})-j(\hphi)\overset{(i)}{\leq} \Gap(\ox)+J(\pi^*)-j(\hphi)\overset{(ii)}{\leq} \Gap(\ox)+\epsapp(\hphi)+\frac{2\kappa}{\varphi},
\end{equation}
where (i) is because $J(\pi^*_{\kappa})\leq J(\pi^*)$ and (ii) is due to 
\cref{lemma:eps-approx}.  Now, let $\nu^{\opi}$ be the true visitation measure of $\opi$, where $\opi(a|s):=\frac{\ox(s,a)}{\sum_{a'}\ox(s,a')}$. Then \cref{lemma:LP-to-policy} immediately indicates that
\begin{eqnarray*}
	\nrm{\nu^{\opi}-\ox}_1&\leq& \frac{1}{1-\gamma}\nrm{A^\top  \ox-\rho_0}_1\\
	& \leq & \frac{1}{1-\gamma}\Big(\|A^\top  (\ox-W\ox)\|_1+\|A^\top  W\ox-\rho_0\|_1\Big)\\
	& \leq & \frac{1}{1-\gamma}\Big(2\nrm{\ox-\onu}_1+\nrm{A^\top \onu-\rho_0}_1\Big)
\end{eqnarray*}
which further gives
\begin{equation}
	\label{eqn:regret-true-to-fake}
	 \nrm{\nu^{\opi}-\onu}_1\leq \frac{1}{1-\gamma}\big(\nrm{A^\top \onu-\rho_0}_1+3\nrm{\ox-\onu}_1\big).
\end{equation}
Consequently, we have 
 \begin{eqnarray*}
 	&&
 	J(\pi^*_{\kappa})-\cJ_{\kappa}(W^{-1}\nu^{\opi}) \\
 	&\overset{(i)}{=}& 	J(\pi^*_{\kappa})-\iprod{r}{\nu^{\opi}}+\RLam\big\|\left[U_{\kappa}\nu^{\opi}\right]_{-}\big\|_{\infty}\\
 	& \overset{(ii)}{\leq} & J(\pi^*_{\kappa})-\iprod{r}{\onu}+\RLam\nrm{\left[U_{\kappa}\onu\right]_{-}}_{\infty} +\frac{1+\frac{3}{2}\RLam }{1-\gamma}\left(\nrm{A^\top \onu-\rho_0}_1+3\nrm{\ox-\onu}_1\right)\\
    & \overset{(iii)}{\leq}  & J(\pi^*_{\kappa}) - \cJ_{\kappa}(W^{-1}\onu) +\frac{5(\RLam +1)}{1-\gamma}\nrm{\ox-\onu}_1\\
    &\overset{(iv)}{\leq}& \Delta + 45\epsilon_e,
 \end{eqnarray*}
where (i) is because $\|A^\top\nu^{\opi} - \rho_0\|_1=0$, (ii) is due to the fact that $\big|\iprod{r}{\nu^{\opi}}-\iprod{r}{\onu}\big|\leq \nrm{\nu^{\opi}-\onu}_1$ and $\abs{\|[U_{\kappa}\nu^{\opi}]_{-}\|_{\infty}-\|[U_{\kappa}\onu]_{-}\|_{\infty}}\leq \frac{3}{2}\nrm{\nu^{\opi}-\onu}_1$, (iii) is because of $\frac{1+\frac{3}{2}\RLam }{1-\gamma}\leq \RV $, and (iv) is because of $\nrm{\ox-\onu}_1\leq \varphi(1-\gamma)\epsilon_e$ by \cref{prop:empirical-dis}. Finally, applying \cref{lemma:duality-gap-decomp} to $\nu^{\opi}$ yields
\[\begin{aligned}
	J(\pi^*_{\kappa})-\iprod{r}{\nu^{\opi}}\leq \dgap+45\epsilon_e,\qquad
	\nrm{\left[U_{\kappa}\nu^{\opi}\right]_{-}}_{\infty}\leq \frac{\varphi}{4}\left(\dgap+45\epsilon_e\right).
\end{aligned}\]
By \cref{lemma:eps-approx}, we have
\begin{equation}
\begin{aligned}
	&J(\pi^*)-\iprod{r}{\nu^{\opi}}\leq J(\pi^*)-j(\hphi)+\Gap(\ox)+45\epsilon_e\leq \Gap(\ox)+\epsapp(\hphi)+\frac{2\kappa}{\varphi}+45\epsilon_e,\\
	&J^u_i(\opi) \geq \kappa-\big\|\left[U_{\kappa}\nu^{\opi}\right]_{-}\big\|_{\infty}\geq \frac{\kappa}{2} - \frac{\varphi}{4}\big(\Gap(\ox)+\epsapp(\hphi)\big)-12\varphi\epsilon_e.
\end{aligned}
\end{equation}
Combining the above inequality with the fact that $\epsilon_e=\frac{\epsilon}{100}$, $\kappa=5\varphi\epsilon$, $\Gap(\ox)\leq\epsilon/2$ completes the proof. 
\end{proof}



Finally, we point out a by-product of the above analysis, which is useful for the $\verify$ method.
\begin{corollary}\label{corollary:regret-ox-ov-v}
Under the same assumption of \cref{thm:algB-thm-demo}, with probability at least $1-\nicefrac{2\delta}{3}$, it holds that
\begin{equation} 
\begin{aligned}
  \nrm{A^\top  \onu-\rho_0}_1\leq \frac{11}{8}\varphi(1-\gamma)\epsilon,\qquad
  \nrm{\neg{U_{\kappa}\onu }}_{\infty}\leq \frac{11}{4}\varphi\epsilon
\end{aligned}
\end{equation}
for $\onu:=W\ox$.
\end{corollary}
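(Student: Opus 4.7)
\textbf{Proof proposal for Corollary \ref{corollary:regret-ox-ov-v}.} The plan is to piggyback on the intermediate estimates already built up in \cref{subsect:duality-to-regret} for the proof of \cref{thm:algB-thm-demo}, but to stop one step earlier: instead of passing from $\onu = W\overline{x}$ to the true occupancy measure $\nu^{\overline{\pi}}$ via \cref{lemma:LP-to-policy}, I would apply \cref{lemma:duality-gap-decomp} directly to the vector $\onu$. This avoids the rounding loss $\tfrac{5(R_\Lambda+1)}{1-\gamma}\|\ox-\onu\|_1$ that enters the reward/violation analysis, and should give the sharper constants $\tfrac{11}{8}$ and $\tfrac{11}{4}$ claimed in the statement.

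First I would invoke the two good events: \cref{prop:empirical-dis}, which succeeds with probability at least $1-\delta/3$, and \cref{thm:algB-gap}, which under the same choice of parameters yields $\Gap(\overline{x})\leq \epsilon/2$ with probability at least $1-\delta/3$. A union bound then gives probability at least $1-2\delta/3$ that both hold simultaneously, which is the failure budget promised in the corollary. Conditioning on this joint event, I then copy the derivation \eqref{eqn:subopt-1}--\eqref{eqn:duality-gap-bound} verbatim: setting $\Delta := J(\pi^*_{\kappa}) - \cJ_{\kappa}(W^{-1}\onu)$, we have
\[
\Delta \;\leq\; \Gap(\overline{x}) + J(\pi^*) - j(\hphi) \;\leq\; \Gap(\overline{x}) + \epsapp(\hphi) + \frac{2\kappa}{\varphi}.
\]
Since \cref{thm:algB-thm-demo} assumes $\hphi \geq C^*$, we have $\epsapp(\hphi)=0$, so with $\kappa = 5\varphi\epsilon$ this collapses to $\Delta \leq \tfrac{\epsilon}{2} + 10\epsilon \leq 11\epsilon$.

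Next, I would verify the hypotheses of \cref{lemma:duality-gap-decomp} for the shifted problem with multiplier bounds $\|\lambda^*_\kappa\|_1 \leq 2/\varphi$ and $\|V^*_\kappa\|_\infty \leq (1+2/\varphi)/(1-\gamma)$; by construction $R_\Lambda = 8/\varphi \geq 2\|\lambda^*_\kappa\|_1$ and $R_\cV = \tfrac{8}{1-\gamma}(1+\tfrac{2}{\varphi}) \geq 2\|V^*_\kappa\|_\infty$, so the lemma applies to $\nu = \onu$. It yields
\[
\|A^\top \onu - \rho_0\|_1 \leq \frac{2\Delta}{R_\cV}, \qquad \|[U_\kappa\onu]_-\|_\infty \leq \frac{2\Delta}{R_\Lambda}.
\]
Plugging in $\Delta \leq 11\epsilon$, and using $R_\cV \geq 16/(\varphi(1-\gamma))$ (since $1+2/\varphi \geq 2/\varphi$) and $R_\Lambda = 8/\varphi$, gives $\|A^\top\onu - \rho_0\|_1 \leq \tfrac{11}{8}\varphi(1-\gamma)\epsilon$ and $\|[U_\kappa\onu]_-\|_\infty \leq \tfrac{11}{4}\varphi\epsilon$, which are exactly the two claimed inequalities.

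The main (minor) obstacle is simply tracking constants carefully enough to land at $\tfrac{11}{8}$ and $\tfrac{11}{4}$ rather than the looser $\tfrac{21}{16}$ and $\tfrac{21}{8}$ that a naive bookkeeping would produce; I expect this to be resolved by writing $R_\cV = \tfrac{8}{\varphi(1-\gamma)}(\varphi+2)$ and absorbing the factor $\tfrac{\varphi+2}{2}\geq 1$ into the denominator, or by tightening the $\Gap(\overline{x})$ bound from $\epsilon/2$ to whatever Theorem \ref{thm:algB-gap} actually gives in the chosen regime $\epsilon \leq \tfrac{1}{10(1-\gamma)}$. No new probabilistic argument is required beyond the two concentration statements already invoked.
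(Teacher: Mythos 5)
Your proposal is correct and follows essentially the same route as the paper: condition on the success of \cref{prop:empirical-dis} and \cref{thm:algB-gap} (union bound giving $1-\nicefrac{2\delta}{3}$), bound $J(\pi^*_\kappa)-\cJ_\kappa(\ox)\leq \Gap(\ox)+\nicefrac{2\kappa}{\varphi}\leq 11\epsilon$ via \cref{lemma:eps-approx} with $\epsapp(\hphi)=0$, and then apply \cref{lemma:duality-gap-decomp} directly to $\onu$ (rather than to $\nu^{\opi}$), using $\RV\geq \nicefrac{16}{\varphi(1-\gamma)}$ and $\RLam=\nicefrac{8}{\varphi}$ to land exactly on $\tfrac{11}{8}\varphi(1-\gamma)\epsilon$ and $\tfrac{11}{4}\varphi\epsilon$. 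The constant-tracking worry at the end is unnecessary — $2\cdot 11\epsilon/16=\tfrac{11}{8}\epsilon$ and $2\cdot 11\epsilon\cdot\varphi/8=\tfrac{11}{4}\varphi\epsilon$ come out cleanly.
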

\begin{proof}
Due to $\hphi\geq C^*$ and \cref{lemma:eps-approx}, we have
\[\begin{aligned}
	J(\pi^*_{\kappa})-\cJ_{\kappa}(\ox)\leq j(\hphi)-\cJ_{\kappa}(\ox)+\frac{2\kappa}{\varphi}=\Gap(\ox)+\frac{2\kappa}{\varphi}\leq 11\epsilon.
\end{aligned}\]
Applying \cref{lemma:duality-gap-decomp} yields
\begin{align*}
  \nrm{A^\top  \onu-\rho_0}_1\leq \frac{2\cdot 11\epsilon}{\RV }\leq \frac{11}{8}\varphi(1-\gamma)\epsilon, \qquad
  \nrm{[U_{\kappa}\onu]_-}_\infty \leq \frac{2\cdot 11\epsilon}{\RLam }=\frac{11}{4}\varphi\epsilon. &\qedhere
\end{align*}
\end{proof}



\section{Proofs for Section \ref{sec:LowerBound}}\label{appdx:LowerBound}

\subsection{Proof of Theorem \ref{thm:lower-bound-demo}}\label{appdx:LowerBound-1}
In this section, we provide the complete version of the construction illustrated in Section \ref{sec:LowerBound}. 
Let us define
$$K:=\min\left(\floor{\frac{I}{2}},\floor{\frac{A-1}{2}}\right),\quad S_c=\min\left(\floor{\frac{I}{2K}},S\right),\quad S_u = \begin{cases}
	S-S_c, & \mbox{if }S_c<S-3,\\
	0, & \mbox{otherwise}.
\end{cases} $$
 
The CMDP instance $\cM$ that we construct consists of two groups of basic blocks. The first group includes $S_c$ replicas of the basic block characterized in \cref{fig:lower-bound}, each with actions $\{a_1,b_1,...,a_k,b_k,e\}$ and $2K$ constraints. The second group includes $S_u$ replicas of the basic blocks characterized by \cref{fig:lower-bound} (a) and \cref{fig:lower-bound}(c), each basic block only has two actions $\{a,e\}$ and no constraint. In fact the construction of the second group (``unconstrained part'') is similar to the hard MDP constructed in \cite{OfflineRL_Lower_Bound}. The transition kernel $\PP_{\theta}$ of $\cM$ is parametrized by $\theta=(\theta_c, \theta_u)\in\varTheta:=\{-1,+1\}^{S_cK}\times \{-1,+1\}^{S_u}$ and $\varpi_c,\varpi_u\in(0,\frac12]$. The details of $\cM$ are listed as follows. 

\paragraph{States and actions} The state space $\cS$ consists of $S_c+S_u$  4-state basic blocks, plus an extra ``null'' state $s_{-1}$. 
The first $S_c$ basic blocks are exactly what we described in \cref{sec:LowerBound}, we write  $\cS_c=\bigsqcup_{j=1}^{S_c} \big\{s_0^j,s_1^j,\sp^j,\sm^j\big\}$. The next $S_u$ basic blocks will be described below, we write $\cS_u=\bigsqcup_{j=S_c+1}^{S_c+S_u} \big\{s_0^j,s_1^j,\sp^j,\sm^j\big\}$. By default, $\cS_u=\emptyset$ if $S_u=0$. Then $\cS = \cS_c\bigsqcup\cS_u\bigsqcup\big\{s_{-1}\big\}.$
Next, we describe the detailed information of each block $j$.
\begin{itemize}
    \item At $s_0^j$, $\sp^j$ and $\sm^j$, there is no action, and the transition does not depend on $\theta$:
    \begin{equation}\label{eqn:lower-bound-transition}
    \begin{aligned}
      & \condP{s_0^j}{s_0^j}=p,
      &&\condP{s_1^j}{s_0^j}=1-p,\\
      & \condP{\sp^j}{\sp^j}=q, 
      &&\condP{s_0^j}{\sp^j}=1-q,\\
      & \condP{\sm^j}{\sm^j}=q, 
      &&\condP{s_0^j}{\sm^j}=1-q,
    \end{aligned}
    \end{equation}
      where $p=\frac{1}{2-\gamma}$ and $q=2-\frac{1}{\gamma}$. 
      We assign reward as $r(\sp^j)=1$, $r(\sm^j)=-1$. 
    \item \textbf{Constrained state} At $s_1^j\in\cS_c$, there are $2K+1$ actions $a_1,b_1,\cdots,a_K,b_K, e$ such that
    \[\begin{aligned}
    &\condPt{\sp^j}{s_1^j,a_i}=\frac{1+\varpi_c\theta_{i,j}}{2}, 
    &&\condPt{\sm^j}{s_1^j,a_i}=\frac{1-\varpi_c\theta_{i,j}}{2}, \\
    &\condP{\sp^j}{s_1^j,b_i}=\frac{1}{2}\left(1-\frac{\varpi_c}{2}\right), 
    &&\condP{\sm^j}{s_1^j,a_i}=\frac{1}{2}\left(1+\frac{\varpi_c}{2}\right), \\
    &\condP{\sp^j}{s_1^j,e}=\frac{1}{2}, 
    &&\condP{\sm^j}{s_1^j,e}=\frac{1}{2}. \\
    \end{aligned}\]
    Here we use subscript $\theta$ to emphasize the dependency of $\PP_{\theta}$ on $\theta$.\footnote{
      Here we view $\theta_c\in\{-1,1\}^{S_cK}$ as a vector indexed by $(i,j)\in[K]\times[S_c]$, and $\theta_{i,j}$ stands for the $(i,j)$-th component of $\theta_c$. Similarly, we view $\theta_u\in\{-1,1\}^{S_u}$ as a vector indexed by $j$ with $S_c+1\leq j\leq S_c+S_u$, and $\theta_j$ stands for the $j$-th component of $\theta_u$.
    } 
    \item \textbf{Unconstrained state} At $s_1^j\in\cS_u$, there are two actions $a, e$ such that
    \[\begin{aligned}
    &\condPt{\sp^j}{s_1^j, a}=\frac{1+\varpi_u\theta_{j}}{2}, 
    &&\condPt{\sm^j}{s_1^j, a}=\frac{1-\varpi_u\theta_{j}}{2}, \\
    &\condP{\sp^j}{s_1^j, e}=\frac{1}{2}, 
    &&\condP{\sm^j}{s_1^j, e}=\frac{1}{2}. \\
    \end{aligned}\]
    \item The null state $s_{-1}$ has no action or reward, and it always transits to itself.
\end{itemize}

\paragraph{Initial distribution} In the initial distribution, $\rho_0(s_{-1})=\rho_0(s_1^j) = \rho_0(\sp^j)=\rho_0(\sm^j) = 0, \forall j$. The nonzero probabilities only spread across the $\{s_0^j\}$.
In the case $S_u>0$, we choose $\rho_0$ to be
\[\begin{aligned}
\rho_0(s_0^j)=\begin{cases}
  \frac{\II\{s_0^j\in\cS_c\}}{2S_c} + \frac{\II\{s_0^j\in\cS_u\}}{2S_u}, \quad& \mbox{if }\cS_u\neq\emptyset,\\
  \frac{1}{S_c}, \quad& \mbox{otherwise}.
\end{cases}
\end{aligned}\]
Without loss of generality, we will only deal with the case where $\cS_u\neq\emptyset$. 
  
\paragraph{Constraints} At each constrained block in $S_c$, for each pair of actions $(a_i,b_i)$ at the state $s_0^j\in\cS_c$, we introduce two constraints defined by the utilities
  \[\begin{aligned}
  u_{i,j}(s_1^j,a_i)=-1, 
  \quad\quad u_{i,j}(s_1^j,b_i)=1,
  \quad\quad \tu_{i,j}(s_1^j,b_i)=-1.
  \end{aligned}\]
At all the other state and actions, $u_{i,j}$ and $\tu_{i,j}$ returns 0. 
Then we set the constraints to be
  \[\begin{aligned}
  J^u_{i,j}(\pi):=\iprod{\nu^\pi}{u_{i,j}}\geq 0,
  \qquad\mbox{and}\qquad \tJ^u_{i,j}(\pi):=\iprod{\nu^\pi}{\tu_{i,j}}\geq -\frac{\rho_{c} v_1}{4K},
  \end{aligned}\]
where $\rho_c$ and $v_1$ are constants specified later in \eqref{eqn:lower-bound-const}. After suitable shifting we can make sure that each constraint has the form $J^u\geq 0$. Basically, these two constraints are equivalent to $\pi(a_i|s_1^j)\leq \pi(b_i|s_1^j)\leq\frac{1}{4K}$. We remark that there are in total $S_cK\leq I$ constraints.


\paragraph{Optimal policy} First, let us calculate the visitation measure of any given policy $\pi$. According to the proof of \cref{lemma:LP-to-policy}, we set $\nu_{\pi}$ be the state visitation measure and let $\PP_{\pi}$ be the state transition matrix under policy $\pi$, then $\nu_{\pi}$ will be the unique solution to  $(I-\gamma\PP_{\pi})\nu_{\pi}=\rho_0$. Note that the $S_c+S_u$ basic blocks are in fact independent blocks, i.e., there are no transitions between different blocks. The matrix $(I-\gamma\PP_{\pi})$ is in fact a block-diagonal with $S_c+S_u$ 4 by 4 blocks and a 1 by 1 block, and we can solve the $\nu_{\pi}$ block by block. Define the constants
\begin{equation}\label{eqn:lower-bound-const}
	\begin{aligned}
		v_0=\frac{2}{(2+\gamma)},\,\,\,
		v_1=\frac{2\gamma}{(2+\gamma)(2-\gamma)},\,\,\,
		v=\frac{\gamma^2}{(2+\gamma)(2-\gamma)},\,\,\, \rho_c=\frac{1}{2S_c},\,\,\, \rho_u = \frac{1}{2S_u},
	\end{aligned}
\end{equation}
and we consider
$$
	r_j(\pi)=\begin{cases}
		\sum_{i}\left(\theta_{i,j}\pi(a_i|s_1^j)-\frac{1}{2}\pi(b_i|s_1^j)\right), & s_1^j\in\cS_c,\\
		\theta_{j}\pi(a|s_1^j), & s_1^j\in\cS_u.
	\end{cases}
$$
By a direct computation, the state visitation measure of $\pi$ is given by
$$\nu_{\pi}(\sp^j)=\frac{v}{1-\gamma}\frac{1+\varpi_\diamond r_j(\pi)}{2}, \qquad \nu_{\pi}(s_0^j)=\frac{\rho_{\diamond} v_0}{1-\gamma},$$
$$\nu_{\pi}(\sm^j)=\frac{v}{1-\gamma}\frac{1-\varpi_\diamond r_j(\pi)}{2},\qquad \nu_{\pi}(s_1^j)=\frac{\rho_{\diamond}v_1}{1-\gamma},$$
where $\diamond$ stands for $c$ if the block $j$ belongs to $\cS_c$, and $\diamond$ stands for $u$ if the block $j$ belongs to $\cS_u$. 
Consequently, the cumulative reward and the utilities are
\begin{equation}\label{eqn:lower-bound-reawrd-formula}
\begin{aligned}
&J(\pi;\theta)=\sum_{j}\Big(\nu_{\pi}(\sp^j)-\nu_{\pi}(\sm^j)\Big)
=\frac{v}{1-\gamma}\Big(
  \rho_c\varpi_c\sum_{j:s_1^j\in\cS_c} r_j(\pi)+\rho_u\varpi_u\sum_{j:s_1^j\in\cS_u} r_j(\pi)
\Big),\\
&J_{i,j}(\pi;\theta)=\nu^{\pi}(s_1^j,b_i)-\nu^{\pi}(s_1^j,a_i)=\rho_{c} v_1\left(\pi(b_i|s_1^j)-\pi(a_i|s_1^j)\right),\\
&\tJ_{i,j}(\pi;\theta)=-\nu^{\pi}(s_1^j,b_i)=-\rho_{c} v_1\pi(b_i|s_1^j).\\
\end{aligned}
\end{equation}
Therefore, $\pi$ being safe is equivalent to requiring $\pi(a_i|s_1^j)\leq\pi(b_i|s_1^j)\leq\frac{1}{4K}$ for all the constrained block $j$ in $\cS_c$, and any $1\leq i\leq K$. With the above explicit expression of $J(\pi;\theta)$, we know that the (unique) optimal policy $\pi^{*,\theta}$ under the transition dynamic $\PP_\theta$ is 
\begin{equation}\label{eqn:lower-bound-optimal-policy}
\begin{aligned}
  &\pi^{*,\theta}(a_i|s_1^j)=\pi^{*,\theta}(b_i|s_1^j)=\frac{\II\{\theta_{i,j}=1\}}{4K}, 
  \quad&& s_1^j\in\cS_c,\\
  &\pi^{*,\theta}(a|s_1^j)=\II\{\theta_{j}=1\},
  && s_1^j\in\cS_u.\\
\end{aligned}
\end{equation}
Denote $J^*_\theta:=J(\pi^{*,\theta};\theta)$ the optimal safe reward and $\ttheta=\frac{\theta+1}{2}$, then
\begin{equation}\label{eqn:lower-bound-optimal-value}
\begin{aligned}
  J^*_\theta=J(\pi^{*,\theta};\theta)=\frac{v}{1-\gamma}\left(
    \varpi_c\rho_c\sum_{j: s_1^j\in\cS_c}\sum_{i=1}^K\frac{\ttheta_{i,j}}{8K}+\varpi_u\rho_u\sum_{j: s_1^j\in\cS_u} \ttheta_j
  \right).
\end{aligned}
\end{equation}

\paragraph{Reference distribution} Finally, we set the reference distribution $\mu$ as 
  \[\begin{aligned}
  &\mu(s_0^j)=\frac{v_0}{C}\rho_{\diamond},\qquad\mu(\sp^j)=\frac{3}{4}\frac{v}{C}\rho_{\diamond},\qquad \mu(\sm^j)=\frac{1}{2}\frac{v}{C}\rho_{\diamond},\qquad \mu(s_1^j,e)=\frac{v_1(1-\gamma)}{C}\rho_{\diamond},\\
  &\begin{cases}
    \mu(s_1^j,a_i)=\mu(s_1^j,b_i)=\frac{\rho_c v_1(1-\gamma)}{4KC}, \ i\in[I] \quad & s_1^j\in\cS_c,\\
    \mu(s_1^j,a)=\frac{\rho_u v_1(1-\gamma)}{C}, \quad & s_1^j\in\cS_u,\\
  \end{cases}\\
  &\mu(s_{-1})=1-\sum_{j}\left(\mu(s_0^j)+\mu(s_1^j)+\mu(\sp^j)+\mu(\sm^j)\right).
  \end{aligned}\]
  As long as $C\geq 2$, $\mu(s_{-1})$ defined above is positive. Also, for any $\theta$, it holds that
  \[\max_{s,a}\frac{\nu^{\pi^{*,\theta}}(s,a)}{\mu(s,a)}\leq \frac{C}{1-\gamma}, \quad \sum_{s,a}\frac{\nu^{\pi^{*,\theta}}(s,a)}{\mu(s,a)}\leq \frac{(\nS+I)C}{1-\gamma}.\]

  We denote $\mu_{\theta}=\mu\otimes\PP_{\theta}$ as the probability measures of the transition pair $\zeta=(s,a,s')$ generated from the reference distribution $\mu$.

\paragraph{Output policy as an estimator of $\theta$} Assume that an algorithm $\mathfrak{A}$ consumes $N$ samples generated from $\mu_{\theta}$, and outputs a policy $\hat\pi$ that is possibly dependent on the internal randomness of $\mathfrak{A}$. Consider the corresponding random vector $\hpi_c:=\big(\hpi(a_i|s_1^j)\big)_{i,j}$ and $\hpi_u:=\big(\hpi(a|s_1^j)\big)_{j}$. Then, $4K\hpi_c$ can be viewed as an estimator of $\ttheta_c$, and $\hpi_u$ can be viewed as an estimator of $\ttheta_u$. We establish the  following lemma to characterize the error for ``misspecifying'' the parameter $\theta$.

\begin{lemma}\label{lemma:lower-bound-regret}
  For any policy $\pi$, we define
  \begin{equation}
  \begin{aligned}
   \cL(\pi;\theta)
   :=&\pos{J^*_{\theta}-J(\hpi;\theta)}
   +\frac{\gamma\varpi_c}{1-\gamma}\sum_{i,j}\left(\neg{J_{i,j}(\hpi;\theta)}+\neg{\tJ_{i,j}(\hpi;\theta)-\frac{v_1}{4SK}}\right)\\
   =&\pos{J^*_{\theta}-J(\hpi;\theta)}+\frac{\gamma\varpi_c}{1-\gamma}\violation(\hpi;\theta).
  \end{aligned}
  \end{equation}
  Then it holds that
  \begin{equation}\label{eqn:lower-bound-regret}
  \begin{aligned}
    \cL(\pi;\theta)\geq \frac{v\rho_c\varpi_c}{8K(1-\gamma)}\nrm{4K\hpi_c-\ttheta_c}_1+\frac{v\rho_u\varpi_u}{1-\gamma}\nrm{\hpi_u-\ttheta_u}.
  \end{aligned}
  \end{equation}
\end{lemma}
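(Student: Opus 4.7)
The plan is to verify the bound block-by-block (and constraint-by-constraint), exploiting the independent basic-block structure of $\cM$.

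\textbf{Step 1: Decomposition.} Using the closed forms \eqref{eqn:lower-bound-reawrd-formula} and \eqref{eqn:lower-bound-optimal-value}, the reward gap $J^*_\theta-J(\hpi;\theta)$ splits as a sum of independent contributions across the blocks in $\cS_c\cup\cS_u$, and the violation sum is indexed by pairs $(i,j)$ with $s_1^j\in\cS_c$. I would therefore establish the inequality \emph{term by term} and sum. Since both target terms on the right of \eqref{eqn:lower-bound-regret} are already separated by block type, the proof reduces to a per-pair (respectively per-block) lower bound for each constrained pair $(i,j)$ and each unconstrained block $j$.

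\textbf{Step 2: Unconstrained blocks.} For each $j$ with $s_1^j\in\cS_u$, the block's reward contribution is $\frac{v\varpi_u\rho_u}{1-\gamma}(\ttheta_j-\theta_j\hpi(a|s_1^j))$ and no constraint touches this block. Plugging in $\ttheta_j=(\theta_j+1)/2$ with $\theta_j\in\{\pm 1\}$ shows this equals $\frac{v\varpi_u\rho_u}{1-\gamma}|\hpi(a|s_1^j)-\ttheta_j|$, which after summing over $j\in\cS_u$ recovers the second term of \eqref{eqn:lower-bound-regret} exactly, with no need to invoke any violation.

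\textbf{Step 3: Constrained blocks.} For each $(i,j)$ with $s_1^j\in\cS_c$, write $p:=\hpi(a_i|s_1^j)$ and $q:=\hpi(b_i|s_1^j)$. The reward contribution is $\frac{v\varpi_c\rho_c}{1-\gamma}\bigl(\tfrac{\ttheta_{i,j}}{8K}-\theta_{i,j}p+\tfrac{q}{2}\bigr)$, and the two violation terms evaluate to $\rho_c v_1\max(p-q,0)$ and $\rho_c v_1\max\bigl(q-\tfrac{1}{4K},0\bigr)$ (up to the shift book-keeping in the second constraint). The key algebraic identity $v_1=\tfrac{2v}{\gamma}$, immediate from \eqref{eqn:lower-bound-const}, yields $\tfrac{\gamma\varpi_c}{1-\gamma}\cdot\rho_c v_1=\tfrac{2v\varpi_c\rho_c}{1-\gamma}$, i.e.\ exactly twice the reward scaling. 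Dividing through by $\tfrac{v\varpi_c\rho_c}{1-\gamma}$, the desired per-pair lower bound reduces to the elementary inequality
\[
\tfrac{\ttheta_{i,j}}{8K}-\theta_{i,j}p+\tfrac{q}{2}+2\max(p-q,0)+2\max\bigl(q-\tfrac{1}{4K},0\bigr)\;\geq\;\tfrac{|4Kp-\ttheta_{i,j}|}{8K},
\]
which I would verify by splitting on $\theta_{i,j}\in\{\pm 1\}$ and on the signs of $p-q$ and $q-\tfrac{1}{4K}$. The case $\theta_{i,j}=-1$ is essentially immediate from $p+\tfrac{q}{2}\geq\tfrac{p}{2}$; the case $\theta_{i,j}=+1$ is tight and is precisely where the factor-of-$2$ on the violation is needed to cover the negative reward contribution $\tfrac{1}{8K}-p+\tfrac{q}{2}$ when $p>\tfrac{1}{4K}$.

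\textbf{Step 4 and main obstacle.} Since every per-block and per-pair lower bound is itself nonnegative, summing them lower-bounds $J^*_\theta-J(\hpi;\theta)+\tfrac{\gamma\varpi_c}{1-\gamma}\violation(\hpi;\theta)$ by a nonnegative quantity, so replacing the raw reward gap by $\pos{J^*_\theta-J(\hpi;\theta)}$ only strengthens the inequality; this yields \eqref{eqn:lower-bound-regret}. The main obstacle is the constrained case analysis of Step 3: the algebra is short but the whole argument hinges on the exact calibration $v_1=\tfrac{2v}{\gamma}$, which is precisely why the lemma places $\gamma\varpi_c$ (rather than $\varpi_c$) in front of the violation.
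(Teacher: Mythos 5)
Your proposal is correct and follows essentially the same route as the paper's proof: decompose the gap and violation block-by-block, use the calibration $\gamma v_1=2v$ to match the violation coefficient to twice the reward scaling, reduce each constrained pair to the elementary three-variable inequality via a case split on $\theta_{i,j}\in\{\pm1\}$, and absorb the $\pos{\cdot}$ using nonnegativity of the per-term bounds. No gaps.
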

\begin{proof}
  The description of $J^*_{\theta}$ in \eqref{eqn:lower-bound-optimal-value} gives
  \begin{equation*}
  \begin{aligned}
    \cL(\hpi;\theta)
    =&\frac{v}{1-\gamma}\pos{\rho_c\varpi_c\sum_{i,j}\left(\frac{\ttheta_{i,j}}{8K}-\theta_{i,j}\pi(a_i|s_1^j)+\frac{\pi(b_i|s_1^j)}{2}\right)+\rho_u\varpi_u\sum_j \left(\ttheta_j-\theta_{j}\pi(a|s_1^j)\right)}\\
    &+\frac{\gamma v_1\rho_c\varpi_c}{1-\gamma}\sum_{i,j}\left(\neg{\pi(b_i|s_1^j)-\pi(a_i|s_1^j)}+\neg{\frac{1}{4K}-\pi(b_i|s_1^j)}\right)\\
    \geq& \frac{v}{1-\gamma}\left(\rho_c\varpi_c\sum_{i,j}\delta_{i,j}+\rho_u\varpi_u\sum_j \left(\ttheta_j-\theta_{j}\pi(a|s_1^j)\right)\right),
  \end{aligned}
  \end{equation*}
  where we use the fact $\gamma v_1=2v$, and denote
  \[
    \delta_{i,j}=\frac{\ttheta_{i,j}}{8K}-\theta_{i,j}\pi(a_i|s_1^j)+\frac{\pi(b_i|s_1^j)}{2}
    +2\neg{\pi(b_i|s_1^j)-\pi(a_i|s_1^j)}+2\neg{\frac{1}{4K}-\pi(b_i|s_1^j)}.
  \]
  Clearly $\ttheta_j-\theta_{j}\pi(a|s_1^j)\geq \abs{\ttheta_j-\pi(a|s_1^j)}$ for all $s_1^j\in\cS_u$. As for $s_1^j\in\cS_c$, we consider the case $\theta_{i,j}=1$ and $\theta_{i,j}=-1$ separately.

  Case 1, $\theta_{i,j}=-1$. Directly $\delta_{i,j}\geq \pi(a_i|s_1^j)=\abs{\pi(a_i|s_1^j)-\frac{\ttheta_{i,j}}{4K}}$.

  Case 2, $\theta_{i,j}=1$. By the fact that
  \[\frac{z}{2}-x+\frac{y}{2}+2\neg{y-x}+2\neg{z-y}
  \geq \frac{z-x}{2}+\frac{3}{2}\neg{z-x}
  \geq \frac{\abs{z-x}}{2}\quad \forall x,y,z,\]
  we can plug in $x=\pi(a_i|s_1^j)$, $y=\pi(b_i|s_1^j)$ and $z=\frac{1}{4K}$ and derive
  \[
    \delta_{i,j}
    \geq \frac{1}{2}\abs{\frac{1}{4K}-\pi(a_i|s_1^j)}.
  \]
  Consequently, \eqref{eqn:lower-bound-regret} is established by combining the above inequalities.
\end{proof}

We now invoke the following lemma due to \cite{gilbert1952comparison} and \cite{varshamov1957estimate}.
\begin{lemma}\label{lemma:lower-bound-theta}
  For any integer $n\geq 1$, there exists a subset $\varTheta_n$ of $\{-1,1\}^{n}$ such that $|\varTheta_n| \geq \exp (n / 8)$, and for any pair of different $\theta, \theta'\in \varTheta_n$, one has $\left\|\theta-\theta'\right\|_{1} \geq \frac{n}{2}$.
\end{lemma}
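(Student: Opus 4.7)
This is the classical Gilbert--Varshamov bound, and the plan is to follow the standard greedy covering proof. The key observation is that for $\theta,\theta'\in\{-1,1\}^n$ we have $\|\theta-\theta'\|_1 = 2\,d_H(\theta,\theta')$, where $d_H$ denotes the Hamming distance, so the requirement $\|\theta-\theta'\|_1\geq n/2$ is equivalent to $d_H(\theta,\theta')\geq \lceil n/4\rceil$.

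The construction proceeds greedily: initialize $\varTheta_n=\emptyset$, and repeatedly adjoin any $\theta\in\{-1,1\}^n$ whose Hamming distance to every current element of $\varTheta_n$ is at least $\lceil n/4\rceil$, until no such $\theta$ exists. By construction, any two distinct elements of the resulting $\varTheta_n$ satisfy the required minimum-distance property. Upon termination, every point of $\{-1,1\}^n$ lies within Hamming distance $r:=\lceil n/4\rceil-1$ of some element of $\varTheta_n$, so the Hamming balls of radius $r$ centered at elements of $\varTheta_n$ cover the cube. This gives the counting inequality
\[
|\varTheta_n|\cdot\sum_{k=0}^{r}\binom{n}{k}\;\geq\;2^n.
\]

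The next step is to control the ball size by the standard binary-entropy estimate $\sum_{k=0}^{\alpha n}\binom{n}{k}\leq 2^{nH_2(\alpha)}$ for $\alpha\leq 1/2$, where $H_2(\alpha)=-\alpha\log_2\alpha-(1-\alpha)\log_2(1-\alpha)$. Taking $\alpha=1/4$ yields $|\varTheta_n|\geq 2^{n(1-H_2(1/4))}$. A direct numerical comparison gives $1-H_2(1/4)\approx 0.1887$, while $\tfrac{1}{8\ln 2}\approx 0.1803$, so $2^{n(1-H_2(1/4))}\geq 2^{n/(8\ln 2)} = e^{n/8}$, as desired.

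The only mildly delicate point is that the entropy bound is sharp only for $n$ not too small, so one should separately verify the claim for small $n$ (where a trivial choice such as $\varTheta_n=\{(1,\dots,1),(-1,\dots,-1)\}$ or a two-element antipodal pair already gives $|\varTheta_n|\geq 2\geq e^{n/8}$ up to $n\leq 5$, and the entropy argument dominates afterward). Alternatively, as the statement itself reproduces the Gilbert--Varshamov bound in its standard form, one can simply invoke \cite{gilbert1952comparison,varshamov1957estimate} without further calculation. The main (and only) obstacle is this bookkeeping of constants; the probabilistic content is entirely captured by the covering argument above.
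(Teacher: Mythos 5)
Your proof is correct. Note that the paper itself offers no proof of this lemma at all; it simply invokes \cite{gilbert1952comparison} and \cite{varshamov1957estimate}, so your greedy packing argument is a genuine, self-contained addition rather than a variant of the paper's route. The chain of steps is sound: the identity $\|\theta-\theta'\|_1=2d_H(\theta,\theta')$ reduces the claim to a minimum Hamming distance of $\lceil n/4\rceil$; maximality of the greedy set forces the radius-$(\lceil n/4\rceil-1)$ balls to cover the cube, giving $|\varTheta_n|\sum_{k=0}^{r}\binom{n}{k}\geq 2^n$; and the entropy estimate with $\alpha=1/4$ together with the numerical comparison $1-H_2(1/4)\approx 0.1887 > \frac{1}{8\ln 2}\approx 0.1803$ closes the gap. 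One small simplification: the bound $\sum_{k=0}^{\lfloor\alpha n\rfloor}\binom{n}{k}\leq 2^{nH_2(\alpha)}$ for $\alpha\leq 1/2$ is valid for \emph{every} $n\geq 1$ (it follows from $1\geq\sum_{k\leq\alpha n}\binom{n}{k}\alpha^k(1-\alpha)^{n-k}$ and monotonicity of $(\alpha/(1-\alpha))^k$), so the separate treatment of small $n$ is unnecessary — though your antipodal-pair fallback for $n\leq 5$ is itself correct, so nothing breaks. The only cosmetic issue is the phrase ``probabilistic content'' in the last sentence: your argument is purely combinatorial (greedy packing plus volume counting), not probabilistic; the probabilistic route would be the alternative Chernoff-plus-union-bound selection of random sign vectors, which you did not need.
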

Fix a $\varTheta_c$ with $n=S_cK$ and a $\varTheta_u$ with $n=S_u$, we consider the family of CMDPs $\mathfrak{M}:=\left\{\cM_{\theta}\right\}_{\theta\in\varTheta_c\times\varTheta_u}$. Intuitively, CMDPs from this family are hard to distinguish according to samples. This idea can be shown mathematically by the following generalized version of Fano's inequality from \cite[Lemma 3]{assouad1996fano}.
\begin{lemma}[Generalized Fano's inequality]\label{lemma:fano}
Let $r \geq 2$ be an integer and let $\mathcal{P}$ be a set of $r$ probability measures on $(\Omega,\cF)$. Assume that $\theta(\PP)$ is the parameter of interest with values in a pseudo-metric space $(\mathcal{D}, d)$. Let $\hat{\theta}=\hat{\theta}(X)$ be an estimator of $\theta(\PP)$ based on a sample $X$ from a distribution $\PP\in\cP$. Assume that
\[
d\left(\theta(\PP), \theta(\PP')\right) \geq \alpha,\quad\forall \PP,\PP'\in\cP,
\]
and
\[
\KL{\PP}{\PP'}=\int_{\Omega} \log \left(\frac{d\PP}{d\PP'}\right) d\PP \leq \beta.
\]
Then it holds that
\[
\max_{\PP\in\cP}\EE_{\PP} d\left(\hat{\theta}, \theta\left(\PP\right)\right) \geq \frac{\alpha}{2}\left(1-\frac{\beta+\log 2}{\log r}\right).
\]
\end{lemma}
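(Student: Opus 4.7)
The plan is to follow the standard reduction from estimation to hypothesis testing, combined with the classical Fano inequality. First, I would introduce a uniform prior on $\cP$ (viewing the index $\PP$ as a random variable $\Theta$ uniformly distributed on the $r$ measures) and define the minimum-distance test $\psi(\hat\theta) := \arg\min_{\PP \in \cP} d(\hat\theta, \theta(\PP))$. A short triangle-inequality argument shows that whenever $d(\hat\theta, \theta(\PP)) < \alpha/2$, the test necessarily returns the true $\PP$: for any $\PP' \neq \PP$, we have $d(\hat\theta, \theta(\PP')) \geq d(\theta(\PP), \theta(\PP')) - d(\hat\theta, \theta(\PP)) > \alpha/2 > d(\hat\theta, \theta(\PP))$. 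Contrapositively, a testing error implies $d(\hat\theta, \theta(\PP)) \geq \alpha/2$, and Markov's inequality then yields
\[
	\EE_{\PP}\, d(\hat\theta, \theta(\PP)) \;\geq\; \frac{\alpha}{2}\, \PP\big(\psi(\hat\theta) \neq \PP\big).
\]

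Averaging this bound over the uniform prior on $\PP \in \cP$ and then passing to the maximum, it will suffice to lower bound the (average) testing error probability. This is exactly what the standard Fano inequality provides: for a uniform prior on $r$ hypotheses, any test $\psi$ based on an observation $X$ satisfies $\Pr(\psi \neq \Theta) \geq 1 - (I(\Theta; X) + \log 2)/\log r$, where $I(\Theta; X)$ is the mutual information. Plugging this into the previous display and taking the max over $\PP \in \cP$ immediately gives the claimed bound, provided we can show $I(\Theta; X) \leq \beta$.

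The remaining step, which I regard as the main technical point, is to translate the pairwise KL hypothesis $\KL{\PP}{\PP'} \leq \beta$ into a bound on the mutual information. Writing $\bar\PP := \frac{1}{r}\sum_{\PP \in \cP} \PP$, we have $I(\Theta; X) = \frac{1}{r}\sum_{\PP} \KL{\PP}{\bar\PP}$, and the convexity of $-\log$ gives $\KL{\PP}{\bar\PP} \leq \frac{1}{r}\sum_{\PP'} \KL{\PP}{\PP'} \leq \beta$. Substituting this into Fano's bound and combining with the Markov step above yields the required inequality. The only subtlety is that the minimizer defining $\psi$ may fail to be unique (ties on the boundary), which is handled by an arbitrary tie-breaking rule since it does not affect the distance bound used above.
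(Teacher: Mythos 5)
The paper offers no proof of this lemma --- it is quoted directly from the cited reference (\cite[Lemma 3]{assouad1996fano}) --- so there is nothing internal to compare against. Your argument (minimum-distance test with arbitrary tie-breaking, the triangle inequality to show that a testing error forces $d(\hat\theta,\theta(\PP))\ge \alpha/2$, Markov's inequality, the classical Fano bound under a uniform prior, and the convexity estimate $I(\Theta;X)=\frac{1}{r}\sum_{\PP\in\cP}\KL{\PP}{\overline{\PP}}\le \frac{1}{r^{2}}\sum_{\PP,\PP'\in\cP}\KL{\PP}{\PP'}\le\beta$) is the standard proof of precisely this statement and is correct as written.
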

It is worth noting that the estimator needs not to belong to $\left\{\theta(\PP)\right\}_{\PP\in\cP}$.
In our problem, the underlying space $(\Omega,\cF)$ depends on the internal randomness of $\mathfrak{A}$, and the probability measure on $(\Omega,\cF)$ is the extension of $\mu_{\theta}^{\otimes N}$ ($\mu_{\theta}^{\otimes N}$ is the probability measure on $\Omega_0=\left(\cS\times\cA\times\cS\right)^N$, the space of the $N$-tuple of samples $(\zeta_1,\cdots,\zeta_N)$).


\paragraph{The proof of \cref{thm:lower-bound-demo}} 
We have already demonstrated that $4K\hpi_c$ can be viewed as an estimator of $\ttheta_c$ in \cref{lemma:lower-bound-regret}, and hence $8K\hpi_c-1$ can be viewed as an estimator of $\theta_c$. We fix a $\theta_u\in\varTheta_u$, then Fano's inequality (\cref{lemma:fano}) yields
\[\begin{aligned}
  \max_{\theta_c\in\varTheta_c}\EE_{(\theta_c,\theta_u)}\nrm{8K\hpia-1-\theta_c}_1
  &\geq \frac{S_cK}{2}\left(1-\frac{\max_{\theta_c,\theta'_c\in\varTheta_c}\KL{\mu_{(\theta_c,\theta_u)}^{\otimes N}}{\mu_{(\theta_c',\theta_u)}^{\otimes N}}+\log 2}{\log |\varTheta_c|}\right)\\
  &= \frac{S_cK}{2}\left(1-\frac{N\max_{\theta_c,\theta'_c\in\varTheta_c}\KL{\mu_{(\theta_c,\theta_u)}}{\mu_{(\theta_c',\theta_u)}}+\log 2}{\log |\varTheta_c|}\right).
\end{aligned}\]
For any $\theta_c,\theta'_c\in\varTheta_c$, we have
\[\begin{aligned}
  \KL{\mu_{(\theta_c,\theta_u)}}{\mu_{(\theta_c',\theta_u)}}
  =&\sum_{i,j}\mu(s_1^j,a_i)\KL*{\frac{1+\theta_{i,j}\varpi_c}{2}}{\frac{1+\theta_{i,j}'\varpi_c}{2}}\\
  \leq& \sum_{i,j}\mu(s_1^j,a_i)\frac{4\varpi_c^2}{1-\varpi_c^2}
  = \frac{(1-\gamma)v_1}{2C}\frac{\varpi_c^2}{1-\varpi_c^2}.
\end{aligned}\]
Then, taking $\varpi_c=\min\left\{\sqrt{\frac{(S_cK-3)C}{8(1-\gamma)N}},\frac12\right\}$ is enough to ensure
\[\begin{aligned}
  \frac{N\max_{\theta_c,\theta'_c\in\varTheta_c}\KL{\mu_{(\theta_c,\theta_u)}}{\mu_{(\theta_c',\theta_u)}}+\log 2}{\log |\varTheta_c|}\leq \frac56,
\end{aligned}\]
which further gives $\max_{\theta_c\in\varTheta_c}\EE_{(\theta_c,\theta_u)}\nrm{8K\hpia-1-\theta_c}_1\geq \frac{S_cK}{12}$, and hence
\[\begin{aligned}
  \max_{\theta_c\in\varTheta_c}\EE_{(\theta_c,\theta_u)}\left[\frac{1}{S_cK}\nrm{4K\hpia-\ttheta_c}_1\right]\geq \frac{1}{24}.
\end{aligned}\]
Similarly, we can take $\varpi_u=\min\left\{\sqrt{\frac{(S_u-3)C}{8(1-\gamma)N}},\frac12\right\}$ to ensure that for any fixed $\theta_c\in\varTheta_c$, 
\[\begin{aligned}
  \max_{\theta_u\in\varTheta_u}\EE_{(\theta_c,\theta_u)}\left[\frac{1}{S_u}\nrm{\hpi_u-\ttheta_u}_1\right]\geq \frac{1}{24}
\end{aligned}\]
Therefore, we obtain
\begin{align*}
&\max_{\theta\in\varTheta}\EE_{\theta}\cL(\hpi;\theta)\\
&\geq \max_{\theta\in\varTheta}\EE_{\theta}\left[
  \frac{v\rho_c\varpi_c}{8K(1-\gamma)}\nrm{4K\hpi_c-\ttheta_c}_1+\frac{v\rho_u\varpi_u}{1-\gamma}\nrm{\hpi_u-\ttheta_u}
\right]\\
&=\frac{v}{2(1-\gamma)}\max_{\theta_c\in\varTheta_c}\max_{\theta_u\in\varTheta_u}\left\{
  \frac{\varpi_c}{8}\EE_{(\theta_c,\theta_u)}\left[\frac{1}{S_cK}\nrm{4K\hpi_c-\ttheta_c}_1\right]
  +\varpi_u\EE_{(\theta_c,\theta_u)}\left[\frac{1}{S_u}\nrm{\hpi_u-\ttheta_u}\right]
\right\}\\
&\geq \frac{v}{2(1-\gamma)}\left(\frac{\varpi_c}{192}+\frac{\varpi_u}{24}\right)
\geqsim \min\left\{\frac{1}{1-\gamma},\sqrt{\frac{S_cK+S_u}{(1-\gamma)^3N}}\right\}
\geqsim \min\left\{\frac{1}{1-\gamma},\sqrt{\frac{\minop{SA, S+I}}{(1-\gamma)^3N}}\right\}.
\end{align*}

In conclusion, for a fixed algorithm $\mathfrak{A}$, there exists some $\theta\in\varTheta_c\times \varTheta_u$, such that for the policy $\hpi$ output by $\mathfrak{A}$ on $\cM_{\theta}$, either
\[
  \EE_{\cM_\theta}\left[J^*_{\theta}-J(\hpi)\right]\geqsim \min\left\{\frac{1}{1-\gamma},\sqrt{\frac{\minop{SA, S+I}}{(1-\gamma)^3N}}\right\},
\]
or 
\[\EE_{\cM_\theta}\left[\violation(\hpi)\right]\geqsim 1.\]
This completes the proof of \cref{thm:lower-bound-demo}.

\begin{remark}
  The family $\left(\cM_{\theta}\right)$ constructed here does not satisfy the Slater's condition with $\varphi=\Theta(1)$, but a small modification can be made to ensure a $\varphi$ with constant order. Namely, at each $s_0^j$ we add two extra arms $e,e'$, such that $r(s_0^j,e)=0, r(s_0^j,e')=-1$ and all utilities of $e'$ is 1. The transition at $s_0^j$ is not affected by $e,e'$. We omit this construction in the argument above for the sake of cleanness and simplicity.
\end{remark}

\subsection{Proof of Theorem \ref{theorem:slater}}

We further extend the idea of construction in \cref{appdx:LowerBound-1} to show that, when the Slater's condition does not hold, no zero constraint violation can be ensured. Intuitively, we can directly include an extra constraint $J(\pi)\geq J^*$ in the previous construction. However, the subtlety in such a transfer is that, the constraint will leak information of the underlying parameters $\theta, \varpi$. Thus, rather than making ad hoc adaption from \cref{appdx:LowerBound-1}, we present a more interesting construction for the case $I=1$, as follows.

\paragraph{States and actions} We take the state space $\cS=\left\{s_{-1}, s_0, \sp, \sm\right\}\bigsqcup_{j=1}^S \left\{s^j\right\}$, with actions and transition dynamic specified as follows. Here we merge the states $s_0^j, \sp^j, \sm^j$ in \cref{appdx:LowerBound-1} for notational simplicity. The transition dynamic is parametrized by $\theta\in\{0,1\}^S$ and $\varpi\in(0, \frac{1}{2}]$, as follows.
\begin{itemize}
    \item At $s_0$, $\sp$ and $\sm$, there is no action, and the transition does not depend on $\theta$:
    \begin{equation}
    \begin{aligned}
      & \condP{s_0}{s_0}=p,
      &&\condP{s^j}{s_0}=\frac{1-p}{S},\ j\in[S],\\
      & \condP{\sp}{\sp}=q, 
      &&\condP{s_0}{\sp}=1-q,\\
      & \condP{\sm}{\sm}=q, 
      &&\condP{s_0}{\sm}=1-q,
    \end{aligned}
    \end{equation}
      where $p=\frac{1}{2-\gamma}$ and $q=2-\frac{1}{\gamma}$. 
    \item At $s^j$, there are two actions $a,b$ such that
    \[\begin{aligned}
    &\condPt{\sp}{s^j,a}=\frac{1-\varpi\theta_{j}}{2}, 
    &&\condPt{\sm}{s^j,a}=\frac{1+\varpi\theta_{j}}{2}, \\
    &\condPt{\sp}{s^j,b}=\frac{1-\varpi(1-\theta_{j})}{2}, 
    &&\condPt{\sm}{s^j,b}=\frac{1+\varpi(1-\theta_{j})}{2}.
    \end{aligned}\]
    \item The null state $s_{-1}$ always transits to itself.
\end{itemize}
  
\paragraph{Utilities and rewards} We assign $u(\sp)=+1,u(\sm)=-1$, and $u(s_0)=u(s^j)=0$. No reward is assigned to $\cM$, namely the only goal in $\cM$ is to fulfill the constraint: $J^u(\pi)\geq 0$. Basically, this constraint requires us to determine whether $\theta_i=1$ for each $i$.

\newcommand{\ovr}{\overline{r}}

\paragraph{Optimal policy} For any policy $\pi$, we define
\[\begin{aligned}
  r_j(\pi)=\theta_j\pi(a|s_1^j)+(1-\theta_j)\pi(b|s_1^j),\qquad \ovr(\pi)=\frac{1}{S}\sum_{j=1}^S r_j(\pi).
\end{aligned}\]
Then by exactly the same calculation as in \cref{appdx:LowerBound-1}, we have
\begin{align*}
&\nu_{\pi}(s_0)=\frac{v_0}{1-\gamma},
&&\nu_{\pi}(s^j)=\frac{v_1}{S},\\
&\nu_{\pi}(\sp)=\frac{1-\varpi\overline{r}(\pi)}{2}\frac{v}{1-\gamma},
&&\nu_{\pi}(\sm)=\frac{1+\varpi\overline{r}(\pi)}{2}\frac{v}{1-\gamma}.
\end{align*}
Therefore, it holds that
\begin{equation}\label{eqn:lower-bound-2-est}
\begin{aligned}
  J^u(\pi)=-\frac{v}{1-\gamma}\ovr(\pi)=-\frac{v}{S(1-\gamma)}\nrm{\pi_b-\theta}_1,
\end{aligned}
\end{equation}
where we denote $\pi_b=\left(\pi(b|s_1^j)\right)_j$ for a policy $\pi$. Hence, there is a unique safe policy $\pi^{*,\theta}$ in $\cM_{\theta}$ that can be specified by
\[\begin{aligned}
  \pi^{*,\theta}(a|s^j)=1-\theta_j, \qquad \pi^{*,\theta}(b|s^j)=\theta_j,\qquad j\in[S].
\end{aligned}\]
The formula \eqref{eqn:lower-bound-2-est} also indicates that, for $\hpi$ outputed by an algorithm $\mathfrak{A}$ after consuming $N$ samples, the vector $\hpi_b$ can be viewed as an estimator of $\theta$.

\paragraph{Reference distribution} We take $\rho_0(s_0)=1$. The reference distribution $\mu$ is chosen similar to \cref{appdx:LowerBound-1}, namely
  \[\begin{aligned}
  &\mu(s_0)=\frac{v_0}{C},\qquad\qquad\qquad\mu(s^j,a)=\mu(s^j,b)=\frac{v_1(1-\gamma)}{SC},\\
  &\mu(\sp)=\mu(\sm)=\frac{v}{C},\qquad\,\,
  \mu(s_{-1})=1-\mu(s_0)-\mu(\sp)-\mu(\sm)-\sum_{j} \mu(s^j).
  \end{aligned}\]
As long as $C\geq 2$, $\mu(s_{-1})$ defined above is positive. Also, for any $\theta$, it holds that
  \[\max_{s,a}\frac{\nu^{\pi^{*,\theta}}(s,a)}{\mu(s,a)}\leq \frac{C}{1-\gamma}, \quad \sum_{s,a}\frac{\nu^{\pi^{*,\theta}}(s,a)}{\mu(s,a)}\leq \frac{(\nS+1) C}{1-\gamma}.\]

\paragraph{Lower bound} Still, we take a subset $\varTheta$ of $\{0,1\}^{S}$ such that $|\varTheta| \geq \exp (S / 8)$, and for any pair of different $\theta, \theta'\in \varTheta$ it holds $\left\|\theta-\theta'\right\|_{1} \geq \frac{S}{4}$. We next consider the family of CMDPs $\mathfrak{M}:=\left\{\cM_{\theta}\right\}_{\theta\in\varTheta}$, with the reference $\mu_{\theta}=\mu\otimes\PP_{\theta}$. 

By Fano's inequality (\cref{lemma:fano}), it holds that
\[\begin{aligned}
  \max_{\theta\in\varTheta}\EE_{\theta}\nrm{\hpi_b-\theta}_1
  &\geq \frac{S}{4}\left(1-\frac{N\max_{\theta,\theta'\in\varTheta}\KL{\mu_{\theta}}{\mu_{\theta'}}+\log 2}{\log |\varTheta|}\right).
\end{aligned}\]
We also have $\max_{\theta,\theta'\in\varTheta}\KL{\mu_{\theta}}{\mu_{\theta'}}\leq \frac{2\varpi^2(1-\gamma)}{C}$ by a simple calculation.
Therefore, taking $\varpi=\min\left\{\sqrt{\frac{(S-3)C}{16(1-\gamma)N}},\frac12\right\}$ is enough to ensure $\max_{\theta\in\varTheta}\EE_{\theta}\nrm{\hpi_b-\theta}_1\geq \frac{S}{24}$. Hence, we obtain
\[\begin{aligned}
  \max_{\theta\in\varTheta}\EE_{\theta}\neg{J^u(\hpi;\theta)}
  =\max_{\theta\in\varTheta}\EE_{\theta}\left[\frac{v\varpi}{S(1-\gamma)}\nrm{\hpi_b-\theta}_1\right]
  \geq \frac{v\varpi}{24(1-\gamma)}
  \geqsim \min\left\{\sqrt{\frac{SC}{(1-\gamma)^3N}},\frac{1}{1-\gamma}\right\}.
\end{aligned}\]


\section{The \adaptiveAlg\ framework}
\label{appdx:adaptive}
\subsection{The verification method}\label{subsect:verify}
First, let us provide the details of the $\verify(\cdot)$ method that is used in \cref{algo:adaptive-algB-demo}. 

\begin{algorithm}[h]
	\caption{$\verify(\ox)=\verify(\ox;\epsilon,\delta)$}\label{algo:verify}
	\Input{
		The output $\ox$ and the parameters $\epsilon,\delta>0$ in  \cref{algo:algB}.\vspace{0.1cm}
	} 
	Obtain $N_v$ offline samples $\big\{(s_t,a_t,s_t',r_t,\bu_t)\big\}_{t=1}^{N_v}$ from $\cD$\;\vspace{0.1cm}
	Compute the estimators $\hJ(\opi), \hJuk(\opi)\in\RR$ and $\widehat{\Delta}_p\in\RR^{\nS}$ as \vspace{-0.1cm}
	\begin{equation}
			\hJ(\opi):=\frac{1}{N_v}\sum_{t=1}^{N_v} r_t\frac{\ox(s_t,a_t)}{\hmu(s_t,a_t)},\quad\qquad
			\hJuk(\opi):=\frac{1}{N_v}\sum_{t=1}^{N_v} \bu_t^{\kappa}\frac{\ox(s_t,a_t)}{\hmu(s_t,a_t)},\nonumber
	\end{equation}
	\begin{equation}
			\widehat{\Delta}_p(s'):=
			\sum_{a} \frac{N(s',a)}{N_v}\frac{\ox(s',a)}{\hmu(s',a)}-\gamma\sum_{s,a}\frac{N(s,a,s')}{N_v}\frac{\ox(s,a)}{\hmu(s,a)}-\rho_0(s'),\,\,\,\, \forall s'\in\cS,\nonumber
	\end{equation}
	where $N(s,\!a), N(s,\!a,\!s')$ are the times that $(s,\!a)$ and $(s,\!a,\!s')$ are observed in the $N_v$ samples.\vspace{0.2cm}
	
	\eIf{$\,\,\,\,\big\|\hdelta_p\big\|_1\leq \frac{3}{2}\varphi(1-\gamma)\epsilon\,\,\,\,\&\&\,\,\,\,\big\|\hJuk(\opi)\big\|_{\infty}\leq 3\varphi\epsilon$\vspace{0.1cm}}{
		Return $\verify(\ox)=\true$, and return $\hJ(\opi)$ as an estimate of $J(\opi)$\;
	}{
		Return $\verify(x)=\false$\;
	}
\end{algorithm}

As a remark, $\widehat{\Delta}_p$ is an estimator of the residual $A^\top W\ox-\rho_0$, where $A$ is defined in \eqref{eqn:A}, that is $\EE_{\cD}\big[\widehat{\Delta}_p\big] =  A^\top W\ox-\rho_0$. By a direct computation, we also know $\EE_{\cD}\big[\hJ(\opi)\big] = r^\top W\ox$ and  $\EE_{\cD}\big[\hJuk(\pi)\big] = U_\kappa W\ox$. Intuitively, when $\|\widehat{\Delta}_p\|_1$ is small, then $W\ox$ is a good approximation of $\nu^{\opi}$ and thus $\hJ(\opi),\hJuk(\opi)$ are good approximations of $J(\opi),J^{u^\kappa}(\opi)$. With this in mind, we present the following proposition that characterizes the \verify\ method, whose proof is moved to \cref{appdx:pf-verify}.

\begin{proposition}\label{prop:verify}
	For the \verify\ method, if we choose $N_v\geq \frac{64\nS\hphi\ell}{\varphi^2(1-\gamma)^4\epsver^2}$, with $\ell=4\log\left(\frac{40\nS I}{\delta}\right)$ and $\epsver=\frac{\epsilon}{10}$, then \highprobdemo, it holds that:
	
	(1). If $\verify(\outx)=\false$, then $\hphi<C^*$.
	
	(2). If $\verify(\ox)=\true$, then $ J^{u^\kappa}(\opi)\geq 0$, and $\,\,j_0(\hphi)-400\epsilon\leq \hJ(\opi)\leq j_0(\hphi)+100\epsilon.$
\end{proposition}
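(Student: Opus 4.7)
My plan is to combine Bernstein-type concentration for the three estimators $\hJ(\opi)$, $\hJuk(\opi)$, and $\hdelta_p$ with the duality-gap machinery of \cref{sec:algB-analysis}, \cref{corollary:regret-ox-ov-v}, and the policy--occupancy bridge in \cref{lemma:LP-to-policy}. First I would verify that the three expectations behave as intended: a direct computation shows $\EE[\hJ(\opi)] = r^\top W\ox$, $\EE[\hJuk(\opi)] = U_\kappa W\ox$, and $\EE[\hdelta_p] = A^\top W\ox - \rho_0$, where $A$ is the matrix defined in \eqref{eqn:A}. I would then apply \cref{lemma:concen-vec} to each of the three estimators, using (a) the uniform bound $\frac{\ox(s,a)}{\hmu(s,a)}\leq \frac{\hphi}{1-\gamma}$ that is built into $\cX$, and (b) the estimate $\mu(s,a)\leq 2\hmu(s,a)$ together with $\sum_{s,a}\ox(s,a)\leq\frac{4}{1-\gamma}$ from \cref{prop:empirical-dis}. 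These give per-sample magnitude $\mathcal{O}(\hphi/(1-\gamma))$ and per-sample $\ell_2$-variance $\mathcal{O}(\hphi/(1-\gamma)^2)$ for the summands of $\hdelta_p$; passing from the $\ell_2$ to the $\ell_1$ deviation on $\RR^{\nS}$ via $\|\cdot\|_1\leq\sqrt{\nS}\|\cdot\|_2$ is where the $\nS$ factor in the stated sample size $N_v$ comes from. With the chosen $N_v$, a union bound yields, with probability at least $1-\delta$, the three simultaneous deviations
\[
|\hJ(\opi)-r^\top W\ox|\leq\epsver,\quad \|\hJuk(\opi)-U_\kappa W\ox\|_{\infty}\leq \varphi\epsver,\quad \|\hdelta_p-(A^\top W\ox-\rho_0)\|_1\leq \varphi(1-\gamma)\epsver.
\]
All subsequent reasoning is conditioned on this event.

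For part (1) I would argue by contraposition: assuming $\hphi\geq C^*$, \cref{corollary:regret-ox-ov-v} supplies $\|A^\top W\ox-\rho_0\|_1 \leq \tfrac{11}{8}\varphi(1-\gamma)\epsilon$ and $\|\neg{U_\kappa W\ox}\|_{\infty}\leq \tfrac{11}{4}\varphi\epsilon$. Combined with the concentration above and the choice $\epsver=\epsilon/10$, both thresholds appearing in \cref{algo:verify} are met (reading the $\hJuk$ test in its natural role as a certificate of approximate satisfaction of the $\kappa$-shifted constraints), so \verify\ returns \true; the contrapositive is exactly part (1).

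For part (2), suppose \verify\ returns \true. The concentration of $\hdelta_p$ and $\hJuk$ lifts the two thresholds to bounds on the deterministic quantities: $\|A^\top W\ox-\rho_0\|_1=\mathcal{O}(\varphi(1-\gamma)\epsilon)$ and $\|\neg{U_\kappa W\ox}\|_\infty=\mathcal{O}(\varphi\epsilon)$. Applying \cref{lemma:LP-to-policy} to $\ox$, combined with $\|W\ox-\ox\|_1\leq\varphi(1-\gamma)\epsilon_e$ from \cref{prop:empirical-dis}(3), yields $\|\nu^{\opi}-W\ox\|_1=\mathcal{O}(\varphi\epsilon)$. Transferring the shifted-utility bound from $W\ox$ to $\nu^{\opi}$ and absorbing the slack into $\kappa=5\varphi\epsilon$ then gives $J^u_i(\opi)=J^{u^\kappa}_i(\opi)+\kappa\geq 0$ for every $i$. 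For the value inequalities, I would decompose
\[
r^\top W\ox \,=\, \cJ_0(\ox) \,+\, \RV\|A^\top W\ox-\rho_0\|_1 \,+\, \RLam\|\neg{UW\ox}\|_\infty,
\]
where $\RV=\mathcal{O}((\varphi(1-\gamma))^{-1})$ and $\RLam=\mathcal{O}(1/\varphi)$ make each correction term $\mathcal{O}(\epsilon)$ once the \verify\ thresholds are fed in (using the identity $(UW\ox)_i=(U_\kappa W\ox)_i+(1-\gamma)\kappa\,\mathbf{1}^\top W\ox$ and $\mathbf{1}^\top W\ox=\mathcal{O}(1/(1-\gamma))$ to pass from $[U_\kappa W\ox]_-$ to $[UW\ox]_-$). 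This delivers $r^\top W\ox\leq \cJ_0(\ox)+\mathcal{O}(\epsilon)\leq j_0(\hphi)+\mathcal{O}(\epsilon)$. For the matching lower bound, chain $\cJ_\kappa(\ox)\leq r^\top W\ox$ with $\cJ_\kappa(\ox)\geq j(\hphi)-\tfrac{\epsilon}{2}$ (\cref{thm:algB-gap}) and $j(\hphi)\geq j_0(\hphi)-\tfrac{2\kappa}{\varphi}=j_0(\hphi)-10\epsilon$ (\cref{lemma:eps-approx}, valid because Slater's condition is assumed on $\Pi(\hphi)$). Adding $|\hJ(\opi)-r^\top W\ox|\leq\epsver$ gives the two-sided bound $j_0(\hphi)-400\epsilon\leq\hJ(\opi)\leq j_0(\hphi)+100\epsilon$ after collecting constants.

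The main technical obstacle is the $\ell_1$-concentration of the $\nS$-dimensional vector $\hdelta_p$: this is where the sample count $N_v$ inherits its linear dependence on $\nS$, and the analysis has to exploit the joint $\cX$-constraints $\tfrac{x}{\hmu}\leq\tfrac{\hphi}{1-\gamma}$ and $\sum_{s,a}x\leq \tfrac{4}{1-\gamma}$ together with $\mu\leq 2\hmu$ to obtain the tight variance $\mathcal{O}(\hphi/(1-\gamma)^2)$. A secondary subtlety is the two-step passage from $\hJuk$ to $J^{u}$: one must carefully track the $\kappa$ shift and the $\mathcal{O}(\varphi\epsilon)$ occupancy-measure perturbation $\|\nu^{\opi}-W\ox\|_1$ simultaneously, and this is what forces $\kappa$ to be chosen on the same scale as $\varphi\epsilon$ (not smaller) in order to preserve the zero-violation guarantee.
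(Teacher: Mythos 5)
Your proposal is correct and follows essentially the same route as the paper's proof: Bernstein-type concentration of the three estimators (the paper's Lemma~\ref{lemma:verify-error}), contraposition via Corollary~\ref{corollary:regret-ox-ov-v} for part~(1), and for part~(2) the transfer from $W\ox$ to $\nu^{\opi}$ through Lemma~\ref{lemma:LP-to-policy} with Proposition~\ref{prop:empirical-dis}(3), followed by the $\RV,\RLam$-penalized decomposition of $\cJ_\kappa$ and the $\kappa$-shift accounting to obtain both the safety guarantee and the two-sided bound on $\hJ(\opi)$. The only deviations are cosmetic (an $\ell_2$-to-$\ell_1$ vector Bernstein argument in place of the paper's summed per-coordinate bounds, and working with $\cJ_0$ plus an explicit shift rather than $\cJ_\kappa$ directly), and they yield the same constants up to the slack already built into the statement.
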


Basically, this proposition states that if $\verify(\outx)=\false$, then we know $\hphi<C^*$ with high probability. If $\verify(\ox)=\true$, then we know that $\opi$ is safe, and $j_0(\hphi) = \hJ(\opi) +\mathcal{O}(\epsilon)$. We can apply Lemma \ref{lemma:adaptive-j0-error-bound-demo} to determine whether the current policy is good enough.

\subsection{The adaptive-\algB\ method} 
\label{subsect:adaptive}
In this section we will discuss the details of \cref{algo:adaptive-algB-demo}. The key to the analysis of this section is \cref{lemma:adaptive-j0-error-bound-demo}, whose proof is presented in \cref{appdx:subsec-adaptive-proof}.

\paragraph{Setting of sub-routine} We use $\epsilon'$ for the input sub-optimality of \adaptiveAlg. At each step $K$, we call \algB\ and $\verify$ with $\epsilon=\frac{\epsilon'}{15}$ and $\delta_K:=\frac{6\delta}{\pi^2K^2}$. The $\delta_K$ is chosen so that $\sum_K \delta_K=\delta$.

\paragraph{Exit condition} In \cref{algo:adaptive-algB-demo}, line 4 to 6, we write the exit condition as $-\infty <J^K\leq J^{K-1} +\mathcal{O}(\epsilon)$. More specifically, the exit condition can be equivalently stated as
\begin{equation}\label{eqn:def-exit-cond}
	\verify\big(\xk\big) \,\,\,\&\&\,\,\,
	\verify\big(\xks\big)  
	\,\,\,\&\&\,\,\, 
	\hJ\big(\pik\big)\!-\!\hJ\big(\piks\big)\!\leq 500\epsilon.
\end{equation}
Here the third condition only needs to be checked when both $\verify\big(\xk\big)$ and $\verify\big(\xks\big)$ return \true. The constant $500$ is chosen to ensure that \adaptiveAlg\ will exit for $\psi_K>2C^*$, as will be demonstrated in the following  proposition, whose proof is presented in \cref{appdx:prop-adp}.


\begin{proposition}\label{prop:adp}
	Suppose \cref{algo:adaptive-algB-demo} exits at step $K$. Then with probability at least $1\!-\!\delta$, the following results hold.
	(1) $\pik$ is safe and $\hphi_K\leq 4C^*$. 
	(2) It holds that $J^*-J(\pik)\leq \mathcal{O}\big(\frac{C^*}{\hphi_K}\epsilon\big)$. 
	(3) There is a constant $\epsilon_0(\cM)$ such that for $\epsilon'\leq \epsilon_0(\cM)$, $\hphi_K\geq C^*$.
\end{proposition}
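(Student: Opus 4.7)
The plan is to apply \cref{prop:verify} and \cref{lemma:adaptive-j0-error-bound-demo} at each step of \cref{algo:adaptive-algB-demo}, and to union-bound the failure events across all steps. Since the $K$-th call to DPDL and to VERIFY each fail with probability at most $\delta_K = 6\delta/(\pi^2 K^2)$ and $\sum_K \delta_K = \delta$, with probability at least $1-\delta$ all their guarantees hold simultaneously; the remainder of the argument is conducted on this event. I define $K^{\star} := \min\{K : \hphi_K \geq C^*\}$ (well defined by doubling), so that $\hphi_{K^{\star}} \in [C^*, 2C^*)$. For every $K \geq K^{\star}$, \cref{prop:verify}(2) yields $\verify(x^{(K)}) = \true$ together with $\hJ(\pi^{(K)}) \in [J^* - 400\epsilon,\, J^* + 100\epsilon]$, using that $j_0(\hphi) = J^*$ for $\hphi \geq C^*$ by \cref{lemma:adaptive-j0-error-bound-demo}.

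For Part (1), safety of $\pi^{(K)}$ is immediate from $\verify(x^{(K)}) = \true$ and \cref{prop:verify}(2). To get $\hphi_K \leq 4C^*$, I will show that the exit condition \eqref{eqn:def-exit-cond} must trigger no later than step $K^{\star} + 1$: both VERIFY calls succeed on this event, and $\hJ(\pi^{(K^{\star}+1)}) - \hJ(\pi^{(K^{\star})}) \leq (J^* + 100\epsilon) - (J^* - 400\epsilon) = 500\epsilon$, so $K \leq K^{\star} + 1$ and hence $\hphi_K \leq 2\hphi_{K^{\star}} < 4C^*$.

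For Part (2), the exit condition always yields $\hJ(\pi^{(K)}) - \hJ(\pi^{(K-1)}) \leq 500\epsilon$, which combined with the two-sided bound of \cref{prop:verify}(2) gives $j_0(\hphi_K) - j_0(\hphi_{K-1}) \leq 1000\epsilon$. When $\hphi_K < C^*$, applying \cref{lemma:adaptive-j0-error-bound-demo} with $\hphi' = \hphi_K$ and $\hphi = \hphi_{K-1} = \hphi_K/2$ gives
$J^* - j_0(\hphi_K) \leq \tfrac{C^* - \hphi_K/2}{\hphi_K/2}\bigl(j_0(\hphi_K) - j_0(\hphi_{K-1})\bigr) \leq \tfrac{2000\,C^*}{\hphi_K}\,\epsilon$,
while for $\hphi_K \geq C^*$ the left-hand side is zero. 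Combined with $J(\pi^{(K)}) \geq j_0(\hphi_K) - \mathcal{O}(\epsilon)$ (from DPDL's guarantee via \cref{remark:unknownC}, which applies whether or not $\hphi_K \geq C^*$), this yields $J^* - J(\pi^{(K)}) \leq \mathcal{O}(\tfrac{C^*}{\hphi_K}\epsilon)$.

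For Part (3), I argue by contradiction: suppose $\hphi_K < C^*$. The derived inequality $j_0(\hphi_K) - j_0(\hphi_{K-1}) \leq 1000\epsilon$ must still hold, but by strict monotonicity of $j_0$ on $[1, C^*]$ (\cref{lemma:adaptive-j0-error-bound-demo}) and the fact that only the finitely many queried values $\hphi_1\cdot 2^{n} < C^*$ are relevant, the quantity $\min_{K : \hphi_K < C^*}\bigl(j_0(\hphi_K) - j_0(\hphi_{K-1})\bigr) > 0$ depends only on $\cM$; setting $\epsilon_0(\cM)$ equal to a sufficiently small positive multiple of it (say $\tfrac{1}{100}$ times this minimum, absorbing the $\epsilon = \epsilon'/15$ conversion) ensures $1000\epsilon < j_0(\hphi_K) - j_0(\hphi_{K-1})$ for every relevant $\hphi_K < C^*$, contradicting the exit condition. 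The main technical obstacle will be carefully lining up the absolute constants from \cref{prop:verify} with the $500\epsilon$ slack built into \eqref{eqn:def-exit-cond}, and confirming that DPDL's guarantees in the ``$\hphi < C^*$'' regime from \cref{remark:unknownC} still deliver $J(\pi^{(K)}) \geq j_0(\hphi_K) - \mathcal{O}(\epsilon)$ rather than the weaker $J(\pi^{(K)}) \geq J^* - \epsapp(\hphi_K) - \mathcal{O}(\epsilon)$; the rest of the argument is bookkeeping on the doubling schedule and the union bound.
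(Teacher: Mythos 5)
Your proposal is correct and follows essentially the same route as the paper's proof: exit is forced once $\hphi_K\geq 2C^*$ (hence $\hphi_K\leq 4C^*$), the exit condition plus \cref{prop:verify}(2) gives $j_0(\hphi_K)-j_0(\hphi_{K-1})\leq 1000\epsilon$ which feeds into \cref{lemma:adaptive-j0-error-bound-demo}, and $\epsilon_0(\cM)$ is a minimum doubling-gap of $j_0$ (the paper takes $\min_{1\leq\hphi\leq C^*}(j_0(\hphi)-j_0(\hphi/2))$ rather than your discrete minimum, an immaterial difference). The one step you flag — that $J(\pi^{(K)})\geq j_0(\hphi_K)-\mathcal{O}(\epsilon)$ even when $\hphi_K<C^*$ — is exactly how the paper closes the argument, obtained not from \cref{remark:unknownC} but from \cref{prop:verify}(2) together with the fact that $\hJ(\opi^{(K)})$ is an $\mathcal{O}(\epsilon)$-accurate estimate of $J(\opi^{(K)})$ on the $\verify=\true$ event.
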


As a remark, $\epsilon_0$ is (up to a scalar factor) the minimum performance improvement by increasing $\psi\to 2\psi$, and the minimum of slope of $j$ as a function of $\log\hphi$ for $\hphi\in[1,C^*]$. Therefore, when \adaptiveAlg\ exits at some step $K$, the improvement that can be achieved by increase $\hphi$ grows as at most $\frac{\epsilon_0}{\hphi_K}$. If in this case $\hphi_K$ is still far small from $C^*$, then the difficulty essentially comes from a prohibitively large $C^*$.

\paragraph{Sample complexity of \adaptiveAlg} At step $K$, the samples needed for \algB\ are $\tbO{\frac{\cN\hphi_K}{\varphi^2(1-\gamma)^4\epsilon^2}}$, and the samples needed for verification are $\tbO{\frac{\nS\hphi_K}{(1-\gamma)^4\epsilon^2}}$. There are at most $\ceil{\log_2(C^*/\hphi^1)}+1$ outer steps and the $\psi_K$ is twofold at each step, thus the total samples needed are $\tbO{\frac{\cN\hphi_K}{\varphi^2(1-\gamma)^4\epsilon^2}}$ if it exits at step $K$. Especially, as long as $\epsilon\leq \epsilon_0(\cM)$, \adaptiveAlg\ ends after consuming $\tbO{\frac{\cN C^*}{\varphi^2(1-\gamma)^4\epsilon^2}}$ samples and outputs a policy which is safe and $\bigO{\epsilon}$-optimal.

\subsection{Proof of Proposition \ref{prop:verify}}
\label{appdx:pf-verify}
\begin{proof}
First, we provide the following lemma for the estimators $\widehat{\Delta}_p, \hJ(\opi)$ and $\hJuk(\opi)$. The calculation of \cref{lemma:verify-error}  is very closed to \cref{subsect:empirical-dis}, and is thus omitted. 
\begin{lemma}\label{lemma:verify-error}
	Suppose that $N_v$ and $\epsilon_{\text{ver}}$ are chosen according to \cref{prop:verify}. Denote $\onu=W\ox$, then \highprobs{3}, we have 
	$$\max\left\{\big\|\hdelta_p-(A^\top\onu-\rho_0)\big\|_1, \big|\hJ(\opi)-\iprod{r}{\onu}\big|, \big\|\hJuk(\opi)-U_{\kappa}\onu\big\|_\infty \right\}\leq\varphi(1-\gamma)\epsilon_{\text{ver}}.$$
\end{lemma}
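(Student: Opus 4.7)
The plan is to observe that each of the three estimators is a simple average of $N_v$ i.i.d.\ random quantities (by Assumption \ref{assumption:SyncData}), and then to apply Bernstein-type concentration after carefully bounding the per-sample magnitude and variance using the facts that $\ox\in\cX$ and, conditional on the good event from Proposition \ref{prop:empirical-dis}, $\mu(s,a)\le 2\hmu(s,a)$.

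First I would rewrite the three estimators in i.i.d.\ form. For $\hJ(\opi)$ the summand $X_t := r_t\,\ox(s_t,a_t)/\hmu(s_t,a_t)$ has $\EE[X_t]=\iprod{r}{W\ox}$, is bounded in absolute value by $\psi/(1-\gamma)$, and has $\EE[X_t^2]\le \sum_{s,a}\frac{\mu(s,a)}{\hmu(s,a)}\frac{\ox(s,a)}{\hmu(s,a)}\ox(s,a)\le \mathcal{O}\!\left(\psi/(1-\gamma)^2\right)$ by combining $\sum_{s,a}\ox(s,a)\le 4/(1-\gamma)$, $\ox(s,a)/\hmu(s,a)\le \psi/(1-\gamma)$, and $\mu\le 2\hmu$. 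A scalar Bernstein inequality then gives the desired $\varphi(1-\gamma)\epsver$ bound once $N_v\gtrsim \nS\psi/\big(\varphi^2(1-\gamma)^4\epsver^2\big)$. The same template applies coordinate-wise to $\hJuk(\opi)$, whose $i$-th component has identical variance/magnitude bounds up to constants (since $\|\bu^\kappa\|_\infty\le 2$); a union bound over the $I$ coordinates only contributes an extra $\log(I/\delta)$ factor that is already absorbed in the definition of $\ell$.

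For the vector estimator $\hdelta_p$, I would rewrite it as
\begin{equation*}
\hdelta_p \;=\; \frac{1}{N_v}\sum_{t=1}^{N_v}\Big(\II_{s_t}-\gamma\II_{s'_t}\Big)\frac{\ox(s_t,a_t)}{\hmu(s_t,a_t)}\;-\;\rho_0,
\end{equation*}
whose expectation is exactly $A^\top W\ox-\rho_0$. Each summand has $\ell_2$-norm at most $2\psi/(1-\gamma)$, and its second moment is bounded by $\mathcal{O}(\psi/(1-\gamma)^2)$ via the same calculation as above. The vector Bernstein inequality (Lemma \ref{lemma:concen-vec}) then yields a high-probability $\ell_2$-bound, and I would convert this to an $\ell_1$-bound via $\|\cdot\|_1\le\sqrt{\nS}\|\cdot\|_2$; the resulting threshold on $N_v$ is $\nS\psi/(\varphi^2(1-\gamma)^4\epsver^2)$, matching the statement. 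A union bound over the three events (each allocated probability $\delta/9$) completes the proof.

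The only mildly delicate point is keeping track of the dependence on $\hmu$, but this is resolved cleanly by conditioning on the event in Proposition \ref{prop:empirical-dis} (which holds with probability $1-\delta/3$), so that $\mu/\hmu\le 2$ uniformly. Beyond that, the computation is essentially the same as in Appendix \ref{subsect:empirical-dis}, which is why the paper relegates the details to a remark.
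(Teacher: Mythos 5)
Your proposal is correct and follows essentially the same route as the paper's (omitted) calculation: condition on the event of Proposition \ref{prop:empirical-dis} so that $\mu\le 2\hmu$, bound each per-sample second moment by $\sum_{s,a}\frac{\mu(s,a)}{\hmu(s,a)}\frac{\ox(s,a)}{\hmu(s,a)}\ox(s,a)=\mathcal{O}(\hphi/(1-\gamma)^2)$ using the constraints defining $\cX$, apply Bernstein, and take a union bound. The only cosmetic difference is in the $\hdelta_p$ term, where the paper applies scalar Bernstein coordinate-wise and aggregates the $\ell_1$ sum via Cauchy--Schwarz, whereas you use vector Bernstein in $\ell_2$ followed by $\|\cdot\|_1\le\sqrt{\nS}\|\cdot\|_2$; both pick up the same single $\sqrt{\nS}$ factor and yield the stated threshold on $N_v$.
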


	
	

\paragraph{Proof of the case $\verify(\outx)=\false$.}
	By \cref{corollary:regret-ox-ov-v}, it holds that when $\hphi\geq C^*$,
	\begin{equation} 
			\|A^\top\onu - \rho_0\|_1\leq \frac{11}{8}\varphi(1-\gamma)\epsilon,\quad \mbox{and}\quad \|\neg{U_{\kappa}\onu}\|_{\infty}\leq \frac{11}{4}\varphi\epsilon.\nonumber
	\end{equation}
	Combining the above inequality with \cref{lemma:verify-error} indicates that $\|\hdelta_p\|_1\leq \frac{3}{2}\varphi(1-\gamma)\epsilon$ and $\|[\hJuk(\opi)]_{-}\|_\infty\leq 3\varphi\epsilon$. This contradicts the condition for returning $\false$. Therefore, we know that $\hphi<C^*$.

\paragraph{Proof of the case $\verify(\outx)=\true$.} 
	By the condition for returning \true, we know $\|\hdelta_p\|_1\leq \frac{3}{2}\varphi(1-\gamma)\epsilon$ and $\|[\hJuk(\opi)]_{-}\|_\infty\leq 3\varphi\epsilon$. Together with \cref{lemma:verify-error}, we have
	\begin{equation} 
			\|A^\top\onu -\rho_0\|_1\leq 1.6\varphi(1-\gamma)\epsilon \qquad\mbox{and}\qquad  \nrm{\neg{U_{\kappa}\onu}}_{\infty}\leq 3.1\varphi\epsilon. \nonumber
	\end{equation}
	Similar to our analysis in \cref{subsect:duality-to-regret}, we write $\nu^{\opi}$ the true visitation measure of $\opi$. Then by \eqref{eqn:regret-true-to-fake},
	\begin{equation*}
		\begin{aligned}
			\nrm{\nu^{\onu}-\onu}_1&\leq\frac{1}{1-\gamma}\left(\nrm{A^\top\onu-\rho_0}_1+3\nrm{\ox-\onu}_1\right)\leq  1.63\varphi\epsilon. 
		\end{aligned}
	\end{equation*}
	where the term $\|\ox-\onu\|_1$ is controlled by \cref{prop:empirical-dis}. Due to the fact that $\big|\big\|\neg{U_{\kappa}\nu^{\opi}}\big\|_{\infty} - \big\|\neg{U_{\kappa}\onu}\big\|_{\infty}\big|\leq (1+5(1-\gamma)\varphi\epsilon)\nrm{\nu^{\opi}-\onu}_1\leq1.1\nrm{\nu^{\opi}-\onu}_1$ for small $\epsilon\leq \frac{1}{50(1-\gamma)}$, it holds that 
	\begin{equation}
		\begin{aligned}
			\min_i J^u_i(\opi) \geq \kappa- \big\|\neg{U_{\kappa}\nu^{\opi}}\big\|_{\infty}\geq \kappa-\nrm{\neg{U_{\kappa}\onu}}_{\infty}-1.1\nrm{\nu^{\opi}-\onu}_1\geq0.\\
		\end{aligned}
	\end{equation}
	Moreover, the definition of $\Gap(\ox)$ gives
	\[
	j(\hphi)-\Gap(\ox)=\iprod{r}{\onu}-\RV \nrm{A^\top\onu-\rho_0}_1-\RLam \nrm{[U_{\kappa}\onu]_-}_\infty\leq j(\hphi)\leq j_0(\hphi),
	\]
	which yields
	\begin{equation*}
		\begin{aligned}
			&\iprod{r}{\onu}\leq j_0(\hphi)+\RV \nrm{A^\top\onu-\rho_0}_1+\RLam \nrm{[U_{\kappa}\onu]_-}_\infty \leq j_0(\hphi)+100\epsilon,\\
			&\iprod{r}{\onu}\geq j(\hphi)-\Gap(\ox)\geq j_0(\hphi)-400\epsilon,
		\end{aligned}
	\end{equation*}
	where we use the fact that $0 \leq j_0(\hphi)-j_\kappa(\hphi)\leq \frac{64\kappa}{\varphi}=320\epsilon$. The same bound for $\hJ(\opi)$ can be derived by \cref{lemma:verify-error}. 
 \end{proof}

\subsection{Proof of Lemma \ref{lemma:adaptive-j0-error-bound-demo}} \label{appdx:subsec-adaptive-proof}
\begin{proof} 
	First we show that, when $\hphi<C^*$, $j_0(\hphi)< J^*$. Otherwise, for $x_*=\argmax_{x\in\cX} \cJ_0(x)$, it holds that $\cJ_0(x_*)=j_0(\hphi)\geq J^*$, i.e., for $\nu_*=Wx_*$,
	\[J^*-\iprod{r}{\nu_*}+\RV \nrm{A^\top  \nu_*-\rho_0}_1+\RLam\nrm{\left[U\nu_*\right]_{-}}_{\infty}\leq 0.\]
	Applying \cref{lemma:duality-gap-decomp} gives $\nrm{A^\top  \nu_*-\rho_0}_1\leq 0$, $\nrm{\left[U\nu_*\right]_{-}}_{\infty}\leq 0$, $J^*-\iprod{r}{\nu_*}\leq 0$. Thus, $\nu\in\vms\cap\safe$, and $\iprod{r}{\nu_*}\geq J^*$, which imply that $\nu_*$ is indeed an optimal solution of problem \eqref{prob:CMDP-LP}. However, $\nu_*\in W\cX\Rightarrow \nu_*\in\vms(\hphi)\Rightarrow \hphi\geq C^*$, a contradiction.
	
	Now the monotonicity is easy. We still fix an optimal $\nu_*\in\vms(C^*)$ and let $x_*:=W^{-1}\nu_*$. For $1\leq \hphi'<\hphi$, we write $x_{\hphi'}=\argmax_{x\in\cX(\hphi')}\cJ_0(x)$, $c=\frac{\hphi-\hphi'}{C^*-\hphi'}$, and we consider $x_{\hphi}:=cx_{*}+(1-c)x_{\hphi'}\in\cX(\hphi)$. It holds that
	\[j_0(\hphi)\geq \cJ_0(x_{\hphi})\geq (1-c)\cJ_0(x_{\hphi'})+c\cJ_0(x_*)=j_0(\hphi')+c(J^*-j_0(\hphi)).\]
	The proof is completed by reorganizing the above inequality.
\end{proof}

\subsection{Proof of Proposition \ref{prop:adp}}
\label{appdx:prop-adp}
\begin{proof}
	By \cref{prop:verify}, if $\hphi_{K}\geq 2C^*$, then $\hphi_{K-1}\geq C^*$. By \cref{prop:verify}, \highprobdemo\ it holds that $\verify\big(\xk\big)=\verify\big(\xks\big)=\true$, and
	\begin{equation*}
		\begin{aligned}
			&\hJ(\opi^{(K)}), \hJ(\opi^{(K-1)}) \in \left[J(\pi^*)-400\epsilon, J(\pi^*)+100\epsilon\right],\\
			\Rightarrow & \abs{\hJ(\opi^{(K)})-\hJ(\opi^{(K-1)})}\leq 500\epsilon,
		\end{aligned}
	\end{equation*}
	where we use the fact $j_0(\hphi_K)=j_0(\hphi_{K-1})=j_0(C^*)=J(\pi^*)$ from \cref{lemma:adaptive-j0-error-bound-demo}. Therefore, if $\hphi_{K}\geq 2C^*$, \adaptiveAlg\ must exit at step $K$. 
	
	Now, we only need to consider the case that \adaptiveAlg\ ends at some step $K$, but $\psi_K$ might not be greater than $C^*$. Because $\verify\big(\xk\big)=\verify\big(\xks\big)=\true$, we combine the exit condition \eqref{eqn:def-exit-cond} with \cref{prop:verify}  and derive $\abs{j_0(\hphi_K)-j_0(\hphi_{K-1})}\leq 1000\epsilon.$
	Then by \cref{lemma:adaptive-j0-error-bound-demo}, we have
	\[
	J(\pi^*)-j_0(\hphi_K)\leq \frac{2(C^*-\hphi_K)}{\hphi_K}\left(j_0(\hphi_K)-j_0(\hphi_{K-1})\right).
	\]
	Thus $J(\pi^*)-J(\pik)\leqsim \frac{C^*}{\hphi_K}\epsilon$.  Furthermore, we can define the following quantity
	\[
	\epsilon_0:=\min_{1\leq \hphi\leq C^*} \left(j_0(\hphi)-j_0(\frac{\hphi}{2})\right)>0.
	\]
	Here $\epsilon_0>0$ is due to \cref{lemma:adaptive-j0-error-bound-demo}. If $j_0(\hphi)-j(\frac{\hphi}{2})<\epsilon_0$ for some $\hphi\geq 1$, then immediately we have $\hphi\geq C^*$. If $\epsilon'=15\epsilon\leq \epsilon_0/100=:\epsilon_0(\cM)$,  \adaptiveAlg\ must exit at step $K$ with $C^*\leq \hphi_K\leq 4C^*$. By \cref{thm:algB-thm-demo}, the output policy $\pik$ is safe and $J(\pi^*)-J(\pik)\leq \epsilon'$.
\end{proof}


\section{Convergence Analysis in Asynchronous Setting}
\label{appdx:asyn}
\subsection{Mixing property of Markov chain}\label{appdx:markov-mixing}
Under the setting of the asynchronous learning (\cref{assumption:AsyncData}), we can observe a sequence of state-action trajectory generated under the behavioral policy $\pib$, namely
\[s_1,a_1,s_2,a_2,s_3,\cdots,s_n,a_n,s_{n+1},\cdots.\]
This sequence can be naturally viewed as a Markov chain $(X_t)_{t\geq1}$ where $X_t=(s_t)$, plus a marginal component $a_t\in\cA$. In the asynchronous setting, the reference distribution $\mu$ is the stationary distribution $\mub$ of this chain product with the policy $\pib$. 
As in the synchronous setting, we denote $\cF_t$ for all the history information at time $t$. Actually, by the Markov property and our update rule, conditioning on $\cF_t$ is equivalent to conditioning on $s_t,Z^t$. 
According to \cite[Section 4]{markov-chain-book}, we define the mixing time of this Markov chain as 
\begin{equation}
	\label{def:mixing-time}
	\begin{cases}
		\mathcal{E}(t):=\sup _{s \in \cS} d_{\mathrm{TV}}\left(\mub,\PP^t_{\pib}(\cdot|s_0=s)\right),\\
		t_{\mix}:= \min\{t: \mathcal{E}(t)\leq\frac{1}{4}\},
	\end{cases}
\end{equation}
where $\PP^t_{\pib}(\cdot|s_0=s)$ denotes the distribution of $s_t$ given $s_0=s$ and policy $\pib$.
By \cite[Remark 4.12]{markov-chain-book}, it holds that
\[\mathcal{E}(t)\leq 2^{-\floor{\frac{t}{t_{\mix}}}}.\]

Given the concept of the mixing time, we modify the standard Bernstein inequality for Markov chain \cite[etc.]{Bernstein-Markov,paulin2015concentration} to cover the non-stationary Markov chains.
\begin{restatable}{proposition}{Bernstein}\label{prop:bernstein-markov}
	Suppose that $\left(X_{t}\right)_{t \geq 1}$ is a Markov chain with invariant distribution $\pi$ and mixing time $t_{\mix}<+\infty$. Let $f$ be a measurable function such that $\EE_{\pi}\left[f(X)\right]=0$, $\abs{f(X)}\leq M$. Denote $\sigma^2=\EE_{\pi}\left[f(X)^2\right]$, then for $\delta\in(0,1)$, the following holds \highprobdemo
	\[\begin{aligned}
		\abs{\sum_{t=1}^n f(X_t)}\leq \sqrt{32t_{\mix}n\sigma^2\log\frac{4}{\delta}}+82t_{\mix}M\log\frac{4}{\delta}.
	\end{aligned}\]
\end{restatable}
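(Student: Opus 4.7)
\medskip

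\noindent\textbf{Proof proposal.} The plan is to combine a stationary Bernstein inequality for Markov chains (e.g.\ the Paulin--Lezaud bound) with a coupling argument that absorbs the cost of the non-stationary initialization. The key observation is that, as the statement makes no assumption on the initial distribution of $(X_t)$, the only obstruction to directly invoking an off-the-shelf stationary Bernstein inequality is the burn-in before the chain reaches $\pi$; the mixing-time bound $\mathcal{E}(t)\leq 2^{-\lfloor t/t_{\mix}\rfloor}$ already quantifies this.

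First, I would construct a Markovian coupling $\bigl((X_t,\widetilde X_t)\bigr)_{t\geq 1}$ in which $\widetilde X_t$ is a copy of the same transition kernel started from $\pi$, so that $\widetilde X_t\sim\pi$ for all $t$, while the coupling time $T:=\inf\{t:X_t=\widetilde X_t,\ \forall s\geq t\}$ satisfies $\Pr(T>t)\leq 2\mathcal{E}(t)\leq 2\cdot 2^{-\lfloor t/t_{\mix}\rfloor}$ (the classical maximal coupling). In particular, with probability at least $1-\delta/2$ we have $T\leq t_{\mix}\,\log_2(4/\delta)$. I would then decompose
\[
\sum_{t=1}^n f(X_t)=\underbrace{\sum_{t=1}^n f(\widetilde X_t)}_{\text{stationary sum}}+\underbrace{\sum_{t=1}^n\bigl(f(X_t)-f(\widetilde X_t)\bigr)}_{\text{coupling remainder}}.
\]
The coupling remainder is bounded in absolute value by $2M\sum_{t=1}^n\mathbf{1}\{X_t\neq\widetilde X_t\}\leq 2M\cdot T$, which is $\mathcal O(t_{\mix}M\log(1/\delta))$ on the good event above and is therefore absorbed into the additive term $82 t_{\mix}M\log(4/\delta)$.

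Next, to the stationary sum $\sum_{t=1}^n f(\widetilde X_t)$ I would apply a standard Bernstein inequality for stationary Markov chains, as cited in the excerpt. The ``pseudo-spectral-gap'' form of Paulin's result yields a deviation bound of order $\sqrt{t_{\mix}n\sigma^2\log(1/\delta)}+t_{\mix}M\log(1/\delta)$, which exactly matches the leading variance term $\sqrt{32 t_{\mix}n\sigma^2\log(4/\delta)}$ and contributes a further $\mathcal O(t_{\mix}M\log(1/\delta))$ to the deterministic term after adjusting constants. A union bound over the two high-probability events (success of the coupling, concentration of the stationary sum) produces the claimed inequality with confidence $1-\delta$.

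The main obstacle is a bookkeeping one: the cited ``stationary'' Bernstein bounds in the literature are stated in terms of the spectral gap or pseudo-spectral gap $\gamma_{\mathrm{ps}}$, not directly the mixing time $t_{\mix}$, so one must use the standard inequality $1/\gamma_{\mathrm{ps}}\lesssim t_{\mix}$ and carefully track the constants $32$ and $82$ through the substitution. If a cleaner self-contained argument is preferred over invoking an external theorem, an equivalent route is a blocking argument: partition $\{1,\dots,n\}$ into blocks of length $L=\Theta(t_{\mix})$, use the mixing bound to replace the state at each block boundary by an i.i.d.\ draw from $\pi$ at total cost $\mathcal O(nM\mathcal{E}(L))$, and apply the i.i.d.\ Bernstein inequality to the resulting block sums (each of variance $\mathcal O(Lt_{\mix}\sigma^2)$ and magnitude $\mathcal O(LM)$); after choosing $L\asymp t_{\mix}$ the two terms coincide with the stated bound.
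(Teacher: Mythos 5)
Your proposal is correct in substance and relies on the same external ingredient as the paper --- Paulin's Bernstein inequality for \emph{stationary} chains phrased via the pseudo-spectral gap, together with $\gamma_{\mathrm{ps}}\geq \frac{1}{2t_{\mix}}$ --- but it handles the non-stationary start differently. The paper does not couple at all: it discards the first $\tau$ terms (bounded crudely by $\tau M$ with $\tau=\lceil\log_2(2/\delta)\rceil t_{\mix}$) and then compares the probability of the bad event $\{|\sum_{i=\tau+1}^n f(X_i)|\geq x\}$ under the two initializations $X_1\sim\pi$ versus $X_1\sim\pi_1$ by conditioning on $X_{\tau+1}$, which costs only an additive $d_{\TV}(\pi,\pi_{\tau+1})\leq\Eps(\tau)\leq\delta/2$ in the failure probability. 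Your coupling route instead keeps all $n$ terms and pushes the non-stationarity into a remainder $2M\cdot T$; this is cleaner conceptually, but the step $\PP(T>t)\leq 2\cdot 2^{-\lfloor t/t_{\mix}\rfloor}$ is not quite ``the classical maximal coupling'': the maximal coupling of the time-$t$ marginals is neither coalescent nor a single coupling valid for all $t$, and the coupling inequality by itself only gives $d_{\TV}\leq\PP(T>t)$, i.e.\ the wrong direction. You need the standard block construction (attempt a coupling over successive windows of length $t_{\mix}$, each succeeding with probability at least $1-\bar d(t_{\mix})\geq 1/2$ by submultiplicativity of $\bar d$) to justify the geometric tail on $T$; with that supplied, your argument closes, and the union bound over the coupling event and the stationary concentration event gives the stated form of the bound up to the exact constants $32$ and $82$, which are artifacts of the paper's particular bookkeeping rather than anything essential. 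The paper's change-of-measure trick buys simplicity (no coupling construction needed, only a total-variation comparison of laws at a single time); your version, and the blocking alternative you sketch, buy a more self-contained and reusable argument.
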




The difficulty of analyzing Markovian gradients is the correlation between updates and samples. 
As demonstrated in \cref{subsect:async}, in our analysis, we leverage the fact that $s_{t+\tau}$ is a sample ``almost'' from $\mub$ and ``almost'' independent of $s_{t}$, as long as $\tau\geq t_\mix\cdot \log\text{factor}$. We further demonstrate this idea in the following proposition, by comparing $\cond{\hg(\cdot;\zeta_{t+\tau})}{\cF_t}$ and $\Gr(Z^t)$. 

\begin{proposition}[Almost unbiased]\label{prop:markov-diff-grad}
  For a $\cF_t$-measurable random variable $Z\in\cZ:=\cV\times\Lambda\times\cX$, it holds that
  \[\begin{aligned}
  \nrm{\cond{\hg_V(Z;\zeta_{t+\tau})}{\cF_t}-\gLv(Z)}_{1}&\leq \frac{2\hphi}{1-\gamma}\Eps(\tau),\\
  \nrm{\cond{\hg_\lambda(Z;\zeta_{t+\tau})}{\cF_t}-\gLl(Z)}_{\infty}&\leq \frac{2\hphi}{1-\gamma}\Eps(\tau),\\
  \nrm{\cond{\hg_x(Z;\zeta_{t+\tau})}{\cF_t}-\gLx(Z)}_{\infty}&\leq \frac{64}{\varphi(1-\gamma)\varsigma}\Eps(\tau).
  \end{aligned}\]
  Furthermore, for any $Z'\in\cZ$, we have
  \[\begin{aligned}
  \abs{\iprod{Z'}{\Gr(Z)-\cond{\hg(Z;\zeta_{t+\tau})}{\cF_t}}}\leq \frac{128\hphi}{\varphi(1-\gamma)^2}\Eps(\tau).
  \end{aligned}\]
\end{proposition}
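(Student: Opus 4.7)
The plan is to reduce each of the four inequalities to a total-variation change-of-measure argument between the conditional law of $\zeta_{t+\tau}$ given $\cF_t$ and its stationary counterpart, and then invoke the uniform magnitude bounds on the gradient estimators established in \cref{appdx:grad-var} and \cref{rmk:large-var}. The fresh draw $s^0_{t+\tau}\sim\rho_0$ is independent of the Markov path and enters $\hg$ only through the deterministic-in-$Z$ term $\II_{s^0}$ (in the $V$-estimator), so it injects no bias. The entire mismatch between $\cond{\hg(Z;\zeta_{t+\tau})}{\cF_t}$ and $\Gr(Z)$ therefore comes from the transition triple $(s_{t+\tau},a_{t+\tau},s_{t+\tau+1})$; both the conditional and the stationary joint laws share the same action kernel $\pib(\cdot\mid s)$ and transition kernel $\PP(\cdot\mid s,a)$ and differ only through the law of $s_{t+\tau}$, so the data-processing inequality combined with \eqref{def:mixing-time} yields a TV distance of at most $\Eps(\tau)$ between the two joint laws.

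Next, I would invoke the standard change-of-measure bound $\nrm{\EE_\nu f-\EE_{\mu_0}f}\leq 2\sup_\omega \nrm{f(\omega)}\cdot d_{\mathrm{TV}}(\nu,\mu_0)$, which holds for any norm by duality (the scalar case is $L^1$/TV duality; the vector case follows by testing against the unit ball of the dual norm). Plugging in the uniform magnitudes
\[\nrm{\hg_V(Z;\zeta)}_1\lesssim \tfrac{\hphi}{1-\gamma},\qquad \nrm{\hg_\lambda(Z;\zeta)}_\infty\lesssim \tfrac{\hphi}{1-\gamma},\qquad \nrm{\hg_x(Z;\zeta)}_\infty\lesssim \tfrac{1}{\varphi(1-\gamma)\varsigma},\]
taken from \cref{appdx:grad-var} and \cref{rmk:large-var}, the first three inequalities follow immediately.

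For the final inner-product bound, the crude $\nrm{Z'}\cdot\nrm{\text{bias}}$ split would introduce a prohibitive $1/\varsigma$ factor through the $x$-coordinate, so instead I apply the scalar TV inequality to the single real-valued function $\omega\mapsto\iprod{Z'}{\hg(Z;\zeta)}$ and bound this scalar uniformly by $O\!\big(\tfrac{\hphi}{\varphi(1-\gamma)^2}\big)$. The $V$- and $\lambda$-contributions are handled by H\"older with the domain diameters $\nrm{V'}_\infty=O\!\big(\tfrac{1}{\varphi(1-\gamma)}\big)$ and $\nrm{\lambda'}_1=O\!\big(\tfrac{1}{\varphi}\big)$. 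For the $x$-contribution, the crucial observation is that $\hg_x(Z;\zeta)$ is supported on the single coordinate $(s_{t+\tau},a_{t+\tau})$, so
\[\abs{\iprod{x'}{\hg_x(Z;\zeta)}}=\tfrac{x'(s,a)}{\hmu(s,a)}\,\abs{r+\gamma V(s)-V(s')+\iprod{\bu^\kappa}{\lambda}}\lesssim \tfrac{\hphi}{1-\gamma}\cdot\tfrac{1}{\varphi(1-\gamma)},\]
using the feasibility bound $\tfrac{x'(s,a)}{\hmu(s,a)}\leq\tfrac{\hphi}{1-\gamma}$ from $x'\in\cX$ together with the $V,\lambda$ domain sizes. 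Multiplying by $2\Eps(\tau)$ then gives the announced $\tfrac{128\hphi}{\varphi(1-\gamma)^2}\Eps(\tau)$ estimate.

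The main subtlety is that \eqref{def:mixing-time} controls only the \emph{marginal} TV distance of $s_{t+\tau}$, whereas the estimators depend on the full transition triple together with the independent draw $s^0_{t+\tau}$; lifting the marginal bound to the joint law requires a short but explicit data-processing step, which is essentially the only nontrivial ingredient. Everything else is bookkeeping, and once these pointwise bias bounds are in hand they plug directly into the decomposition \eqref{eqn:markov-decomp-demo} with the choice $\tau=\Theta(t_{\mix}\log T)$, producing the $\tbO{\exp(-\tau/t_{\mix})}$ bias decay used in the asynchronous analysis.
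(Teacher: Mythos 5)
Your proposal is correct and follows essentially the same route as the paper: the paper computes $\cond{\hg(Z;\zeta_{t+\tau})}{\cF_t}$ in closed form via the reweighting matrix $W^{\tau,s_t}$ and bounds its difference from the true gradient coordinate-wise, which is precisely your total-variation change-of-measure argument (including the data-processing lift from the marginal of $s_{t+\tau}$ to the joint law of the transition triple, and the single-coordinate-support observation for $\hg_x$ that avoids the $1/\varsigma$ blow-up in the inner-product bound). The only caveat is constant bookkeeping — your generic inequality $\abs{\EE_{\nu}f-\EE_{\mu}f}\leq 2\sup\abs{f}\,d_{\mathrm{TV}}(\nu,\mu)$ can lose a factor of $2$ against the stated constants depending on the TV normalization, but this is immaterial since every downstream use of the proposition is up to universal constants.
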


The following proposition indicates that, the estimator $\hg(\cdot;\zeta_{t+\tau})$ is not only ``nearly unbiased'' conditional on $\cF_t$, but it also has a well bounded moment.
\begin{proposition}[Bounded moment]\label{prop:markov-bounded-moment}
  For any $\cF_t$-measurable random variable $Z\in\cZ$, it holds that
  \[\begin{aligned}
    &\cond{\nrm{\hg_V(Z;\zeta_{t+\tau})}}{\cF_t}\leqsim \frac{C(\tau)}{1-\gamma},\\
    &\cond{\nrm{\hg_\lambda(Z;\zeta_{t+\tau})}_{\infty}}{\cF_t}\leqsim \frac{C(\tau)}{1-\gamma},\\
    &\cond{\nrm{\hg_x(Z;\zeta_{t+\tau})}_{x^t}^2}{\cF_t}\leqsim \frac{C(\tau)\cN \hphi}{\varphi^2(1-\gamma)^3},
  \end{aligned}\]
  where $C(\tau)=2+\frac{\Eps(\tau)}{\varsigma}$.
\end{proposition}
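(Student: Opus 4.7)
The plan is to reduce each conditional moment to its stationary counterpart plus a TV-based correction. Since $Z$ is $\cF_t$-measurable, conditional on $\cF_t$ the only randomness in $\hg(Z;\zeta_{t+\tau})$ comes from $\zeta_{t+\tau}$, whose $(s,a)$-marginal is the law of $(s_{t+\tau},a_{t+\tau})$, i.e.\ $\PP_{\pib}^{\tau}(\cdot\mid s_t)\otimes\pib$ by \cref{assumption:AsyncData}; the initial sample $s_t^0\sim\rho_0$ is drawn independently and the remaining components $(s',r,\bu)$ are generated from the same environment kernel as in \cref{assumption:SyncData}. Writing $\mu_{t+\tau\mid t}$ for the induced conditional law of $\zeta_{t+\tau}$, the definition of $t_{\mix}$ in \cref{appdx:markov-mixing} gives $d_{\TV}(\mu_{t+\tau\mid t},\rho_0\times\mu)\leq\Eps(\tau)$, so for any nonnegative measurable $f$ one has
\[\cond{f(\zeta_{t+\tau})}{\cF_t}\;\leq\;\EE_{\rho_0\times\mu}[f]\;+\;2\|f\|_\infty\,\Eps(\tau).\]
I will apply this pointwise (with $Z$ frozen) to $f=\|\hg_V(Z;\cdot)\|$, $\|\hg_\lambda(Z;\cdot)\|_\infty$, and $\|\hg_x(Z;\cdot)\|_{x^t}^2$, reducing the task to (a) bounding a stationary mean and (b) bounding a uniform sup.

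For (a) I would redo the calculations of \cref{appdx:grad-var} at first-moment level rather than in sup form. The key identity, $\sum_{s,a}\mu(s,a)\tfrac{x(s,a)}{\hmu(s,a)}\leq 2\sum_{s,a}x(s,a)\leq\tfrac{8}{1-\gamma}$ from \cref{prop:empirical-dis}(1) combined with $\sum_{s,a}x(s,a)\leq\tfrac{4}{1-\gamma}$ from the feasible set, directly yields $\EE_\mu\|\hg_V\|,\EE_\mu\|\hg_\lambda\|_\infty\leqsim\tfrac{1}{1-\gamma}$ (without the $\hphi$ factor present in the uniform bounds \eqref{eqn:def-algB-norm}), and analogously $\EE_\mu\|\hg_x\|_{x^t}^2\leqsim\tfrac{\cN\hphi}{\varphi^2(1-\gamma)^3}$ using $\sum_{s,a}\tfrac{x^t(s,a)}{\hmu(s,a)}\leq\tfrac{\cN\hphi}{1-\gamma}$ and the uniform $L^\infty$ bounds on $V,\lambda,r,\bu^\kappa$ that are already established in \cref{appdx:grad-var}.

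For (b) I would use the floor $\hmu\geq\varsigma$ from \cref{prop:empirical-dis}(1) together with $\|x\|_\infty\leq\sum_{s,a}x(s,a)\leq\tfrac{4}{1-\gamma}$ to get $\|\hg_V(Z;\cdot)\|_\infty,\|\hg_\lambda(Z;\cdot)\|_\infty\leqsim\tfrac{1}{(1-\gamma)\varsigma}$, and combining further with $\tfrac{x^t(s,a)}{\hmu(s,a)}\leq\tfrac{\hphi}{1-\gamma}$ to obtain $\|\hg_x(Z;\cdot)\|_{x^t}^2\leqsim\tfrac{\hphi}{\varphi^2(1-\gamma)^3\varsigma}$. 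Substituting into the TV inequality and recognizing that the resulting rates are of the form $\text{(stationary rate)}+\text{(sup)}\cdot\Eps(\tau)$ gives the claimed bounds with $C(\tau)=2+\Eps(\tau)/\varsigma$; for the $\hg_x$ case this uses the small observation $\cN+\Eps(\tau)/\varsigma\leq\cN(2+\Eps(\tau)/\varsigma)=\cN\,C(\tau)$ (since $\cN\geq 1$) to fold the TV-correction into the target $\cN\hphi$ prefactor.

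The only real subtlety is the last bookkeeping step for $\hg_x$: the stationary mean and the uniform sup of $\|\hg_x\|_{x^t}^2$ scale differently in $\cN$, and matching the stated $C(\tau)\cN\hphi$ form requires the $\cN\geq 1$ trick above; without it one would naively get an extra $\cN$ in the sup part. All other steps are essentially rewrites of \cref{appdx:grad-var} with $\mu$ replaced by $\mu_{t+\tau\mid t}$, so I do not anticipate any technical surprises beyond what is already handled in the synchronous analysis.
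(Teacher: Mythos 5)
Your proposal is correct and matches the paper's approach: the paper reduces this proposition to parts (2)--(3) of its Proposition on $p(x;\zeta)$ and $q(x;\zeta)$ (whose proof it omits as a direct computation), and those parts are established by exactly your decomposition of the conditional law of $\zeta_{t+\tau}$ into the stationary measure plus a total-variation correction of size $\Eps(\tau)$, with the sup term controlled by the floor $\hmu\geq\varsigma$. Your explicit handling of the $\cN\geq 1$ bookkeeping for the $\hg_x$ bound is the same (implicit) step the paper needs to land on the $C(\tau)\cN\hphi$ prefactor, so there is no gap.
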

Therefore, there is a universal constant $c_\tau$ such that for $\tau\geq \floor{c_\tau t_{\mix} \clog}$, we have $C(\tau)\leq 3$ and $\Eps(\tau)\leq\frac{1}{T}$ (the $\log$ factor $\clog$ and the range of $T$ are specified in \cref{thm:algB-gap-markov}). We denote $\tau_0=\floor{c_\tau t_{\mix} \clog}$.






\subsection{Proof sketch of Theorem \ref{thm:algB-thm-markov-demo}}\label{sect:gda-markov}

Before our analysis of \algB\ on $\cD_{async}$, we have to first provide an analogue of \cref{prop:empirical-dis}. As in the synchronous setting, we set $\epsilon_e=\frac{\epsilon}{100}$ and $\varsigma=\frac{\varphi(1-\gamma)^2\epsilon_e}{2\cN\hphi}$.
\begin{proposition}\label{prop:empirical-dis-markov}
  Given $N_e\geq c'_e\frac{t_{\mix}\cN\hphi\clog}{\varphi^2(1-\gamma)^4\epsilon_e^2}$ samples from a trajectory generated by $\pib$, the $\hmu$ constructed in \eqref{eqn:def-empirical-dis} satisfies the following properties \highprobs{3}.\\
  (1) For all $s,a$, $\frac{\mu(s,a)}{\hmu(s,a)}\leq 2$, and $\hmu(s,a)\geq \varsigma$.\\
  (2) For any $\pi\in\Pi(\hphi)$, $W^{-1}\nu^{\pi}\in\cX$.\\
  (3) For any $x\in\cX$, $\nrm{Wx-x}_1\leq \varphi(1-\gamma)\epsilon_e$.
\end{proposition}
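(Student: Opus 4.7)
The plan is to follow exactly the same blueprint as the proof of Proposition~\ref{prop:empirical-dis} in Appendix~\ref{subsect:empirical-dis}, substituting the ordinary scalar Bernstein inequality with its Markov-chain counterpart, Proposition~\ref{prop:bernstein-markov}. The only place where the i.i.d.\ structure of $\cD$ is actually used in the synchronous proof is in establishing the ``pointwise deviation'' event
\[
	\Omega:=\bigcup_{s,a}\left\{\left|\mu(s,a)-\hmu_0(s,a)\right|> \sqrt{\mu(s,a)\tfrac{\ell}{N_e}}+\tfrac{\ell}{N_e}\right\},\quad \hmu_0(s,a):=\tfrac{N(s,a)}{N_e},
\]
and showing $\PP(\Omega)\leq \delta/3$. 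Once this event is controlled, all subsequent arguments for (1), (2), (3) are purely deterministic manipulations on $\mu,\hmu,W,\cX$ that carry over verbatim.

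First, I would apply Proposition~\ref{prop:bernstein-markov} to the centered indicator $f_{s,a}(X_t,a_t) := \II\{(s_t,a_t)=(s,a)\} - \mu(s,a)$, which is bounded by $1$ and has stationary variance at most $\mu(s,a)$. This yields, \highprobs{3\nS\nA},
\[
	\left|\hmu_0(s,a)-\mu(s,a)\right| \;\leq\; \sqrt{\tfrac{32\, t_{\mix}\,\mu(s,a)\,\log(12\nS\nA/\delta)}{N_e}} + \tfrac{82\, t_{\mix}\,\log(12\nS\nA/\delta)}{N_e}.
\]
A union bound over $(s,a)\in\cS\times\cA$ then gives the Markov-chain analogue of event $\Omega^c$, but with an extra $t_{\mix}$ factor multiplying $\ell$. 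Absorbing this factor into the constant and choosing $\ell \asymp t_{\mix}\log(\nS\nA/\delta)$ recovers the synchronous inequality
\[
	|\mu(s,a)-\hmu_0(s,a)|\leq \sqrt{\mu(s,a)\tfrac{\ell'}{N_e}}+\tfrac{\ell'}{N_e}, \quad \forall(s,a),
\]
with $\ell'\asymp t_{\mix}\log(\nS\nA/\delta)$, provided $N_e\gtrsim \ell'/\varsigma$, which is precisely the threshold encoded by $N_e\geq c_e'\frac{t_{\mix}\cN\hphi\clog}{\varphi^2(1-\gamma)^4\epsilon_e^2}$.

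From here the three claims are read off exactly as in Appendix~\ref{subsect:empirical-dis}. For (1), the bound above implies $\mu(s,a)\leq \tfrac{4}{3}\hmu_0(s,a)+\tfrac{\varsigma}{12}\leq 2\hmu(s,a)$ by the same chain of inequalities (the arithmetic requires only $\ell'/N_e\leq \varsigma/32$, which holds under our choice of $N_e$). Claim (2) then follows because for any $\pi\in\Pi(\hphi)$ the pointwise and aggregate ratios of $\nu^\pi/\mu$ are preserved under the change of variable $x=W^{-1}\nu^\pi$, and $\sum_{s,a}x(s,a)\leq \tfrac{3}{1-\gamma}$ is obtained by splitting the sum on $\mathfrak{S}=\{(s,a):\hmu(s,a)=\varsigma\}$ and its complement, where on $\mathfrak{S}^c$ one has $\mu(s,a)\geq \tfrac{1}{2}\hmu(s,a)$. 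Finally, (3) reproduces the Cauchy–Schwarz estimate
\[
	\nrm{Wx-x}_1 \;\leq\; \varsigma\tfrac{\cN\hphi}{1-\gamma} + \tfrac{2}{1-\gamma}\sqrt{\tfrac{2\cN\hphi \ell'}{N_e}} + \tfrac{\cN\hphi \ell'}{(1-\gamma)N_e}\;\leq\; \varphi(1-\gamma)\epsilon_e,
\]
where the last inequality again uses the lower bound on $N_e$ (now carrying the $t_{\mix}$ factor).

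The main obstacle, and the only substantive novelty relative to Appendix~\ref{subsect:empirical-dis}, is the first step: obtaining the Markov-chain Bernstein inequality in exactly the form needed for a pointwise union bound over all $(s,a)$, with variance proxy $\mu(s,a)$ rather than the trivial $1$. Proposition~\ref{prop:bernstein-markov} supplies this directly; once it is invoked, the $t_{\mix}$ factor cleanly absorbs into the sample-size threshold and the remainder of the argument is algebraically identical to the synchronous case.
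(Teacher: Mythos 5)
Your proposal is correct and matches the paper's own proof essentially verbatim: the paper likewise defines the failure event $\Omega$ for $\hmu_0(s,a)=N(s,a)/N_e$, invokes the Markov-chain Bernstein inequality (Proposition~\ref{prop:bernstein-markov}) with the enlarged logarithmic factor $\ell=100\,t_{\mix}\log(12\nS\nA/\delta)$ to get $\PP(\Omega)\leq\delta/3$, and then repeats the deterministic estimations of Proposition~\ref{prop:empirical-dis} on $\Omega^c$. No gaps.
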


Now, we present the convergence guarantee of the duality gap $\Gap(\ox)$.
\begin{theorem}\label{thm:algB-gap-markov}
  Given $\epsilon\in\left(0,\frac{1}{1-\gamma}\right]$, $\delta\in\left(0,\frac12\right)$, we denote $\clog=\log\left(T\nS\nA I/\delta\right)$. Then as long as $T\geqsim \frac{\tau_0^2 \cN\hphi\clog}{\varphi^2(1-\gamma)^4\epsilon_e^2}$, \highprobs{3}\ it holds 
  \[
    \Gap(\ox)\leqsim \frac{t_\mix}{\varphi(1-\gamma)^2}\sqrt{\frac{\cN\hphi\clog^3}{T}}\leq \epsilon.
  \]
\end{theorem}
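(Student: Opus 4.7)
The plan is to adapt the proof of Theorem~\ref{thm:algB-gap} to the asynchronous setting by isolating the additional bias and correlation that come from sampling along a single Markovian trajectory. All analyses proceed conditional on the success of Proposition~\ref{prop:empirical-dis-markov}, which guarantees the same three properties of $\hmu$ as in the synchronous case. In particular, the feasible sets $\cX,\cV,\Lambda$ and the auxiliary point $Z'=[V';\lambda';x']$ defined in \eqref{eqn:def-aux-var} are unchanged, so we may start from the identical decomposition $\Gap(\ox) = S_1 + S_2$ developed in \eqref{eqn:proof-decomp}. Throughout, fix the look-back window $\tau_0 = \lfloor c_\tau t_\mix\clog\rfloor$ provided in \cref{appdx:markov-mixing}, for which $\Eps(\tau_0)\leq 1/T$ and $C(\tau_0)\leq 3$.

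For $S_1$, the mirror descent / projection bounds of \cref{appdx:S1} (Lemmas~\ref{prop:mirror-seq} and~\ref{prop:l2-seq}) are purely deterministic consequences of the update rule and still apply. They reduce $S_1$ to controlling the time averages of $\sqr{\hg_V(Z^t;\zeta_t)}$, $\sqr{\hg_\lambda(Z^t;\zeta_t)}_\infty$ and $\sqr{\hg_x(Z^t;\zeta_t)}_{x^t}$. Proposition~\ref{prop:markov-bounded-moment} supplies the per-step conditional second-moment bounds, matching the synchronous case (Proposition~\ref{prop:grad-var}) up to a factor $C(\tau_0)\leq 3$; converting these conditional bounds into a high-probability control on the time averages is then done by applying the Markov Bernstein inequality (\cref{prop:bernstein-markov}) in place of the scalar Bernstein inequality used in the i.i.d.\ case.

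For $S_2$, we follow the three-term decomposition suggested by \eqref{eqn:markov-decomp-demo}: write $\hg(Z^t;\zeta_t) - \mathcal{G}(Z^t) = A_t + B_t + C_t$, where $A_t := \hg(Z^t;\zeta_t)-\hg(Z^{t-\tau_0};\zeta_t) + \mathcal{G}(Z^{t-\tau_0})-\mathcal{G}(Z^t)$ is a drift term, $B_t := \hg(Z^{t-\tau_0};\zeta_t) - \cond{\hg(Z^{t-\tau_0};\zeta_t)}{\cF_{t-\tau_0}}$ is a conditional-martingale difference, and $C_t := \cond{\hg(Z^{t-\tau_0};\zeta_t)}{\cF_{t-\tau_0}} - \mathcal{G}(Z^{t-\tau_0})$ is the mixing residual. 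The bias $C_t$ is handled directly by Proposition~\ref{prop:markov-diff-grad}, contributing $\tilde{\mathcal{O}}(\hphi\Eps(\tau_0)/(\varphi(1-\gamma)^2)) = \tilde{\mathcal{O}}(1/T)$ on average. The term $B_t$ is, for each residue class $t\bmod \tau_0$, a genuine martingale difference sequence w.r.t.\ the filtration $(\cF_{t-\tau_0})_t$, with variance/magnitude matching the synchronous case via Proposition~\ref{prop:markov-bounded-moment}; applying the Freedman and vector-Bernstein arguments of \cref{appdx:S2} to each of the $\tau_0$ sub-sums separately yields the synchronous-style bound with an extra $\sqrt{\tau_0}$ factor. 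The drift $A_t$ requires a one-step stability estimate $\nrm{Z^t - Z^{t-\tau_0}} = \tilde{\mathcal{O}}(\tau_0\eta)$ in the appropriate norms, obtained from Corollary~\ref{cor:mirror-seq-diff} (for $\lambda,x$) and non-expansiveness of the Euclidean projection (for $V$), combined with the moment bound of Proposition~\ref{prop:markov-bounded-moment}; this stability is then propagated through $\hg$ and $\mathcal{G}$ using their multilinear dependence on $Z$, giving $|\iprod{A_t}{Z^t-Z'}| = \tilde{\mathcal{O}}(\tau_0\eta)$ per iteration.

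The main obstacle is precisely the propagation through $\hg$: a blunt Lipschitz bound on $\hg(\cdot;\zeta_t)-\hg(\cdot;\zeta_t)$ in $Z$ would incur a factor $1/\hmu(s_t,a_t)\leq 1/\varsigma$, which is catastrophic. We avoid this by exploiting the fact that each coordinate of $\hg$ is \emph{linear} in one of the three blocks $(V,\lambda,x)$, so $\hg(Z^t;\zeta_t)-\hg(Z^{t-\tau_0};\zeta_t)$ factors into a product of a single $1/\hmu$ with a coordinate step-difference; the $1/\hmu$ factor is then absorbed into the normalizers $\alpha_V,\alpha_\lambda,\alpha_x$ exactly as in Proposition~\ref{prop:grad-var}. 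A secondary complication is that $V'(\ox)$ and $\lambda'(\ox)$ depend on the whole trajectory, so the correlated part $S_{2,c}$ of \cref{appdx:S2} must be redone with Proposition~\ref{prop:bernstein-markov} in place of the vector-Bernstein inequality, yielding an extra $\sqrt{t_\mix}$ factor. Finally, plugging $\eta=1/\sqrt{T}$ and the specified $(\alpha_V,\alpha_\lambda,\alpha_x)$ into the combined estimate, the dominant contribution is $\tilde{\mathcal{O}}(\tau_0/\sqrt{T})=\tilde{\mathcal{O}}(t_\mix\clog/\sqrt{T})$ from the drift, yielding the claimed
\[
\Gap(\ox)\leqsim \frac{t_\mix}{\varphi(1-\gamma)^2}\sqrt{\frac{\cN\hphi\clog^3}{T}},
\]
and the hypothesis $T\gtrsim \tau_0^2\cN\hphi\clog/(\varphi^2(1-\gamma)^4\epsilon_e^2)$ ensures both that the stepsize constraint \eqref{eqn:def-algB-eta-upper} is met and that the right-hand side is at most $\epsilon$.
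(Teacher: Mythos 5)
Your skeleton matches the paper's: the same $S_1+S_2$ split, the bias/martingale/drift decomposition of \eqref{eqn:markov-decomp-demo} with window $\tau_0\simeq t_\mix\clog$, the residue-class Bernstein argument for the conditional-martingale part, and \cref{prop:markov-diff-grad} for the mixing residual. Two issues, one minor and one substantive. The minor one: your $B_t$ is paired with $Z^t-Z'$, but $Z^t$ is not $\cF_{t-\tau_0}$-measurable, so $\iprod{B_t}{Z^t-Z'}$ is not a martingale difference over residue classes; the paper pairs the zero-mean piece with $Z^{t-\tau}-Z'$ (its $\Gamma_3^{t-\tau}$) and pushes the resulting mismatch into the drift term. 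This is repairable by the same shift.

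The substantive gap is in your treatment of the variance and drift sums. You claim the $1/\hmu$ factors are ``absorbed into the normalizers exactly as in \cref{prop:grad-var}'' and that the time averages of $\sqr{\hg_x(Z^t;\zeta_t)}_{x^t}$ follow from \cref{prop:markov-bounded-moment} plus \cref{prop:bernstein-markov}. But \cref{prop:markov-bounded-moment} controls $\cond{\nrm{\hg_x(Z;\zeta_{t+\tau})}^2_{x^t}}{\cF_t}$ for a sample $\tau$ steps \emph{after} the iterate; the sums actually appearing in $S_1$ and in the drift are diagonal quantities $\sum_t q(x^t;\zeta_t)$ and $\sum_t p(\abs{x^t-x^{t-\tau}};\zeta_t)$ in which $x^t$ is correlated with $\zeta_t$ through the trajectory, and the worst-case per-step bound is $\hphi/((1-\gamma)\varsigma)$, which is far too large. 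A direct Markov--Bernstein application also fails because the summand is a different, trajectory-dependent function at each $t$. The paper's resolution is \cref{prop:markov-bound-p-q-tau-demo}: it compares the diagonal sums to the off-diagonal sums $\sum_t q(x^t;\zeta_{t+\tau})$, bounds the difference by $\sum_t p(\abs{x^t-x^{t+\tau}};\zeta_{t+\tau})$, which via \cref{cor:mirror-seq-diff} is itself of order $\eta\tau\sqrt{\sum q\cdot\sum q}$, and closes the resulting self-referential inequality $Q_2+Q_3\leq Q_1+c\tau(Q_2+Q_3)$ only when $c\tau_0\leq\frac12$ --- which is precisely where the hypothesis $T\geqsim \tau_0^2\cN\hphi\clog/(\varphi^2(1-\gamma)^4\epsilon_e^2)$ is consumed. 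Your proposal attributes that hypothesis only to the stepsize constraint and to making the final bound at most $\epsilon$, so the bootstrapping step that makes the whole argument go through is missing.
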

Therefore, there is a universal constant $c_o'$ such that $\Gap(\ox)\leq\frac{\epsilon}{2}$ as long as $T\geq c_o' \frac{t^2_\mix \cN\hphi\clog^3}{\varphi^2(1-\gamma)^4\epsilon^2}$. Then the proof in \cref{subsect:duality-to-regret} can be applied directly. In conclusion, the number of samples needed is
\[\tbO{\frac{t_\mix^2 \cN\hphi}{\varphi^2(1-\gamma)^4\epsilon^2}}.\]

We sketch the proof of \cref{thm:algB-gap-markov} as follows. The detailed proofs of propositions are organized by order in the rest of this section.

\paragraph{Decomposition of duality gap} We define the auxiliary variables $V',\lambda',x'$ as in \cref{appdx:ThmGap},
\begin{equation*}
  (V',\lambda')=\argmin_{V\in\cV, \lambda\in\Lambda}\cL_w(V,\lambda,\ox),\quad
  x'=\argmax_{x\in\cX} \min_{V\in\cV,\lambda\in\Lambda} \cL_w(V,\lambda,x),\quad
  Z'=[V';\lambda';x'].
\end{equation*}
Recall the decomposition \eqref{eqn:proof-decomp}, we have
\begin{align*}
\Gap(\ox)
=
\underbrace{
    \frac1T\sum_{t=1}^{T} \iprod{\hg(Z^t;\zeta_{t})}{Z^t-Z'}
}_{S_1}+\underbrace{
    \frac1T\sum_{t=1}^{T} \iprod{\Gr(Z^t)-\hg(Z^t;\zeta_{t})}{Z^t-Z'}
}_{S_2}.
\end{align*}

\paragraph{Bounding the term $S_1$} The proof in \cref{appdx:S1} can be applied without change. Namely, as long as $\eta\leq \frac{1}{2}\min\left(\frac{\alpha_\lambda}{M_\lambda}, \frac{\alpha_x}{M_{x,\infty}}\right)$, it holds that
\begin{equation*}\label{eqn:markov-part1}
\begin{aligned}
  S_1
	\leqsim 
	\frac{\alpha_V D^2_V+\alpha_\lambda D_{\lambda}+\alpha_x D_x}{\eta T}
	 +\frac{\eta}{T}\sum_{t=1}^{T}\left(\frac{\sqr{\hg_V(Z^t;\zeta_t)}}{\alpha_V}+\frac{D_{\lambda,1}\sqr{\hg_\lambda(Z^t;\zeta_t)}_{\infty}}{\alpha_\lambda}+\frac{\sqr{\hg_x(Z^t;\zeta_t)}_{x^t}}{\alpha_x}\right).
\end{aligned}
\end{equation*}

\paragraph{Bounding the term $S_2$} In the asynchronous setting, $\zeta_1,\cdots,\zeta_T$ are no longer i.i.d samples. To deal with this issue, let us 
consider the following decomposition
\begin{align*}
  \Gamma^t:=\iprod{\Gr(Z^t)-\hg(Z^t;\zeta_{t})}{Z^t-Z'}
  =& \underbrace{
    \iprod{\Gr(Z^t)}{Z^t-Z'}-\iprod{\Gr(Z^{t-\tau})}{Z^{t-\tau}-Z'}
  }_{\Gamma_1^t}\\
  &+ \underbrace{
    \iprod{\Gr(Z^{t-\tau})-\cond{\hg(Z^{t-\tau};\zeta_{t})}{\cF_{t-\tau}}}{Z^{t-\tau}-Z'}
  }_{\Gamma_2^{t-\tau}}\\
  &+ \underbrace{
    \iprod{\cond{\hg(Z^{t-\tau};\zeta_{t})}{\cF_{t-\tau}}-\hg(Z^{t-\tau};\zeta_{t})}{Z^{t-\tau}-Z'}
  }_{\Gamma_3^{t-\tau}}\\
  &+ \underbrace{
    \iprod{\hg(Z^{t-\tau};\zeta_{t})}{Z^{t-\tau}-Z'}-\iprod{\hg(Z^t;\zeta_{t})}{Z^t-Z'}
  }_{\Gamma_4^t},
\end{align*}
where $1\leq \tau\leq \tau_0$ is a fixed integer. The quantity $\Gamma_2^{t-\tau}$ can be bounded by \cref{prop:markov-diff-grad}, and $\Gamma_3^{t-\tau}$ can be bounded as in \cref{appdx:S2}. As of $\Gamma_1^t$, $\Gamma_4^t$, we bound it in terms of $Z^t-Z^{t-\tau}$. In conclusion, \highprobs{5}, we have
  \begin{align}\label{eqn:markov-S-2}
  \frac{1}{T}\sum_{t=1}^T \Gamma^t
  \leqsim &
  \frac{\hphi}{\varphi(1-\gamma)^2}\Eps(\tau)
  +\frac{1}{\varphi(1-\gamma)^2}\sqrt{\frac{\tau C(\tau)\cN\hphi\clog}{T}} +\frac{1}{\varphi(1-\gamma)}\sum_{t=\tau+1}^T\frac{\abs{x^t-x^{t-\tau}}(s_t,a_t)}{\hmu(s_t,a_t)}\notag\\
  &+\frac{\eta}{T}\sum_{t=\tau+1}^T \left(\frac{\nrm{\hg_V(Z^t;\zeta_t)}^2}{\alpha_V} + \frac{D_{\lambda,1} \nrm{\hg_\lambda(Z^t;\zeta_t)}_{\infty}^2}{\alpha_\lambda} \right).
  \end{align}
The detailed analysis is presented in \cref{appdx:markov-s2}.

\paragraph{Bounding the variance and magnitude of the updates} It remains to bound $\nrm{\hg_V(Z^t;\zeta_t)}$, $\nrm{\hg_\lambda(Z^t;\zeta_t)}_{\infty}$, $\nrm{\hg_x(Z^t;\zeta_t)}_{x^t}$, and the term $\abs{x^t(s_t,a_t)-x^{t-\tau}(s_t,a_t)}$. For any $x\in\RR_{\geq 0}^{\nS\nA}$, and any $(s,a)\in\cS\times\cA$,
we introduce the following abbreviation for the ease of notation 
\[\begin{aligned}
p(x;s,a):=\frac{x(s,a)}{\hmu(s,a)},\quad q(x;s,a):=\frac{x(s,a)}{\hmu(s,a)^2}.
\end{aligned}\]
For any sample $\zeta=(s_0,s,a,s',r,\bu)$, we also reload the notations $p,q$ as $p(x;\zeta):=p(x;s,a)$ and $q(x;\zeta):=q(x;s,a)$.

It is not hard to see that $p(x^t;\zeta_t)$ and $q(x^t;\zeta_t)$ dominate the variance of the gradient estimators (for detailed discussion, see \cref{appdx:markov-pq}). More specifically, we have
\begin{equation*}\label{eqn:markov-grad-norm}
  \begin{aligned}
    &\nrm{\hg_V(Z^t;\zeta_t)}\leqsim p(x^t;\zeta_t),\qquad
    \nrm{\hg_\lambda(Z^t;\zeta_t)}_{\infty}\leqsim p(x^t;\zeta_t),\\
    &\nrm{\hg_x(Z^t;\zeta_t)}_{x^{t}}\leqsim \frac{1}{\varphi(1-\gamma)}\sqrt{q(x^t;\zeta_t)}.
  \end{aligned}
\end{equation*}
Then, we only need to bound $\sum_{t=1}^T q(x^t;\zeta_t)$, $\sum_{t=\tau+1}^Tp(\abs{x^t-x^{t-\tau}};\zeta_t)$ and $\sum_{t=1}^T p(x^t;\zeta_t)^2$. By leveraging the idea of the decomposition \eqref{eqn:markov-decomp-demo}, we can derive the desired estimation, as follows.
\begin{proposition}\label{prop:markov-bound-p-q-tau-demo}
  There is a universal constant $c$ such that for $T\geq c\frac{\tau_0^2\cN\hphi\clog}{\varphi^2(1-\gamma)^4\epsilon_e^2}$, the following holds for all $1\leq\tau\leq \tau_0$ simultaneously, \highprobs{10}:
    \begin{align*}\label{eqn:markov-pq-sum-all}
      &\frac1T \sum_{t=1}^T p(x^t;\zeta_t)^2
      \leqsim \frac{\hphi}{(1-\gamma)^2},\\
      &\frac1T \sum_{t=1}^{T} q(x^t;\zeta_t)
      \leqsim \frac{\cN\hphi}{1-\gamma},\\
      &\frac1T \sum_{t=\tau+1}^{T} p(\abs{x^t-x^{t-\tau}};\zeta_{t})
      \leqsim \frac{\tau C(\tau)}{1-\gamma}\sqrt{\frac{\cN\hphi\clog}{T}}.
    \end{align*}
\end{proposition}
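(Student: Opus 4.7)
The plan is to prove all three bounds in parallel by a common \emph{shift and concentrate} scheme, using the Bernstein inequality for Markov chains (\cref{prop:bernstein-markov}) as the main concentration tool and the mirror-descent increment inequality of \cref{cor:mirror-seq-diff} to control the coupling between the three sums. As a preliminary step I would compute the stationary expectations: using the feasibility constraints defining $\cX$ together with $\mu(s,a)\leq 2\hmu(s,a)$ from \cref{prop:empirical-dis-markov}(1), a direct calculation yields $\EE_{\zeta\sim\mub}[p(x;\zeta)^2]\lesssim \hphi/(1-\gamma)^2$ and $\EE_{\zeta\sim\mub}[q(x;\zeta)]\lesssim \cN\hphi/(1-\gamma)$ for every $x\in\cX$, as well as $\EE_{\zeta\sim\mub}[p(|x-y|;\zeta)]\leq 2\|x-y\|_1$ for arbitrary nonnegative $x,y$. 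These are exactly the right-hand sides of the three bounds (up to the $\tau$-dependent factor in the third).

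The core difficulty is that $x^t$ is correlated with $\zeta_t$, so \cref{prop:bernstein-markov} cannot be applied to $p(x^t;\zeta_t)^2$ directly. The remedy is the decomposition already used in \cref{sect:gda-markov}: for the shift $\tau_0=\floor{c_\tau t_\mix\clog}$, which guarantees $\Eps(\tau_0)\leq 1/T$, I would write
\[
p(x^t;\zeta_t)^2 \;=\; p(x^{t-\tau_0};\zeta_t)^2 \;+\; \bigl[p(x^t;\zeta_t)^2-p(x^{t-\tau_0};\zeta_t)^2\bigr],
\]
and analogously for $q$. In the delayed term, $x^{t-\tau_0}$ is $\cF_{t-\tau_0}$-measurable, so applying \cref{prop:bernstein-markov} conditionally on $\cF_{t-\tau_0}$ centers it at the stationary expectation above and produces the first two target bounds up to a concentration error of order $\sqrt{t_\mix\clog/T}$ times the leading factor. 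The drift term is bounded pointwise by
\[
\bigl|p(x^t;\zeta_t)^2-p(x^{t-\tau_0};\zeta_t)^2\bigr| \;\leq\; \tfrac{2\hphi}{1-\gamma}\cdot p\bigl(|x^t-x^{t-\tau_0}|;\zeta_t\bigr),
\]
which is precisely (a rescaled instance of) the quantity in the third bound with $\tau=\tau_0$, so its control reduces to that of the third sum.

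For the third sum, the same shift trick splits $p(|x^t-x^{t-\tau}|;\zeta_t)$ into a $\cF_{t-\tau_0}$-measurable head $p(|x^{t-\tau_0}-x^{t-\tau_0-\tau}|;\zeta_t)$ plus two single-step drift correctors. The conditional mean of the head is at most $2\|x^{t-\tau_0}-x^{t-\tau_0-\tau}\|_1$ by the preliminary computation, and telescoping over $\tau$ single-step mirror-descent increments via \cref{cor:mirror-seq-diff} followed by Cauchy--Schwarz gives
\[
\|x^{t-\tau_0}-x^{t-\tau_0-\tau}\|_1 \;\lesssim\; \eta\sqrt{\tau D_{x,1}}\Bigl(\textstyle\sum_{k}\|\hg_x(Z^k;\zeta_k)\|_{x^k}^2\Bigr)^{\!1/2} \;\lesssim\; \frac{\eta\sqrt{\tau D_{x,1}}}{\varphi(1-\gamma)}\Bigl(\textstyle\sum_k q(x^k;\zeta_k)\Bigr)^{\!1/2}\!\!.
\]
Once the second bound of the proposition is in hand, this produces the target estimate after a Bernstein concentration step for the head and a direct pointwise treatment of the two correctors.

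The main obstacle is precisely the mutual coupling between the three bounds: the concentration of $p(x^t;\zeta_t)^2$ and $q(x^t;\zeta_t)$ relies on drift control that reduces to the third bound with $\tau=\tau_0$, while the third bound itself invokes the second bound to estimate $\sum_k q(x^k;\zeta_k)$. I plan to close this loop by a bootstrap argument: let $T_\star$ be the largest $t\leq T$ for which all three bounds hold with a doubled constant, assume for contradiction that $T_\star<T$, and use the sample-size threshold $T\gtrsim\tau_0^2\cN\hphi\clog/(\varphi^2(1-\gamma)^4\epsilon_e^2)$ together with the estimates above to show that the bounds at $T_\star+1$ in fact improve back to the original constant, a contradiction. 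A final union bound over $\tau\in\{1,\ldots,\tau_0\}$ and the three sums upgrades the conclusion to the uniform-in-$\tau$ statement. The $\tau_0^2$ factor in the threshold is exactly what is needed to absorb the two mixing burn-ins (one in the shift for the head, one in the mirror-descent telescoping) without inflating the leading constants during the bootstrap.
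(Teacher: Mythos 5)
Your overall strategy (shift to decorrelate, Bernstein for Markov chains on the shifted sums, mirror-descent increments for the drift, then break the circular dependence) is the same as the paper's, and your treatment of the first two sums would go through, since they only invoke the third bound at $\tau=\tau_0$. But there is a concrete gap in your third bound. You split $p(|x^t-x^{t-\tau}|;\zeta_t)$ into the $\cF_{t-\tau_0}$-measurable head $p(|x^{t-\tau_0}-x^{t-\tau_0-\tau}|;\zeta_t)$ plus ``two single-step drift correctors.'' Those correctors are not single-step: by the triangle inequality they are $p(|x^t-x^{t-\tau_0}|;\zeta_t)$ and $p(|x^{t-\tau}-x^{t-\tau-\tau_0}|;\zeta_t)$, i.e.\ $\tau_0$-step increments, and after summation each contributes $\Theta\bigl(\tfrac{\tau_0 C(\tau_0)}{1-\gamma}\sqrt{\cN\hphi\clog/T}\bigr)$ \emph{regardless of $\tau$}. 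For $\tau\ll\tau_0$ this dominates the head, and your bound degrades to a $\tau_0$-proportional rate rather than the claimed $\tau$-proportional one; the third inequality of the proposition therefore fails for small $\tau$ under your decomposition (you only recover the $\tau=\tau_0$ instance sharply, which is all the downstream $\Gap$ analysis uses, but it is not what the proposition states).

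The paper avoids this by never forcing the increment term to be $\cF_{t-\tau_0}$-measurable: it bounds $p(|x^t-x^{t+\tau}|;s,a)$ pointwise in $(s,a)$ via \cref{cor:mirror-seq-diff} and Cauchy--Schwarz, obtaining $\frac{\eta}{\alpha_x}\frac{1}{\varphi(1-\gamma)}\sqrt{\sum_{t'}q(x^{t'};\zeta_{t'})}\sqrt{\sum_{t'}q(x^{t'};s,a)}$ with both inner sums over exactly $\tau$ indices, which is $\tau$-proportional by construction; it then closes the resulting circular dependence on $\sum_t q(x^t;\zeta_t)$ with the purely algebraic self-bounding step $Q_2\le Q_1+c\tau\sqrt{Q_2Q_3}$, $Q_3\le Q_1+c\tau\sqrt{Q_2Q_3}$, hence $Q_2+Q_3\le 2Q_1$ once $c\tau_0\le\frac12$ --- which is precisely where the $\tau_0^2$ in the sample-size threshold enters. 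Your bootstrap over $T_\star$ is a workable, if heavier, substitute for that algebraic step (it additionally requires prefix-uniform concentration and control of the single-term jump of $q$, both of which the threshold supports), but the fix you actually need is to replace the $\cF_{t-\tau_0}$-measurable head for the third sum with the pointwise increment bound.
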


\paragraph{Conclusion}
Combining \cref{prop:markov-bound-p-q-tau-demo} with the estimations of $S_1$ and $S_2$, we have \highprobs{3},
\begin{equation}
\begin{aligned}
  \Gap(\ox)\leqsim \frac{\tau C(\tau)}{\varphi(1-\gamma)^2}\sqrt{\frac{\cN\hphi\clog}{T}}+\frac{\hphi}{\varphi(1-\gamma)^2}\Eps(\tau).
\end{aligned}
\end{equation}
Now, we can take $\tau=\tau_0=\floor{c_{\tau}t_{\mix}\clog}$. Then by the definition, it holds $C(\tau_0)\leq 3$ and $\epsilon(\tau_0)\leq \frac{1}{T}$, and hence \highprobs{3}\ we have
\[
    \Gap(\ox)\leqsim \frac{t_\mix}{\varphi(1-\gamma)^2}\sqrt{\frac{\cN\hphi\clog^3}{T}}.
\]

As a remark, if we have an (empirical) estimation $\htmx$ such that $\htmx\geq t_{\mix}$, then by taking $\eta=\frac{1}{\sqrt{\htmx T}}$, the final bound can be improved to $\Gap(\ox)\leqsim \frac{1}{\varphi(1-\gamma)^2}\sqrt{\frac{\htmx\cN\hphi\clog^3}{T}},$ as long as $T\geqsim \frac{t^2_\mix}{\htmx}\frac{\cN\hphi\clog^3}{\varphi^2(1-\gamma)^4\epsilon^2}$.


\subsection{Proof of Proposition \ref{prop:bernstein-markov}}

In order to prove \cref{prop:bernstein-markov}, we invoke the following standard version of the Bernstein's inequality. We also leverage the idea of the proof of \cite[Lemma 8]{Async-Q}.
\begin{theorem}[{\cite[Theorem 3.9]{paulin2015concentration}}]\label{thm:Bernstein-markov}
	Suppose $\left\{X_{i}\right\}_{i \geq 1}$ is a stationary Markov chain with invariant distribution $\pi$ and pseudo spectral gap $\gamma_{\mathrm{ps}}$. Let $f$ be a measurable function such that $\EE_{\pi}\left[f(X)\right]=0$, $\abs{f(X)}\leq M$. Denote $\sigma^2=\EE_{\pi}\left[f(X)^2\right]$, then for all $x\geq 0$,
	$$
	\mathbb{P}\left(\left|\sum_{i=1}^n f(X_i)\right| \geq x\right) \leq 2 \exp \left(-\frac{x^{2} \cdot \gamma_{\mathrm{ps}}}{8\left(n+1 / \gamma_{\mathrm{ps}}\right) \sigma^2+20 x M}\right).
	$$
	In particular, for uniformly ergodic chains with mixing time $t_{\mix}$, $\gamma_{\mathrm{ps}}\geq \frac{1}{2t_{\mix}}$.
\end{theorem}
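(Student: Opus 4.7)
The plan is to reduce the non-stationary chain $(X_t)$ to its stationary counterpart via coupling, apply Theorem \ref{thm:Bernstein-markov} to the stationary chain, and absorb the coupling error into the $t_{\mix}M\log(4/\delta)$ term.

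First, I would construct an auxiliary stationary chain $(Y_t)_{t\geq 1}$ with the same transition kernel but with $Y_1\sim\pi$, jointly coupled with $(X_t)$ so that once the two chains meet at the coupling time $T_c$ they remain equal thereafter. Such a coupling exists by a standard Doeblin construction: at each of the successive blocks of length $t_{\mix}$, use a maximal coupling of the $t_{\mix}$-step transition kernels. Combined with the tail bound $\Eps(t)\leq 2^{-\lfloor t/t_{\mix}\rfloor}$ recorded in \cref{appdx:markov-mixing}, this yields $\PP(T_c > k t_{\mix})\leq 2^{-k}$ for every $k\geq 0$. Setting $k_0=\lceil\log_2(4/\delta)\rceil$, with probability at least $1-\delta/2$ we have $T_c\leq t_{\mix}(\log_2(4/\delta)+1)$.

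Second, I would apply Theorem \ref{thm:Bernstein-markov} to $(Y_t)$, which is genuinely stationary, substituting $\gamma_{\mathrm{ps}}\geq 1/(2t_{\mix})$. Setting the displayed tail probability equal to $\delta/2$ and inverting the quadratic inequality in $x$ via the standard $\sqrt{a+b}\leq \sqrt{a}+\sqrt{b}$ splitting, one obtains that with probability at least $1-\delta/2$,
\[
\Big|\sum_{t=1}^n f(Y_t)\Big|\leq \sqrt{16 t_{\mix}(n+2t_{\mix})\sigma^2 \log(4/\delta)}+40 t_{\mix} M\log(4/\delta).
\]
The $n+2t_{\mix}$ is absorbed into $2n$ (the degenerate case $n\leq 2t_{\mix}$ being handled trivially by $|\sum f(X_t)|\leq nM\leq 2t_{\mix}M$), giving the variance constant $32$.

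Third, I would combine the two pieces via the deterministic inequality
\[
\Big|\sum_{t=1}^n f(X_t)-\sum_{t=1}^n f(Y_t)\Big|\leq \sum_{t=1}^{T_c\wedge n} |f(X_t)-f(Y_t)|\leq 2MT_c,
\]
which follows because $X_t=Y_t$ for $t\geq T_c$ and $|f|\leq M$. Taking a union bound over the two events of probability $\geq 1-\delta/2$, with probability at least $1-\delta$,
\[
\Big|\sum_{t=1}^n f(X_t)\Big|\leq \sqrt{32 t_{\mix} n\sigma^2 \log(4/\delta)}+40 t_{\mix}M\log(4/\delta)+2Mt_{\mix}(\log_2(4/\delta)+1),
\]
and the last two terms consolidate into $82 t_{\mix}M\log(4/\delta)$ using $\log_2 x=\log x/\log 2$ and $\log(4/\delta)\geq \log 4 > 1$.

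The main obstacle will be tracking the numerical constants precisely: the qualitative Bernstein bound with $t_{\mix}$ replacing $1/\gamma_{\mathrm{ps}}$ follows almost immediately from Theorem \ref{thm:Bernstein-markov}, but achieving the explicit $32$ and $82$ requires a careful inversion of the exponential tail, sharp accounting of how the coupling-time contribution $2MT_c\log_2(4/\delta)$ merges with the Bernstein magnitude term $40 t_{\mix}M\log(4/\delta)$, and checking that the degenerate short-horizon case $n\lesssim t_{\mix}$ is handled by the trivial deterministic bound $|f|\leq M$. None of these pose serious difficulty, but they constitute the bulk of the arithmetic.
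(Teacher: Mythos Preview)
Your proposal does not address the displayed statement. Theorem~\ref{thm:Bernstein-markov} is quoted from \cite{paulin2015concentration} and is not proved in the paper; it is used as a black box. What you are actually sketching is a proof of Proposition~\ref{prop:bernstein-markov} (the extension to a non-stationary initial distribution, with the constants $32$ and $82$), which invokes Theorem~\ref{thm:Bernstein-markov} as an ingredient. I will compare your argument to the paper's proof of that proposition.

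Your route is correct but genuinely different from the paper's. The paper does \emph{not} construct a coupling. Instead it fixes a burn-in length $\tau=\lceil\log_2(2/\delta)\rceil t_{\mix}$, applies Theorem~\ref{thm:Bernstein-markov} to the tail sum $\sum_{i=\tau+1}^n f(X_i)$ under the stationary start $X_1\sim\pi$, and then transfers this bound to the true start $X_1\sim\pi_1$ by the elementary total-variation inequality
\[
\big|\PP_{\pi}(\cB_\tau)-\PP_{\pi_1}(\cB_\tau)\big|\leq d_{\mathrm{TV}}(\pi,\pi_{\tau+1})\leq \Eps(\tau)\leq \tfrac{\delta}{2},
\]
where $\cB_\tau$ is the tail-deviation event. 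The discarded prefix $\sum_{i=1}^\tau f(X_i)$ is then bounded trivially by $\tau M\leq 2t_{\mix}M\log(4/\delta)$.

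Both arguments exploit the same mixing bound $\Eps(t)\leq 2^{-\lfloor t/t_{\mix}\rfloor}$, but in different places: the paper uses it once to control a change of measure on a single event, whereas you use it to control the tail of a random coupling time and then bound a random number of summands. The paper's approach is shorter and avoids the Doeblin/maximal-coupling construction entirely; your approach is also standard and would go through, with the constant bookkeeping you outline. Neither approach yields a materially sharper result.
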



\begin{proof}[Proof of \cref{prop:bernstein-markov}]
	Without loss of generality, we assume the Markov chain $(X_t)$ has a finite state space $\cX$. We fix integer $\tau$ and $x\geq 0$ to be specified later, and let $\pi_n$ be the distribution of $X_n$.
	\cref{thm:Bernstein-markov} yields
	$$
	\condP{\left|\sum_{i=\tau+1}^n f(X_i)\right| \geq x\ }{X_{1}\sim \pi}
	\leq 2 \exp \left(-\frac{x^{2}}{16t_{\mix}\left(n+2t_{\mix}-\tau\right) \sigma^2+40 x t_{\mix} M}\right).
	$$
	Let $\cB_{\tau}$ be the event $\left\{\abs{\sum_{i=\tau+1}^n f(X_i)} \geq x\right\}$, then
	\begin{align*}
		&\abs{\condP{\cB_{\tau}}{X_1\sim\pi}-\condP{\cB_{\tau}}{X_1\sim\pi_1}}\\
		=&\abs{\sum_{x\in\cX} \condP{\cB_{\tau}}{X_{\tau+1}=x}\left(\condP{X_{\tau+1}=x}{X_1\sim \pi}-\condP{X_{\tau+1}=x}{X_1\sim \pi_1}\right)}\\
		=&\abs{\sum_{x\in\cX} \condP{\cB_{\tau}}{X_{\tau+1}=x}\left(\pi(x)-\pi_{\tau+1}(x)\right)}\\
		\leq& \max\left(\nrm{\pos{\pi-\pi_{\tau+1}}}_1, \nrm{\neg{\pi-\pi_{\tau+1}}}_1\right)\\
		=& d_{\TV}(\pi,\pi_{\tau+1})\leq \Eps(\tau).
	\end{align*}
	Therefore, we can take $x=\sqrt{32t_{\mix}(n-\tau+2t_{\mix})\log\frac{4}{\delta}}+80t_{\mix}M\log\frac{4}{\delta}$ and $\tau=\ceil{\log_2 \frac{2}{\delta}}t_{\mix}$, then
	\[\begin{aligned}
		\condP{\cB_{\tau}}{X_1\sim\pi_1}
		\leq \Eps(\tau)+\condP{\cB_{\tau}}{X_1\sim\pi}\leq \frac{\delta}{2}+\frac{\delta}{2}=\delta.
	\end{aligned}\]
	Hence \highprobdemo, it holds that
	\[\begin{aligned}
		\abs{\sum_{i=\tau+1}^n f(X_i)}\leq \sqrt{32t_{\mix}(n-\tau+2t_{\mix})\log\frac{4}{\delta}}+80t_{\mix}M\log\frac{4}{\delta}.
	\end{aligned}\]
	The proof is completed by noticing that $\abs{\sum_{i=1}^\tau f(X_i)}\leq \tau M\leq 2t_{\mix}M\log\frac{4}{\delta}$ and $\tau\geq 2t_{\mix}$.
\end{proof}

\subsection{Proof of Proposition \ref{prop:markov-diff-grad}  }
\begin{proof} 
Recall that the gradient estimators are constructed as
\[\begin{aligned}
  \recall{eqn:def-gradient}.
\end{aligned}\]
Therefore, for $Z=[V;\lambda;x]$ that is $\cF_t$ measurable, we have
\begin{equation*}
\begin{aligned}
  &\cond{\hg_x(Z;\zeta_{t+\tau})}{\cF_t}\\
  =&\cond{\frac{r_{t+\tau}+\gamma V(s_{t+\tau})-V(s_{t+\tau+1})+\iprod{\bu_{t+\tau}^{\kappa}}{\lambda}}{\hat\mu(s_{t+\tau},a_{t+\tau})}\II_{s_{t+\tau},a_{t+\tau}}}{s_t,Z}\\
  =&\cond{\frac{r(s_{t+\tau},a_{t+\tau})+\gamma V(s_{t+\tau})-V(s_{t+\tau+1})+\iprod{\bu^{\kappa}(s_{t+\tau},a_{t+\tau})}{\lambda}}{\hat\mu(s_{t+\tau},a_{t+\tau})}\II_{s_{t+\tau},a_{t+\tau}}}{s_t,Z}\\
  =&\sum_{s,a,s'} \condP{s_{t+\tau}=s,a_{t+\tau}=a,s_{t+\tau+1}=s'}{s_t} \frac{r(s,a)+\gamma V(s)-V(s')+\iprod{\bu^{\kappa}(s,a)}{\lambda}}{\hat\mu(s,a)}\II_{s,a}\\
  =&\sum_{s,a} \frac{\condP{s_{t+\tau}=s,a_{t+\tau}=a}{s_t}}{\hat\mu(s,a)} \left(r(s,a)+\gamma V(s)-\EE_{s'|s,a}\left[V(s')\right]+\iprod{\bu^{\kappa}(s,a)}{\lambda}\right)\II_{s,a}.\\
\end{aligned}
\end{equation*}
For the sake of simplicity, we denote
\begin{equation}
\begin{aligned}
  W^{\tau,s_t}:=\diag\left(\frac{\condPb{s_{t+\tau}=s,a_{t+\tau}=a}{s_t}}{\hmu(s,a)}\right)
  =\diag\left(\frac{\condPb{s_{t+\tau}=s}{s_t}\pib(a|s)}{\hmu(s,a)}\right)_{s,a},
\end{aligned}
\end{equation} 
and we follow the matrix notation introduced in \cref{appdx:duality-notation}. Then
\begin{align*}
\cond{\hg_x(Z;\zeta_{t+\tau})}{\cF_t}&=W^{\tau,s_t}(r-AV+U_{\kappa}^\T \lambda),\\
\cond{\hg_V(Z;\zeta_{t+\tau})}{\cF_{t}}
&=\cond{\II_{s_0}+\frac{x(s_{t+\tau},a_{t+\tau})}{\hmu(s_{t+\tau},a_{t+\tau})}\left(\gamma\II_{s_{t+\tau+1}}-\II_{s_{t+\tau}}\right)}{s_t,Z}\\
&=\rho_0+\sum_{s,a} \condPb{s_{t+\tau}=s,a_{t+\tau}=a}{s_t}\frac{x(s,a)}{\hmu(s,a)}\left(\gamma\EE_{s'|s,a}\left[\II_{s'}\right]-\II_{s}\right)\\
&=\rho_0- A^\T W^{\tau,s_t} x,\\
\cond{\hg_\lambda(Z;\zeta_{t+\tau})}{\cF_t}
&=\cond{\frac{x(s_{t+\tau},a_{t+\tau})}{\hmu(s_{t+\tau},a_{t+\tau})}\bu^{\kappa}}{s_t,Z}\\
&=\sum_{s,a} \condPb{s_{t+\tau}=s,a_{t+\tau}=a}{s_t}\frac{x(s,a)}{\hmu(s,a)} \bu^{\kappa}(s,a)\\
&=U_{\kappa} W^{\tau,s_t}x.
\end{align*}

Therefore, we have
\begin{equation*}
\begin{aligned}
  \nrm{\cond{\hg_V(Z;\zeta_{t+\tau})}{\cF_t}-\gLv(Z)}_{1}
  &=\nrm{A^\T \left(W^{\tau,s_t}-W\right)x}_1
  \leq 2\nrm{\left(W^{\tau,s_t}-W\right)x}_1\\
  &\leq 2\sum_{s,a} \abs{\condPb{s_{t+\tau}=s}{s_t}-\mub(s)}\frac{\pib(a|s)x(s,a)}{\hmu(s,a)}\\
  &\leq \frac{2\hphi}{1-\gamma}d_{\mathrm{TV}}\left(\PP_{\pi_b}^{\tau}\left(\cdot|s_t\right), \mub\right)\leq \frac{2\hphi}{1-\gamma}\Eps(\tau),
\end{aligned}
\end{equation*}
where $\PP_{\pi_b}^{\tau}\left(\cdot|s_t\right)$ is the distribution of $s_{t+\tau}$ conditioning on $s_t$, and the last inequality is due to the definition of $\Eps(\cdot)$.
Similarly, it holds that
\[\begin{aligned}
  \nrm{\cond{\hg_\lambda(Z;\zeta_{t+\tau})}{\cF_t}-\gLl(Z)}_{\infty}&\leq \frac{2\hphi}{1-\gamma}\Eps(\tau),\\
  \nrm{\cond{\hg_x(Z;\zeta_{t+\tau})}{\cF_t}-\gLx(Z)}_{\infty}&\leq \frac{64}{\varphi(1-\gamma)\varsigma}\Eps(\tau).
  \end{aligned}\]
Furthermore, for any $Z'=[V';\lambda';x']\in\cZ$, we have
\begin{align*}
&\abs{\iprod{Z'}{\Gr(Z)-\cond{\hg(Z;\zeta_{t+\tau})}{\cF_t}}}\\
\leq& \nrm{V'}_{\infty} \nrm{\cond{\hg_V(Z;\zeta_{t+\tau})}{\cF_t}-\gLv(Z)}_{1}+\nrm{\lambda}_{1} \nrm{\cond{\hg_\lambda(Z;\zeta_{t+\tau})}{\cF_t}-\gLl(Z)}_{\infty}\\
&+\abs{\iprod{r-AV+U_{\kappa}^\T \lambda}{\left(W^{\tau,s_t}-W\right)x}}\\
\leq& \frac{128\hphi}{\varphi(1-\gamma)^2}\Eps(\tau).\qedhere
\end{align*}
\end{proof}

\subsection{Proof of Proposition \ref{prop:markov-bounded-moment}}\label{appdx:markov-pq}
In fact, to prove \cref{prop:markov-bounded-moment}, let us prove a more general result stated as follows.  \cref{prop:markov-bounded-moment} will follow directly from the (2) and (3) of \cref{prop:markov-p-q-basic}. This proposition will also be useful for our later discussion. Recall that we introduce the notation $p(x;s,a):=\frac{x(s,a)}{\hmu(s,a)}$ and $q(x;s,a):=\frac{x(s,a)}{\hmu(s,a)^2}$, and the reloaded notation
 $p(x;\zeta):=p(x;s,a)$ and $q(x;\zeta):=q(x;s,a)$ for sample $\zeta=(s_0,s,a,s',r,\bu)$. Then the following proposition holds true. 
\begin{proposition}\label{prop:markov-p-q-basic}
	\textbf{(1).} For all $x\in\cX$ and $\zeta$, it holds that
	\[\begin{aligned}
		p(x;\zeta)\leq \frac{\hphi}{1-\gamma},\qquad
		q(x;\zeta)\leq \frac{1}{\varsigma}p(x;\zeta)\leq \frac{\hphi}{(1-\gamma)\varsigma}.
	\end{aligned}\]
	
	\textbf{(2).} For all $Z=[V;\lambda;x]$ and $\zeta$, it holds that
	\[\begin{aligned}
		&\nrm{\hg_V(Z;\zeta)}\leq 3p(x;\zeta),\qquad
		\nrm{\hg_\lambda(Z;\zeta)}_{\infty}\leq 2p(x;\zeta),\\
		&\nrm{\hg_x(Z;\zeta)}_{x}\leq \frac{64}{\varphi(1-\gamma)}\sqrt{q(x;\zeta)}.
	\end{aligned}\]
	
	\textbf{(3).} For $x\in\cX$ a (possibly random) vector that is $\cF_t$-measurable, the (asynchronous) moments of $p,q$ can be bounded as
	\begin{align*}
		&\cond{p(x;\zeta_{t+\tau})}{\cF_t}=\sum_{s,a} \frac{\condP{s_{t+\tau}=s,a_{t+\tau}=a}{s_t}}{\hmu(s,a)}x(s,a)\leq C(\tau)\frac{4}{1-\gamma},\\
		& \cond{q(x;\zeta_{t+\tau})}{\cF_t}=\sum_{s,a} \frac{\condP{s_{t+\tau}=s,a_{t+\tau}=a}{s_t}}{\hmu(s,a)}\frac{x(s,a)}{\hmu(s,a)}\leq C(\tau)\frac{\cN\hphi}{1-\gamma},\\
		& \cond{p(x^t;\zeta_{t+\tau})^2}{\cF_t}\leq \frac{\hphi}{1-\gamma}\cond{p(x^t;\zeta_{t+\tau})}{\cF_t}\leq C(\tau)\frac{4\hphi}{(1-\gamma)^2}.
	\end{align*}
\end{proposition}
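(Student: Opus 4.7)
The statement has three separate pieces that can be tackled independently. Part (1) is immediate from the definition of $\cX$ in \eqref{eqn:def-feasible} (which enforces $x(s,a)/\hmu(s,a)\le \hphi/(1-\gamma)$ pointwise) together with the lower bound $\hmu(s,a)\ge \varsigma$ guaranteed by Proposition~\ref{prop:empirical-dis-markov}(1); the bound $q\le p/\varsigma$ follows by factoring $1/\hmu^2=(1/\hmu)\cdot(1/\hmu)$ and replacing one $1/\hmu$ by $1/\varsigma$. So the real content is in parts (2) and (3).

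For Part (2), I would plug in the explicit formulas for $\hg_V,\hg_\lambda,\hg_x$ from \eqref{eqn:def-gradient} and bound each one by inspection. For $\hg_V(Z;\zeta)=\II_{s_0}+p(x;\zeta)(\gamma\II_{s'}-\II_s)$, the triangle inequality combined with $\|\gamma\II_{s'}-\II_s\|\le \sqrt{2}$ yields $\|\hg_V\|\le 1+\sqrt{2}\,p(x;\zeta)$, which is absorbed into $3p$ (the constant only matters up to scaling). For $\hg_\lambda(Z;\zeta)=p(x;\zeta)\bu^\kappa$, I would use $\|\bu^\kappa\|_\infty\le 1+(1-\gamma)\kappa\le 2$ (the latter since $(1-\gamma)\kappa=5(1-\gamma)\varphi\epsilon\le 1$ by our parameter choices). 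For $\hg_x$ the key observation is $\|\hg_x(Z;\zeta)\|_x^2=q(x;\zeta)\cdot(r+\gamma V(s)-V(s')+\iprod{\bu^\kappa}{\lambda})^2$, and the scalar factor is uniformly bounded by $O(1/(\varphi(1-\gamma)))$ using the domain sizes $\|V\|_\infty\le \RV=O(1/(\varphi(1-\gamma)))$ and $\|\lambda\|_1\le \RLam=O(1/\varphi)$ built into $\cV$ and $\Lambda$.

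For Part (3), I would use the following decomposition for any function $h$:
\[
\cond{h(s_{t+\tau},a_{t+\tau})}{\cF_t}=\sum_{s,a}\mu(s,a)h(s,a)+\sum_{s,a}\big[\condPb{s_{t+\tau}=s,a_{t+\tau}=a}{s_t}-\mu(s,a)\big]h(s,a).
\]
Because $\mu=\mub\otimes\pib$ and $\condPb{s_{t+\tau}=s,a_{t+\tau}=a}{s_t}=\condPb{s_{t+\tau}=s}{s_t}\,\pib(a|s)$, the common $\pib(a|s)$ factor makes the $\ell_1$-norm of the bracket equal to $2\,d_{\mathrm{TV}}(\PP^{\tau}_{\pib}(\cdot|s_t),\mub)\le 2\Eps(\tau)$. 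Applying this with $h=p(x;\cdot)$ gives a stationary piece $\sum_{s,a}(\mu/\hmu)x\le 2\sum_{s,a}x\le 8/(1-\gamma)$ (using $\mu/\hmu\le 2$ from Proposition~\ref{prop:empirical-dis-markov}(1) and $\sum x\le 4/(1-\gamma)$ from $\cX$), plus a correction bounded in sup-norm by Part~(1); similarly for $h=q(x;\cdot)$ using $\sum_{s,a}x/\hmu\le \cN\hphi/(1-\gamma)$; for $h=p(x;\cdot)^2$ I simply write $p^2\le \|p\|_\infty\cdot p$ and reduce to the previous case.

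The main obstacle is cosmetic rather than conceptual: packaging the correction term into the advertised form $C(\tau)\cdot(\text{stationary bound})$ with $C(\tau)=2+\Eps(\tau)/\varsigma$. The sup-norm bound from Part~(1) naturally produces a factor $\hphi/(1-\gamma)$ (for $p$) or $\hphi/((1-\gamma)\varsigma)$ (for $q$), and to convert the former into $1/((1-\gamma)\varsigma)$ I rely on the elementary estimate $\hphi\varsigma\le 1$, which is valid because $\varsigma=\varphi(1-\gamma)^2\epsilon_e/(2\cN\hphi)$ and $\varphi(1-\gamma)^2\epsilon_e\le 1$. Once this constant-chasing is done the three inequalities line up with the stated bounds, completing the proof.
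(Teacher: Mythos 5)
Your proposal is correct and follows essentially the route the paper intends: the paper omits the proof of this proposition entirely, remarking only that each part is a direct computation similar to the synchronous bounds of Proposition \ref{prop:grad-var}, and your parts (1)--(2) reproduce exactly those computations, while your part (3) uses the same total-variation splitting that underlies Proposition \ref{prop:markov-diff-grad} (decompose $\condPb{s_{t+\tau}=s,a_{t+\tau}=a}{s_t}$ into $\mu(s,a)$ plus a remainder of $\ell_1$-mass at most $2\Eps(\tau)$, bound the stationary piece via $\mu/\hmu\le 2$ together with the feasibility constraints of $\cX$, and bound the remainder via the sup-norm bounds of part (1) and the estimate $\hphi\varsigma\le 1$). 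One caveat: your claim that the additive $1$ in $\nrm{\hg_V(Z;\zeta)}\le 1+\sqrt{2}\,p(x;\zeta)$ is ``absorbed into $3p$'' is not a valid step, since $p(x;\zeta)$ can be zero (e.g.\ when $x(s,a)=0$ at the sampled pair), and indeed the inequality $\nrm{\hg_V}\le 3p$ is false as literally stated; the correct bound is $1+2p$. This, however, is a defect of the proposition's statement rather than of your argument (the paper's own synchronous analogue in Proposition \ref{prop:grad-var} keeps the additive $1$), and it is harmless downstream because every use of this bound, in Proposition \ref{prop:markov-bounded-moment} and Appendix \ref{appdx:markov-s2}, tolerates the extra additive constant.
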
 
Since each step of this proposition can be proved by a direct computation similar to the one in \cref{appdx:grad-var}, we omit the proof for succinctness.

\subsection{Proof of Proposition \ref{prop:empirical-dis-markov}}

Similar to the proof of \cref{prop:empirical-dis}, we consider $\hmu_0(s,a)=\frac{N(s,a)}{N_e}$ and the ``failure event'' 
\[
\Omega:=\bigcup_{s,a}\left\{\left|\mu(s,a)-\hmu_0(s,a)\right|> \sqrt{\mu(s,a)\frac{\ell}{N_e}}+\frac{\ell}{N_e}\right\},
\]
where $\ell= 100t_{\mix}\log\left(\frac{12\nS\nA}{\delta}\right)$. Then by the Bernstein's inequality (\cref{prop:bernstein-markov}), it holds that
\[\begin{aligned}
  \prob{\left|\mu(s,a)-\hmu_0(s,a)\right|> \sqrt{\mu(s,a)\frac{\ell}{N_e}}+\frac{\ell}{N_e}}\leq \frac{\delta}{3\nS\nA}, \quad \forall (s,a)\in\cS\times\cA,
\end{aligned}\]
which further gives $\PP(\Omega)\leq\frac{\delta}{3}$.
The proof is completed by exactly repeating the estimations in the proof of \cref{prop:empirical-dis}, conditioning on $\Omega^c$.

\subsection{Bounding the term $S_2$}\label{appdx:markov-s2}
By separately considering each term in the decomposition
\[\begin{aligned}
\Gamma^t=\Gamma_1^t+\Gamma_{2}^{t-\tau}+\Gamma_{3}^{t-\tau}+\Gamma_4^t,
\end{aligned}\]
the following inequalities hold true. The detailed derivations  are placed at the end of \cref{appdx:markov-s2}. 
\begin{equation}
	\label{prop:markov-part-2-1}
	\sum_{t=1}^{\tau}\Gamma^t+\sum_{t=\tau+1}^T \Gamma_1^t
	\leqsim \frac{\tau\hphi}{\varphi(1-\gamma)^2},
\end{equation}   
\begin{equation}
  \label{prop:markov-martingale} 
  \sum_{t=1}^{T-\tau}\Gamma_3^t \leqsim \frac{1}{\varphi(1-\gamma)^2}\sqrt{T\tau C(\tau)\cN\hphi\clog}\qquad\mbox{\emph{with probability at least }} 1-\frac{\delta}{10},  
\end{equation} 
\begin{equation}
  \label{prop:markov-diff-obj}
  \abs{\Gamma_4^t}\leqsim 
  \frac{1}{\varphi(1\!-\!\gamma)}\frac{\abs{x^t\!-\!x^{t-\tau}}\!(s_t,a_t)}{\hmu(s_t,a_t)}
  \!+\!\Big(\!1\!+\!\frac{x'(s_t,a_t)}{\hmu(s_t,a_t)}\Big)\!\left(\nrm{V^t\!-\!V^{t-\tau}}_{\infty}\!+\nrm{\lambda^t\!-\!\lambda^{t-\tau}}_1\right).
\end{equation}

As of $\Gamma_2^t$, by directly applying \cref{prop:markov-diff-grad} we have $\abs{\Gamma_2^t}\leqsim \frac{\hphi}{\varphi(1-\gamma)^2}\Eps(\tau)$.
Thus, to estimate $S_2$, it remains to bound the sum of quantities $\nrm{V^t-V^{t-\tau}}_{\infty}$, $\nrm{\lambda^t-\lambda^{t-\tau}}_1$ and $\frac{x'(s_t,a_t)}{\hmu(s_t,a_t)}$. For $\nrm{V^t-V^{t-\tau}}_{\infty}$ and $\nrm{\lambda^t-\lambda^{t-\tau}}_1$, as long as $\eta\leq \frac{\alpha_\lambda}{2M_\lambda}$, we have
\[\begin{aligned}
  	\nrm{V^{t+1}-V^{t}}_{\infty}&\leq\nrm{V^{t+1}-V^{t}}\leq \frac{\eta}{\alpha_V} \nrm{\hg_V(Z^t;\zeta_t)},\\
  	\nrm{\lambda^{t+1}-\lambda^{t}}_1&\leq \frac{\eta D_{\lambda,1}}{\alpha_\lambda} \nrm{\hg_\lambda(Z^t;\zeta_t)}_{\infty},
\end{aligned}\]
due to \cref{cor:mirror-seq-diff}.
Therefore, it holds that
\begin{equation*}
  \begin{aligned}
  &\frac{1}{T}\sum_{t=\tau+1}^T\left(1+\frac{x'(s_t,a_t)}{\hmu(s_t,a_t)}\right)\left(\nrm{V^t-V^{t-\tau}}_{\infty}+\nrm{\lambda^t-\lambda^{t-\tau}}_1\right)\\
  \leqsim 
  &\frac{\eta}{T}\sum_{t=\tau+1}^T \left(1+\frac{x'(s_t,a_t)}{\hmu(s_t,a_t)}\right)\left(\frac{\nrm{\hg_V(Z^t;\zeta_t)}}{\alpha_V} + \frac{D_{\lambda,1} \nrm{\hg_\lambda(Z^t;\zeta_t)}_{\infty}}{\alpha_\lambda} \right)\\
  \leqsim 
  &\frac{\eta}{T}\sum_{t=\tau+1}^T \left(\frac{\nrm{\hg_V(Z^t;\zeta_t)}^2}{\alpha_V} + \frac{D_{\lambda,1} \nrm{\hg_\lambda(Z^t;\zeta_t)}_{\infty}^2}{\alpha_\lambda} \right)
  +\frac{\eta}{T}\left(\frac{1}{\alpha_V}+\frac{D_{\lambda,1}}{\alpha_\lambda}\right)\sum_{t=\tau+1}^T\left(1+\frac{x'(s_t,a_t)}{\hmu(s_t,a_t)}\right)^2.
  \end{aligned}
\end{equation*}

Finally, we apply Bernstein's inequality to bound the sequence $\left(\frac{x'(s_t,a_t)^2}{\hmu(s_t,a_t)^2}\right)_{t}$ as follows. Due to
\[\begin{aligned}
  &\frac{x'(s,a)}{\hmu(s,a)}\leq \frac{\hphi}{1-\gamma}, \qquad\EE_{s,a\sim \mu}\left[\frac{x'(s,a)^2}{\hmu(s,a)^2}\right]=\sum_{s,a}\frac{\mu(s,a)}{\hmu(s,a)}\frac{x'(s,a)}{\hmu(s,a)}x'(s,a) \leq \frac{8\hphi}{(1-\gamma)^2},
\end{aligned}\]
and \cref{prop:bernstein-markov}, \highprobs{10}, it holds that
\[\begin{aligned}
  \sum_{t=\tau+1}^T \frac{x'(s_t,a_t)^2}{\hmu(s_t,a_t)^2}\leqsim T\frac{\hphi}{(1-\gamma)^2}+t_{\mix}\frac{\hphi^2}{(1-\gamma)^2}\log\frac{1}{\delta}\leqsim \frac{T\hphi}{(1-\gamma)^2}.
\end{aligned}\]
Combining all the estimations above completes the proof of \eqref{eqn:markov-S-2}.

\subsubsection{Derivation of inequality \eqref{prop:markov-part-2-1}}
  By definition, it holds that
  \[\begin{aligned}
    \sum_{t=1}^{\tau}\Gamma^t+\sum_{t=\tau+1}^T \Gamma_1^t
    =\sum_{t=T-\tau+1}^T \iprod{\Gr(Z^t)}{Z^t-Z'}-\sum_{t=1}^\tau \iprod{\hg(Z^t;\zeta_{t})}{Z^t-Z'}.
  \end{aligned}\]
  For a sample $\zeta=(s_0,s,a,s',r,\bu)$, we denote
  \[\begin{aligned}
  \hcL_{\zeta}(V,\lambda,x):=V(s_0)+\frac{x(s,a)}{\hmu(s,a)}\left(r -V(s)+\gamma V(s')+\iprod{\lambda}{\bu^{\kappa}}\right).
  \end{aligned}\]
  Then, it holds that
  \begin{equation}\label{eqn:markov-linear-obj-diff}
  \begin{aligned}
    \iprod{\hg(Z;\zeta)}{Z-Z'}=\hcL_{\zeta}(V,\lambda,x')-\hcL_{\zeta}(V',\lambda',x).
  \end{aligned}
  \end{equation}
  Hence we have
  \[\begin{aligned}
  \abs{\iprod{\hg(Z^t;\zeta_{t})}{Z^t-Z'}}\leq \abs{\hcL_{\zeta_t}(V^t,\lambda^t,x')}+\abs{\hcL_{\zeta_t}(V',\lambda',x^t)}
  \leq \frac{100\hphi}{\varphi(1-\gamma)^2}.
  \end{aligned}\]
  Similarly, it holds that
  \begin{align*}
    \abs{\iprod{\Gr(Z^t)}{Z^t-Z'}}\leq \abs{\cL_w(V^t,\lambda^t,x')}+\abs{\cL_w(V',\lambda',x^t)}
    \leq \frac{512}{\varphi(1-\gamma)^2},
  \end{align*}
  and we complete the proof by combining the estimations above.



\subsubsection{Derivation of inequality \eqref{prop:markov-martingale}}
As in \cref{appdx:S2}, we consider the sequences
\[\begin{aligned}
  \Delta^t_V&:=\hg_V(Z^t;\zeta_{t+\tau})-\cond{\hg_V(Z^t;\zeta_{t+\tau})}{\cF_t},\\
  \Delta^t_\lambda&:=\hg_\lambda(Z^t;\zeta_{t+\tau})-\cond{\hg_\lambda(Z^t;\zeta_{t+\tau})}{\cF_t},\\
  \Delta^t_x&:=\hg_x(Z^t;\zeta_{t+\tau})-\cond{\hg_x(Z^t;\zeta_{t+\tau})}{\cF_t}.
\end{aligned}\]
They are no longer martingale difference sequences, because $\cond{\Delta^t}{\cF_t}=0$ but $\Delta^t$ is $\cF_{t+\tau+1}$ measurable. Therefore, we invoke the following modified version of Bernstein's inequality.
\begin{lemma}[Modified Bernstein's Inequality]\label{lemma:bernstein-markov-var}
  Assume $\{x_i\}_{i=1}^n$ is a sequence of random vectors in $\RR^{d}$, such that $\cond{x_t}{\cF_t}=0$ and $x_t$ is $\cF_{t+\tau}$ measurable. Assume that $\cond{\|x_t\|^2}{\cF_{t}}\leq \sigma^2$ and $\|x_t\|\leq M$ a.s., then with probability at least $1-\delta$,
	\[
	\nrm{\sum_{i=1}^n x^i}\leq
	2\sigma\sqrt{n\tau\log\left(\frac{(d+1)\tau}{\delta}\right)} + 2M\tau\log\left(\frac{(d+1)\tau}{\delta}\right).
	\]
	When the $\ell_2$ norm is replaced by the $\ell_\infty$ norm, i.e., $\{x_i\}_{i=1}^n$  satisfies $\cond{\|x_t\|_{\infty}^2}{\cF_{t}}\leq \sigma^2$, we have
	\[
	\nrm{\sum_{i=1}^n x^i}_{\infty}\leq
	2\sigma\sqrt{n\tau\log\left(\frac{2d\tau}{\delta}\right)} + 2M\tau\log\left(\frac{2d\tau}{\delta}\right)
	\]
	with probability at least $1-\delta$.
\end{lemma}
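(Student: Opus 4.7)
The plan is to reduce the proof to the already-established vector Bernstein inequality (Lemma~\ref{lemma:concen-vec}) by splitting the sum into $\tau$ interleaved subsequences, each of which is genuinely a martingale difference sequence. Concretely, for each residue $r\in\{0,1,\ldots,\tau-1\}$, I would form the subsequence $S_r := \{x_{r+1}, x_{r+\tau+1}, x_{r+2\tau+1},\ldots\}$ consisting of the indices $\equiv r+1 \pmod{\tau}$ within $\{1,\ldots,n\}$. The key observation is that if $t_1 < t_2$ are two consecutive indices in $S_r$, then $t_2 - t_1 = \tau$, so $x_{t_1}$ is $\mathcal{F}_{t_1+\tau} = \mathcal{F}_{t_2}$-measurable, while $\mathbb{E}[x_{t_2}\mid \mathcal{F}_{t_2}] = 0$ by hypothesis. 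Consequently, within each $S_r$, the sequence is a martingale difference sequence with respect to the filtration $(\mathcal{F}_{r+k\tau+1})_{k\geq 0}$, and the conditional second moment bound $\sigma^2$ as well as the almost sure bound $M$ are both inherited directly.

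Next I would apply Lemma~\ref{lemma:concen-vec} to each subsequence $S_r$ with failure probability $\delta/\tau$. Each subsequence has length at most $\lceil n/\tau\rceil$, so each yields, with probability at least $1-\delta/\tau$,
\[
\Big\|\sum_{x\in S_r} x\Big\| \leq 2\sigma\sqrt{\tfrac{n}{\tau}\log\tfrac{(d+1)\tau}{\delta}} + 2M\log\tfrac{(d+1)\tau}{\delta}.
\]
A union bound over the $\tau$ residues gives that all $\tau$ inequalities hold simultaneously with probability at least $1-\delta$. Then by the triangle inequality,
\[
\Big\|\sum_{i=1}^n x_i\Big\| \leq \sum_{r=0}^{\tau-1}\Big\|\sum_{x\in S_r} x\Big\| \leq \tau\cdot\left[2\sigma\sqrt{\tfrac{n}{\tau}\log\tfrac{(d+1)\tau}{\delta}} + 2M\log\tfrac{(d+1)\tau}{\delta}\right],
\]
which simplifies to exactly the claimed bound $2\sigma\sqrt{n\tau\log((d+1)\tau/\delta)} + 2M\tau\log((d+1)\tau/\delta)$. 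The $\ell_\infty$ variant is identical, simply invoking the $\ell_\infty$ form of Lemma~\ref{lemma:concen-vec} on each subsequence.

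There is no substantive obstacle here; the only subtlety is being careful about the filtration argument, namely verifying that the ``skip-$\tau$'' subsequences are genuine martingale difference sequences. A minor bookkeeping point is that the last subsequence may be a factor slightly shorter than $n/\tau$, but bounding every subsequence length by $\lceil n/\tau\rceil \leq n$ and rearranging the constants is absorbed into the stated bound since we use the crude sum $\tau\cdot\sqrt{n/\tau} = \sqrt{n\tau}$. This completes the reduction.
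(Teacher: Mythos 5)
Your proposal is correct and is essentially the paper's own proof: the paper likewise partitions $[n]$ into the $\tau$ residue classes modulo $\tau$, applies Lemma~\ref{lemma:concen-vec} to each class with failure probability $\delta/\tau$, and combines via a union bound and the triangle inequality. The only cosmetic difference is that the paper collects the $\sum_k \sqrt{|\cI_k|}$ terms with Cauchy--Schwarz (giving exactly $\sqrt{n\tau}$ since $\sum_k|\cI_k|=n$), whereas you bound each class length by $\lceil n/\tau\rceil$, which you correctly note is absorbed into the stated constants.
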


We still decompose
\[\begin{aligned}
  \sum_{t=1}^{T-\tau} \Gamma_3^t=&\underbrace{
  \sum_{t=1}^{T-\tau} \left(\iprod{\Delta_V^t}{V'-V^1}+\iprod{\Delta_\lambda^t}{\lambda'-\lambda^1}\right)
}_{S_{c}}\\
&+\underbrace{
  \sum_{t=1}^{T-\tau} \left(\iprod{\Delta_V^t}{V^1-V^t}+\iprod{\Delta_\lambda^t}{\lambda^1-\lambda^t}+\iprod{-\Delta_x^t}{x'-x^t}\right)
}_{S_{m}}.
\end{aligned}\]
Most of the following analysis is similar to the one in \cref{appdx:S2}.

\paragraph{Correlated part} Rewrite
\[\begin{aligned}
  S_{c}
  &= \iprod{\sum_{t=1}^{T-\tau}\Delta_V^t}{V'-V^1}+\iprod{\sum_{t=1}^{T-\tau}\Delta_\lambda^t}{\lambda'-\lambda^1}\\
  &\leq \nrm{V'-V^1}\cdot\nrm{\sum_{t=1}^{T-\tau} \Delta^t_V}
  +\nrm{\lambda'-\lambda^1}_{1}\cdot\nrm{\sum_{t=1}^{T-\tau} \Delta^t_\lambda}_{\infty}.
\end{aligned}\]
For each $t$, by \cref{prop:markov-bounded-moment} (or \cref{prop:markov-p-q-basic}), we have
\begin{equation*}
\begin{aligned}
  &\cond{\sqr{\Delta^t_V}}{\cF_t}\leq \cond{\sqr{\hg_V(Z^t;\zeta_{t+\tau})}}{\cF_t}\leqsim \frac{C(\tau)\hphi}{(1-\gamma)^2},
  &&\nrm{\Delta^t_V}\leqsim \frac{\hphi}{1-\gamma},\\
  &\cond{\sqr{\Delta^t_\lambda}_{\infty}}{\cF_t}
  \leqsim \cond{\sqr{\hg_\lambda(Z^t;\zeta_{t+\tau})}_{\infty}}{\cF_t}
  \leqsim \frac{C(\tau)\hphi}{(1-\gamma)^2},
  &&\nrm{\Delta^t_\lambda}_{\infty}\leqsim \frac{\hphi}{1-\gamma}.
\end{aligned}
\end{equation*}
Thus, we can apply \cref{lemma:bernstein-markov-var} to derive that, \highprobs{20},
\[\begin{aligned}
  &\nrm{\sum_{t=1}^{T-\tau} \Delta^t_V}
  \leqsim \frac{1}{1-\gamma}\sqrt{T\tau C(\tau)\hphi\log(1/\delta)}+\frac{\hphi}{1-\gamma}\cdot\tau\log(1/\delta),\\
  &\nrm{\sum_{t=1}^{T-\tau} \Delta^t_\lambda}_{\infty}
  \leqsim \frac{1}{1-\gamma}\sqrt{T\tau C(\tau)\hphi\log(I/\delta)}+\frac{\hphi}{1-\gamma}\cdot\tau\log(I/\delta).
\end{aligned}\]
Therefore, it holds that \highprobs{20},
\begin{equation}\label{eqn:markov-s3-1}
\begin{aligned}
  S_{c}
  \leqsim \frac{1}{\varphi(1-\gamma)^2}\sqrt{T\tau C(\tau)\nS\hphi\clog}+\frac{\tau\hphi\clog}{1-\gamma}
  \leqsim \frac{1}{\varphi(1-\gamma)^2}\sqrt{T\tau C(\tau)\nS\hphi\clog}.
\end{aligned}
\end{equation}

\paragraph{Martingale part} In order to bound $S_{m}$, we have to consider $\odelta_V^t:=\iprod{\Delta_V^t}{V^1-V^t}$,$\odelta^t_\lambda:=\iprod{\Delta_\lambda^t}{\lambda^1-\lambda^t}$, $\odelta^t_x:=\iprod{\Delta_x^t}{x^t-x'}$. By \cref{prop:markov-bounded-moment}, it holds that
\begin{align*}
  &\abs{\odelta_V^t}\leqsim \frac{\hphi}{\varphi(1-\gamma)^2},
  &&\cond{\left(\odelta_V^t\right)^2}{\cF_t}\leq D_V^2\cond{\sqr{\Delta_V^t}}{\cF_t}\leqsim \frac{C(\tau)\hphi}{\varphi^2(1-\gamma)^4}, \\
  &\abs{\odelta_\lambda^t}\leq \frac{\hphi}{\varphi(1-\gamma)},
  &&\cond{\left(\odelta_\lambda^t\right)^2}{\cF_t}\leq D_{\lambda,1}^2\cond{\sqr{\Delta_\lambda^t}_{\infty}}{\cF_t}\leqsim \frac{C(\tau)\hphi}{\varphi^2(1-\gamma)^2},\\
  &\abs{\odelta_x^t}\leq \frac{\hphi}{\varphi(1-\gamma)^2},
  &&\cond{\left(\odelta_x^t\right)^2}{\cF_t}\leq \cond{\sqr{\frac{x'-x^t}{\sqrt{x'+x^t}}}\sqr{\Delta_x^t}_{x'+x^t}}{\cF_t}
  \leqsim \frac{C(\tau)\cN\hphi}{\varphi^2(1-\gamma)^4}.
\end{align*}
Thus, by applying \cref{lemma:bernstein-markov-var}, the following three estimations hold \highprobs{20}
\begin{align*}
    \sum_{t=1}^{T-\tau} \odelta^t_V
    &\leqsim \frac{1}{\varphi(1-\gamma)^2}\sqrt{T\tau C(\tau)\hphi\log(1/\delta)}+\frac{\hphi}{\varphi(1-\gamma)^2}\cdot\tau\log(1/\delta),\\
    \sum_{t=1}^{T-\tau} \odelta^t_\lambda
    &\leqsim \frac{1}{\varphi(1-\gamma)}\sqrt{T\tau C(\tau)\hphi\log(1/\delta)}+\frac{\hphi}{\varphi(1-\gamma)}\cdot\tau\log(1/\delta),\\
    \sum_{t=1}^{T-\tau} \odelta^t_x
    &\leqsim \frac{1}{\varphi(1-\gamma)^2}\sqrt{T\tau C(\tau)\cN\hphi\log(1/\delta)}+\frac{\hphi}{\varphi(1-\gamma)^2}\cdot\tau\log(1/\delta).
\end{align*}
Therefore,
\begin{equation}\label{eqn:markov-s3-2}
\begin{aligned}
  S_{m}\leqsim \frac{1}{\varphi(1-\gamma)^2}\sqrt{T\tau C(\tau)\cN\hphi\clog}+\frac{\tau\hphi\clog}{\varphi(1-\gamma)^2}\leqsim \frac{1}{\varphi(1-\gamma)^2}\sqrt{T\tau C(\tau)\cN\hphi\clog}.
\end{aligned}
\end{equation}
Combining \eqref{eqn:markov-s3-2} with \eqref{eqn:markov-s3-1} completes the proof.

\begin{proof}[Proof of \cref{lemma:bernstein-markov-var}]
  We reduce \cref{lemma:bernstein-markov-var} to the standard martingale Bernstein's inequality (\cref{lemma:concen-vec}). The set $[n]$ can be decomposed into
  \[\begin{aligned}
  [n]=\bigsqcup_{k=1}^\tau \cI_k,\qquad \cI_k:=\left\{j\in[n]: j\equiv k \mod{\tau}\right\}.
  \end{aligned}\]
  For each $k$, the sequence $\left(X_j\right)_{j\in\cI_k}$ is a martingale difference sequence w.r.t. the filtration $\left(\cF_j\right)_{j\in\cI_k}$. Hence by \cref{lemma:concen-vec}, \highprobs{\tau}, we have
  \[\begin{aligned}
    \nrm{\sum_{j\in\cI_k} x^j}\leq
    2\sigma\sqrt{|\cI_k|\log\left(\frac{(d+1)\tau}{\delta}\right)} + 2M\log\left(\frac{(d+1)\tau}{\delta}\right).
  \end{aligned}\]
  Summing over $k=1,\cdots,\tau$ yields that \highprobdemo
  \begin{align*}
    \nrm{\sum_{j=1}^n x^j}
    &\leq
    2\sigma\sqrt{\log\left(\frac{(d+1)\tau}{\delta}\right)}\sum_{k=1}^{\tau} \sqrt{|\cI_k|} + 2M\tau\log\left(\frac{(d+1)\tau}{\delta}\right)\\
    &\leq 2\sigma\sqrt{n\tau\log\left(\frac{(d+1)\tau}{\delta}\right)} + 2M\tau\log\left(\frac{(d+1)\tau}{\delta}\right),
  \end{align*}
  where the last inequality is due to the Cauchy inequality. 
  
  The analogous $\ell_{\infty}$ case can be done similarly.
\end{proof}

\subsubsection{Derivation of inequality \eqref{prop:markov-diff-obj}}

By \eqref{eqn:markov-linear-obj-diff}, it holds that
\begin{equation}\label{eqn:markov-linear-obj-2}
\begin{aligned}
  \Gamma_4^t
  =&\iprod{\hg(Z^{t-\tau};\zeta_{t})}{Z^{t-\tau}-Z'}-\iprod{\hg(Z^t;\zeta_{t})}{Z^t-Z'}\\
  =&\hcL_{\zeta_t}(V^{t-\tau},\lambda^{t-\tau},x')-\hcL_{\zeta_t}(V',\lambda',x^{t-\tau})
  +\hcL_{\zeta_t}(V',\lambda',x^t)-\hcL_{\zeta_t}(V^t,\lambda^{t},x').
\end{aligned}
\end{equation}
Then we have
\begin{align*}
  &\abs{\hcL_{\zeta_t}(V',\lambda',x^t)-\hcL_{\zeta_t}(V',\lambda',x^{t-\tau})}\\
  =&\frac{\abs{x^t-x^{t-\tau}}(s_t,a_t)}{\hmu(s_t,a_t)}\abs{r_t -V'(s_t)+\gamma V'(s_{t+1})+\iprod{\lambda'}{\bu_t^{\kappa}}}\\
  \leq & \frac{\abs{x^t-x^{t-\tau}}(s_t,a_t)}{\hmu(s_t,a_t)}\left(1+\frac{16}{1-\gamma}\left(1+\frac{2}{\varphi}\right)+\frac{8(1+\kappa)}{\varphi}\right)\\
  \leq & \frac{64}{\varphi(1-\gamma)}\frac{\abs{x^t-x^{t-\tau}}(s_t,a_t)}{\hmu(s_t,a_t)}.
\end{align*}
Similarly,
\begin{align*}
  &\abs{\hcL_{\zeta_t}(V^t,\lambda^{t},x')-\hcL_{\zeta_t}(V^{t-\tau},\lambda^{t-\tau},x')}\\
  \leq& \abs{V^t(s_{0,t})-V^{t-\tau}(s_{0,t})}\\
  &+\frac{x'(s_t,a_t)}{\hmu(s_t,a_t)}\left(\abs{V^t(s_{t})-V^{t-\tau}(s_{t})}+\gamma \abs{V^t(s_{t+1})-V^{t-\tau}(s_{t+1})}+\abs{\iprod{\lambda^t-\lambda^{t-\tau}}{\bu^{\kappa}_t}}\right)\\
  \leq& \nrm{V^t-V^{t-\tau}}_{\infty} \left(1+2\frac{x'(s_t,a_t)}{\hmu(s_t,a_t)}\right)+\nrm{\lambda^t-\lambda^{t-\tau}}_1 \cdot 128\frac{x'(s_t,a_t)}{\hmu(s_t,a_t)}.
\end{align*}
The proof is completed by combining \eqref{eqn:markov-linear-obj-2} with the estimations above. 

\subsection{Proof of Proposition \ref{prop:markov-bound-p-q-tau-demo}}\label{appdx:markov-bound-p-q-tau}

The proof of \cref{prop:markov-bound-p-q-tau-demo} is separated into two steps. 

\textbf{Step 1.} We derive bounds on $\sum p(x^t;\zeta_{t+\tau})^2$ and $\sum q(x^t;\zeta_{t+\tau})$ by directly applying Bernstein's inequality. 

\textbf{Step 2.} We leverage the idea demonstrate in \eqref{eqn:markov-decomp-demo} again to bound $\sum p(x^t;\zeta_t)^2$ and $\sum q(x^t;\zeta_t)$, by bounding their difference with $\sum p(x^t;\zeta_{t+\tau})^2$ and $\sum q(x^t;\zeta_{t+\tau})$ respectively. 

Then we finalize the proof by combining the results of Step 1 and Step 2.

\subsubsection{Step 1. Bounding the asynchronous sums}

First, let us present the following result for the ease of discussion. \\
\textbf{Corollary.}\label{lemma:bernstein-markov-var2}
	\emph{Assume $\{x_i\}_{i=1}^n$ is a sequence of random variables, such that $x_t$ is $\cF_{t+\tau}$ measurable, and $\cond{|x_t|}{\cF_{t}}\leq c$, $|x_t|\leq M$ a.s. Then with probability at least $1-\delta$,}
	\[
	\abs{\frac1n \sum_{i=1}^n x^i}\leq
	2c\tau + 3M\tau\frac{\log\left(2\tau/\delta\right)}{n}.
	\]

By \cref{prop:markov-p-q-basic}, we have
\begin{equation*}\label{eqn:markov-q-tau}
	\begin{aligned}
		& q(x^t;\zeta_{t+\tau})\leq\frac{\hphi}{(1-\gamma)\varsigma},
		& \cond{q(x^t;\zeta_{t+\tau})}{\cF_t}\leq C(\tau)\frac{\cN\hphi}{1-\gamma}.
	\end{aligned}
\end{equation*}
Applying the above corollary yields that \highprobs{20\tau_0},
\begin{equation*}\label{eqn:markov-q-sum-tau}
	\begin{aligned}
		\sum_{t=1}^{T-\tau} q(x^t;\zeta_{t+\tau})
		&\leqsim TC(\tau)\frac{\cN\hphi}{1-\gamma}+\frac{\tau\hphi}{(1-\gamma)\varsigma}\log\left(\frac{\tau_0}{\delta}\right)\\
		&\leqsim TC(\tau)\frac{\cN\hphi}{1-\gamma}+\frac{\tau_0\cN\hphi^2\clog}{\varphi(1-\gamma)^3\epsilon_e}\\
		&\leqsim TC(\tau)\frac{\cN\hphi}{1-\gamma},
	\end{aligned}
\end{equation*}
where the last inequality is due to $T\geqsim \frac{\tau_0^2 \cN\hphi\clog^3}{\varphi^2(1-\gamma)^4\epsilon_e^2}\geq \frac{\tau_0\hphi\clog}{\varphi(1-\gamma)^2\epsilon_e}$.

Similarly, we have
\begin{equation*}\label{eqn:markov-p-tau}
	\begin{aligned}
		& p(x^t;\zeta_{t+\tau})\leq \frac{\hphi}{1-\gamma},
		& \cond{p(x^t;\zeta_{t+\tau})^2}{\cF_t}\leq C(\tau)\frac{4\hphi}{(1-\gamma)^2}.
	\end{aligned}
\end{equation*}
Therefore, for each $1\leq \tau\leq \tau_0$, it holds \highprobs{20\tau_0}
\begin{equation*}\label{eqn:markov-p2-sum-tau}
	\begin{aligned}
		\sum_{t=1}^{T-\tau} p(x^t;\zeta_{t+\tau})^2
		\leqsim \frac{TC(\tau)\hphi}{(1-\gamma)^2} +\frac{\tau\hphi^2}{(1-\gamma)^2}\log\left(\frac{\tau_0}{\delta}\right)
		\leqsim \frac{TC(\tau)\hphi}{(1-\gamma)^2}.
	\end{aligned}
\end{equation*}

By taking the union bound for $1\leq\tau\leq \tau_0$, we conclude that \highprobs{10},
\begin{equation} 
	\label{eqn:markov-bound-p-q-tau}
		\sum_{t=1}^{T-\tau} p(x^t;\zeta_{t+\tau})^2
		\leqsim \frac{TC(\tau)\hphi}{(1-\gamma)^2},\qquad
		\sum_{t=1}^{T-\tau} q(x^t;\zeta_{t+\tau})
		\leqsim \frac{TC(\tau)\cN\hphi}{1-\gamma}, 
\end{equation}
hold simultaneously and uniformly for $1\leq\tau\leq \tau_0$.

\subsubsection{Step 2. Bounding the difference}
Utilizing the closeness between  $Z^t$ and $Z^{t+\tau}$, we bound the difference $q(x^t;\zeta_t)-q(x^t;\zeta_{t+\tau})$ as
\begin{equation}\label{eqn:markov-s1q-to-p}
	\begin{aligned}
		\sum_{t=1}^T q(x^t;\zeta_t)-\sum_{t=1}^{T-\tau} q(x^t;\zeta_{t+\tau})
		&\leq \sum_{t=1}^{\tau} q(x^t;\zeta_{t})+\sum_{t=1}^{T-\tau} q(\abs{x^t-x^{t+\tau}};\zeta_{t+\tau})\\
		&\leq \frac{\tau\hphi}{(1-\gamma)\varsigma} + \frac{1}{\varsigma} \sum_{t=1}^{T-\tau} p(\abs{x^t-x^{t+\tau}};\zeta_{t+\tau}).
	\end{aligned}
\end{equation}

We next deal with the quantity $p(\abs{x^t-x^{t+\tau}};\zeta_{t+\tau})$ carefully. For any $(s,a)\in\cS\times\cA$, it holds that
\begin{align*}
	p(\abs{x^t-x^{t+\tau}};s,a)
	&=\frac{\abs{x^{t}(s,a)-x^{t+\tau}(s,a)}}{\hmu(s,a)}\\
	&\leq \frac{1}{\hmu(s,a)}\sum_{t'=t}^{t+\tau-1} \abs{x^{t'}(s,a)-x^{t'+1}(s,a)}\\
	&\leq \frac{1}{\hmu(s,a)}\sum_{t'=t}^{t+\tau-1} \sqrt{x^{t'}(s,a)+x^{t'+1}(s,a)}\nrm{\frac{x^{t'}-x^{t'+1}}{\sqrt{x^{t'}+x^{t'+1}}}}\\
	&\stackrel{(a)}{\leqsim} \frac{\eta}{\alpha_x}\frac{1}{\hmu(s,a)}\sum_{t'=t}^{t+\tau-1} \sqrt{x^{t'}(s,a)+x^{t'+1}(s,a)}\nrm{\hg_x(Z^{t'};\zeta_{t'})}_{x^{t'}}\\
	&\stackrel{(b)}{\leqsim} \frac{\eta}{\alpha_x}\frac{1}{\hmu(s,a)}\sum_{t'=t}^{t+\tau-1} \sqrt{x^{t'}(s,a)+x^{t'+1}(s,a)}\cdot \frac{1}{\varphi(1-\gamma)}\sqrt{q(x^t;\zeta_t)}\\
	&=\frac{\eta}{\alpha_x}\cdot \frac{1}{\varphi(1-\gamma)} \sum_{t'=t}^{t+\tau-1} \sqrt{q(x^{t'};s,a)+q(x^{t'+1};s,a)}\sqrt{q(x^t;\zeta_t)}\\
	&\stackrel{(c)}{\leq} \frac{\eta}{\alpha_x}\cdot \frac{1}{\varphi(1-\gamma)}
	\sqrt{\sum_{t'=t}^{t+\tau-1} q(x^t;\zeta_t)}
	\sqrt{\sum_{t'=t}^{t+\tau} q(x^{t'};s,a)}.
\end{align*}
Here the inequality (a) is due to \cref{cor:mirror-seq-diff}, the inequality (b) is due to \cref{prop:markov-p-q-basic}, and the inequality (c) comes from Cauchy inequality. Hence, we have
	\begin{align}\label{eqn:markov-p-to-q}
		\sum_{t=1}^{T-\tau} p(\abs{x^t-x^{t+\tau}};\zeta_{t+\tau})
		\leqsim&  \frac{\eta}{\alpha_x}\cdot \frac{1}{\varphi(1-\gamma)}
		\sum_{t=1}^{T-\tau}  \sqrt{\sum_{t'=t}^{t+\tau-1} q(x^t;\zeta_t)}  \sqrt{\sum_{t'=t}^{t+\tau} q(x^{t'};\zeta_{t+\tau})}\notag\\
		\leq& \frac{\eta}{\alpha_x}\cdot \frac{1}{\varphi(1-\gamma)}
		\sqrt{\sum_{t=1}^{T-\tau}\sum_{t'=t}^{t+\tau-1} q(x^t;\zeta_t)}  \sqrt{\sum_{t=1}^{T-\tau}\sum_{t'=t}^{t+\tau} q(x^{t'};\zeta_{t+\tau})}\notag\\
		\leq& \frac{\eta}{\alpha_x}\cdot \frac{1}{\varphi(1-\gamma)}
		\sqrt{\tau \sum_{t=1}^{T}q(x^t;\zeta_t)}  \sqrt{\sum_{j=1}^\tau \sum_{t=1}^{T-j} q(x^{t};\zeta_{t+j})}.
	\end{align}
Combining \eqref{eqn:markov-p-to-q} with \eqref{eqn:markov-s1q-to-p} yields
\begin{equation}\label{eqn:markov-s1}
	\begin{aligned}
		&\sum_{t=1}^{T-\tau} q(x^t;\zeta_{t})- \sum_{t=1}^{T-\tau} q(x^t;\zeta_{t+\tau})\\
		&\leqsim \frac{\tau\hphi}{(1-\gamma)\varsigma}
		+ \frac{\eta}{\alpha_x}\frac{1}{\varphi(1-\gamma)\varsigma}
		\sqrt{\tau\sum_{t=1}^{T} q(x^t;\zeta_t)}
		\sqrt{\sum_{t=1}^{T-\tau}\sum_{t'=t}^{t+\tau} q(x^{t'};\zeta_{t+\tau})}
	\end{aligned}
\end{equation}
Similarly, it holds that for $0\leq j\leq \tau$,
\begin{equation}\label{eqn:markov-pq-2}
	\begin{aligned}
		&\sum_{t=1}^{T-j} q(x^t;\zeta_{t+j})- \sum_{t=1}^{T-\tau} q(x^t;\zeta_{t+\tau})\\
		&\leqsim \frac{\tau\hphi}{(1-\gamma)\varsigma}
		+ \frac{\eta}{\alpha_x}\frac{1}{\varphi(1-\gamma)\varsigma}
		\sqrt{\tau\sum_{t=1}^{T} q(x^t;\zeta_t)}
		\sqrt{\sum_{t=1}^{T-\tau}\sum_{t'=t}^{t+\tau} q(x^{t'};\zeta_{t+\tau})}.
	\end{aligned}
\end{equation}

\subsubsection{Combining Step 1 and Step 2}
Actually, \eqref{eqn:markov-pq-2} is already enough to bound $\sum_{t=1}^{T} q(x^t;\zeta_{t})$. For simplicity, we denote
\begin{align*}
	&Q_1:=\frac{c_0\tau\hphi}{(1-\gamma)\varsigma} + \sum_{t=1}^{T-\tau} q(x^t;\zeta_{t+\tau}), \qquad Q_2:=\sum_{t=1}^{T} q(x^t;\zeta_t),\qquad c:=c_0\frac{\eta}{\alpha_x}\frac{1}{\varphi(1-\gamma)\varsigma},\\
	&Q_3:=\frac{1}{\tau} \sum_{t=1}^{T-\tau}\sum_{t'=t}^{t+\tau} q(x^{t'};\zeta_{t+\tau})=\frac{1}{\tau}\sum_{j=1}^\tau \sum_{t=1}^{T-\tau+j} q(x^t;\zeta_{t+\tau-j}),
\end{align*}
where $c_0$ is a universal constant hidden by the $\leqsim$ in \eqref{eqn:markov-pq-2}. Now, \eqref{eqn:markov-pq-2} implies
\begin{equation}\label{eqn:markov-cycling}
	\begin{aligned}
		& Q_2\leq Q_1+c\tau\sqrt{ Q_2Q_3},\quad Q_3\leq Q_1+c\tau\sqrt{Q_2Q_3},\\
		\Rightarrow & Q_2+Q_3\leq Q_1+c\tau(Q_2+Q_3).
	\end{aligned}
\end{equation}
Thus, as long as $c\tau_0\leq\frac{1}{2}$, we have $Q_2+Q_3\leq 2Q_1$. The condition $c\tau_0\leq\frac12$ is equivalent to
\[\begin{aligned}
	\frac{1}{2c_0}\geq \tau_0\frac{\eta}{\alpha_x}\frac{1}{\varphi(1-\gamma)\varsigma}
	=\tau_0\cdot \sqrt{\frac{1}{T}}\cdot \left(\frac{1}{\varphi(1-\gamma)}\sqrt{\frac{\cN\hphi}{\log\hphi}}\right)^{-1}\frac{1}{\varphi(1-\gamma)\varsigma}
	=\sqrt{\frac{4\tau_0^2\cN\hphi\log\hphi}{\varphi^2(1-\gamma)^4\epsilon_e^2}\cdot\frac{1}{T}}.
\end{aligned}\]
Thus, $T\geq 16c_0^2\frac{\tau_0^2\cN\hphi\log\hphi}{\varphi^2(1-\gamma)^4\epsilon_e^2}$ is enough to ensure $Q_2\leq 2Q_1, Q_3\leq 2Q_1$ for any $\tau\leq \tau_0$. Here, according to \eqref{eqn:markov-bound-p-q-tau} we have
\[\begin{aligned}
	Q_1=\frac{c_0\tau\hphi}{(1-\gamma)\varsigma} + \sum_{t=1}^{T-\tau} q(x^t;\zeta_{t+\tau})
	\leqsim \frac{c_0\tau_0\hphi}{(1-\gamma)\varsigma}+\frac{TC(\tau)\cN\hphi}{1-\gamma}
	\leqsim \frac{TC(\tau)\cN\hphi}{1-\gamma}.
\end{aligned}\]

Consequently, we obtain
\begin{equation}\label{eqn:markov-q}
	\begin{aligned}
		& \sum_{t=1}^{T} q(x^t;\zeta_t)\leqsim \frac{TC(\tau)\cN\hphi}{1-\gamma},\\
		& \sum_{t=1}^{T-\tau}\sum_{t'=t}^{t+\tau} q(x^{t'};\zeta_{t+\tau})
		\leqsim \tau\frac{TC(\tau)\cN\hphi}{1-\gamma}.
	\end{aligned}
\end{equation}
Hence, by \eqref{eqn:markov-p-to-q},
\[\begin{aligned}
	\sum_{t=\tau+1}^{T} p(\abs{x^t-x^{t-\tau}};\zeta_{t})
	\leqsim \frac{\tau C(\tau)}{1-\gamma}\sqrt{T\cN\hphi\log\hphi}.
\end{aligned}\]

We can further establish the bound for $\sum_{t=1}^T p(x^t;\zeta_t)^2$ as
\begin{equation}\label{eqn:markov-p-2}
	\begin{aligned}
		\sum_{t=1}^T p(x^t;\zeta_t)^2
		&\leqsim \sum_{t=1}^{\tau} p(x^t;\zeta_{t})^2
		+ \sum_{t=1}^{T-\tau} \left[p(x^t;\zeta_{t+\tau})^2+p(\abs{x^t-x^{t+\tau}};\zeta_{t+\tau})^2\right]\\
		&\stackrel{(a)}{\leqsim} \tau\frac{\hphi^2}{(1-\gamma)^2} + \sum_{t=1}^{T-\tau} p(x^t;\zeta_{t+\tau})^2
		+ \frac{\hphi}{1-\gamma}\sum_{t=1}^{T-\tau} p(\abs{x^t-x^{t+\tau}};\zeta_{t+\tau})\\
		&\leqsim \frac{\tau\hphi^2}{(1-\gamma)^2} + \frac{T C(\tau)\hphi}{(1-\gamma)^2} + \frac{\tau C(\tau)}{1-\gamma}\sqrt{T\cN\hphi\log\hphi}\\
		&\stackrel{(b)}{\leqsim} \frac{T C(\tau)\hphi}{(1-\gamma)^2},
	\end{aligned}
\end{equation}
where the inequality (a) is due to $p(x^t;\zeta_{t})\leq \frac{\hphi}{1-\gamma}$, $p(\abs{x^t-x^{t+\tau}};\zeta_{t+\tau})\leq \frac{2\hphi}{1-\gamma}$, and the inequality (b) is due to our requirement $T\geqsim \frac{\tau_0^2\cN\hphi\clog}{\varphi^2(1-\gamma)^4\epsilon_e^2}$.

The proof is completed by taking $\tau=\tau_0$ in \eqref{eqn:markov-p-2} and \eqref{eqn:markov-q}.






\end{document}